\documentclass[12pt]{article}
\usepackage{amsmath}
\usepackage{graphicx,psfrag,epsf,color}
\usepackage{enumerate}
\usepackage{natbib}
\usepackage{subcaption}
\usepackage{amsthm}
\usepackage{mathtools}
\usepackage{amsfonts} 
\usepackage{bm}

\usepackage[unicode=true,pdfusetitle,
   bookmarks=true, 
   bookmarksnumbered=false, 
   bookmarksopen=false,
   breaklinks=false, 
   pdfborder={0 0 1}, 
   backref=false, 
   colorlinks=false]{hyperref}
   
\usepackage{setspace}
\usepackage{amssymb}
\usepackage{multirow}
\usepackage{array}
\usepackage{booktabs}
\usepackage{float}
\usepackage[font=small,labelfont=bf]{caption}
\setcitestyle{citesep={;}}
\usepackage[total={6.4in,8.6in}]{geometry}
\newtheorem{theorem}{Theorem}
\newtheorem{assumption}{Assumption}[section]
\newtheorem{corollary}[theorem]{Corollary}

\newtheorem{remark}{Remark}

\usepackage{authblk}

\usepackage{algorithm}
\usepackage{algorithmicx}
\usepackage{algpseudocode}

\begin{document}

\def\spacingset#1{\renewcommand{\baselinestretch}%
{#1}\small\normalsize} \spacingset{1}
%
%
\title{\bf \LARGE Policy Optimization and Statistical Inference for Online Contextual Matrix Games}
\author[1]{Liner Xiang} 
\author[2]{Yixin Wang} 
\author[1]{Hengrui Cai \thanks{Corresponding author. Email: hengrc1@uci.edu}}
\affil[1]{Department of Statistics, University of California, Irvine} 
\affil[2]{Department of Statistics, University of Michigan} 
 \date{}
 \maketitle  

\baselineskip=21pt

\begin{abstract} 

Online decision making often requires navigating a landscape shaped by both dynamic contexts and strategic interactions. In competitive pricing, for example, hotels must account for both dynamic contextual factors and rivals' strategic responses. Existing approaches address only part of this challenge: contextual bandits optimize single-agent decisions using observable features but ignore multi-player interactions, while online matrix games capture strategic behavior through Nash equilibrium but assume fixed payoffs, ignoring contextual information. How should agents act then when strategic payoffs evolve with contextual signals?
We introduce \emph{online contextual matrix games} to integrate contextual information into multi-player online games. We further propose \emph{OnGameLearn}, an online learning algorithm that efficiently balances exploration and exploitation across both player actions and contexts. This approach comes with statistical guarantees: tail bounds for the estimated payoff matrix, the convergence of the estimated Nash equilibrium, the asymptotic normality of the parameter estimators, and the sublinear regret bound. We also develop the notion of \emph{policy value} in matrix games and develop a doubly robust, $\sqrt{T}$-consistent estimator for it. Across simulated studies and a real-world hotel pricing application, we find that OnGameLearn effectively navigates the intertwined challenges of strategic and contextual decision-making.

\end{abstract}

\section{Introduction}

Online decision-making often requires navigating a landscape shaped by both dynamic contexts and strategic interactions among agents. Consider, for instance, two competing hotels dynamically setting nightly room rates. Every day, they decide whether to raise, lower, or maintain prices while responding to both market conditions and the anticipated actions of competitors.  During high-demand periods, both hotels may profit from higher prices, yet the optimal strategy for each depends critically on its rival’s choice. If one hotel raises prices while the other holds steady, the former risks losing customers; if both maintain low prices, they forgo potential revenue. 
Conversely, in low-demand periods, aggressive discounting may attract customers, though mutual underpricing can erode profitability. 
This illustrates a fundamental challenge in online learning -- \textbf{strategic contextual decision-making}, where agents adapt policies that jointly account for dynamic contexts and strategic interactions among competitors. Such problems arise in diverse domains, including dynamic pricing \citep{BesbesZeevi2009}, electricity markets \citep{KakadeEtAl2013}, etc.
 
Existing approaches address only fragments of this challenge. Traditional single-agent online learning methods, including multi-armed bandits \citep{auer2002finite}, contextual bandits \citep{langford2007epoch}, and online Markov decision processes \citep{neu2010online}, focus on balancing exploration and exploitation to maximize cumulative rewards. Contextual bandits further use observable features to guide decisions, such as hotel pricing based on customer demographics, booking patterns, and seasonal trends. However, these models do not capture settings where outcomes depend on the simultaneous actions of strategically interacting agents \citep{maiti2023instance}.

Conversely, online matrix games capture strategic behavior through the concept of \textit{Nash equilibrium} \citep{nash1951non}, describing how rational players respond to each other's strategies. Two-player matrix games have long been fundamental to game theory and online learning~\citep{cesa2006prediction,neumann2007theory}. Formalized through payoff matrices \citep{broom1997multi,owen2013game}, they have been widely applied in financial markets \citep{thakor1991game}, economics \citep{karlin2003mathematical}, and voting systems \citep{rivest2010optimal}. Classic examples such as Rock-Paper-Scissors illustrate how fixed payoff structures determine win--lose relationships.
Yet, existing online matrix games \citep{lattimore2020bandit,cai2023uncoupled,maiti2023instance,li2024optimistic} typically assume time-invariant payoffs, overlooking contextual signals that reshape competitive interactions \citep{berg1998matrix,o2021matrix,maiti2025limitations,lin2025randomised}. Most existing work focuses on regret analysis and online performance guarantees, where payoff uncertainty is mainly introduced to study these objectives~\citep{o2021matrix,lin2025randomised}. For example, \citet{cai2023uncoupled} establishes uncoupled no-regret learning and last-iterate convergence for zero-sum matrix and Markov games with bandit feedback. However, these studies do not characterize how contextual signals change the payoff matrix and equilibrium strategies, nor do they provide statistical inference for these quantities.

In many practical environments, payoffs depend critically on observable contextual or player-specific characteristics, as demonstrated by the success of contextual bandits \citep{li2011unbiased,lu2021bandit,tajik2024novel}, with emerging demand for statistical inference \citep{chen2021statistical,shen2024doubly}. Another related line of work is Markov games and multi-agent reinforcement learning~\citep{littman1994markov,zhang2021multi}, which model strategic interactions through action-dependent state transitions. In contrast, our framework treats contexts as exogenous factors that affect the current payoff matrix. For example, in hotel pricing, market conditions affect the revenue structure but are not controlled by the hotels' current pricing decisions.

\begin{figure}[!t]
    \centering
    \includegraphics[width=0.8\linewidth]{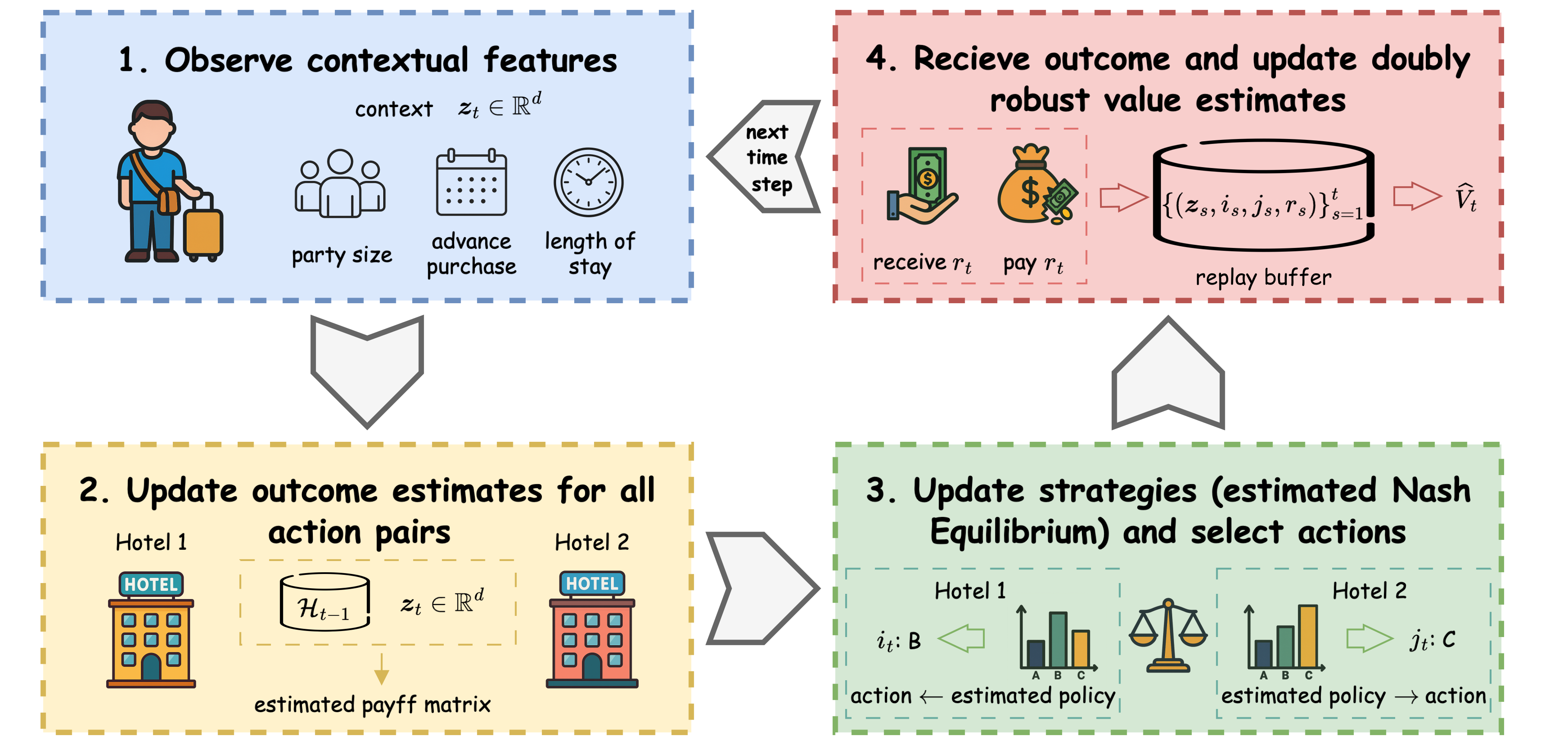}
    \caption{\textbf{Overview of OnGameLearn.} For contextual matrix games (e.g., hotel pricing), at each time step we: (1) observe the arriving customer's contextual features; (2) update the parameter estimates using historical data from the replay buffer and the payoff matrix; (3) compute the Nash equilibrium strategies and select actions accordingly; and (4) observe the outcome, update the doubly robust value estimate, and proceed to the next step. The multi-arm setting is simpler without contexts.}
    \label{fig:zero-sum}
\end{figure}

\underline{Neither framework handles the core question}: \textit{How should strategic agents act when payoffs evolve with contextual information?}
To address this fundamental challenge, we propose \textbf{online contextual matrix games} where the payoff matrix of each round depends on contextual information available at decision time. This formulation integrates contextual signals directly into multi-player strategic decision making, allowing both learning and equilibrium analysis to account for evolving environments. 
In the motivating hotel pricing example (Figure \ref{fig:zero-sum}), the payoff matrix -- representing profit outcomes for different pricing combinations between competing hotels -- continuously adapts to observable contexts such as party size, booking lead time, and length of stay. This in turn shapes each hotel's optimal strategy and the resulting payoffs. 
Focusing on competitive environments, we study the canonical case of two-player zero-sum games, where strategic equilibrium corresponds to the saddle point (i.e., Nash equilibrium) at which no player can gain any advantage by deviating unilaterally. This setting already poses fundamental challenges, while our framework could naturally extend to multi-player interactions once Nash equilibrium in such environments are reliably established.
Motivated by uncertainty and limited feedback in realistic online settings, we consider matrix games with \textit{noisy bandit feedback} \citep{lattimore2020bandit,o2021matrix}, where agents observe only stochastic, noisy feedback for chosen actions, capturing real-world perturbations rather than exact payoffs. 

Under the framework of online contextual matrix games with bandit feedback, we develop Nash equilibrium estimation procedures with provable convergence guarantees. 
This enables efficient optimal policy learning that balance exploration and exploitation across both player actions and contextual signals. Unlike contextual bandits with deterministic decision rules, matrix games require stochastic policies, i.e., probabilistic mixtures over actions, which our estimator handles effectively within the online learning paradigm \citep{li2011unbiased,dudik2011doubly,kallus2018policy}. To assess decision quality, we formalize the notion of \textit{policy value} for stochastic policies in matrix games and propose a doubly robust estimator for the game value under the Nash equilibrium. Moreover, under adaptively collected, non-i.i.d. data, we establish statistical inference procedures for the payoff matrix parameters, enabling principled strategy evaluation with quantified uncertainty. The resulting algorithm, \textbf{OnGameLearn} (Figure~\ref{fig:zero-sum}), unifies policy optimization and statistical inference under contextual and strategic uncertainty.

\textbf{Contributions.} The contribution of this work is three-fold. To our knowledge, 
we are the first to propose and study the online contextual matrix games. This shift substantially expands the practical applicability of online matrix games, as payoffs in real applications are rarely independent of observable features. To incorporate contextual information, we propose to parameterize the payoff matrix that adapts to the observed context at each time step, providing a formal treatment of contextual game learning with provable guarantees.

Second, under online matrix games with and without contexts, we develop a new online learning algorithm that includes regularization terms to learn Nash equilibrium and incorporates exploration algorithms, including Upper Confidence Bound (UCB)~\citep{li2011unbiased}, Thompson Sampling (TS)~\citep{agrawal2013thompson}, and $\epsilon$-Greedy~\citep{watkins1989learning,sutton1998reinforcement}, for valid statistical inference, enabling both players to dynamically update their strategies at each iteration.
The estimated Nash equilibrium is then obtained through efficient minimax optimization. We establish statistical guarantees for both the parameter estimates and the estimated Nash equilibrium. Unlike classical contextual bandits, where policies are typically smooth functions of model parameters, the Nash equilibrium is defined through a minimax problem, making standard theoretical tools less directly applicable. Prior work mainly studies algorithmic update rules and shows convergence to a Nash equilibrium through regret guarantees. In contrast, we directly establish the consistency of the Nash equilibrium induced by the estimated parameters. Building on this result, we further derive the asymptotic normality of the payoff matrix estimator in online matrix games and the parameter estimators in contextual matrix games, while accounting for dependence induced by adaptive data.

Third, we develop a $\sqrt{T}$-consistent doubly robust value estimator for online stochastic policies in matrix games, with and without contexts, and establish sublinear regret bounds for the proposed learning procedures. Unlike contextual bandits with deterministic policies \citep{shen2024doubly}, stochastic equilibrium strategies introduce additional inference challenges. We show that $\sqrt{T}$-consistency holds under a mild margin condition that accommodates both full-support and locally stable sparse Nash equilibria.

The rest of this paper is structured as follows. Section~\ref{sec:framework} formulates the problems of online matrix games and online contextual matrix games with bandit feedback. Section~\ref{sec:policy} presents our proposed OnGameLearn for both settings, with the theoretical results provided in Section~\ref{sec:theory}. To validate our approaches, Section~\ref{sec:simulation} reports numerical experiments for both the multi-armed and contextual settings, while Section~\ref{sec:real} demonstrates the practical utility of OnGameLearn via a real-world application. 
We also provide extended discussion about the convergence rate of estimated Nash equilibrium and analysis under model misspecification in Section~\ref{app:dis}.
Finally, Section~\ref{sec:conclusion} concludes the paper with future directions. All proofs and lemmas are provided in the supplementary material.

\section{Framework}
\label{sec:framework}

In this section, we present our framework for online matrix games with bandit feedback. Throughout this paper, unless otherwise specified, the terms \textit{online matrix games} and \textit{online contextual matrix games} refer to two-player zero-sum matrix games and two-player contextual zero-sum matrix games, respectively. We begin with the framework of online matrix games without contextual features, which serves as the foundation for developing the proposed online contextual matrix games. We then incorporate contextual information into this framework to formulate the setting of contextual matrix games, enabling decision-making that depends on observable contexts.

\smallskip
\noindent\textbf{Notation and Preliminaries.} We use $[T]$ to denote the set $\left\{1, 2, \cdots T\right\}$. By default, all vectors are column vectors. A vector $\bm{x}$ with entries $x_1, ..., x_d$ is written as $\bm{x}^\top = (x_1, \cdots, x_d)$, where $x_i$ is the $i$-th element. For a matrix $A$, let $A_{ij}$ and $[A_t]_{ij}$ be the entry in the $i$-th row and the $j$-th column of matrix $A$ and $A_t$. $\Delta_d$ denotes the $d$-dimensional probability simplex. \underline{Strongly Convex:} A function $f:X\rightarrow \mathbb{R}$ is strongly convex with respect to a norm $\Vert \cdot \Vert$, if  $f(\bm{x}_1) - f(\bm{x}_2) \ge \nabla f(\bm{x}_2)^\top (\bm{x}_1-\bm{x}_2)+({H}/{2} )\Vert \bm{x}_1-\bm{x}_2\Vert^2$, for any $\bm{x}_1, \bm{x}_2 \in X$ and $H>0$. Here, $\nabla f(\bm{x})$ denotes any subgradient of $f$ at $\bm{x}$. If $H=0$, then the function is convex. A function $f:X\rightarrow \mathbb{R}$ is strongly concave if $-f$ is strongly convex.  
\underline{Strongly Convex-Strongly Concave:}
A function $f(\bm{x}, \bm{y})$ is strongly convex-strongly concave if for any fixed $\bm{y}_0 \in Y$ , the function $f(\bm{x}, \bm{y}_0)$ is strongly convex in $\bm{x}$, and for any fixed $\bm{x}_0 \in X$, the function $f(\bm{x}_0, \bm{y})$ is strongly-concave in $\bm{y}$.

\subsection{Online Zero-sum Matrix Games} \label{subsec:2.1}
We study the following online zero-sum matrix games between two players who interact repeatedly over $T$ rounds. At each time $t$, the row player selects action $i_t \in [m]$ for $m$ number of choices, while the column player simultaneously chooses action $j_t \in [k]$ for $k$ number of choices, for some integers $m\geq 2$ and $k\geq 2$. Any such two-player zero-sum game can be characterized by a payoff matrix $A \in \mathbb{R}^{m\times k}$ \citep{v1928theorie,neumann2007theory}, which is \textit{unknown} in our setting. 
Our \textbf{objective} is threefold: (i) estimate the unknown payoff matrix, (ii) design optimal strategies for both players, and (iii) evaluate the induced game value. 
When actions $i_t$ and $j_t$ are played, the row player pays cost $r_t$ while the column player receives the reward $r_t$. Both players observe the opponent’s action and the corresponding reward under the current action pair, but obtain no information about the reward under other action pairs. This partial feedback setting is known as bandit feedback \citep{lattimore2020bandit}. Following \cite{o2021matrix} and \cite{li2024optimistic}, we model the outcome as $r_t = A_{i_t,j_t}+e_t$, where the noise term $e_t$ follows the sub-Gaussian distribution and conditioned on $i_t,j_t$ is independent of all previous information $\mathcal{H}_{t-1}$. Here we denote $\mathcal{H}_t = \left\{i_1,j_1,r_1,\dots, i_{t},j_{t},r_{t}\right\}$ as the information observed up to time $t$. We further assume that the conditional variance is $\mathbb{E}(e_t^2\vert i_t=i,j_t=j) = \sigma_{ij}^2$.

To characterize the optimal policies under strategic interactions, we introduce the \textbf{Nash equilibrium of the matrix game} as a pair of mixed strategies $(\bm{x}^\ast,\bm{y}^\ast)$ satisfying
$(\bm{x}^\ast)^\top A \bm{y} \le (\bm{x}^\ast)^\top A \bm{y}^\ast \le (\bm{x})^\top A \bm{y}^\ast$, 
for all $\bm{x}\in\Delta_m$ and $\bm{y}\in\Delta_k$, where $\Delta_m$ and $\Delta_k$ are the probability simplexes over the row and column players' action spaces, respectively.
This condition coincides with the \textit{saddle-point} definition of a general function $f: X \times Y \to \mathbb{R}$, such that
$f(\bm{x}^\ast, \bm{y}) \le f(\bm{x}^\ast, \bm{y}^\ast) \le f(\bm{x}, \bm{y}^\ast), \forall \bm{x}\in X, \bm{y}\in Y, $ 
when $f(\bm{x}, \bm{y}) = \bm{x}^\top A \bm{y}$. It is well known that if $f$ is convex–concave and $X, Y$ are convex and compact sets, then a saddle point always exists \citep{boyd2004convex}. Furthermore, if $f$ is strongly convex–strongly concave, the saddle point is unique. 

We define the \textit{policy value} of the matrix game $A$ under strategies $\bm{x}$ and $\bm{y}$ as $\bm{x}^\top A \bm{y}$.
Given the definition of the Nash equilibrium, we further define the \textit{optimal policy value} of the matrix game under the strategic equilibrium, denoted by $V_A^\ast$, as
\begin{equation}
    \label{equ:bandit_value}
    V_A^\ast \coloneqq (\bm{x}^\ast)^\top A \bm{y}^\ast = \min_{\bm{x} \in \Delta_m} \max_{\bm{y} \in \Delta_k} \bm{x}^\top A \bm{y} =  \max_{\bm{y} \in \Delta_k} \min_{\bm{x} \in \Delta_m} \bm{x}^\top A \bm{y},
\end{equation}
where the exchange of the min and max is guaranteed by Von Neumann’s minimax theorem \citep{v1928theorie}. 

\subsection{Online Contextual Zero-sum Matrix Games} 
\label{subsec:context}
We then introduce the online contextual matrix game setting by incorporating contextual features into the framework. At each time step $t$, contextual information $\bm{z}_t \in \mathbb{R}^d$ is revealed, and the payoff matrix $M(\bm{z}_t) \in \mathbb{R}^{m\times k}$ becomes a function of this context while preserving the action spaces $[m]$ and $[k]$ for both players. 
Given the chosen actions $i_t \in [m]$ and $j_t \in [k]$, we follow the linear contextual bandits formulation \citep{chen2021statistical,shen2024doubly} to model the observed outcome as 
$r_t = \bm{z}_t^{\top}\bm{\beta}^{i_t,j_t}+e_t$, 
where $\bm{\beta}^{ij} \in \mathbb{R}^d$ represents the parameter vector associated with action pair $(i,j)$. Similar to the online matrix games, the noise term $e_t$ is assumed to be sub-Gaussian and, conditional on $(i_t, j_t)$, independent of all past information $\mathcal{H}_{t-1}^{\mathrm{con}} = {\bm{z}_1, i_1, j_1, r_1, \ldots, \bm{z}_{t-1}, i_{t-1}, j_{t-1}, r_{t-1}}$. The conditional variance is also given by $\mathbb{E}(e_t^2 \mid i_t = i, j_t = j) = \sigma_{ij}^2$. Under this formulation, the contextual payoff matrix $M(\bm{z}_t)$ is defined such that each entry $M(\bm{z_t})_{ij} = \bm{z}_t^{\top}\bm{\beta}^{ij}$ encodes the expected outcome with action pair $(i,j)$ given context $\bm{z}_t$. 
Although we focus on a linear structure, the framework can be easily extended to more flexible parameterizations, such as basis expansions, provided that the payoff function can be consistently estimated and the resulting estimated Nash equilibrium converges.
When both players employ context-dependent mixed strategies $\phi(\bm{z}_t)$ and $\nu(\bm{z}_t)$, the expected outcome at time $t$ is $\phi(\bm{z}_t)^\top M(\bm{z}_t) \nu(\bm{z}_t)$. Under the distributional assumption $\bm{z}_t \sim \mathcal{P}_z$, we follow \citet{dudik2011doubly} to define the \textit{policy value} for contextual game strategies $\phi(\cdot)$ and $\nu(\cdot)$ as:
\begin{equation*}
    V(\phi,\nu) \coloneqq \mathbb{E}_{\bm{z}\sim \mathcal{P}_z} \left[\phi(\bm{z})^\top M(\bm{z}) \nu(\bm{z}) \right].
\end{equation*}
The optimal strategies $\phi^\ast(\bm{z})$ and $\nu^\ast(\bm{z})$ correspond to the Nash equilibrium of the contextual matrix game $M(\bm{z})$, yielding the \textit{optimal policy value}:
\begin{equation*}
    V^\ast \coloneqq \mathbb{E}_{\bm{z}\sim \mathcal{P}_z} \left[\min_{\bm{x}\in \Delta_{m}}\max_{\bm{y}\in \Delta_{k}} \bm{x}^\top M(\bm{z}) \bm{y} \right] = \mathbb{E}_{\bm{z}\sim \mathcal{P}_z} \left[\phi^\ast(\bm{z})^\top M(\bm{z}) \nu^\ast(\bm{z})\right].
\end{equation*}

\section{Proposed Method: OnGameLearn}
\label{sec:policy}

We now introduce our proposed method, OnGameLearn, where both players estimate the (contextual) payoff matrix and select actions using the UCB, TS or $\epsilon$-Greedy strategy for exploration, followed by doubly robust policy value estimation. We first present the method for online matrix games in Section~\ref{subsec:method_matrix}, and for its contextual version in Section~\ref{subsec:method_con}.

\subsection{Method for Online Matrix Games}
\label{subsec:method_matrix}

\textbf{Payoff matrix estimates.} 
In the online learning setting, where optimal strategies are unknown and only bandit feedback is observed. At time $t$, we update the payoff matrix estimates using the empirical mean of historical samples from $\mathcal{H}_{t-1}$ \citep{o2021matrix,wijewardena2025online,maiti2025limitations}. Note that only $[A_t]_{i_t,j_t}$ is observed with the noise and the information under other action pairs will be unknown. The online estimator is given by
\begin{equation}
\label{equ:ahat}
    [\widehat{A}_t]_{i j} = \left(\sum_{s=1}^{t}\mathbb{I}(i_s=i,j_s=j) \right)^{-1} \left( \sum_{s=1}^{t}r_s \mathbb{I}(i_s=i,j_s=j) \right),
\end{equation}
for $i=1,2,\dots,m$, $j=1,2,\dots,k$. Given the estimated payoff matrix $\widehat{A}_{t-1}$, we next describe how to compute the estimated Nash equilibrium at time $t$. To ``convexify'' the targeted objective,
we define the regularized objective function at time $t$ as:
\begin{equation}
    \mathcal{L}_t(\bm{x},\bm{y}) = \bm{x}^\top \widehat{A}_{t-1} \bm{y} - \eta_t R_X(\bm{x})+\eta_t R_Y(\bm{y}),
\end{equation}
where $R_X(\bm{x}) \coloneqq \sum_{i=1}^{m}x_i\ln x_i + \ln{m}$ and $ R_Y(\bm{y}) \coloneqq \sum_{i=1}^{k}y_i\ln y_i + \ln{k}$ are strongly convex regularization terms ensuring a unique saddle point of $\mathcal{L}_t(\bm{x},\bm{y})$. Similar regularization terms have been widely used in the  literature~\citep[see e.g.,][]{grnarova2017online,rivera2025online}. 
Then, we estimate the unique Nash equilibrium of this regularized objective function, which serves as the strategies of the two players.

\noindent\textbf{Estimated Nash equilibrium and behavior policy.}
Let $(\widehat{\bm{x}}_t,\widehat{\bm{y}}_t)$ denote the unique Nash equilibrium of $\mathcal{L}_t(\bm{x},\bm{y})$ and the saddle value of $\mathcal{L}_t$ as $\mathcal{L}_t(\widehat{\bm{x}}_t,\widehat{\bm{y}}_t)$ \citep{milgrom2002envelope}. 
We define $X^\ast:\mathbb{R}^{m \times k} \times \mathbb{R} \rightarrow \Delta_m$ and $Y^\ast:\mathbb{R}^{m \times k} \times \mathbb{R} \rightarrow \Delta_k$ as the set-valued functions for the Nash equilibrium of the two players with parameters $A$ and $\eta$, respectively.
Thus,
\begin{equation}
\label{equ:NE_bandit}
\begin{aligned}
    \widehat{\bm{x}}_t
     =
    X^\ast(\widehat{A}_{t-1},\eta_t)
    =
    \arg\min_{\bm{x} \in \Delta_m}
    \max_{\bm{y} \in \Delta_k}
    \mathcal{L}_t(\bm{x},\bm{y}), \quad
    \widehat{\bm{y}}_t
    =
    Y^\ast(\widehat{A}_{t-1},\eta_t)
    =
    \arg\max_{\bm{y} \in \Delta_k}
    \min_{\bm{x} \in \Delta_m}
    \mathcal{L}_t(\bm{x},\bm{y}).
\end{aligned}
\end{equation}

To ensure sufficient exploration, we consider three standard bandit algorithms: Upper Confidence Bound (UCB) \citep{li2011unbiased}, Thompson Sampling (TS) \citep{agrawal2013thompson}, and $\epsilon$-Greedy \citep{sutton1998reinforcement}. For UCB and TS, an exploratory payoff matrix $\widetilde{A}_t$ is constructed and its Nash equilibrium strategies are clipped to ensure sufficient exploration.  For $\epsilon$-Greedy, each player follows the estimated Nash equilibrium strategy with probability $1-\epsilon_t$ and explores uniformly over its action space with probability $\epsilon_t$. 
We denote the corresponding exploratory strategies of the two players as $\bm{\widetilde{x}}_t$ and $\bm{\widetilde{y}}_t$. 
For all three exploration algorithms, the resulting behavior policy has a unified expression as $\pi_t(i,j)$ based on the estimated payoff matrix with a clipping parameter $\epsilon_t$ to ensure positivity. Formally, $\pi_t(i,j) = \widetilde{x}_{ti}\widetilde{y}_{tj} \ge \epsilon_t^2/mk, \, i\in[m],\, j\in[k]$, where $\widetilde{x}_{ti}$ is the $i$-th element of vector $\bm{\widetilde{x}}_t$ and analogous definition is applied for $\widetilde{y}_{tj}$. 
Detailed constructions of the three exploration policies are provided in Section~\ref{app:bandit_alg} of the supplementary material.

\noindent\textbf{Doubly robust value estimator.} 
Doubly robust methods have been widely applied in online policy learning \citep{shen2024doubly,xu2024}, where consistency holds if either the propensity score model (similar to the estimated strategies in matrix games) or the outcome regression model is correctly specified. Given the estimated payoff matrix and estimated mixed strategies, we can derive an estimator for the value of matrix game $V_A^\ast$. Drawing inspiration from the doubly robust estimator for stochastic policy in offline settings \citep{jiang2016doubly}, we extend this approach to the online setting and propose the following doubly robust expected outcome estimator as the value estimator under OnGameLearn as
\begin{equation}
\label{equ:dr}
    \widehat{V}_T^{\text{DR}} \equiv \frac{1}{T}\sum_{t=1}^T \widehat{\bm{x}}_t^\top \widehat{A}_{t-1}\widehat{\bm{y}}_t + \frac{\widehat{x}_{t,i_t}\widehat{y}_{t,j_t}}{{\pi}_t(i_t,j_t)}(r_t - [\widehat{A}_{t-1}]_{i_t,j_t}).
\end{equation}

We name our method as \textbf{OnGameLearn}, which efficiently learns the Nash equilibrium and estimates game value. Algorithm \ref{BOMG} provides the detailed pseudocode. It runs through three key components iteratively: (1) a payoff matrix estimator that aggregates historical observations, (2) a strategy optimizer that computes regularized Nash equilibrium, and (3) a doubly robust value estimator that combines model predictions with actual rewards.

\begin{algorithm}[hpt]
\caption{Online Matrix Games with Bandit Feedback}
\label{BOMG}
\begin{algorithmic}[1]
\State \textbf{Input:} termination time $T$, \(\widehat{\bm{x}}_1 \in \Delta_{m} \), \(\widehat{\bm{y}}_1 \in \Delta_{k}\), 
regularization parameters $\left\{\eta_t \right\}_{t=1}^T$, exploration rate $\left\{\epsilon_t \right\}_{t=1}^T$.
\For{\(t = 1\) \textbf{to} \(T\)}
    \State Update estimated payoff matrix $\widehat{A}_t$ using Equation \eqref{equ:ahat}.
    \State Update estimated Nash Equilibrium $(\widehat{\bm{x}}_t,\widehat{\bm{y}}_t)$ by Equations \eqref{equ:NE_bandit}.
    \State Sample $i_t$ and $j_t$ by bandit algorithms for two players.
    \State Observe bandit feedback $r_t$ and set behavior policy $\pi_t(i,j)=\widetilde{x}_{ti} \widetilde{y}_{tj}$.
    \State Update the doubly robust value estimator $\widehat{V}_t^{\text{DR}}$ by Equation \eqref{equ:dr}.     
\EndFor
\end{algorithmic}
\end{algorithm}

To compute the Nash equilibrium at each time step $t$ (Step 4 in Algorithm~\ref{BOMG} and Step 5 in Algorithm~\ref{BOCG} later), we employ a classical approach, gradient descent–ascent (GDA) method \citep[see, e.g.,][]{du2019linear, lin2020gradient}. The computational complexity is discussed in Section~\ref{app:computation} of the supplementary material. Guided by Theorems~\ref{thm:tail_bandit} and~\ref{thm:saddlerate}, introduced in Section~\ref{sec:theory_bandit}, we choose $\epsilon_t$ such that $t \epsilon_t^2/ \log t \to \infty$ and $\eta_t$ is at most of the order $t^{-1/2}$.
The same order of parameters is applied in Algorithm~\ref{BOCG}, as justified by Theorems~\ref{conthm:tail} and~\ref{conthm:drrate} for online contextual matrix games.

\subsection{Method for Online Contextual Matrix Games}
\label{subsec:method_con}

\noindent\textbf{Parameter estimates and value estimation.} In contextual matrix games, we estimate the strategies sequentially using the observation history up to time $t$, denoted by $\mathcal{H}^{\mathrm{con}}_{t}$, through ordinary least squares. The estimator for each action pair $(i, j)$ is given by:
\begin{equation}
    \label{equ:para_con}
    \widehat{\bm{\beta}}^{ij}_t = \left(\frac{1}{t} \sum_{s=1}^{t} \mathbb{I}\left\{i_s=i,j_s=j \right\} \bm{z}_s \bm{z}_s^{\top}\right)^{-1} \left( \frac{1}{t} \sum_{s=1}^t \mathbb{I}\left\{i_s=i,j_s=j \right\} \bm{z}_s r_s \right),
\end{equation}
for $i\in[m]$ and $j \in [k]$. To ensure the uniqueness of the Nash equilibrium, we also add the same regularization terms $R_X$ and $R_Y$. Denote $\bm{x}^\top \widehat{M}_{t-1}(\bm{z}_t)\bm{y}-\eta_t R_X(\bm{x}) + \eta_t R_Y(\bm{y})$ as $\mathcal{L}_t^{\mathrm{con}}(\bm{x,\bm{y}})$, where $[\widehat{M}_{t-1}(\bm{z}_t)]_{ij}$ is constructed with entries $\bm{z}_t^\top \widehat{\bm{\beta}}^{ij}_{t-1}$. Then the estimated equilibrium strategies at time $t$ are given by,
\begin{equation}
\begin{aligned}
\label{equ:policy_con}
    \widehat{\phi}_t (\bm{z}_t) =X^\ast(\widehat{M}_{t-1}(\bm{z}_t),\eta_t)= \arg\min_{\bm{x} \in \Delta_m} \max_{\bm{y}\in \Delta_k} \mathcal{L}_t^{\mathrm{con}}(\bm{x,\bm{y}}), \\
    \widehat{\nu}_t (\bm{z}_t) =Y^\ast(\widehat{M}_{t-1}(\bm{z}_t),\eta_t)= \arg \max_{\bm{y}\in \Delta_k} \min_{\bm{x} \in \Delta_m} \mathcal{L}_t^{\mathrm{con}}(\bm{x,\bm{y}}).
\end{aligned}
\end{equation}
This regularized saddle-point problem can be
solved using a gradient descent–ascent procedure similarly. 
Following constructions in online matrix game setting, we extend UCB, TS, and $\epsilon$-Greedy to the contextual setting. 
For UCB and TS, the exploratory Nash equilibrium strategies are constructed from the context-dependent payoff matrix and clipped using the exploration level $\epsilon_t$, ensuring a positive lower bound on the selection probability of each action pair. While $\epsilon$-greedy mixes the estimated equilibrium strategies with uniform exploration. These guarantee the positivity condition required for estimation and inference. Denote $\widetilde{\phi}_t(\bm z_t)$ and $\widetilde{\nu}_t(\bm z_t)$ as the corresponding exploratory strategies of the two players. 
Consequently, the resulting behavior policy under all three exploration mechanisms is 
$\pi_t^{\mathrm{con}}(i,j\mid \bm z_t)
=
[\widetilde{\phi}_t(\bm z_t)]_i
[\widetilde{\nu}_t(\bm z_t)]_j
\geq
{\epsilon_t^2}/{mk}$, for 
$i\in[m]$, and $j\in[k]$.  Detailed constructions are provided in Section~\ref{app:bandit_alg} of the supplementary material.
The doubly robust value estimator is then expressed as
\begin{equation}
\label{equ:dr_con}
    \widehat{V}_T^{\text{conDR}} = \frac{1}{T}\sum_{t=1}^T \widehat{\phi}_t(\bm{z}_t)^\top \widehat{M}_{t-1}(\bm{z}_t)\widehat{\nu}_t(\bm{z}_t) + \frac{[\widehat{\phi}_t(\bm{z}_t)]_{i_t}[\widehat{\nu}_t(\bm{z}_t)]_{j_t}}{{\pi}_t^{\mathrm{con}}(i_t,j_t \vert \bm{z}_t)}(r_t - [\widehat{M}_{t-1}(\bm{z}_t)]_{i_t,j_t}).
\end{equation}
The detailed pseudocode is presented in Algorithm \ref{BOCG}.

\begin{algorithm}
\caption{Online Contextual Matrix Games with Bandit Feedback}
\label{BOCG}
\begin{algorithmic}[1]
\State \textbf{Input:} termination time $T$, \(\widehat{\bm{x}}_1 \in \Delta_{m} \), \(\widehat{\bm{y}}_1 \in \Delta_{k}\), regularization parameters $\left\{\eta_t \right\}_{t=1}^T$, exploration rate $\left\{\epsilon_t \right\}_{t=1}^T$.
\For{\(t = 1\) \textbf{to} \(T\)}
    \State Sample $d$-dimensional contextual information $\bm{z}_t \sim \mathcal{P}_z$.
    \State Update $\widehat{\bm{\beta}}_{t-1}^{ij}$ for all $i,j$ using Equation \eqref{equ:para_con} and $\widehat{M}_{t-1}(\bm{z}_t)$.
    \State Update strategies $\widehat{\phi}_t(\bm{z}_t)$ and $\widehat{\nu}_t(\bm{z}_t)$ by Equations \eqref{equ:policy_con}.
    \State Sample $i_t$ and $j_t$ using the bandit algorithms and observe $r_t$.
    \State Update the value estimator $\widehat{V}_t^{\text{conDR}}$ by Equation \eqref{equ:dr_con}.
\EndFor
\end{algorithmic}
\end{algorithm}

\section{Theoretical Results}
\label{sec:theory}

In this section, we establish the theoretical results for the estimation of both the payoff matrix or parameters and the policy value under the online matrix game and contextual matrix game settings, presented in Section~\ref{sec:theory_bandit} and Section~\ref{sec:theory_con}, respectively.

\subsection{Results for Online Matrix Games}
\label{sec:theory_bandit}
In this section, we present the theoretical results for the online matrix game setting. We first establish the tail bound of the payoff matrix estimator, followed by the convergence of the estimated Nash equilibrium as well as the behavior strategies and the asymptotic normality of the payoff matrix estimator. We then analyze the convergence rate of the saddle value under OnGameLearn and demonstrate the convergence rate of the doubly robust value estimator. Lastly, we provide the regret analysis of our proposed method OnGameLearn. Complete proofs are provided in Section~\ref{app:proof_bandit} of the supplementary material.

The theoretical analysis begins with a tail bound for the payoff matrix estimator under the exploration rate $\epsilon_t$. The proof is provided in Section~\ref{app:tail_bandit_proof} of the supplementary material.

\begin{theorem}[{\textbf{\emph{Tail bound of payoff matrix estimator}}}]
\label{thm:tail_bandit}
When applying UCB, TS, $\epsilon$-Greedy for two-player zero-sum online matrix games, with $\epsilon_t$ is non-increasing, then for any $h > 0$ and $i=1,2,\cdots,m,\, j =1,2,\cdots,k$, we have
\begin{equation*}
\mathbb P\left(
\left|
[\widehat A_t]_{ij}-A_{ij}
\right|
\le h
\right)
\ge
1
-
\exp\left\{
-\frac{t \epsilon_t^2}{8mk}
\right\}
-
2\exp\left\{
-\frac{t \epsilon_t^2 h^2}{4mk\sigma_{ij}^2}
\right\}.
\end{equation*}
\end{theorem}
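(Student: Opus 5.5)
The proof splits the event $\{|[\widehat A_t]_{ij}-A_{ij}|>h\}$ according to whether the pair $(i,j)$ has been sampled often enough. Let $N_{ij}(t)=\sum_{s=1}^t \mathbb{I}(i_s=i,j_s=j)$ and set the threshold $n_t = t\epsilon_t^2/(2mk)$. Then
\begin{equation*}
\mathbb P\left(|[\widehat A_t]_{ij}-A_{ij}|>h\right)\le \mathbb P\left(N_{ij}(t)< n_t\right)+\mathbb P\left(|[\widehat A_t]_{ij}-A_{ij}|>h,\ N_{ij}(t)\ge n_t\right).
\end{equation*}
The first term should give $\exp\{-t\epsilon_t^2/(8mk)\}$ and the second $2\exp\{-t\epsilon_t^2h^2/(4mk\sigma_{ij}^2)\}$.

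\textbf{Step 1 (sample count).} Write $\xi_s=\mathbb{I}(i_s=i,j_s=j)-\pi_s(i,j)$, where $\pi_s$ is measurable with respect to $\mathcal H_{s-1}$. Then $\xi_s$ is a bounded martingale difference sequence with $|\xi_s|\le 1$. By the clipping property, $\pi_s(i,j)\ge \epsilon_s^2/(mk)\ge \epsilon_t^2/(mk)$, where the second inequality uses that $\epsilon_s$ is non-increasing. Hence $\sum_{s\le t}\pi_s(i,j)\ge t\epsilon_t^2/(mk)=2n_t$, and $N_{ij}(t)<n_t$ forces $\sum_s \xi_s< -n_t$. Azuma--Hoeffding (or Freedman, if a tighter constant is needed) gives a bound of the form $\exp\{-c\,n_t^2/t\}$. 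This is where the constant needs care. Plain Azuma gives $\exp\{-t\epsilon_t^4/(8m^2k^2)\}$, which has the wrong power of $\epsilon_t$. To get the stated $\exp\{-t\epsilon_t^2/(8mk)\}$, I would instead use a multiplicative Chernoff bound for sums of conditionally Bernoulli indicators. Concretely, $\mathbb E[\exp(-\lambda \mathbb{I}_s)\mid\mathcal H_{s-1}]\le \exp(-(1-e^{-\lambda})\pi_s)$ gives a supermartingale, and lower-tail Chernoff at level $\mu=2n_t$ with deviation factor $1/2$ yields $\exp\{-\mu/8\}=\exp\{-t\epsilon_t^2/(8mk)\}$. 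I expect this step, obtaining Chernoff for dependent adaptive indicators with the exact constant, to be the main obstacle.

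\textbf{Step 2 (concentration of the mean).} On $\{N_{ij}(t)\ge n_t\}$, $[\widehat A_t]_{ij}-A_{ij}=N_{ij}(t)^{-1}\sum_{s\le t}\mathbb{I}(i_s=i,j_s=j)e_s$. Since $e_s$ is conditionally sub-Gaussian with variance proxy $\sigma_{ij}^2$ given $(i_s,j_s)=(i,j)$ and independent of the past, $S_n=\sum_s \mathbb{I}_s e_s$ is a martingale with $\exp(\lambda S_n-\lambda^2\sigma_{ij}^2N_{ij}(n)/2)$ a supermartingale. Equivalently, by a stopping/optional-skipping argument, the noises of the successive visits to $(i,j)$ form an i.i.d.-like sub-Gaussian sequence $\tilde e_1,\tilde e_2,\dots$ (the sequence is embedded at the visit times). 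Then the event in question is contained in $\{|\sum_{l\le N}\tilde e_l|>hN\}$ for some $N\ge n_t$. I would bound it using the self-normalized supermartingale, with $\lambda=h/\sigma_{ij}^2$ on each side:
\begin{equation*}
\mathbb P\left(S_t>hN_{ij}(t),\,N_{ij}(t)\ge n_t\right)\le \mathbb P\left(\lambda S_t-\tfrac{\lambda^2\sigma_{ij}^2}{2}N_{ij}(t)\ge \tfrac{h^2}{2\sigma_{ij}^2}n_t\right)\le \exp\left\{-\tfrac{h^2 n_t}{2\sigma_{ij}^2}\right\},
\end{equation*}
by Markov/Ville's inequality. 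Since $n_t=t\epsilon_t^2/(2mk)$, this equals $\exp\{-t\epsilon_t^2h^2/(4mk\sigma_{ij}^2)\}$, matching the stated term, and doubling for both tails gives the factor 2.

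\textbf{Step 3.} Combining Steps 1 and 2 gives the claim. The argument is identical for UCB, TS, and $\epsilon$-Greedy, since only the $\mathcal H_{s-1}$-measurability of $\pi_s$ and the positivity bound $\pi_s(i,j)\ge\epsilon_s^2/(mk)$ are used.
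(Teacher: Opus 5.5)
Your proposal is correct and follows essentially the same route as the paper. Both arguments split at the threshold $t\epsilon_t^2/(2mk)$. Both bound the count term with an adaptive multiplicative Chernoff inequality, built from the supermartingale $\exp\{pS_t-(e^p-1)\sum_{s}\pi_s\}$ with deviation factor $1/2$ (the paper's Lemma~\ref{lem:adaptive_chernoff}), which needs only the almost-sure lower bound $\sum_{s\le t}\pi_s(i,j)\ge t\epsilon_t^2/(mk)$ that you derive. Both bound the noise term with the self-normalized supermartingale $\exp\{\lambda\sum_s a_s^{ij}e_s-\lambda^2\sigma_{ij}^2\mathcal N_t^{ij}/2\}$, taking $\lambda=h/\sigma_{ij}^2$ and applying Markov's inequality (Lemma~\ref{lem:adaptive-subgaussian}).
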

Theorem \ref{thm:tail_bandit} establishes the consistency of the payoff matrix estimator under the condition that $t\epsilon_t^2 \to \infty$. 
Corollary~\ref{coro:consistent_bandit} can be derived straightforwardly as follows.

\begin{corollary}[\textbf{\emph{Consistency of payoff matrix estimator}}]
\label{coro:consistent_bandit}
When non-increasing $\epsilon_t$ satisfies $t\epsilon_t^2 \to \infty$ as $t \to \infty$, $[\widehat{A}_t]_{ij}$ is a consistent estimator for $A_{ij}$, $i \in [m]$, $j \in [k]$.
\end{corollary}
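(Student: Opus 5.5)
The plan is to read Corollary~\ref{coro:consistent_bandit} off Theorem~\ref{thm:tail_bandit} directly. Fix an entry $(i,j)$ and an arbitrary $h>0$. By Theorem~\ref{thm:tail_bandit}, which applies because $\epsilon_t$ is non-increasing, the complementary event satisfies
\begin{equation*}
\mathbb P\left(\left|[\widehat A_t]_{ij}-A_{ij}\right|> h\right)
\le
\exp\left\{-\frac{t\epsilon_t^2}{8mk}\right\}
+2\exp\left\{-\frac{t\epsilon_t^2h^2}{4mk\sigma_{ij}^2}\right\}.
\end{equation*}
Consistency in probability then only requires that the right-hand side tends to zero for each fixed $h$.

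Since $m$, $k$, $\sigma_{ij}^2$ and $h$ are fixed constants and $t\epsilon_t^2\to\infty$ by assumption, both exponents diverge to $-\infty$. Hence both terms vanish as $t\to\infty$. Because $h>0$ was arbitrary, $[\widehat A_t]_{ij}\to A_{ij}$ in probability. Taking a union bound over the finitely many $mk$ entries also gives consistency of the whole matrix $\widehat A_t$, for instance in max-norm.

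There is no real obstacle; the only point to watch is a degenerate entry with $\sigma_{ij}^2=0$. In that case $r_t=A_{i_tj_t}$ almost surely, so the estimator equals $A_{ij}$ exactly once the pair has been visited. The first exponential term in Theorem~\ref{thm:tail_bandit}, which controls the probability that the pair is undersampled, still vanishes, so the conclusion holds there as well.

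Optionally, one can upgrade to almost sure convergence via Borel--Cantelli. This needs the bounds to be summable in $t$, which holds under the stronger rate $t\epsilon_t^2/\log t\to\infty$ recommended in Section~\ref{subsec:method_matrix}. The corollary as stated only needs the in-probability version.
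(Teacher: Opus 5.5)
Your proposal is correct and follows the paper's route: the paper gives no separate proof and simply derives the corollary from Theorem~\ref{thm:tail_bandit}. As you do, it notes that both exponential terms vanish for every fixed $h>0$ once $t\epsilon_t^2\to\infty$. Your side remarks are sound but not needed: the degenerate case $\sigma_{ij}^2=0$, and the Borel--Cantelli upgrade under $t\epsilon_t^2/\log t\to\infty$, which the paper itself uses later in the proof of Theorem~\ref{thm:saddlerate}.
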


Having established the consistency of the payoff matrix estimator, we now turn to the estimation of the Nash equilibrium.

\begin{theorem}[{\textbf{\emph{Consistency with respect to the Nash equilibrium set}}}]
\label{thm:nash_con}
Let $\mathcal E(A)$ is the Nash equilibrium set of the zero-sum game with true payoff matrix $A$. For any $(\bm x',\bm y')\in\Delta_m\times\Delta_k$, define its distance to the equilibrium set as
\[
\operatorname{dist}
\left(
(\bm x',\bm y'),
\mathcal E(A)
\right)
=
\inf_{(\bm x,\bm y)\in\mathcal E(A)}
\sqrt{
\left\|
\bm x'-\bm x
\right\|_2^2
+
\left\|
\bm y'-\bm y
\right\|_2^2
}.
\]
Under the conditions stated in Corollary~\ref{coro:consistent_bandit}, if $\eta_t\to 0$, then 
$\operatorname{dist}
\left(
(\widehat{\bm x}_t,\widehat{\bm y}_t),
\mathcal E(A)
\right)
\stackrel{p}{\to}
0$.
\end{theorem}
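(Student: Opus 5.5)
The argument is deterministic perturbation analysis combined with Corollary~\ref{coro:consistent_bandit}. We show that the equilibrium of the regularized game is an approximate Nash equilibrium of the true game, with a duality gap controlled by $\|\widehat A_{t-1}-A\|_{\max}$ and $\eta_t$. Then a compactness argument turns a small duality gap into a small distance to $\mathcal E(A)$.

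\textbf{Step 1 (duality gap bound).} For any $(\bm x',\bm y')$, define the true duality gap
\[
G_A(\bm x',\bm y')=\max_{\bm y\in\Delta_k}\bm x'^\top A\bm y-\min_{\bm x\in\Delta_m}\bm x^\top A\bm y'\ge 0 .
\]
By von Neumann's theorem, $G_A(\bm x',\bm y')=0$ if and only if $(\bm x',\bm y')\in\mathcal E(A)$.

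Fix $t$ and write $\widehat A=\widehat A_{t-1}$ and $\delta_t=\max_{ij}|[\widehat A]_{ij}-A_{ij}|$. Let $(\widehat{\bm x}_t,\widehat{\bm y}_t)$ be the saddle point of $\mathcal L_t$. The saddle inequalities give, for all $\bm x,\bm y$,
\[
\widehat{\bm x}_t^\top\widehat A\bm y+\eta_tR_Y(\bm y)\le \widehat{\bm x}_t^\top\widehat A\widehat{\bm y}_t+\eta_tR_Y(\widehat{\bm y}_t).
\]
The companion inequality in $\bm x$ carries the terms $-\eta_tR_X$. Since $0\le R_X\le\ln m$ and $0\le R_Y\le \ln k$, these imply
\[
\max_{\bm y}\widehat{\bm x}_t^\top\widehat A\bm y-\min_{\bm x}\bm x^\top\widehat A\widehat{\bm y}_t\le \eta_t(\ln m+\ln k).
\]
For probability vectors, $|\bm x^\top(\widehat A-A)\bm y|\le\delta_t$. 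Hence
\[
G_A(\widehat{\bm x}_t,\widehat{\bm y}_t)\le 2\delta_t+\eta_t\ln(mk).
\]

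\textbf{Step 2 (gap small implies distance small).} We argue by compactness, which requires no uniqueness of the equilibrium. The map $G_A$ is continuous on the compact set $\Delta_m\times\Delta_k$, and its zero set is exactly $\mathcal E(A)$. For $\varepsilon>0$, consider the compact set $K_\varepsilon=\{(\bm x,\bm y):\operatorname{dist}((\bm x,\bm y),\mathcal E(A))\ge\varepsilon\}$. On $K_\varepsilon$, $G_A$ attains a minimum $\gamma(\varepsilon)>0$; if $K_\varepsilon$ is empty the claim is trivial. Therefore
\[
\{\operatorname{dist}\ge\varepsilon\}\subseteq\{G_A\ge\gamma(\varepsilon)\}.
\]

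\textbf{Step 3 (probability).} By Step 2 and Step 1,
\[
\mathbb P\bigl(\operatorname{dist}((\widehat{\bm x}_t,\widehat{\bm y}_t),\mathcal E(A))\ge\varepsilon\bigr)\le\mathbb P\bigl(2\delta_t+\eta_t\ln(mk)\ge\gamma(\varepsilon)\bigr).
\]
Since $\eta_t\to0$, the term $\eta_t\ln(mk)$ is below $\gamma(\varepsilon)/2$ for large $t$. Corollary~\ref{coro:consistent_bandit} gives $\delta_t\stackrel{p}{\to}0$, using a union bound over the $mk$ entries (finitely many) and the index shift $t-1$. So the right side tends to $0$.

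The main obstacle is Step 2. Without uniqueness of the Nash equilibrium there is no Lipschitz or rate statement, and equilibria of perturbed games need not approach any particular equilibrium. Stating consistency in terms of distance to the set, and using the compactness/continuity argument above, avoids this. The deterministic bound in Step 1 is elementary.
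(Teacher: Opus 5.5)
Your proof is correct, but it takes a different route from the paper.

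The paper argues structurally. It invokes the continuity-of-equilibria result of \citet{feinberg2022continuity} (Lemma~\ref{lem:adapted-continuity-equilibria}) to obtain upper semi-continuity of the saddle-point correspondence $(A,\eta)\mapsto X^\ast(A,\eta)\times Y^\ast(A,\eta)$ at $(A,0)$. It then transfers $\|(\widehat A_{t-1},\eta_t)-(A,0)\|_2\stackrel{p}{\to}0$ into convergence toward the equilibrium set.

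You instead bound the Nash gap of the regularized saddle point explicitly, $\operatorname{Gap}_A(\widehat{\bm x}_t,\widehat{\bm y}_t)\le 2\|\widehat A_{t-1}-A\|_{\max}+\eta_t\ln(mk)$. This is the same decomposition the paper uses only later, in the regret proof and in Theorem~\ref{thm:rate_nash_rate}. You then convert small gap into small distance with an elementary compactness modulus $\gamma(\varepsilon)$, where the paper's rate result uses Hoffman's error bound.

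The paper's route buys generality. It needs only joint continuity of the objective in $(\bm x,\bm y,A,\eta)$ and compact strategy sets, so it would carry over unchanged to other regularizers or payoff parameterizations.

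Your route buys a self-contained argument with an explicit, quantitative gap bound. That bound yields a rate as soon as the compactness step is replaced by Hoffman's bound. Your Step~1 also uses only $0\le R_X\le\ln m$, $0\le R_Y\le\ln k$ and the saddle inequalities. It is therefore insensitive to the sign convention of the regularizers: the main text writes $-\eta_tR_X+\eta_tR_Y$, while the regret proof uses the convex--concave version $+\eta_tR_X-\eta_tR_Y$. All it requires is that the saddle point $(\widehat{\bm x}_t,\widehat{\bm y}_t)$ exists, which the paper takes for granted anyway.
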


Theorem~\ref{thm:nash_con} establishes the convergence of the estimated Nash equilibrium to the true equilibrium set under diminishing regularization parameters $\eta_t \to 0$ and consistent payoff matrix estimation. The proof is provided in Section~\ref{app:nash_con_proof} of the supplementary material, while the convergence rate of the estimated Nash equilibrium, including the contextual version, is further discussed in Section~\ref{app:con_rate_ne}.

Conditional on the history $\mathcal H_{t-1}$, we obtain the payoff matrix estimator $\widehat A_t$ and the corresponding estimated Nash equilibrium $(\widehat{\bm x}_t,\widehat{\bm y}_t)$ at time $t$. 
While convergence to the Nash equilibrium set does not require uniqueness, deriving an explicit asymptotic distribution for $\widehat A_t$ requires the estimated equilibrium policies to converge to a uniquely defined limit. Accordingly, we assume that the true payoff matrix admits a unique Nash equilibrium, as is common in the existing literature~\citep{bailey2018multiplicative,daskalakis2018last,wei2020linear,maiti2025limitations}.

\begin{assumption}[{\emph{Uniqueness}}]
\label{ass:unique}
    Payoff matrix $A$ has a unique Nash equilibrium $(\bm{x}^\ast,\bm{y}^\ast)$.
\end{assumption}
Assumption \ref{ass:unique} is commonly adopted in prior work \citep{bailey2018multiplicative,daskalakis2018last,wei2020linear,maiti2025limitations}, which enables us to prove convergence of $(\widetilde{\bm{x}}^\ast,\widetilde{\bm{y}}^\ast)$ to the unique Nash equilibrium $(\bm{x}^\ast,\bm{y}^\ast)$. 
The proofs are provided in Section~\ref{app:nash_behave_proof} of the supplementary material.

\begin{theorem}[{\textbf{\emph{Convergence of the behavior strategies}}}]
\label{thm:behave_con}
Suppose that all the conditions in Theorem~\ref{thm:nash_con} and
Assumption~\ref{ass:unique} hold. Then, with $\epsilon_t \to \epsilon_\infty$ and $c_t/\sqrt{(t-1)\epsilon_{t-1}^2} \to 0$, $\widetilde{\bm x}_t$ and $\widetilde{\bm y}_t$ converge in probability to limiting strategies $\widetilde{\bm x}^\ast$ and $\widetilde{\bm y}^\ast$, respectively. 
\end{theorem}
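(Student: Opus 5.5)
Under Assumption~\ref{ass:unique} the equilibrium set is the singleton $\mathcal E(A)=\{(\bm x^\ast,\bm y^\ast)\}$. The plan is to write each behavior strategy as a fixed continuous map applied to a convergent random input plus a vanishing perturbation. Then Theorem~\ref{thm:nash_con} and the continuous mapping theorem give the limit. The exact form of the map depends on the exploration scheme (Section~\ref{app:bandit_alg}), so each case is treated separately, but all follow the same template.

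\emph{Step 1 (estimated equilibrium).} Since $\mathcal E(A)$ is a singleton, Theorem~\ref{thm:nash_con} gives $\widehat{\bm x}_t\stackrel{p}{\to}\bm x^\ast$ and $\widehat{\bm y}_t\stackrel{p}{\to}\bm y^\ast$. This holds because $\operatorname{dist}((\widehat{\bm x}_t,\widehat{\bm y}_t),\mathcal E(A))$ is then exactly the Euclidean distance to $(\bm x^\ast,\bm y^\ast)$.

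\emph{Step 2 ($\epsilon$-Greedy).} Here $\widetilde{\bm x}_t=(1-\epsilon_t)\widehat{\bm x}_t+\epsilon_t\mathbf 1_m/m$, and analogously for $\widetilde{\bm y}_t$. With $\epsilon_t\to\epsilon_\infty$, Slutsky's lemma yields
\[
\widetilde{\bm x}^\ast=(1-\epsilon_\infty)\bm x^\ast+\epsilon_\infty\mathbf 1_m/m,\qquad \widetilde{\bm y}^\ast=(1-\epsilon_\infty)\bm y^\ast+\epsilon_\infty\mathbf 1_k/k .
\]

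\emph{Step 3 (UCB and TS).} The exploratory matrix has the form $\widetilde A_t=\widehat A_{t-1}+c_t W_t$. For UCB, $W_t$ has entries $N_{ij}(t-1)^{-1/2}$, with $N_{ij}(t-1)$ the visit count of pair $(i,j)$. For TS, $W_t$ is a standardized Gaussian draw scaled by the same order. The forced-exploration event used in the proof of Theorem~\ref{thm:tail_bandit} gives $N_{ij}(t-1)\ge (t-1)\epsilon_{t-1}^2/(2mk)$ with probability at least $1-\exp\{-(t-1)\epsilon_{t-1}^2/(8mk)\}$. On this event $\|c_tW_t\|_{\max}=O_p\bigl(c_t/\sqrt{(t-1)\epsilon_{t-1}^2}\bigr)\to0$; for TS we additionally use that a Gaussian draw is $O_p(1)$. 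Combining with Corollary~\ref{coro:consistent_bandit}, $\widetilde A_t\stackrel{p}{\to}A$.

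We then repeat the argument of Theorem~\ref{thm:nash_con} with $\widetilde A_t$ in place of $\widehat A_{t-1}$. That argument only uses consistency of the input matrix and $\eta_t\to0$, so the regularized equilibrium of $\widetilde A_t$ converges in probability to $(\bm x^\ast,\bm y^\ast)$. The clipping step, e.g.\ mixing with uniform at level $\epsilon_t$ or projecting onto $\{x_i\ge\epsilon_t/m\}$, is a map $g(\bm x,\epsilon)$ jointly continuous in $(\bm x,\epsilon)$. The continuous mapping theorem then gives $\widetilde{\bm x}_t\stackrel{p}{\to}g(\bm x^\ast,\epsilon_\infty)=:\widetilde{\bm x}^\ast$, and likewise for $\widetilde{\bm y}_t$.

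\emph{Main obstacle.} The delicate step is Step 3. We must control the bonus/noise term uniformly over all entries while the visit counts are themselves random and adaptively generated. We must also justify that the equilibrium-consistency argument of Theorem~\ref{thm:nash_con} transfers to the perturbed, regularized problem. For the first issue, I would intersect the high-probability lower-bound events on $N_{ij}(t-1)$ over all $mk$ pairs via a union bound; their failure probability vanishes because $t\epsilon_t^2\to\infty$. For the second, I would use upper hemicontinuity of the equilibrium correspondence in $(A,\eta)$ at $(A,0)$, via Berge's maximum theorem. Uniqueness then collapses the limit set to a point. Continuity of the clipping map in $\epsilon$ at $\epsilon_\infty$ also needs checking, including when $\epsilon_\infty=0$, where it is simply the identity.
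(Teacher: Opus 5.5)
Your proposal is correct and follows the paper's proof essentially step for step. Uniqueness turns Theorem~\ref{thm:nash_con} into pointwise convergence, $\epsilon$-Greedy then follows by continuous mapping, and for UCB/TS the count lower-bound event from the proof of Theorem~\ref{thm:tail_bandit} makes the bonus or posterior perturbation vanish so that $\widetilde A_t\to A$; upper semicontinuity of the equilibrium correspondence and joint continuity of the clipping projection then give the limit. One cosmetic difference: in TS the perturbation scale is the fixed $\rho$ rather than $c_t$, so the condition on $c_t$ is needed only for UCB.
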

Theorem~\ref{thm:behave_con} characterizes the limiting behavior policy under each exploration strategy. In particular, for every action pair $(i,j)$, the conditional selection probability satisfies $\pi_t(i,j) \stackrel{p}{\to}  \widetilde{x}_i^\ast\widetilde{y}_j^\ast$, where $\widetilde{\bm{x}}^\ast = \left(\widetilde x^\ast_1, \widetilde x^\ast_2, \cdots, \widetilde x^\ast_{m}\right)^\top$ and $\widetilde{\bm{y}}^\ast = \left(\widetilde y^\ast_1, \widetilde y^\ast_2, \cdots, \widetilde y^\ast_{k}\right)^\top$. These limiting selection probabilities determine the asymptotic variance of the payoff matrix estimator. In the following theorem, we use this convergence to establish asymptotic normality of $\widehat A_t$ by applying the Martingale Central Limit Theorem~\citep{hall1980martingale}.

\begin{theorem}[{\textbf{\emph{Asymptotic normality of payoff matrix estimator}}}]
\label{thm:norm_matrix}
    Suppose all the conditions in Theorem~\ref{thm:behave_con} hold, for all $i \in [m]$, $j \in [k]$, we have
    $\sqrt{t}\left([\widehat{A}_t]_{ij} - A_{ij}\right) \stackrel{D}{\to} \mathcal{N}(0,s_{ij}^2)$,
    where $s_{ij}^2 = \left(\widetilde{x}_i^\ast\widetilde{y}_j^\ast\right)^{-1} \sigma_{ij}^2$.
     A consistent estimator for $s_{ij}^2$ is  given by,
    \begin{equation}
    \label{equ:var_est}
        \frac{t\sum_{s=1}^t \mathbb{I}(i_s=i,j_s=j) \widehat{e}_s^2}{\left(\sum_{s=1}^t \mathbb{I}(i_s=i,j_s=j)\right)^2},
    \end{equation}
    where $\widehat{e}_s = r_s-[\widehat{A}_t]_{i_s,j_s}$.
\end{theorem}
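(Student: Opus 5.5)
Fix an action pair $(i,j)$ and write $N_t=\sum_{s=1}^t \mathbb{I}(i_s=i,j_s=j)$, so that
\[
\sqrt{t}\left([\widehat A_t]_{ij}-A_{ij}\right)=\frac{t^{-1/2}\sum_{s=1}^t \mathbb{I}(i_s=i,j_s=j)\,e_s}{N_t/t}.
\]
The plan is to show that the numerator converges in distribution to $\mathcal N(0,\widetilde x_i^\ast\widetilde y_j^\ast\sigma_{ij}^2)$ by the martingale central limit theorem \citep{hall1980martingale}. Separately, we show that the denominator converges in probability to $\widetilde x_i^\ast\widetilde y_j^\ast$. Slutsky's lemma then gives the limit variance $\sigma_{ij}^2/(\widetilde x_i^\ast\widetilde y_j^\ast)=s_{ij}^2$. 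Throughout we use the filtration $\mathcal F_s=\sigma(\mathcal H_s)$, augmented by any exploration randomness that generates $\pi_s$. With this choice, $\pi_s(i,j)$ is $\mathcal F_{s-1}$-measurable and $\mathbb P(i_s=i,j_s=j\mid\mathcal F_{s-1})=\pi_s(i,j)$.

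\textbf{Denominator.} Decompose
\[
\frac{N_t}{t}=\frac1t\sum_{s=1}^t \pi_s(i,j)+\frac1t\sum_{s=1}^t\left\{\mathbb{I}(i_s=i,j_s=j)-\pi_s(i,j)\right\}.
\]
The second sum is an average of bounded martingale differences. By Azuma--Hoeffding, or simply because its second moment is $O(1/t)$, it tends to zero in probability. For the first sum, Theorem~\ref{thm:behave_con} gives $\pi_s(i,j)=\widetilde x_{si}\widetilde y_{sj}\stackrel{p}{\to}\widetilde x_i^\ast\widetilde y_j^\ast$. Since these quantities are bounded by $1$, convergence in probability upgrades to $L^1$ convergence. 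The Cesàro average then converges in $L^1$, and hence in probability, to the same limit. We also need $\widetilde x_i^\ast\widetilde y_j^\ast>0$ for the limit to be nondegenerate. This holds because $\pi_s\ge \epsilon_s^2/mk$ and $\epsilon_s\to\epsilon_\infty$; if $\epsilon_\infty=0$, the limiting strategy must have full support, which I would need to assume or read off from the limiting form in Theorem~\ref{thm:behave_con}.

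\textbf{Numerator.} Set $\xi_{t,s}=t^{-1/2}\mathbb{I}(i_s=i,j_s=j)e_s$. Since $e_s$ is independent of $\mathcal H_{s-1}$ given $(i_s,j_s)$ and has mean zero, the array $\{\xi_{t,s}\}$ is a martingale difference array. Its conditional variance is
\[
\sum_{s=1}^t \mathbb E(\xi_{t,s}^2\mid\mathcal F_{s-1})=\frac{\sigma_{ij}^2}{t}\sum_{s=1}^t\pi_s(i,j)\stackrel{p}{\to}\sigma_{ij}^2\,\widetilde x_i^\ast\widetilde y_j^\ast,
\]
where the limit follows from the same Cesàro argument as above. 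The conditional Lindeberg condition follows from a conditional Lyapunov bound. Sub-Gaussianity gives a uniform bound on the conditional fourth moment of $e_s$, so
\[
\sum_{s=1}^t \mathbb E(\xi_{t,s}^4\mid\mathcal F_{s-1})\le \frac{C}{t}\to 0.
\]
The martingale CLT then yields the claimed limit for the numerator.

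\textbf{Variance estimator.} Rewrite \eqref{equ:var_est} as $(N_t/t)^{-2}\cdot t^{-1}\sum_{s}\mathbb{I}(i_s=i,j_s=j)\widehat e_s^2$. On the event $\{i_s=i,j_s=j\}$ we have $\widehat e_s=e_s-([\widehat A_t]_{ij}-A_{ij})$. Expanding the square splits $t^{-1}\sum_s \mathbb{I}(i_s=i,j_s=j)\widehat e_s^2$ into three pieces:
\begin{itemize}
\item the main piece $t^{-1}\sum_s \mathbb{I}(i_s=i,j_s=j)e_s^2$;
\item a cross term bounded by $2\,|[\widehat A_t]_{ij}-A_{ij}|\cdot t^{-1}\bigl|\sum_s \mathbb{I}(i_s=i,j_s=j)e_s\bigr|$;
\item a squared term bounded by $([\widehat A_t]_{ij}-A_{ij})^2$.
\end{itemize}
The cross and squared terms are $o_p(1)$ by Corollary~\ref{coro:consistent_bandit}. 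For the main piece, the martingale weak law applied to $\mathbb{I}(i_s=i,j_s=j)(e_s^2-\sigma_{ij}^2)$, whose conditional second moments are bounded, reduces it to $\sigma_{ij}^2 N_t/t\to\sigma_{ij}^2\widetilde x_i^\ast\widetilde y_j^\ast$. Dividing by $(N_t/t)^2\to(\widetilde x_i^\ast\widetilde y_j^\ast)^2$ gives consistency for $s_{ij}^2$.

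\textbf{Main obstacle.} The delicate step is the conditional-variance convergence. It requires Theorem~\ref{thm:behave_con} to deliver a deterministic limit, which is where Assumption~\ref{ass:unique} enters, together with strict positivity of $\widetilde x_i^\ast\widetilde y_j^\ast$.
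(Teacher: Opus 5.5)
Your proposal is correct and follows essentially the same route as the paper. Both proofs write the estimator as a ratio, apply the martingale CLT to $t^{-1/2}\sum_s \mathbb{I}(i_s=i,j_s=j)e_s$ with conditional variance $\sigma_{ij}^2 t^{-1}\sum_s \pi_s(i,j)$, use a martingale law of large numbers for $N_t/t$, combine via Slutsky, and expand the residuals into three terms for the variance estimator. Your departures from the paper are all sound refinements of steps it handles loosely: the conditional Lyapunov check, the explicit martingale weak law for $\sum_s \mathbb{I}(i_s=i,j_s=j)e_s^2$ (the paper simply cites the WLLN), the augmented filtration that makes $\pi_s$ predictable under Thompson sampling, and the caveat that $\widetilde x_i^\ast\widetilde y_j^\ast>0$ must be assumed when $\epsilon_\infty=0$.
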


As shown the expressions of $s_{ij}^2$ and Theorem~\ref{thm:behave_con}, the asymptotic variance of the payoff matrix estimator depends on both the limiting exploration rate $\epsilon_\infty$ and the true Nash equilibrium. The full proof is in Section~\ref{app:norm_matrix_proof} of the supplementary material.

We next investigate the convergence of the doubly robust value estimator. Since the estimated Nash equilibrium is obtained for $\mathcal{L}_t$ rather than the estimated payoff matrix $\widehat{A}_t$ directly, it is necessary to use $\mathcal{L}_t$ as a bridge to quantify the distance between the optimal policy value $V_A^\ast$ and the estimator $\widehat{V}_T^{\text{DR}}$. To this end, we first establish Corollary~\ref{thm:NEvalue}. Building on the asymptotic normality of the payoff matrix estimator and its ${T}^{-1/2}$ convergence rate, we can derive the rate at which the saddle value of $\mathcal{L}_t$ approaches the optimal policy value $V_A^\ast$. The proof is provided in Section \ref{app:saddlevalue_rate} of the supplementary material.

\begin{corollary}[\textbf{{\emph{Convergence rate of saddle value}}}]
\label{thm:NEvalue}
Under the conditions of Theorem \ref{thm:norm_matrix}, and further assuming that the payoff matrix estimator is bounded, for all $t \ge 1$ and for each $i, j$, the following convergence rate holds for the saddle value:
$$\mathcal{L}_t(\widehat{\bm{x}}_t,\widehat{\bm{y}}_t) - V_A^\ast = O_p\left((t^{-1}+\eta_t^2)^{\frac{1}{2}}\right).$$
\end{corollary}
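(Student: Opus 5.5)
We split the error through the unregularized saddle value of the estimated game,
$$V^\ast_{\widehat A_{t-1}} \coloneqq \min_{\bm x\in\Delta_m}\max_{\bm y\in\Delta_k}\bm x^\top \widehat A_{t-1}\bm y .$$
This gives
$$\mathcal{L}_t(\widehat{\bm x}_t,\widehat{\bm y}_t)-V_A^\ast=\bigl(\mathcal{L}_t(\widehat{\bm x}_t,\widehat{\bm y}_t)-V^\ast_{\widehat A_{t-1}}\bigr)+\bigl(V^\ast_{\widehat A_{t-1}}-V_A^\ast\bigr).$$
We bound the first term deterministically by $O(\eta_t)$ and the second term stochastically by $O_p(t^{-1/2})$. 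Since $|a|+|b|\le \sqrt{2(a^2+b^2)}$, the sum is $O_p\bigl((t^{-1}+\eta_t^2)^{1/2}\bigr)$, which is the stated rate.

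\textbf{Regularization term.} On the simplices, $0\le R_X(\bm x)\le \ln m$ and $0\le R_Y(\bm y)\le \ln k$, because $\sum_i x_i\ln x_i\in[-\ln m,0]$. Hence for every $(\bm x,\bm y)$,
$$\bm x^\top\widehat A_{t-1}\bm y-\eta_t\ln m\;\le\;\mathcal{L}_t(\bm x,\bm y)\;\le\;\bm x^\top\widehat A_{t-1}\bm y+\eta_t\ln k .$$
The operator $\min_{\bm x}\max_{\bm y}$ is monotone and commutes with adding constants. By Sion's theorem (strong convexity–concavity of $\mathcal L_t$), the saddle value $\mathcal{L}_t(\widehat{\bm x}_t,\widehat{\bm y}_t)$ equals $\min_{\bm x}\max_{\bm y}\mathcal L_t$. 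Therefore
$$\bigl|\mathcal{L}_t(\widehat{\bm x}_t,\widehat{\bm y}_t)-V^\ast_{\widehat A_{t-1}}\bigr|\le \eta_t\max(\ln m,\ln k).$$
This step is purely deterministic.

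\textbf{Statistical term.} The map $A\mapsto \min_{\bm x}\max_{\bm y}\bm x^\top A\bm y$ is 1-Lipschitz in the entrywise max norm, since $|\bm x^\top(A-B)\bm y|\le \max_{ij}|A_{ij}-B_{ij}|$ for probability vectors. So
$$|V^\ast_{\widehat A_{t-1}}-V^\ast_A|\le \max_{i,j}|[\widehat A_{t-1}]_{ij}-A_{ij}|.$$
By Theorem~\ref{thm:norm_matrix}, each $\sqrt{t-1}\,([\widehat A_{t-1}]_{ij}-A_{ij})$ converges in distribution, so it is tight, i.e.\ $O_p(1)$. A maximum over the finitely many $mk$ entries remains $O_p(1)$, and $t-1\asymp t$. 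This gives $O_p(t^{-1/2})$.

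The boundedness assumption on $\widehat A$ only matters if one wants $L^2$ or moment versions, or needs to handle the early rounds (small $t$, where some cell may be unobserved) uniformly over $t$. For the $O_p$ statement above it serves only to ensure that the quantities are well defined at every $t$.

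\textbf{Main obstacle.} The delicate point is justifying that $\mathcal L_t(\widehat{\bm x}_t,\widehat{\bm y}_t)$ is the min–max value, so that the monotone sandwich argument applies. This follows from the existence and uniqueness of the saddle point of $\mathcal L_t$, which is convex–concave on compact convex sets. Given that, the sandwich argument needs no perturbation analysis of the equilibrium points themselves, which could be non-smooth.
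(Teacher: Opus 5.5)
Your proof is correct, but it takes a different route from the paper's.

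\textbf{The paper's route.} The paper treats $(A,\eta)$ as a single parameter $\bm t$ of $g(\bm x,\bm y,\bm t)=\bm x^\top A\bm y-\eta R_X(\bm x)+\eta R_Y(\bm y)$. It applies the mean-value/envelope Lemma~\ref{lemma:sd_dif} along the segment from $\bm t_2=(A,0)$ to $\bm t_1=(\widehat A_{t-1},\eta_t)$. This sandwiches the saddle-value difference between $\nabla_{\bm t}g\,(\bm t_1-\bm t_2)$ evaluated at the crossed saddle points $(X^\ast(\bm t_1),Y^\ast(\bm t_2))$ and $(X^\ast(\bm t_2),Y^\ast(\bm t_1))$. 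The gradient is then bounded by a constant $C_g$, using boundedness of $R_X,R_Y$ and a box domain for the parameters, which is where the boundedness of $\widehat A_{t-1}$ is invoked. Cauchy--Schwarz then gives $C_g\|(\widehat A_{t-1}-A,\eta_t)\|_2$ in one step.

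\textbf{Your route compared.} You split through $V^\ast_{\widehat A_{t-1}}$ and use only that $\min\max$ is monotone and nonexpansive in the sup norm. Because $g$ is affine in $\bm t$, both arguments rest on the same perturbation principle, but yours is more elementary:
\begin{itemize}
\item it needs no mean-value theorem, differentiability, or box hypothesis, so, as you note, boundedness of $\widehat A_{t-1}$ plays no role;
\item your $\|\widehat A_{t-1}-A\|_{\max}$ bound is dimension-free, while the paper's Frobenius norm can be up to $\sqrt{mk}$ times larger.
\end{itemize}
The paper's packaging buys two things. It gives a single Lipschitz bound in the product norm $\|(A,\eta)\|_2$ already used in its equilibrium-continuity argument. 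Its gradient-at-saddle-point bounds are also the natural starting point for a first-order expansion of the saddle value, though only the rate is used here.

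\textbf{A caveat shared with the paper.} As displayed, $\mathcal L_t$ carries $-\eta_tR_X(\bm x)$ for the minimizing player, which makes it concave in $\bm x$ and convex in $\bm y$. So the saddle-point identity you invoke requires the sign convention $+\eta_tR_X-\eta_tR_Y$ that the paper uses in its regret proof; the same holds for the saddle inequalities inside Lemma~\ref{lemma:sd_dif}.

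Your route survives even without an exact saddle point. Set $\delta_t=\eta_t\max(\ln m,\ln k)$; then $|\mathcal L_t(\bm x,\bm y)-\bm x^\top\widehat A_{t-1}\bm y|\le\delta_t$ pointwise. Hence $\widehat{\bm x}_t$ and $\widehat{\bm y}_t$ are $2\delta_t$-optimal min--max and max--min strategies for $\widehat A_{t-1}$. This yields $|\mathcal L_t(\widehat{\bm x}_t,\widehat{\bm y}_t)-V^\ast_{\widehat A_{t-1}}|\le 3\delta_t$. The paper's lemma, by contrast, genuinely needs the saddle inequalities.
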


Corollary~\ref{thm:NEvalue} demonstrates the convergence rate of the saddle value $\mathcal{L}_t(\widehat{\bm{x}}_t,\widehat{\bm{y}}_t)$ under payoff matrix estimator $\widehat{A}_t$ and regularization parameter $\eta_t$. Specifically, when $\eta_t = \mathcal{O}(t^{-1/2})$, we have $\mathcal{L}_t(\widehat{\bm{x}}_t,\widehat{\bm{y}}_t) - V_A^\ast = O_p(t^{-1/2})$, which plays a pivotal role in proving Theorem \ref{thm:saddlerate}. 
The margin condition for the Nash Equilibrium in Assumption~\ref{ass:mix} is also required in Theorem~\ref{thm:saddlerate} to guarantee the $\sqrt{T}$-consistency of the doubly robust value estimator.

\begin{assumption}[{\emph{Margin}}] 
\label{ass:mix} 
For the Nash equilibrium $(\bm x^\ast,\bm y^\ast)$ of the true payoff matrix $A$, one of the following conditions holds: 
\textnormal{(i)} Positive equilibrium probabilities are bounded away from zero; or
\textnormal{(ii)} the equilibrium support is locally stable.
\end{assumption}

Assumption~\ref{ass:mix} is mild, covering both full-support equilibria and sparse equilibria with locally stable support. It rules out knife-edge cases where small perturbations change the equilibrium support and ensures $\sqrt{T}$-consistency of the doubly robust value estimator. A formal statement and proof are provided in Section~\ref{app:dr} of the supplementary material.

\begin{theorem}[\textbf{\emph{$\sqrt{T}$-consistency of value estimator}}]
\label{thm:saddlerate}
    Suppose that Assumption~\ref{ass:mix} and the conditions in Corollary~\ref{thm:NEvalue} hold. Assume further that $\eta_t = O(t^{-1/2})$. For UCB and TS, additionally assume that ${t\epsilon_t^2}/{\log t} \to \infty$, then
        $\sqrt{T} \left(\widehat{V}_T^{\text{DR}}- V_A^\ast \right) = O_p(1)$.
\end{theorem}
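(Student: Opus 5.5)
The plan is to decompose $\widehat{V}_T^{\text{DR}}-V_A^\ast$ into a martingale term and a bias term. Write
\[
\widehat{V}_T^{\text{DR}}-V_A^\ast=\frac{1}{T}\sum_{t=1}^T \xi_t+\frac{1}{T}\sum_{t=1}^T b_t .
\]
Here
\[
\xi_t=\widehat{\bm x}_t^\top\widehat A_{t-1}\widehat{\bm y}_t+\frac{\widehat x_{t,i_t}\widehat y_{t,j_t}}{\pi_t(i_t,j_t)}\bigl(r_t-[\widehat A_{t-1}]_{i_t,j_t}\bigr)-\widehat{\bm x}_t^\top A\widehat{\bm y}_t
\]
and $b_t=\widehat{\bm x}_t^\top A\widehat{\bm y}_t-V_A^\ast$.

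Conditional on $\mathcal H_{t-1}$, the quantities $\widehat{\bm x}_t,\widehat{\bm y}_t,\widehat A_{t-1},\pi_t$ are fixed, and $(i_t,j_t)$ is drawn from $\pi_t$. The importance weight then gives
\[
\mathbb E\Bigl[\frac{\widehat x_{t,i_t}\widehat y_{t,j_t}}{\pi_t(i_t,j_t)}\bigl(r_t-[\widehat A_{t-1}]_{i_t,j_t}\bigr)\Big|\mathcal H_{t-1}\Bigr]=\widehat{\bm x}_t^\top(A-\widehat A_{t-1})\widehat{\bm y}_t,
\]
so $\xi_t$ is a martingale difference sequence. This is the double-robustness identity. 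Because the cross terms vanish, $\mathbb E[(\sum_t\xi_t)^2]=\sum_t\mathbb E[\xi_t^2]$. Each conditional variance is bounded by a constant times $\sum_{ij}\widehat x_{ti}^2\widehat y_{tj}^2(\sigma_{ij}^2+([\widehat A_{t-1}]_{ij}-A_{ij})^2)/\pi_t(i,j)$.

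The delicate point is the weight $1/\pi_t$, since $\pi_t\ge\epsilon_t^2/mk$ may degenerate. I would control it in one of two ways. The first is to use the assumption $\epsilon_t\to\epsilon_\infty>0$ implicit in Theorem~\ref{thm:behave_con}. The second, for UCB and TS where the clipping is applied to the exploratory strategies, is to use the condition $t\epsilon_t^2/\log t\to\infty$. Under that condition, Theorem~\ref{thm:tail_bandit} with a union bound over $t$ shows that $\widehat A_{t-1}$ and $\widetilde A_t$ are uniformly close to $A$. The estimated and exploratory strategies are then close, so the ratio $\widehat x_{ti}\widehat y_{tj}/\pi_t(i,j)$ stays bounded with high probability. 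Together with boundedness of $\widehat A$, this gives $\sum_t\mathbb E[\xi_t^2]=O(T)$, hence $T^{-1}\sum_t\xi_t=O_p(T^{-1/2})$.

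For the bias term, I would write $b_t=\widehat{\bm x}_t^\top A\widehat{\bm y}_t-V_A^\ast$ and then use $\mathcal L_t(\widehat{\bm x}_t,\widehat{\bm y}_t)$ as a bridge:
\[
b_t=\bigl[\mathcal L_t(\widehat{\bm x}_t,\widehat{\bm y}_t)-V_A^\ast\bigr]+\widehat{\bm x}_t^\top(A-\widehat A_{t-1})\widehat{\bm y}_t+\eta_t\bigl(R_X(\widehat{\bm x}_t)-R_Y(\widehat{\bm y}_t)\bigr).
\]
By Corollary~\ref{thm:NEvalue}, the first piece is $O_p(t^{-1/2})$. The last piece is $O(\eta_t)=O(t^{-1/2})$, because the entropies are bounded by $\ln m$ and $\ln k$.

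The middle piece is the main obstacle. Crude bounds give only $O_p(t^{-1/2})$ per term, and summing then yields $T^{-1}\sum_t t^{-1/2}=O(T^{-1/2})$. That is enough, but only if the $O_p$ bounds are uniform enough to be summed. I would make them uniform by working in expectation: the bounded matrix together with the tail bound of Theorem~\ref{thm:tail_bandit} gives $\mathbb E|[\widehat A_{t}]_{ij}-A_{ij}|\lesssim (t\epsilon_t^2)^{-1/2}$, which is $O(t^{-1/2})$ when $\epsilon_\infty>0$.

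For Corollary~\ref{thm:NEvalue}, I would reprove the rate in $L^1$. This is where Assumption~\ref{ass:mix} enters. Under the margin condition, the regularized equilibrium of $(\widehat A_{t-1},\eta_t)$ keeps the true support (locally stable support) or stays in the interior (full support). Therefore $\widehat{\bm x}_t^\top A\widehat{\bm y}_t-V_A^\ast$ can be bounded by the duality gap of $\widehat{\bm x}_t,\widehat{\bm y}_t$ under $A$. That gap is in turn at most $2\|\widehat A_{t-1}-A\|_{\max}+O(\eta_t)$, because the regularized saddle point is an $O(\eta_t)$-approximate equilibrium of $\widehat A_{t-1}$. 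This gives $\mathbb E|b_t|\lesssim t^{-1/2}$.

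Summing, $T^{-1}\sum_t\mathbb E|b_t|\lesssim T^{-1/2}$, and combining with the martingale bound gives $\sqrt T(\widehat V_T^{\text{DR}}-V_A^\ast)=O_p(1)$.
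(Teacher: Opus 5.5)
Your decomposition is genuinely different from the paper's. The paper leaves the correction uncentered. It bounds $T^{-1}\sum_t \frac{\widehat x_{t,i_t}\widehat y_{t,j_t}}{\pi_t(i_t,j_t)}[A-\widehat A_{t-1}]_{i_t,j_t}$ in absolute value using an eventual uniform bound on the importance ratios. It handles the plug-in error $\widehat{\bm x}_t^\top\widehat A_{t-1}\widehat{\bm y}_t-V_A^\ast$ with Corollary~\ref{thm:NEvalue} plus an $O(\eta_t)$ entropy term. It controls the pure-noise martingale $\frac{\widehat x_{t,i_t}\widehat y_{t,j_t}}{\pi_t(i_t,j_t)}(r_t-A_{i_t,j_t})$ by localization and Chebyshev. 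Centering the full correction and bounding $b_t$ through the duality gap is a legitimate and somewhat cleaner route, since the gap bound makes Corollary~\ref{thm:NEvalue} and Lemma~\ref{lemma:sd_dif} unnecessary. Your worry about summing pointwise $O_p(t^{-1/2})$ bounds is also well founded; the paper simply cites Theorem~\ref{thm:norm_matrix} at that step.

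The genuine gap is that your argument only works when $\epsilon_\infty>0$, and that is not implicit in Theorem~\ref{thm:behave_con}. The paper explicitly allows $\epsilon_\infty=0$. The hypotheses $t\epsilon_t^2\to\infty$ and $t\epsilon_t^2/\log t\to\infty$ would be automatic otherwise, and the regret theorem and the simulations use $\epsilon_t\asymp t^{-1/4}$. When $\epsilon_t\to0$, your bound $\mathbb E|[\widehat A_t]_{ij}-A_{ij}|\lesssim(t\epsilon_t^2)^{-1/2}$ only gives $T^{-1}\sum_t(t\epsilon_t^2)^{-1/2}$, which is of order $T^{-1/4}$ for $\epsilon_t\asymp t^{-1/4}$. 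Moreover, if $\epsilon_\infty>0$ every importance ratio is trivially at most $mk/\epsilon_\infty^2$. Then neither Assumption~\ref{ass:mix} nor the $\log t$ condition is ever used, so the hard case has been assumed away.

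The missing idea is to get the rate from the realized visit frequencies, not from the worst-case bound $\pi_t(i,j)\ge\epsilon_t^2/(mk)$ behind Theorem~\ref{thm:tail_bandit}. Under margin condition (i), the limits $\widetilde x_i^\ast\widetilde y_j^\ast$ stay positive even when $\epsilon_\infty=0$, so $\mathcal N_t^{ij}/t\to\widetilde x_i^\ast\widetilde y_j^\ast>0$ as in the proof of Theorem~\ref{thm:norm_matrix}. On $\{\mathcal N_t^{ij}\ge ct\}$, Lemma~\ref{lem:adaptive-subgaussian} with $n=ct$ then gives errors of order $t^{-1/2}$. Localizing on $\{\mathcal N_t^{ij}\ge ct\ \text{for all } t\ge N\}$, whose probability tends to one once the counts converge almost surely, also settles the summation issue you raised.

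Under (i), your own decomposition can be sharpened further. A fully mixed equilibrium has $A\bm y^\ast=V_A^\ast\bm 1$ and $A^\top\bm x^\ast=V_A^\ast\bm 1$, so $b_t=(\widehat{\bm x}_t-\bm x^\ast)^\top A(\widehat{\bm y}_t-\bm y^\ast)$ is quadratic in the policy error. Combined with the Hoffman bound used for Theorem~\ref{thm:rate_nash_rate}, this tolerates the slow rate as long as $\epsilon_t$ decays no faster than $t^{-1/4}$.

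You have also placed the margin condition in the wrong step. The inequality $|\bm x^\top A\bm y-V_A^\ast|\le\operatorname{Gap}_A(\bm x,\bm y)$ holds for every pair, because both numbers lie in $[\min_{\bm x'}(\bm x')^\top A\bm y,\ \max_{\bm y'}\bm x^\top A\bm y']$. The margin is needed instead for the ratio $\widehat x_{ti}\widehat y_{tj}/\pi_t(i,j)$ under UCB and TS when $\epsilon_t\to0$: closeness of $\widehat{\bm x}_t$ and $\widetilde{\bm x}_t$ does not bound their ratio if both tend to zero. For $\epsilon$-greedy, the ratio is at most $(1-\epsilon_t)^{-2}$ with no condition at all.

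Finally, a high-probability bound on the ratio does not by itself give $\sum_t\mathbb E[\xi_t^2]=O(T)$. You need to truncate $\xi_t$ by the $\mathcal H_{t-1}$-measurable indicator that the ratio is at most $C$, which preserves the martingale-difference property; this is how the paper does it.
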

The optimal rate is constrained by the $\mathcal{O}(t^{-1/2})$ decay of the regularization parameter $\eta_t$; a slower decay of $\eta_t$ would lead to a slower convergence rate. The $\sqrt{T}$-consistency further enables the use of bootstrap methods~\citep{li2025proximal} to construct a confidence interval for the optimal game value, thereby completing the inference on the optimal value.

The last part is the regret analysis. 
Following~\citet{cai2023uncoupled}, for strategy pair $(\bm x_t,\bm y_t)\in\Delta_m\times\Delta_k$, define the Nash gap as $\operatorname{Gap}_A(\bm x_t,\bm y_t)
= \max_{\bm y\in\Delta_k}
\bm x^\top A\bm y_t
- \min_{\bm x\in\Delta_m}
\bm x_t^\top A\bm y$.
The cumulative Nash-gap regret is defined as $\operatorname{Reg}_A(T)
=
\sum_{t=1}^T
\operatorname{Gap}_A
\left(
\widetilde{\bm x}_t,
\widetilde{\bm y}_t
\right)$.
The Nash gap is nonnegative and equals zero if and only if the corresponding strategy pair is a Nash equilibrium. It measures the exploitability of the implemented behavior strategies and is therefore a natural regret criterion for evaluating convergence to the Nash equilibrium.

\begin{theorem}[\textbf{{\emph{\textbf{Regret bound}}}}]
\label{thm:regret_bandit}
For the UCB strategy, suppose that $c_t = O(\sqrt{\log t})$. Then, under each of the three exploration strategies, the cumulative Nash-gap regret satisfies 
\begin{equation*}
\operatorname{Reg}_A(T)
=
\widetilde O_p
\left(
\sqrt{mk}
\sum_{t=1}^T
\frac{1}{\epsilon_t\sqrt t}
+
\sum_{t=1}^T
\epsilon_t
+
\sum_{t=1}^T
\eta_t
\right).    
\end{equation*}
Let $\epsilon_t\asymp t^{-1/4}$ and $\eta_t = O(t^{-1/2})$. Then
$\operatorname{Reg}_A(T) = \widetilde O_p \left( \sqrt{mk}\,T^{3/4} \right)$.
\end{theorem}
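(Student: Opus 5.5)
The plan is to split the per-round Nash gap of the behavior pair $(\widetilde{\bm x}_t,\widetilde{\bm y}_t)$ into three pieces: an exploration piece, a regularization piece, and an estimation piece. For $\epsilon$-Greedy we have $\widetilde{\bm x}_t=(1-\epsilon_t)\widehat{\bm x}_t+\epsilon_t\bm 1/m$, and similarly for $\widetilde{\bm y}_t$. For UCB and TS, the behavior strategies are clipped equilibria of the exploratory matrix $\widetilde A_t$, so $\|\widetilde{\bm x}_t-\bar{\bm x}_t\|_1\le 2\epsilon_t$ for the unclipped equilibrium $\bar{\bm x}_t$ of $\widetilde A_t$.

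\textbf{Step 1: exploration piece.} The map $(\bm x,\bm y)\mapsto \operatorname{Gap}_A(\bm x,\bm y)$ is Lipschitz in $\ell_1$ with constant $\max_{ij}|A_{ij}|$. Hence
\[
\operatorname{Gap}_A(\widetilde{\bm x}_t,\widetilde{\bm y}_t)\le \operatorname{Gap}_A(\bar{\bm x}_t,\bar{\bm y}_t)+O(\epsilon_t),
\]
where $(\bar{\bm x}_t,\bar{\bm y}_t)$ is the regularized equilibrium of $B_t\in\{\widehat A_{t-1},\widetilde A_t\}$. Summing over $t$ gives the $\sum_t\epsilon_t$ term.

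\textbf{Step 2: regularization piece.} Let $(\bar{\bm x},\bar{\bm y})$ be the saddle point of $\bm x^\top B\bm y-\eta R_X(\bm x)+\eta R_Y(\bm y)$. Since $0\le R_X\le\ln m$ and $0\le R_Y\le \ln k$, the standard saddle-point argument gives
\[
\max_{\bm y}\bar{\bm x}^\top B\bm y-\min_{\bm x}\bm x^\top B\bar{\bm y}\le \eta(\ln m+\ln k).
\]
This yields the $\sum_t\eta_t$ term, with the logarithmic factors absorbed into $\widetilde O$.

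\textbf{Step 3: estimation piece.} For any two matrices, $|\operatorname{Gap}_A(\bm x,\bm y)-\operatorname{Gap}_B(\bm x,\bm y)|\le 2\max_{ij}|A_{ij}-B_{ij}|$. For $\epsilon$-Greedy I would take $B=\widehat A_{t-1}$. For UCB and TS I would take $B=\widetilde A_t$, writing $\widetilde A_t=\widehat A_{t-1}+$ a bonus or posterior perturbation of size $c_t/\sqrt{n_{ij,t-1}}$, with $c_t=O(\sqrt{\log t})$ and Gaussian tails giving an extra $\sqrt{\log t}$ for TS.

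It then remains to bound $\sum_t\max_{ij}|[\widehat A_{t-1}]_{ij}-A_{ij}|$. Following the proof of Theorem~\ref{thm:tail_bandit}, I would use a Freedman or Azuma bound on $\sum_s(\mathbb I(i_s=i,j_s=j)-\pi_s(i,j))$ together with $\pi_s(i,j)\ge\epsilon_s^2/mk$ and $\epsilon_s$ non-increasing. With high probability this gives $n_{ij,t}\ge t\epsilon_t^2/(2mk)$ simultaneously for all $t$ beyond a $\log$ burn-in; a union bound over $t$ and $(i,j)$ costs only a $\log$ factor. A self-normalized martingale bound then gives
\[
|[\widehat A_{t}]_{ij}-A_{ij}|\lesssim\sigma_{ij}\sqrt{\log(mkt)/n_{ij,t}}\lesssim \sqrt{mk\log(mkt)}/(\epsilon_t\sqrt t).
\]
The bonus term obeys the same bound. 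Summing gives $\sqrt{mk}\sum_t(\epsilon_t\sqrt t)^{-1}$ up to logarithms, and the early rounds before the burn-in contribute $O(\log T)$ since the gap is bounded.

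\textbf{Step 4: rate.} With $\epsilon_t\asymp t^{-1/4}$ we get $\sum_t t^{-1/4}\asymp T^{3/4}$ and $\sum_t t^{-1/4}\asymp T^{3/4}$. Also $\sum_t\eta_t=O(\sqrt T)$. Together these give $\widetilde O_p(\sqrt{mk}T^{3/4})$.

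I expect the main obstacle to be Step 3: obtaining the counting lower bound on $n_{ij,t}$ uniformly in $t$ under adaptive, time-varying $\epsilon_t$, and handling the UCB/TS perturbation uniformly. This is where the paper's $\widetilde O_p$ and $\log$ factors enter.
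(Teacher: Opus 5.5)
Your proposal is correct and follows essentially the same route as the paper. The paper uses the same per-round bound $\operatorname{Gap}_A(\widetilde{\bm x}_t,\widetilde{\bm y}_t)\le 4L_A\epsilon_t+2\|A-B_t\|_{\max}+\eta_t(\log m+\log k)$, then a uniform-in-$t$ high-probability bound on $\|\widehat A_t-A\|_{\max}$ from a count lower bound plus sub-Gaussian concentration, and it controls the UCB bonus and TS perturbation through the same count bound.

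One caution on Step~3: of your two options, only a variance-adaptive inequality works for the count lower bound, e.g.\ Freedman combined with peeling over the random variance, or the multiplicative Lemma~\ref{lem:adaptive_chernoff} that yields \eqref{equ1}. Azuma--Hoeffding with unit-range increments gives only $\exp\{-t\epsilon_t^4/(2m^2k^2)\}$, which does not vanish at $\epsilon_t\asymp t^{-1/4}$.
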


Theorem~\ref{thm:regret_bandit} does not require Assumptions~\ref{ass:unique} and~\ref{ass:mix}, and thus allows multiple Nash equilibria and zero equilibrium probabilities. The regret bound consists of payoff-estimation error, exploration cost, regularization bias, and, for UCB and TS, an additional perturbation term. Choosing $\epsilon_t\asymp t^{-1/4}$ balances the leading terms and yields the smallest regret rate. Details are provided in Section~\ref{app:thm:rerget_proof} of the supplementary material.

\subsection{Results for Online Contextual Matrix Games}
\label{sec:theory_con}
Following Section~\ref{sec:theory_bandit}, we first establish the tail bound for the parameter estimator and the convergence of the estimated Nash equilibrium and behavior strategies. We then derive asymptotic normality, the saddle-value convergence rate, the $\sqrt{T}$-consistency of the doubly robust value estimator, and the regret bound. Complete proofs are provided in Section~\ref{app:proof_context} of the supplementary material. The corresponding theoretical results under model misspecification are established in Section~\ref{app:mis_model}.

We begin with Assumption~\ref{ass:bound}, a standard condition in contextual bandits \citep{chen2021statistical,tajik2024novel,shen2024doubly}, to derive the tail bound for the parameter estimator. The proof of Theorem~\ref{conthm:tail} is provided in Section~\ref{app:conthm:tail_proof}.

\begin{assumption}[\emph{Boundness}]
\label{ass:bound}
    There exists a constant positive $L_z$ such that $\Vert \bm{z} \Vert_\infty \le L_z$ for any $\bm{z}$ and $\Sigma = \mathbb{E}( \bm{z} \bm{z}^\top)$ has minimum eigenvalue $\lambda_{\min}(\Sigma) > \lambda$ for some $\lambda > 0$.
\end{assumption}

\begin{theorem}[{\textbf{\emph{Tail bound of parameter estimator}}}]
\label{conthm:tail}
With Assumption \ref{ass:bound} hold, when applying OnGameLearn with three bandit algorithms for online contextual matrix games, with $\epsilon_t$ is non-increasing, then for any $h > 0$ and $i=1,2,\cdots,m,\, j =1,2,\cdots,k$, we have
\small{
\begin{align*}
\mathbb P\left(
\left\|
\widehat{\bm\beta}_t^{ij}
-
\bm\beta^{ij}
\right\|_1
\le
h\right) \ge
1-d\exp\left\{
-
\frac{
t\epsilon_t^2\lambda}{
8mkL_z^2}
\right\}
-
\left(
1+tL_z^2
\right)^{d/2}
\exp\left\{
-
\frac{
t^2\epsilon_t^4\lambda^2h^2}{8dm^2k^2\sigma_{ij}^2
\left[
1+\frac{t\epsilon_t^2\lambda}{2mk}
\right]
}
\right\}.
\end{align*}
}
\end{theorem}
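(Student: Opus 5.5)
We follow the standard route for tail bounds of adaptive OLS estimators in contextual bandits (as in \citet{chen2021statistical}). The error splits as
\[
\widehat{\bm\beta}^{ij}_t-\bm\beta^{ij}
=\Big(\tfrac1t\textstyle\sum_{s=1}^t \mathbb I_s^{ij}\bm z_s\bm z_s^\top\Big)^{-1}\Big(\tfrac1t\textstyle\sum_{s=1}^t \mathbb I_s^{ij}\bm z_s e_s\Big),
\qquad \mathbb I_s^{ij}=\mathbb I\{i_s=i,j_s=j\}.
\]
We control the two factors separately: a lower bound on the minimum eigenvalue of the Gram matrix $G_t=\sum_s \mathbb I_s^{ij}\bm z_s\bm z_s^\top$, and an upper bound on the self-normalized martingale $S_t=\sum_s \mathbb I_s^{ij}\bm z_s e_s$. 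A union bound then gives the two subtracted terms in the statement.

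\textbf{Step 1 (Gram matrix).} Given $\mathcal H^{\mathrm{con}}_{s-1}$ and $\bm z_s$, the behavior policy satisfies $\pi_s^{\mathrm{con}}(i,j\mid\bm z_s)\ge \epsilon_s^2/(mk)\ge \epsilon_t^2/(mk)$, using that $\epsilon_s$ is non-increasing. Because $\bm z_s$ is drawn independently of the past,
\[
\mathbb E\big[\mathbb I_s^{ij}\bm z_s\bm z_s^\top\mid\mathcal H^{\mathrm{con}}_{s-1}\big]\succeq (\epsilon_t^2/mk)\,\Sigma,
\]
whose minimum eigenvalue exceeds $\epsilon_t^2\lambda/(mk)$. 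Each summand is positive semidefinite with $\lambda_{\max}(\mathbb I_s^{ij}\bm z_s\bm z_s^\top)\le\|\bm z_s\|_2^2$. This is bounded by $L_z^2$ after normalization; the $\ell_\infty$ bound gives $dL_z^2$, and we would absorb the $d$ as the stated constants suggest. We apply the matrix Freedman/Chernoff inequality for adapted PSD sequences (Tropp) to $\lambda_{\min}(G_t)$ at deviation level $1/2$. This gives
\[
\mathbb P\big(\lambda_{\min}(G_t)\le t\epsilon_t^2\lambda/(2mk)\big)\le d\exp\{-t\epsilon_t^2\lambda/(8mkL_z^2)\},
\]
which is the first subtracted term.

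\textbf{Step 2 (self-normalized noise).} We apply the Abbasi-Yadkori et al.\ self-normalized bound with regularizer $V=I_d$. Using sub-Gaussianity of $e_s$ conditional on $(i_s,j_s)$ with proxy $\sigma_{ij}^2$, with probability at least $1-\delta$,
\[
\|S_t\|^2_{(I+G_t)^{-1}}\le 2\sigma_{ij}^2\log\big(\det(I+G_t)^{1/2}/\delta\big).
\]
Since $\det(I+G_t)\le(1+tL_z^2)^d$, this produces the factor $(1+tL_z^2)^{d/2}$. On the event of Step 1, $\lambda_{\min}(I+G_t)\ge 1+t\epsilon_t^2\lambda/(2mk)$. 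We then chain the norms:
\[
\|\widehat{\bm\beta}-\bm\beta\|_1\le\sqrt d\,\|G_t^{-1}S_t\|_2\le \sqrt d\,\lambda_{\min}(G_t)^{-1}\,\lambda_{\max}(I+G_t)^{1/2}\,\|S_t\|_{(I+G_t)^{-1}},
\]
or more cleanly we bound $\|G_t^{-1}S_t\|_2$ via $G_t^{-1}(I+G_t)G_t^{-1}\preceq \lambda_{\min}(G_t)^{-2}(1+\lambda_{\min}(G_t))$. The event $\|\cdot\|_1>h$ then requires
\[
\|S_t\|^2_{(I+G_t)^{-1}}\ge \frac{h^2\,(t\epsilon_t^2\lambda/2mk)^2}{d\,\big(1+t\epsilon_t^2\lambda/2mk\big)}.
\]
Solving for $\delta$ gives
\[
\delta=(1+tL_z^2)^{d/2}\exp\Big\{-\frac{t^2\epsilon_t^4\lambda^2h^2}{8dm^2k^2\sigma_{ij}^2\,[1+t\epsilon_t^2\lambda/(2mk)]}\Big\},
\]
which matches the second subtracted term.

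\textbf{Main obstacle.} The delicate step is Step 2's matrix-norm conversion. It must use the bound on $G_t^{-1}(I+G_t)G_t^{-1}$ only on the good event of Step 1, so that the union bound is legitimate. The determinant bound is unconditional, so no circularity arises.
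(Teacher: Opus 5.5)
Your proposal follows the paper's proof essentially step for step: the same OLS error decomposition, the adapted matrix Chernoff bound with $R=L_z^2$, $\mu_t=t\epsilon_t^2\lambda/(mk)$ and deviation $1/2$ for the Gram matrix, a self-normalized (Gaussian-mixture) bound with regularizer $I_d$ and $\det(I_d+G_t)\le(1+tL_z^2)^d$, monotonicity of $v\mapsto(1+v)/v^2$ to get $G_t^{-2}\preceq\frac{1+t\epsilon_t^2\lambda/(2mk)}{[t\epsilon_t^2\lambda/(2mk)]^2}(I_d+G_t)^{-1}$ on the good event, and a union bound. The one thing to correct is your remark about ``absorbing the $d$'': a factor $d$ in the exponent cannot be absorbed into the stated constant $8mkL_z^2$, so the Chernoff term requires $L_z$ to bound $\|\bm z_s\|_2$ (this is what the paper's proof does, even though Assumption~\ref{ass:bound} is phrased in $\ell_\infty$), whereas the determinant factor holds under either reading since $\det(I_d+G_t)\le(1+\operatorname{Tr}(G_t)/d)^d\le(1+tL_z^2)^d$.
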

The first term vanishes under $t\epsilon_t^2\to\infty$, as in Corollary~\ref{coro:consistent_bandit}. In the contextual setting, the additional factor $\left(1+tL_z^2\right)^{d/2}$ requires the stronger condition ${t\epsilon_t^2}/{\log t}\to\infty$, which ensures consistency in Corollary~\ref{concoro:consistent}. Details are in Section~\ref{app:conthm:coro_tail_proof} of the supplementary material.

\begin{corollary}[{\textbf{\emph{Consistency of payoff matrix estimator}}}]
\label{concoro:consistent}
Suppose conditions in Theorem \ref{conthm:tail} are satisfied, when non-increasing $\epsilon_t$ satisfies $t\epsilon_t^2/\log t \to \infty$ as $t \to \infty$, $\widehat{\bm{\beta}}_t^{ij}$ is a consistent estimator for $\bm{\beta}^{ij}$, $i \in [m]$, $j \in [k]$.
\end{corollary}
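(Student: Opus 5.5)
The argument applies Theorem~\ref{conthm:tail} with a fixed tolerance $h>0$ and shows that both subtracted terms on its right-hand side go to zero as $t\to\infty$. Consistency means $\mathbb P(\|\widehat{\bm\beta}_t^{ij}-\bm\beta^{ij}\|_1>h)\to 0$ for every $h>0$, and this probability is bounded by the sum of those two terms. Since $m,k,d,L_z,\lambda,\sigma_{ij}^2$ are fixed constants, the whole proof reduces to checking limits of explicit expressions under the growth condition $t\epsilon_t^2/\log t\to\infty$.

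\textbf{First term.} Because $t\epsilon_t^2/\log t\to\infty$ and $\log t\to\infty$, we also have $t\epsilon_t^2\to\infty$. Hence
\[
d\exp\{-t\epsilon_t^2\lambda/(8mkL_z^2)\}\to 0 .
\]

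\textbf{Second term.} Write $u_t=t\epsilon_t^2$ and $a=\lambda/(2mk)$. Then
\[
\frac{t^2\epsilon_t^4\lambda^2h^2}{8dm^2k^2\sigma_{ij}^2(1+au_t)}=\frac{h^2a^2}{2d\sigma_{ij}^2}\cdot\frac{u_t^2}{1+au_t}.
\]
Since $u_t\to\infty$, we have $u_t^2/(1+au_t)\ge u_t/(2a)$ for all large $t$. So for large $t$ the exponent is at least $c\,u_t$, where $c=h^2a/(4d\sigma_{ij}^2)>0$ is a constant. The prefactor satisfies $(1+tL_z^2)^{d/2}=\exp\{(d/2)\log(1+tL_z^2)\}\le\exp\{C_d\log t\}$ for large $t$. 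The whole term is therefore at most
\[
\exp\{\log t\,(C_d-c\,u_t/\log t)\}.
\]
By the hypothesis $u_t/\log t\to\infty$, the bracket tends to $-\infty$, and so the term tends to $0$.

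The only real obstacle is this second term: the polynomial factor $(1+tL_z^2)^{d/2}$ must be beaten by the exponential decay. This is exactly where the stronger condition $t\epsilon_t^2/\log t\to\infty$ is needed, rather than $t\epsilon_t^2\to\infty$ as in Corollary~\ref{coro:consistent_bandit}. Combining the two limits gives $\mathbb P(\|\widehat{\bm\beta}_t^{ij}-\bm\beta^{ij}\|_1>h)\to0$ for every $h>0$ and every pair $(i,j)$. This is consistency in the $\ell_1$ norm, and hence in any norm on $\mathbb R^d$.
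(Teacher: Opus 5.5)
Your proposal is correct and follows the paper's proof essentially line by line. The paper likewise uses $1+t\epsilon_t^2\lambda/(2mk)\le t\epsilon_t^2\lambda/(mk)$ for large $t$ to lower-bound the exponent by $t\epsilon_t^2\lambda h^2/(8dmk\sigma_{ij}^2)$, which is exactly your $c\,u_t$; it then bounds $\log(1+tL_z^2)\le C\log t$ and concludes using $t\epsilon_t^2/\log t\to\infty$.
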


With the consistency of the parameter estimators, which ensures that the payoff matrix is accurately estimated given the contexts $\bm{z}_t$ at time $t$, we next analyze the convergence of the estimated mixed strategies in the contextual setting.

\begin{theorem}[{\textbf{\emph{Convergence of estimated Nash Equilibrium}}}]
\label{conthm:nash_con}
Under the conditions of Corollary \ref{concoro:consistent}, if the parameter estimates $\widehat{\bm{\beta}}^{ij}_t$ remain bounded and $\eta_t \to 0$ as $t \to \infty$, we have
    $\operatorname{dist} \left( (\widehat{\phi}_t(\bm{z_t}), \widehat{\nu}_t(\bm{z_t})),\mathcal{E}(M(\bm{z}_t)) \right) \stackrel{p}{\to} 0.$
\end{theorem}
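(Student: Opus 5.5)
The argument runs in two steps. First, control the payoff matrix error $\|\widehat M_{t-1}(\bm z_t)-M(\bm z_t)\|$ uniformly over the context. Then pass this error, together with the vanishing regularization, through a deterministic continuity argument for the regularized saddle-point map. That argument is essentially the one behind Theorem~\ref{thm:nash_con}, with $\widehat A_{t-1}$ replaced by $\widehat M_{t-1}(\bm z_t)$. The one genuinely new feature is that the target set $\mathcal E(M(\bm z_t))$ moves with $\bm z_t$. The continuity statement therefore has to hold uniformly over the set of possible payoff matrices, not only at a single fixed $A$.

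\textbf{Step 1 (payoff error).} By Assumption~\ref{ass:bound}, for each $(i,j)$,
\[
\bigl|[\widehat M_{t-1}(\bm z_t)]_{ij}-[M(\bm z_t)]_{ij}\bigr| = \bigl|\bm z_t^\top(\widehat{\bm\beta}^{ij}_{t-1}-\bm\beta^{ij})\bigr| \le L_z \|\widehat{\bm\beta}^{ij}_{t-1}-\bm\beta^{ij}\|_1 .
\]
Corollary~\ref{concoro:consistent} gives $\|\widehat{\bm\beta}^{ij}_{t-1}-\bm\beta^{ij}\|_1 \stackrel{p}{\to} 0$, so $\delta_t:=\max_{i,j}|[\widehat M_{t-1}(\bm z_t)-M(\bm z_t)]_{ij}| \stackrel{p}{\to} 0$. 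The bound does not depend on $\bm z_t$, so the fact that $\bm z_t$ is drawn fresh at each round causes no difficulty.

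\textbf{Step 2 (approximate equilibrium).} Write $(\widehat{\bm x},\widehat{\bm y})=(\widehat\phi_t(\bm z_t),\widehat\nu_t(\bm z_t))$. These form the saddle point of $\mathcal L^{\mathrm{con}}_t$. Since $0\le R_X\le\ln m$ and $0\le R_Y\le\ln k$, for every $\bm y\in\Delta_k$,
\[
\widehat{\bm x}^\top \widehat M_{t-1}\bm y \le \widehat{\bm x}^\top \widehat M_{t-1}\widehat{\bm y} + \eta_t(\ln m+\ln k),
\]
and symmetrically for deviations in $\bm x$. Replacing $\widehat M_{t-1}$ by $M(\bm z_t)$ costs at most $2\delta_t$ in each inequality, because $\bm x^\top B\bm y\le \max_{ij}|B_{ij}|$ on the simplices. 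Hence the Nash gap satisfies
\[
\operatorname{Gap}_{M(\bm z_t)}(\widehat{\bm x},\widehat{\bm y})\le 2\eta_t\ln(mk)+4\delta_t \stackrel{p}{\to} 0 .
\]

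\textbf{Step 3 (gap to distance), the main obstacle.} It remains to show that a small Nash gap forces a small distance to $\mathcal E(M(\bm z_t))$, uniformly in $\bm z_t$. Let $\mathcal K=\{M(\bm z):\|\bm z\|_\infty\le L_z\}$. This set is compact, because $\bm z\mapsto M(\bm z)$ is linear and bounded. The key claim is that for every $\varepsilon>0$ there is $\gamma>0$ such that, for all $B\in\mathcal K$ and $(\bm x,\bm y)\in\Delta_m\times\Delta_k$, $\operatorname{Gap}_B(\bm x,\bm y)<\gamma$ implies $\operatorname{dist}((\bm x,\bm y),\mathcal E(B))<\varepsilon$.

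I would prove this by contradiction. Suppose it fails. Then there are $B_n\in\mathcal K$ and pairs $(\bm x_n,\bm y_n)$ with gap tending to $0$ and distance at least $\varepsilon$. Pass to a subsequence along which $B_n\to B$ and $(\bm x_n,\bm y_n)\to(\bar{\bm x},\bar{\bm y})$. The gap function is jointly continuous in $(B,\bm x,\bm y)$, so $\operatorname{Gap}_B(\bar{\bm x},\bar{\bm y})=0$, i.e.\ $(\bar{\bm x},\bar{\bm y})\in\mathcal E(B)$. The difficulty is that $B\mapsto\mathcal E(B)$ is only upper hemicontinuous, not lower, so $\mathcal E(B_n)$ need not contain points near $(\bar{\bm x},\bar{\bm y})$.

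To get around this, I would assume a continuity condition on the context map: either uniqueness and continuity of $\mathcal E(M(\bm z))$ over the context support, a contextual analogue of Assumption~\ref{ass:unique}, or compactness of the context support with continuity there. If the proof of Theorem~\ref{thm:nash_con} instead works directly with the fixed true matrix, I would condition on $\bm z_t=\bm z$. For each fixed $\bm z$ the argument of Theorem~\ref{thm:nash_con} applies to the fixed matrix $B=M(\bm z)$, which gives a $\gamma(\bm z,\varepsilon)$. Since $\bm z_t$ is independent of $\mathcal H^{\mathrm{con}}_{t-1}$, integrating over $\bm z\sim\mathcal P_z$ and applying dominated convergence (the probabilities are bounded by $1$) yields
\[
\mathbb P\bigl(\operatorname{dist}>\varepsilon\bigr)\le \mathbb E_{\bm z}\,\mathbb P\bigl(\operatorname{Gap}_{M(\bm z)}(\widehat{\bm x},\widehat{\bm y})\ge\gamma(\bm z,\varepsilon)\bigr)\to 0 .
\]
This pointwise-then-integrate route avoids the uniformity issue entirely, and I would try it first.
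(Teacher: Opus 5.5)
Your argument is correct once you commit to the pointwise-then-integrate route, and it differs from the paper's.

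\textbf{The paper's route.} It bounds $\sup_{\bm z}|[\widehat M_{t-1}(\bm z)]_{ij}-[M(\bm z)]_{ij}|\le L_z\|\widehat{\bm\beta}^{ij}_{t-1}-\bm\beta^{ij}\|_1$. From this it deduces $\|(\widehat M_{t-1}(\bm z_t),\eta_t)-(M(\bm z_t),0)\|\stackrel{p}{\to}0$. It then directly invokes upper semicontinuity of the saddle-point correspondence (Lemma~\ref{lem:adapted-continuity-equilibria}), exactly as in the non-contextual case.

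\textbf{Your route.} You derive an explicit Nash-gap bound $\operatorname{Gap}_{M(\bm z_t)}\le 4\delta_t+\eta_t(\ln m+\ln k)$. The paper uses this inequality only later, for the regret and rate results. You then turn small gap into small distance by compactness for each \emph{fixed} $\bm z$. Finally you integrate over $\bm z_t\sim\mathcal P_z$, using independence of $\bm z_t$ from $\mathcal H^{\mathrm{con}}_{t-1}$ and dominated convergence.

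\textbf{Comparison.} The paper's route is shorter. Yours handles the moving target $\mathcal E(M(\bm z_t))$ explicitly. Semicontinuity at a fixed base point $(M(\bm z),0)$ gives a tolerance $\delta(\bm z,\varepsilon)$ that depends on $\bm z$. The paper leaves implicit why a random base point is harmless: the supremum over $\bm z$ makes the payoff error uniform, but not the continuity modulus of $\mathcal E$. Your conditioning step supplies exactly this missing piece, with no extra assumption. It also shows that a uniform condition such as Assumption~\ref{conass:uniform_realized_error_bound} is needed only for rates.

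\textbf{Clean-ups for the write-up.} Drop the uniform claim over $\mathcal K$, and drop the extra continuity or uniqueness assumptions you float. That claim is false in general. Take $M(\bm z)=z_1C$ with $z_1\in[0,1]$ and $C$ having a unique equilibrium. For $z_1>0$ we have $\mathcal E(z_1C)=\mathcal E(C)$. So pairs far from $\mathcal E(C)$ have $\operatorname{Gap}_{z_1C}=z_1\operatorname{Gap}_C\to0$ as $z_1\to0$, yet stay far from the equilibrium set. The pointwise argument never needs the uniform claim. The boundedness hypothesis on $\widehat{\bm\beta}^{ij}_t$ is also unused in your proof.
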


The above theorem shows that the estimated mixed strategies $\widehat{\phi}(\bm{z}_t)$ and $\widehat{\nu}(\bm{z}_t)$ converge in $\ell_2$-norm to the optimal strategy sets, respectively. Following the similar arguments as in Theorem~\ref{thm:nash_con}, we can similarly prove Theorem~\ref{conthm:nash_con}, with details provided in Section~\ref{app:conthm:nash_con_proof} of the supplementary material.

Similarly, while maintaining the idea from the online matrix games framework (Assumption \ref{ass:unique}), Assumption \ref{conass:unique} states the uniqueness of the Nash equilibrium to hold across all contextual features $\bm{z}\sim\mathcal{P}_z$ to show the convergence of the behavior strategies. The proofs are provided in Section~\ref{app:conthm:nash_behave_proof} of the supplementary material.

\begin{assumption}[\emph{Uniqueness}]
\label{conass:unique}
    For $\forall \bm{z} \sim \mathcal{P}_z$, $M(\bm{z})$ has a unique Nash equilibrium $(\phi^\ast(\bm{z}),\nu^\ast(\bm{z}))$.
\end{assumption}

\begin{theorem}[\textbf{\emph{Convergence of the contextual behavior strategies}}]
\label{conthm:behave_con}
Suppose that all the conditions in Theorem~\ref{conthm:nash_con} and
Assumption~\ref{conass:unique} hold. If $\epsilon_t\to\epsilon_\infty$ and $c_t/\sqrt{(t-1)\epsilon_{t-1}^2} \to 0$ as $t \to \infty$, then we have 
$\Vert \widetilde{\phi}_t(\bm z_t) - \widetilde{\phi}^{\ast}(\bm z_t) \Vert_2 \stackrel{p}{\to} 0$, and
$\Vert \widetilde{\nu}_t(\bm z_t) - \widetilde{\nu}^{\ast}(\bm z_t) \Vert_2 \stackrel{p}{\to} 0.$
\end{theorem}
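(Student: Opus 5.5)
We prove Theorem~\ref{conthm:behave_con} by reducing it to the continuity of the map from a context-dependent payoff matrix to its (clipped) exploratory equilibrium. We then combine this with the consistency results already established. The limiting behavior strategies are defined pointwise in $\bm z$: for $\epsilon$-Greedy, $\widetilde{\phi}^\ast(\bm z)=(1-\epsilon_\infty)\phi^\ast(\bm z)+\epsilon_\infty\bm 1_m/m$. For UCB and TS, $\widetilde{\phi}^\ast(\bm z)$ is the clipping (at level $\epsilon_\infty$) of $\phi^\ast(\bm z)$, since the exploration bonus or perturbation vanishes in the limit. The definitions for $\widetilde{\nu}^\ast$ are analogous. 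The clipping operation used in the construction of Section~\ref{app:bandit_alg} is Lipschitz in its input vector and in $\epsilon$. It therefore suffices to show two things: $\Vert\widehat{\phi}_t(\bm z_t)-\phi^\ast(\bm z_t)\Vert_2\stackrel{p}{\to}0$, which gives the $\epsilon$-Greedy case, and the same convergence with the exploratory equilibrium computed from $\widetilde M_t(\bm z_t)$ in place of $\widehat{\phi}_t(\bm z_t)$, which gives UCB and TS. The deterministic term $\epsilon_t\to\epsilon_\infty$ is then absorbed by the triangle inequality.

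\textbf{Step 1: estimated strategies.} Under Assumption~\ref{conass:unique}, the set $\mathcal E(M(\bm z_t))$ is the singleton $\{(\phi^\ast(\bm z_t),\nu^\ast(\bm z_t))\}$. The distance in Theorem~\ref{conthm:nash_con} therefore equals $\sqrt{\Vert\widehat\phi_t-\phi^\ast\Vert_2^2+\Vert\widehat\nu_t-\nu^\ast\Vert_2^2}$ evaluated at $\bm z_t$, and Theorem~\ref{conthm:nash_con} gives the convergence directly.

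\textbf{Step 2: exploratory matrices.} For UCB, the exploratory matrix has entries $\bm z_t^\top\widehat{\bm\beta}^{ij}_{t-1}\pm c_t\cdot(\text{confidence width})$. By Theorem~\ref{conthm:tail} and Assumption~\ref{ass:bound}, the width is of order $L_z\sqrt d/\sqrt{(t-1)\epsilon_{t-1}^2\lambda/(mk)}$, so the perturbation is $O_p\big(c_t/\sqrt{(t-1)\epsilon_{t-1}^2}\big)\to0$ by hypothesis. For TS, the sampled parameter satisfies $\widetilde{\bm\beta}-\widehat{\bm\beta}=O_p(c_t/\sqrt{(t-1)\epsilon_{t-1}^2})$ through its posterior covariance, which gives the same bound. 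Combining this with Corollary~\ref{concoro:consistent} and $\Vert\bm z_t\Vert_\infty\le L_z$ yields $\max_{ij}|[\widetilde M_t(\bm z_t)]_{ij}-[M(\bm z_t)]_{ij}|\stackrel{p}{\to}0$. The bound is uniform in $\bm z_t$ because the error is controlled by $L_z\Vert\widetilde{\bm\beta}^{ij}-\bm\beta^{ij}\Vert_1$.

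\textbf{Step 3: transfer to equilibria.} This is the main obstacle. We must show that the regularized (or unregularized) equilibrium of $\widetilde M_t(\bm z_t)$ converges to $(\phi^\ast(\bm z_t),\nu^\ast(\bm z_t))$ even though $\bm z_t$ is random and changes with $t$. We would reuse the argument behind Theorem~\ref{conthm:nash_con}: the saddle-point correspondence is upper hemicontinuous in the payoff matrix (Berge's maximum theorem), and with $\eta_t\to0$ any limit point of the regularized equilibria lies in $\mathcal E$ of the limiting matrix. The difficulty is uniformity over $\bm z$. To handle it, we would argue by contradiction along subsequences: fix $\delta>0$, and on the event that the distance exceeds $\delta$, extract a subsequence along which $(\bm z_t,\widetilde M_t(\bm z_t),\text{equilibria})$ converge. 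Compactness is available because $\bm z$ is bounded, the simplices are compact, and the estimates are bounded by the hypotheses of Theorem~\ref{conthm:nash_con}. Along such a subsequence, $\widetilde M_t\to M(\bar{\bm z})$, which forces the limit point to be the unique equilibrium at $\bar{\bm z}$. Continuity of $\bm z\mapsto M(\bm z)$ and uniqueness then give a contradiction with the $\delta$-separation. Since $\bm z_t\sim\mathcal P_z$ is identically distributed, this converts into convergence in probability. Finally, applying the Lipschitz clipping map and $\epsilon_t\to\epsilon_\infty$ yields both displayed limits.
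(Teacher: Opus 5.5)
Your proposal is correct and follows essentially the same route as the paper. The $\epsilon$-Greedy case uses Theorem~\ref{conthm:nash_con} plus Assumption~\ref{conass:unique}. For UCB and TS, the bonus and perturbation vanish via the minimum-eigenvalue lower bound on $\sum_{s<t}\mathbb I(i_s=i,j_s=j)\bm z_s\bm z_s^\top$. The transfer step then uses upper semicontinuity of the equilibrium correspondence, together with continuity of the mixing and clipping maps and $\epsilon_t\to\epsilon_\infty$.

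Your compactness argument for uniformity over the moving random context $\bm z_t$ makes explicit a step the paper covers only by citing the continuous mapping theorem. One small slip: the TS perturbation is scaled by the fixed $\rho$, not $c_t$, so it vanishes because $(t-1)\epsilon_{t-1}^2\to\infty$ rather than by the $c_t$ hypothesis.
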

Together with the convergence of the estimated parameters in Corollary~\ref{concoro:consistent}, the asymptotic normality of the online parameter estimator then also follows from the Martingale Central Limit Theorem \citep{hall1980martingale}.

\begin{theorem}[\textbf{{\emph{Asymptotic normality of estimated parameters}}}]
\label{conthm:para}
    Suppose conditions in Theorem \ref{conthm:behave_con} are satisfied, $\sqrt{t}\left(\widehat{\bm{\beta}}_t^{ij} - \bm{\beta}^{ij}\right) \overset{D}{\to} \mathcal{N}(0,S_{ij}^2)$, where
    \begin{equation}
    \label{var_true_con}
        S_{ij}^2 = \sigma_{ij}^2 \left\{ \int [\widetilde \phi^\ast(\bm{z})]_i [\widetilde \nu^\ast(\bm{z})]_j \bm{z} \bm{z}^\top \,d\mathcal{P}_z \right\}^{-1},
    \end{equation}
    \vspace{-0.2cm}
    with $\sigma_{ij}^2 = \mathbb{E}(e_s^2\vert i_s=i,j_s=j)$. A consistent estimator for $S_{ij}^2$ is given by
    \begin{equation}
    \label{var_est_con}
    \frac{\sum_{s=1}^{t} \mathbb{I}\left\{i_s=i,j_s=j \right\}\widehat{e}_s^2}{\sum_{s=1}^{t} \mathbb{I}\left\{i_s=i,j_s=j \right\}} \left( \frac{1}{t} \sum_{s=1}^{t} \mathbb{I}\left\{i_s=i,j_s=j \right\} \bm{z}_s \bm{z}_s^T \right)^{-1},
    \end{equation}
    where $\widehat{e}_s = r_s - \bm{z}_s^\top\widehat{\bm{\beta}}^{ij}_t$.
\end{theorem}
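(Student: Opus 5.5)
The natural route is the standard OLS decomposition, followed by a martingale central limit theorem and a law of large numbers for the design matrix.

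\textbf{Decomposition.} Write $\mathbb{I}_s^{ij}=\mathbb{I}\{i_s=i,j_s=j\}$ and substitute $r_s=\bm z_s^\top\bm\beta^{ij}+e_s$ on the event $\{i_s=i,j_s=j\}$ into \eqref{equ:para_con}. This gives
\[
\sqrt{t}\bigl(\widehat{\bm\beta}^{ij}_t-\bm\beta^{ij}\bigr)=\widehat\Sigma_{ij,t}^{-1}\,\frac{1}{\sqrt t}\sum_{s=1}^t \mathbb{I}_s^{ij}\bm z_s e_s,
\qquad
\widehat\Sigma_{ij,t}=\frac1t\sum_{s=1}^t \mathbb{I}_s^{ij}\bm z_s\bm z_s^\top .
\]
The plan has two steps: (a) show $\widehat\Sigma_{ij,t}\stackrel{p}{\to}\Sigma_{ij}\coloneqq\int[\widetilde\phi^\ast(\bm z)]_i[\widetilde\nu^\ast(\bm z)]_j\bm z\bm z^\top d\mathcal P_z$; (b) show the normalized score converges in distribution to $\mathcal N(0,\sigma_{ij}^2\Sigma_{ij})$. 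Slutsky's lemma then yields the covariance $\Sigma_{ij}^{-1}\sigma_{ij}^2\Sigma_{ij}\Sigma_{ij}^{-1}=S_{ij}^2$. Invertibility of $\Sigma_{ij}$ follows from the positivity lower bound $\pi^{\mathrm{con}}_t\ge\epsilon_t^2/mk$. Passing to the limit, the limiting probabilities satisfy $[\widetilde\phi^\ast]_i[\widetilde\nu^\ast]_j\ge\epsilon_\infty^2/mk$, so $\lambda_{\min}(\Sigma_{ij})\ge\epsilon_\infty^2\lambda/mk$ by Assumption~\ref{ass:bound}. Here I assume $\epsilon_\infty>0$, which the limit formula implicitly requires.

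\textbf{Design matrix (a).} Let $\mathcal F_{s-1}$ be $\mathcal H^{\mathrm{con}}_{s-1}$, and define $\pi_s(\bm z)\coloneqq[\widetilde\phi_s(\bm z)]_i[\widetilde\nu_s(\bm z)]_j$, the conditional probability of playing $(i,j)$ when the context is $\bm z$. I split
\[
\widehat\Sigma_{ij,t}-\Sigma_{ij}
=\frac1t\sum_{s}\bigl(\mathbb{I}_s^{ij}\bm z_s\bm z_s^\top-\mathbb E[\pi_s(\bm z)\bm z\bm z^\top\mid\mathcal F_{s-1}]\bigr)
+\frac1t\sum_s\mathbb E\bigl[(\pi_s(\bm z)-\pi^\ast(\bm z))\bm z\bm z^\top\mid\mathcal F_{s-1}\bigr],
\]
where $\pi^\ast(\bm z)=[\widetilde\phi^\ast(\bm z)]_i[\widetilde\nu^\ast(\bm z)]_j$. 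The first term is an average of bounded martingale differences, since $\|\bm z\|_\infty\le L_z$, so it vanishes by the martingale weak law of large numbers (e.g.\ Azuma entrywise). For the second term, the strategies at time $s$ are measurable functions of $\mathcal F_{s-1}$ and of the fresh context $\bm z_s$, which is independent of $\mathcal F_{s-1}$. Theorem~\ref{conthm:behave_con} gives $\|\widetilde\phi_s(\bm z_s)-\widetilde\phi^\ast(\bm z_s)\|_2\stackrel p\to0$, and likewise for $\nu$. Because these quantities are bounded, this upgrades to $\mathbb E|\pi_s(\bm z_s)-\pi^\ast(\bm z_s)|\to0$, hence to $L^1$ convergence of the conditional expectation. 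A Ces\`aro average of an $L^1$-null sequence is $L^1$-null, so the second term vanishes too.

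\textbf{Score CLT (b).} Fix a unit vector $\bm a$ and apply the martingale CLT \citep{hall1980martingale} to $\xi_{s}=t^{-1/2}\mathbb{I}_s^{ij}\bm a^\top\bm z_s e_s$. These are martingale differences because $e_s$ is conditionally mean zero given $(i_s,j_s)$, $\bm z_s$, and the past. The conditional variance is
\[
\sum_s\mathbb E[\xi_s^2\mid\mathcal F_{s-1}]=\sigma_{ij}^2\,\bm a^\top\Bigl(\frac1t\sum_s\mathbb E[\pi_s(\bm z)\bm z\bm z^\top\mid\mathcal F_{s-1}]\Bigr)\bm a .
\]
This uses $\mathbb E(e_s^2\mid i_s=i,j_s=j)=\sigma_{ij}^2$; if $\sigma_{ij}^2$ could depend on $\bm z$, I would need to assume it does not, matching the stated formula. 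By step (a) the variance converges to $\sigma_{ij}^2\bm a^\top\Sigma_{ij}\bm a$. The conditional Lindeberg condition follows from boundedness of $\bm z$ and a uniform $(2+\delta)$-moment of the sub-Gaussian $e_s$, which gives a bound of order $t^{-\delta/2}$. Cram\'er--Wold finishes the vector statement.

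\textbf{Variance estimator and main obstacle.} For the variance estimator, the matrix factor in \eqref{var_est_con} is $\widehat\Sigma_{ij,t}^{-1}\to\Sigma_{ij}^{-1}$ by (a). For the scalar factor, I write $\widehat e_s=e_s-\bm z_s^\top(\widehat{\bm\beta}^{ij}_t-\bm\beta^{ij})$. By Corollary~\ref{concoro:consistent} and boundedness, the average of $\mathbb{I}_s^{ij}\widehat e_s^2$ differs from the average of $\mathbb{I}_s^{ij}e_s^2$ by $o_p(1)$. A martingale LLN gives $t^{-1}\sum_s\mathbb{I}_s^{ij}e_s^2\to\sigma_{ij}^2\,\mathbb E\pi^\ast(\bm z)$, while $t^{-1}\sum_s\mathbb{I}_s^{ij}\to\mathbb E\pi^\ast(\bm z)>0$; their ratio tends to $\sigma_{ij}^2$. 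The main obstacle is step (a)'s second term. It requires converting the pointwise-in-probability convergence of the data-dependent behavior policy, evaluated at a random context, into convergence of its conditional expectation over a fresh context. The independence of $\bm z_s$ from the past, together with bounded convergence, is what makes this work.
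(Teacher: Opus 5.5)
Your proposal is correct and follows essentially the same route as the paper: the OLS decomposition, Cram\'er--Wold plus the martingale CLT for the score, a martingale LLN for the design matrix, Slutsky, and the three-term expansion of $\widehat e_s^2$ for the variance estimator. Your $L^1$/Ces\`aro argument makes rigorous the averaging step that the paper attributes to ``DCT and CMT''. One small correction: you do not need $\epsilon_\infty>0$, because every step goes through verbatim whenever $\Sigma_{ij}$ is positive definite, and this also covers the paper's own schedule $\epsilon_t\asymp t^{-1/4}$ (so $\epsilon_\infty=0$) when, e.g., the equilibrium probabilities are bounded away from zero.
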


As demonstrated by Equation \eqref{var_true_con} and Theorem~\ref{conthm:behave_con}, the asymptotic variance of the parameter estimator depends on the optimal strategies $\phi^\ast(\cdot)$ and $\nu^\ast(\cdot)$ through the expectation’s integrand, with details provided in Section \ref{app:conthm:para_proof} of the supplementary material. This result is consistent with the asymptotic variance established in Theorem \ref{thm:norm_matrix} but consider the variability of contexts.
Similarly, to quantify the distance between $\widehat{V}_T^{\text{conDR}}$ and $V^\ast$, we first establish the convergence rate of $\mathcal{L}_t^{\mathrm{con}}$ toward $V^\ast$ as a bridge. The proof of Corollary~\ref{conthm:NEvalue} is in Section~\ref{app:conthm:NEvalue_proof} of the supplementary material.

\begin{corollary}[\textbf{{\emph{Convergence rate of saddle value}}}]
\label{conthm:NEvalue}
Under the conditions of Theorem~\ref{conthm:para}, and assuming that for each $i$, $j$, and any $\bm{z} \sim \mathcal{P}_z$, the estimated payoff entries $[\widehat{M}_{t-1}(\bm{z}_t)]_{ij}$ are bounded for all $t \ge 1$, the following convergence rate holds:
\[
\vert {\mathcal{L}}_t^{\mathrm{con}}(\widehat{\phi}_t(\bm{z}_t),\widehat{\nu}_t(\bm{z}_t)) - \phi^\ast(\bm{z}_t)^\top M(\bm{z}_t) \nu^\ast(\bm{z}_t) \vert = O_p\left((t^{-1}+\eta_t^2)^{1/2}\right).
\]
\end{corollary}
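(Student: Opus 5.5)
The plan is to split the error by the triangle inequality, using the unregularized saddle value of the estimated matrix as an intermediate quantity. Let $V(B)=\min_{\bm x\in\Delta_m}\max_{\bm y\in\Delta_k}\bm x^\top B\bm y$ denote the value of a matrix game $B$. By Assumption~\ref{conass:unique} and the minimax theorem, $\phi^\ast(\bm z_t)^\top M(\bm z_t)\nu^\ast(\bm z_t)=V(M(\bm z_t))$. We then write
\[
\bigl|\mathcal L_t^{\mathrm{con}}(\widehat\phi_t(\bm z_t),\widehat\nu_t(\bm z_t))-V(M(\bm z_t))\bigr|
\le \bigl|\mathcal L_t^{\mathrm{con}}(\widehat\phi_t,\widehat\nu_t)-V(\widehat M_{t-1}(\bm z_t))\bigr|
+\bigl|V(\widehat M_{t-1}(\bm z_t))-V(M(\bm z_t))\bigr|,
\]
and bound the two terms separately. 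This mirrors the argument behind Corollary~\ref{thm:NEvalue}, with $\widehat A_{t-1}$ replaced by $\widehat M_{t-1}(\bm z_t)$.

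\emph{Regularization term.} Since $0\le R_X\le\ln m$ and $0\le R_Y\le \ln k$ on the simplices, we have for every $(\bm x,\bm y)$
\[
\bm x^\top\widehat M_{t-1}(\bm z_t)\bm y-\eta_t\ln m\le\mathcal L_t^{\mathrm{con}}(\bm x,\bm y)\le \bm x^\top\widehat M_{t-1}(\bm z_t)\bm y+\eta_t\ln k.
\]
Taking $\min_{\bm x}\max_{\bm y}$ throughout preserves these inequalities. Because $(\widehat\phi_t,\widehat\nu_t)$ is the saddle point, $\mathcal L_t^{\mathrm{con}}(\widehat\phi_t,\widehat\nu_t)$ equals the min-max value of $\mathcal L_t^{\mathrm{con}}$. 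Hence the first term is at most $\eta_t\max(\ln m,\ln k)=O(\eta_t)$, deterministically.

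\emph{Estimation term.} The value map is $1$-Lipschitz in the entrywise max norm, since $|\bm x^\top(B-B')\bm y|\le\max_{ij}|B_{ij}-B'_{ij}|$ for $\bm x,\bm y$ in the simplices. Therefore
\[
\bigl|V(\widehat M_{t-1}(\bm z_t))-V(M(\bm z_t))\bigr|\le\max_{i,j}\bigl|\bm z_t^\top(\widehat{\bm\beta}^{ij}_{t-1}-\bm\beta^{ij})\bigr|\le L_z\max_{i,j}\|\widehat{\bm\beta}^{ij}_{t-1}-\bm\beta^{ij}\|_1,
\]
using Assumption~\ref{ass:bound} and H\"older's inequality. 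Theorem~\ref{conthm:para} gives $\sqrt{t-1}(\widehat{\bm\beta}^{ij}_{t-1}-\bm\beta^{ij})\overset{D}{\to}\mathcal N(0,S_{ij}^2)$. Convergence in distribution implies tightness, so each such term is $O_p(t^{-1/2})$. There are finitely many pairs $(i,j)$ and $d$ is fixed, so the maximum is also $O_p(t^{-1/2})$.

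Combining the two bounds gives $O_p(t^{-1/2})+O(\eta_t)=O_p\bigl((t^{-1}+\eta_t^2)^{1/2}\bigr)$, using $a+b\le\sqrt2\,(a^2+b^2)^{1/2}$. The boundedness assumption on $\widehat M_{t-1}$ is not essential to this route; it only guarantees that the saddle point is well defined and that all quantities are finite.

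The step I expect to need the most care is passing from the asymptotic normality in Theorem~\ref{conthm:para} to an $O_p(t^{-1/2})$ rate. Here $\bm z_t$ is a fresh context independent of $\widehat{\bm\beta}_{t-1}$, which causes no difficulty because the bound above is uniform over $\|\bm z\|_\infty\le L_z$. The time index shift $t-1$ versus $t$ is harmless because $(t-1)/t\to1$. If one instead wants a nonasymptotic argument, the tail bound of Theorem~\ref{conthm:tail} with $h=C/\sqrt{t\epsilon_t^2}$ can be used instead. However, that route yields an $\epsilon_t$-dependent rate, so I would rely on the normality result, whose hypotheses include $\epsilon_t\to\epsilon_\infty>0$.
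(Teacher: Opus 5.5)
Your argument is correct. It rests on the same fact as the paper's proof: the saddle value is Lipschitz in the pair (payoff matrix, regularization level). Theorem~\ref{conthm:para} then supplies $\|\widehat{\bm\beta}^{ij}_{t-1}-\bm\beta^{ij}\|=O_p(t^{-1/2})$ exactly as you use it. What differs is how you obtain the Lipschitz bound.

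The paper applies Lemma~\ref{lemma:sd_dif} once, to the parameter points $\bm t_1=(\widehat M_{t-1}(\bm z_t),\eta_t)$ and $\bm t_2=(M(\bm z_t),0)$. It sandwiches $V(\bm t_1)-V(\bm t_2)$ between cross-evaluations of $g$ at $(X^\ast(\bm t_2),Y^\ast(\bm t_1))$ and $(X^\ast(\bm t_1),Y^\ast(\bm t_2))$, then applies the mean value theorem with a gradient bound $C_g$. This gives $C_g\|\bm t_1-\bm t_2\|_2$ in the product norm, which produces the $(t^{-1}+\eta_t^2)^{1/2}$ form directly. The boundedness of $\widehat M_{t-1}$ is used there to place the parameters in a compact box.

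You instead pass through $V(\widehat M_{t-1}(\bm z_t))$ and treat the two pieces separately. The regularization bias is bounded deterministically by $\eta_t\max(\ln m,\ln k)$, and the estimation error by the $1$-Lipschitz property of the game value in the max norm. Since $g$ is linear in $(A,\eta)$, the paper's mean-value step is the same comparison in disguise, so neither route is stronger.

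Your route has some advantages. It gives explicit constants and needs no differentiability or compact-box setup, which confirms that the boundedness hypothesis is inessential. It also uses $\|\bm z\|_\infty\le L_z$ correctly via H\"older. By contrast, the paper's step $(\bm z_t^\top(\widehat{\bm\beta}-\bm\beta))^2\le L_z^2\|\widehat{\bm\beta}-\bm\beta\|_2^2$ tacitly uses an $\ell_2$ bound on $\bm z$; it is off by a factor $d$, which is harmless for the rate.

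One small correction to your last paragraph: Theorem~\ref{conthm:para} as stated assumes only $\epsilon_t\to\epsilon_\infty$, not $\epsilon_\infty>0$. Finiteness of $S_{ij}^2$ implicitly requires the limiting design $\int[\widetilde\phi^\ast(\bm z)]_i[\widetilde\nu^\ast(\bm z)]_j\bm z\bm z^\top\,d\mathcal P_z$ to be nonsingular. Both you and the paper inherit that requirement simply by citing the theorem.
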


Corollary \ref{conthm:NEvalue} establishes that the convergence rate in the contextual setting matches the result for online matrix games (Corollary \ref{thm:NEvalue}), provided the context space is bounded. 
Furthermore, to establish the optimal $\sqrt{T}$-consistency of our doubly robust value estimator, we impose a margin condition analogous to Assumption~\ref{ass:mix}. The formal statement is given in Assumption~\ref{conass:margin_formal} in Section~\ref{app:conthm:drrate_proof} of the supplementary material.

\begin{assumption}[\emph{Margin}]
    \label{conass:margin}
    For any $\bm{z} \sim \mathcal{P}_z$, and the corresponding Nash equilibrium $\phi^\ast(\bm z), \nu^\ast(\bm z)$ of the true payoff matrix $M(\bm z)$, one of the following conditions holds: 
    \textnormal{(i)} The equilibrium probabilities are bounded away from zero; or 
\textnormal{(ii)} the support of the Nash equilibrium is locally stable around the true payoff matrix. 
\end{assumption}

Assumption~\ref{conass:margin} controls the inverse probability weights across contexts and then ensures the $\sqrt{T}$-consistency of the doubly robust value estimator stated as follows. 

\begin{theorem}[\textbf{{\emph{$\sqrt{T}$-consistency of value estimator}}}]
\label{conthm:drrate}
Suppose conditions in Corollary \ref{conthm:NEvalue} and Assumption \ref{conass:margin} are satisfied. Assume further that $\eta_t = O(t^{-1/2})$. Then, 
\begin{equation*}
        \sqrt{T} \left(\widehat{V}_T^{\text{conDR}}- V^\ast \right) = O_p(1).
    \end{equation*}
\end{theorem}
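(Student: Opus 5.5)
I will decompose the error of the doubly robust estimator in \eqref{equ:dr_con} into a martingale part, a plug-in bias part, and an inverse-weighting perturbation part. Write $\widehat{\phi}_t=\widehat{\phi}_t(\bm z_t)$, $\widehat{\nu}_t=\widehat{\nu}_t(\bm z_t)$, $\widehat M_{t-1}=\widehat M_{t-1}(\bm z_t)$, $w_t(i,j)=[\widehat{\phi}_t]_i[\widehat{\nu}_t]_j/\pi_t^{\mathrm{con}}(i,j\mid\bm z_t)$, and $V^\ast(\bm z)=\phi^\ast(\bm z)^\top M(\bm z)\nu^\ast(\bm z)$. Substituting $r_t=[M(\bm z_t)]_{i_t,j_t}+e_t$, I split $\widehat V_T^{\text{conDR}}-V^\ast=\frac1T\sum_t(\xi_{1t}+\xi_{2t}+\xi_{3t})$ with
\[
\xi_{1t}=w_t(i_t,j_t)e_t,\qquad
\xi_{2t}=w_t(i_t,j_t)\big([M(\bm z_t)]_{i_tj_t}-[\widehat M_{t-1}]_{i_tj_t}\big)-\widehat{\phi}_t^\top\big(M(\bm z_t)-\widehat M_{t-1}\big)\widehat{\nu}_t,
\]
\[
\xi_{3t}=\widehat{\phi}_t^\top M(\bm z_t)\widehat{\nu}_t-V^\ast .
\]
This last term is further split as $\big(\widehat{\phi}_t^\top M(\bm z_t)\widehat{\nu}_t-V^\ast(\bm z_t)\big)+\big(V^\ast(\bm z_t)-V^\ast\big)$.

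The two martingale pieces come first. Conditional on $\mathcal H^{\mathrm{con}}_{t-1}$ and $\bm z_t$, the pair $(i_t,j_t)$ is drawn from $\pi_t^{\mathrm{con}}$, and $\widehat{\phi}_t,\widehat{\nu}_t,\widehat M_{t-1}$ are measurable with respect to this conditioning. Hence $\mathbb E[\xi_{1t}\mid\cdot]=0$ and $\mathbb E[\xi_{2t}\mid\cdot]=0$, because the importance weights make the first term of $\xi_{2t}$ unbiased for the second. The term $V^\ast(\bm z_t)-V^\ast$ is i.i.d.\ mean zero and bounded by Assumption~\ref{ass:bound}, so its average is $O_p(T^{-1/2})$. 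For $\xi_{1t}$ and $\xi_{2t}$ it suffices to bound $\frac1{T^2}\sum_t\mathbb E[\xi_t^2]$, using orthogonality of martingale differences and Chebyshev. The obstacle here is the weight $w_t$, which could blow up since $\pi_t^{\mathrm{con}}\ge\epsilon_t^2/mk$ only. Assumption~\ref{conass:margin} is used exactly at this point. Under (i), the behavior policy mixes $\widehat\phi_t,\widehat\nu_t$ with weight $1-\epsilon_t$ (for $\epsilon$-greedy), or clips them near $\phi^\ast,\nu^\ast$ (for UCB/TS, via Theorem~\ref{conthm:behave_con}). So $w_t(i,j)\le C$ whenever $[\widehat\phi_t]_i[\widehat\nu_t]_j>0$, up to an event whose probability vanishes. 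Under (ii), local stability of the support together with Theorem~\ref{conthm:nash_con} and Corollary~\ref{concoro:consistent} gives, with probability tending to one, that off-support coordinates of $\widehat\phi_t,\widehat\nu_t$ are $O(\eta_t)$-small relative to $\widetilde\phi_t,\widetilde\nu_t$, because the entropic regularizer pushes them down exponentially. Hence $w_t$ is again uniformly bounded. With $e_t$ sub-Gaussian and the entries $[\widehat M_{t-1}]_{ij}$ bounded, this gives $\mathbb E[\xi_{1t}^2+\xi_{2t}^2]\le C$, and both averages are $O_p(T^{-1/2})$. In fact $\xi_{2t}$ is even $o_p$, by consistency of $\widehat{\bm\beta}$.

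The remaining plug-in bias is $\widehat{\phi}_t^\top M(\bm z_t)\widehat{\nu}_t-V^\ast(\bm z_t)$. I would write it as
\[
\big(\widehat{\phi}_t^\top M\widehat{\nu}_t-\widehat{\phi}_t^\top\widehat M_{t-1}\widehat{\nu}_t\big)+\eta_t\big(R_X(\widehat\phi_t)-R_Y(\widehat\nu_t)\big)+\big(\mathcal L_t^{\mathrm{con}}(\widehat\phi_t,\widehat\nu_t)-V^\ast(\bm z_t)\big).
\]
The middle piece is $O(\eta_t)=O(t^{-1/2})$, since $R_X\le\ln m$ and $R_Y\le\ln k$. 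The last piece is $O_p((t^{-1}+\eta_t^2)^{1/2})=O_p(t^{-1/2})$ by Corollary~\ref{conthm:NEvalue}. The first is bounded by $\max_{ij}|\bm z_t^\top(\widehat{\bm\beta}^{ij}_{t-1}-\bm\beta^{ij})|\le L_z\max_{ij}\|\widehat{\bm\beta}^{ij}_{t-1}-\bm\beta^{ij}\|_1=O_p(t^{-1/2})$, by Theorem~\ref{conthm:para}. Averaging then gives $\frac1T\sum_t O_p(t^{-1/2})=O_p(T^{-1/2})$, since $\sum_{t\le T}t^{-1/2}\asymp\sqrt T$.

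This averaging step needs a uniform version of the $O_p$ statements, which I expect to be the main technical obstacle. I would get it from the tail bound in Theorem~\ref{conthm:tail} under $t\epsilon_t^2/\log t\to\infty$: it yields $\mathbb E\|\widehat{\bm\beta}_t^{ij}-\bm\beta^{ij}\|_1\lesssim t^{-1/2}$ up to summable remainders, so Markov's inequality can be applied to the sum. Collecting the three parts gives $\sqrt T(\widehat V_T^{\text{conDR}}-V^\ast)=O_p(1)$.
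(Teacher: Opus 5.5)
Your decomposition is, up to bookkeeping, the one in the paper's proof. It has the same pieces: the IPW noise martingale (the paper's $J_2$), the term $w_t(i_t,j_t)[M(\bm z_t)-\widehat M_{t-1}]_{i_tj_t}$, the regularization bias, the saddle-value error from Corollary~\ref{conthm:NEvalue}, and the i.i.d.\ average of $V^\ast(\bm z_t)-V^\ast$. You split the second term into the martingale difference $\xi_{2t}$ plus its conditional mean $\widehat\phi_t^\top(M(\bm z_t)-\widehat M_{t-1})\widehat\nu_t$. This is harmless but buys nothing. You then bound that conditional mean at first order by $L_z\max_{ij}\|\widehat{\bm\beta}^{ij}_{t-1}-\bm\beta^{ij}\|_1$, which is how the paper treats the unsplit term via the bounded importance ratio.

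The genuine gap is at the step you single out yourself. The paper only asserts $\frac1T\sum_t O_p(t^{-1/2})=O_p(T^{-1/2})$, citing the fixed-$t$ Theorem~\ref{conthm:para}, so you are right that an argument is needed. But yours fails, because Theorem~\ref{conthm:tail} does not give $\mathbb E\|\widehat{\bm\beta}^{ij}_t-\bm\beta^{ij}\|_1\lesssim t^{-1/2}$ up to summable remainders.
\begin{itemize}
\item Its second term has the prefactor $(1+tL_z^2)^{d/2}$ and, once $t\epsilon_t^2$ is large, an exponent of order $t\epsilon_t^2\lambda h^2/(dmk\sigma_{ij}^2)$. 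So it falls below one only for $h\gtrsim\sqrt{d^2mk\,\sigma_{ij}^2\log t/(\lambda t\epsilon_t^2)}$.
\item Integrating the tail therefore gives at best $\mathbb E\|\widehat{\bm\beta}^{ij}_t-\bm\beta^{ij}\|_1\lesssim\sqrt{\log t/(t\epsilon_t^2)}$.
\item The Ces\`aro average of this is of order $\sqrt{\log T/T}$ for constant $\epsilon_t$. For the admissible choice $\epsilon_t\asymp t^{-1/4}$ it is of order $T^{-1/4}\sqrt{\log T}$.
\end{itemize}
So Markov's inequality does not yield $\sqrt T(\widehat V_T^{\text{conDR}}-V^\ast)=O_p(1)$. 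The $\log t$ comes from the determinant factor in Lemma~\ref{lem:context-self-normalized}, and the $\epsilon_t^{-2}$ from the worst-case propensity floor $\epsilon_t^2/mk$. Both sit in the leading scale; they are not remainders.

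A version that works is as follows.
\begin{itemize}
\item Write $\widehat{\bm\beta}^{ij}_{t-1}-\bm\beta^{ij}=(V^{ij}_{t-1})^{-1}S^{ij}_{t-1}$ with $V^{ij}_t=\sum_{s\le t}a_s^{ij}\bm z_s\bm z_s^\top$ and $S^{ij}_t=\sum_{s\le t}a_s^{ij}\bm z_se_s$. Orthogonality of martingale increments gives $\mathbb E\|S^{ij}_t\|_2^2\le t\,dL_z^2\sigma_{ij}^2$, whatever the exploration schedule.
\item Take events $\mathcal E_N$ with $\mathbb P(\mathcal E_N)\to1$ on which $\lambda_{\min}(V^{ij}_t)\ge ct$ for all $t\ge N$. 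On $\mathcal E_N$, the average $\frac1T\sum_{t>N}\|\widehat{\bm\beta}^{ij}_{t-1}-\bm\beta^{ij}\|_2$ is dominated by $\frac{1}{cT}\sum_{t>N}\|S^{ij}_{t-1}\|_2/(t-1)$, whose expectation is $O(T^{-1/2})$.
\item The bound behind Corollary~\ref{conthm:NEvalue} is a deterministic Lipschitz bound in $\|\widehat{\bm\beta}-\bm\beta\|$ and $\eta_t$. So this handles every plug-in piece at once.
\item If $\epsilon_t\to\epsilon_\infty>0$, such events follow from the matrix Chernoff bound and Borel--Cantelli. If $\epsilon_t\to0$, you need every pair's selection probability to stay bounded below uniformly in $\bm z$. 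That is what Assumption~\ref{conass:margin}(i) plus an almost-sure form of Theorem~\ref{conthm:behave_con} would provide.
\end{itemize}

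The same localization issue affects your variance bound for $\xi_{1t},\xi_{2t}$. Under UCB/TS the weight can be as large as $mk/\epsilon_t^2$ on the exceptional event. So ``$w_t\le C$ outside an event of vanishing probability'' does not give $\mathbb E[\xi_{1t}^2]\le C$. You need one of two things:
\begin{itemize}
\item a quantitative bound on $\mathbb P(\max_{ij}w_t(i,j)>C)$ that beats this blow-up; or
\item as in the paper's non-contextual Step~2, an eventually-almost-sure bound (Borel--Cantelli under $t\epsilon_t^2/\log t\to\infty$) combined with truncation by $\mathbb I\{\max_{ij}w_t(i,j)\le C\}$, which is fixed before $(i_t,j_t,e_t)$ are drawn.
\end{itemize}
For $\epsilon$-greedy this localization is unnecessary, since $w_t\le(1-\epsilon_t)^{-2}$ deterministically.
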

The proof of Theorem~\ref{conthm:drrate} follows the similar approach as that of Theorem~\ref{thm:saddlerate}, with proofs provided in Section~\ref{app:conthm:drrate_proof} of the supplementary material. We can still obtain the same convergence rate of $\widehat{V}_T^{\text{conDR}}$ even under the online contextual matrix game setting.

Finally, we also consider the regret bound for the contextual matrix games. For any context $\bm z_t$ and any strategy pair $(\bm x_t,\bm y_t)\in\Delta_m\times\Delta_k$, the
contextual Nash gap is $\operatorname{Gap}_{M(\bm z_t)} (\bm x_t,\bm y_t)=\max_{\bm y\in\Delta_k}\bm x^\top M(\bm z_t)\bm y_t-\min_{\bm x\in\Delta_m}\bm x_t^\top M(\bm z_t)\bm y$. The cumulative contextual Nash-gap regret is then given by $\operatorname{Reg}_{\mathrm{con}}(T)=\sum_{t=1}^T\operatorname{Gap}_{M(\bm z_t)}\left(\widetilde{\bm x}_t,\widetilde{\bm y}_t \right)$.

\begin{theorem}[{\textbf{\emph{Regret bound}}}]
\label{conthm:regret_bandit}
Suppose Assumption~\ref{ass:bound} holds. Let $\epsilon_t\asymp t^{-1/4}$ and $\eta_t=O(t^{-1/2})$.
For the UCB strategy, additionally suppose $c_t=O(\sqrt{\log t})$.
Then, under each of the three exploration strategies, the cumulative contextual Nash-gap regret satisfies
\[
\operatorname{Reg}_{\mathrm{con}}(T)
= \widetilde O_p
\left(
\sqrt{mk}\,T^{3/4}
\right).
\]
\end{theorem}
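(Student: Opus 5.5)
Our plan is to mirror the proof of Theorem~\ref{thm:regret_bandit}, decomposing the per-round contextual Nash gap of the behavior strategies into an exploration part, a regularization part, and a payoff-estimation part, and then summing over $t$. Fix $t$ and the context $\bm z_t$. Write $M_t = M(\bm z_t)$ and $\widehat M_t=\widehat M_{t-1}(\bm z_t)$. Since $\operatorname{Gap}_{M_t}$ is Lipschitz in each argument, with constant $2\max_{ij}|[M_t]_{ij}|\le 2L_z\max_{ij}\|\bm\beta^{ij}\|_1$ in $\ell_1$ (Assumption~\ref{ass:bound}), we have
\[
\operatorname{Gap}_{M_t}(\widetilde{\bm x}_t,\widetilde{\bm y}_t)\le \operatorname{Gap}_{M_t}(\widehat\phi_t(\bm z_t),\widehat\nu_t(\bm z_t)) + C\bigl(\|\widetilde{\bm x}_t-\widehat\phi_t(\bm z_t)\|_1+\|\widetilde{\bm y}_t-\widehat\nu_t(\bm z_t)\|_1\bigr).
\]
For $\epsilon$-Greedy the second term is $O(\epsilon_t)$ directly. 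For UCB and TS we bound it by the clipping cost $O(\epsilon_t)$ plus the difference between the Nash equilibria of $\widetilde M_t$ and $\widehat M_t$. That difference we do not bound through equilibrium stability. Instead we keep it inside the gap, using $\operatorname{Gap}_{M_t}(\bm x,\bm y)\le \operatorname{Gap}_{\widetilde M_t}(\bm x,\bm y)+2\|\widetilde M_t-M_t\|_{\max}$, so only a payoff perturbation of size $c_t\cdot(\text{confidence width})$ (respectively the posterior sampling width) enters.

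Next, the regularized-equilibrium gap satisfies $\operatorname{Gap}_{\widehat M_t}(\widehat\phi_t,\widehat\nu_t)\le \eta_t(\ln m+\ln k)$. This follows from the saddle-point inequalities of $\mathcal L_t^{\mathrm{con}}$ together with $0\le R_X\le\ln m$ and $0\le R_Y\le \ln k$. Transferring from $\widehat M_t$ to $M_t$ costs at most $2\max_{ij}|\bm z_t^\top(\widehat{\bm\beta}^{ij}_{t-1}-\bm\beta^{ij})|\le 2L_z\max_{ij}\|\widehat{\bm\beta}^{ij}_{t-1}-\bm\beta^{ij}\|_1$. The per-round bound is therefore
\[
O\Bigl(\epsilon_t+\eta_t+(1+c_t)\max_{ij}\|\widehat{\bm\beta}^{ij}_{t-1}-\bm\beta^{ij}\|_1\Bigr).
\]

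The remaining, and main, step is a high-probability rate $\max_{ij}\|\widehat{\bm\beta}^{ij}_{t}-\bm\beta^{ij}\|_1=\widetilde O_p\bigl(\sqrt{mk}/(\epsilon_t\sqrt t)\bigr)$ that holds uniformly over $t\le T$. We obtain it by inverting Theorem~\ref{conthm:tail}. Take
\[
h_t\asymp \frac{\sqrt{d}\,mk\,\sigma_{ij}}{t\epsilon_t^2\lambda}\sqrt{\Bigl(1+\frac{t\epsilon_t^2\lambda}{2mk}\Bigr)\log\bigl((1+tL_z^2)^{d/2}t^2mk\bigr)}\asymp \frac{\sqrt{mk}}{\epsilon_t\sqrt t}\,\mathrm{polylog}(t),
\]
so that the second failure term is at most $1/(t^2mk)$. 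With $\epsilon_t\asymp t^{-1/4}$ we have $t\epsilon_t^2=t^{1/2}\gg\log t$, so the first failure term $d\exp\{-t\epsilon_t^2\lambda/(8mkL_z^2)\}$ is summable. A union bound over $(i,j)$ and over $t\ge t_0$ then gives an event of probability $1-o(1)$, and the finitely many early rounds contribute $O(1)$ because the gap is bounded. Here the self-normalized form of Theorem~\ref{conthm:tail} is essential, and the $\log$ factors, including $c_t=O(\sqrt{\log t})$, are absorbed into $\widetilde O$.

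Summing the per-round bound gives
\[
\operatorname{Reg}_{\mathrm{con}}(T)=\widetilde O_p\Bigl(\sqrt{mk}\sum_t\frac{1}{\epsilon_t\sqrt t}+\sum_t\epsilon_t+\sum_t\eta_t\Bigr).
\]
With $\epsilon_t\asymp t^{-1/4}$ the first two sums are $O(\sqrt{mk}\,T^{3/4})$ and $O(T^{3/4})$, and $\eta_t=O(t^{-1/2})$ gives $O(T^{1/2})$. This yields the claim. We expect the main obstacle to be the uniform inversion of the tail bound, whose $(1+tL_z^2)^{d/2}$ factor must be controlled only up to logarithms, together with checking that the UCB/TS perturbation of $\widetilde M_t$ is of the same order. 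No uniqueness or margin assumption is used, because the argument works throughout with gaps rather than distances to equilibria.
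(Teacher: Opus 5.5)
Your proposal is correct and follows the paper's route. The paper splits the per-round gap into three pieces: an exploration cost $O(\epsilon_t)$, a regularization bias $\eta_t(\ln m+\ln k)$, and a payoff-perturbation term $2\|B_t-M(\bm z_t)\|_{\max}$, where $B_t$ is the matrix actually used to compute the strategy; it then inverts the tail bound uniformly over rounds with a union bound, and you do the same (for UCB and TS, state explicitly that the Lipschitz step is centered at the equilibrium of $\widetilde M_t$ rather than at $(\widehat\phi_t,\widehat\nu_t)$, which is what your remark effectively does). Your summable per-round confidence level $1/(t^2mk)$ in place of the horizon-dependent $\log(4mkT/\delta)$ threshold, and your explicit handling of the $(1+tL_z^2)^{d/2}$ factor and of the UCB/TS widths through the minimum-eigenvalue event, only fill in details the paper leaves as ``analogous.''
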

The proof of Theorem~\ref{conthm:regret_bandit} follows that of Theorem~\ref{thm:regret_bandit}, with the fixed payoff matrix $A$ replaced by the contextual matrix $M(\bm z_t)$. Assumption~\ref{ass:bound} is used to control the contextual payoff-matrix estimation error uniformly over the realized contexts. The remaining arguments are analogous to the online matrix setting.

\section{Empirical Studies}
\label{sec:simulation}

In this section, we evaluate the performance of OnGameLearn through empirical studies on online matrix games with and without contexts. 
The results corroborate the theoretical findings in Section~\ref{sec:theory}: OnGameLearn yields valid inference for the estimated payoff matrix or parameters, achieves convergence of the estimated Nash equilibrium, and attains $\sqrt{T}$-consistent value estimation.

\subsection{Simulation Studies for Online Matrix Games}
\label{sec:simu_multiarm}

\noindent\textbf{Simulation setup.} The data are generated as follows. We consider a $2\times2$ payoff matrix $A = ((2, 1), (0.5, 2))$ with noise modeled by $e_t|i_t=i,j_t=j \sim \mathcal{N}(0,0.1^2)$ for all $i \in [m]$ and $j \in [k]$. The Nash equilibrium of matrix $A$ is $\bm{x}^\ast = (0.6,0.4)^\top$ and $\bm{y}^\ast = (0.4,0.6)^\top$. Then the value of matrix game $A$ can be derived as $V^\ast_A=1.4$. We set the termination time $T=10,000$ with the exploration rate as $\epsilon_t = 0.1 t^{-1/4}$ and the regularization parameter $\eta_t = t^{-1/2}$. For the UCB strategy, we set $c_t=1$. We run 500 replications following Algorithm~\ref{BOMG} under both the UCB-based and $\epsilon$-Greedy strategies.
For baselines, we include EXP3~\citep{auer2002nonstochastic,o2021matrix} and Tsallis-INF~\citep{zimmert2021tsallis,ito2025instance}. EXP3 is a classical adversarial-bandit algorithm and has been widely used as a benchmark for matrix games with bandit feedback. Tsallis-INF is a no-regret bandit algorithm with strong guarantees in both stochastic and adversarial settings, and has also been studied for self-play in two-player zero-sum games.

\begin{figure}[!t]
    \centering
    \includegraphics[width=1.0\linewidth]{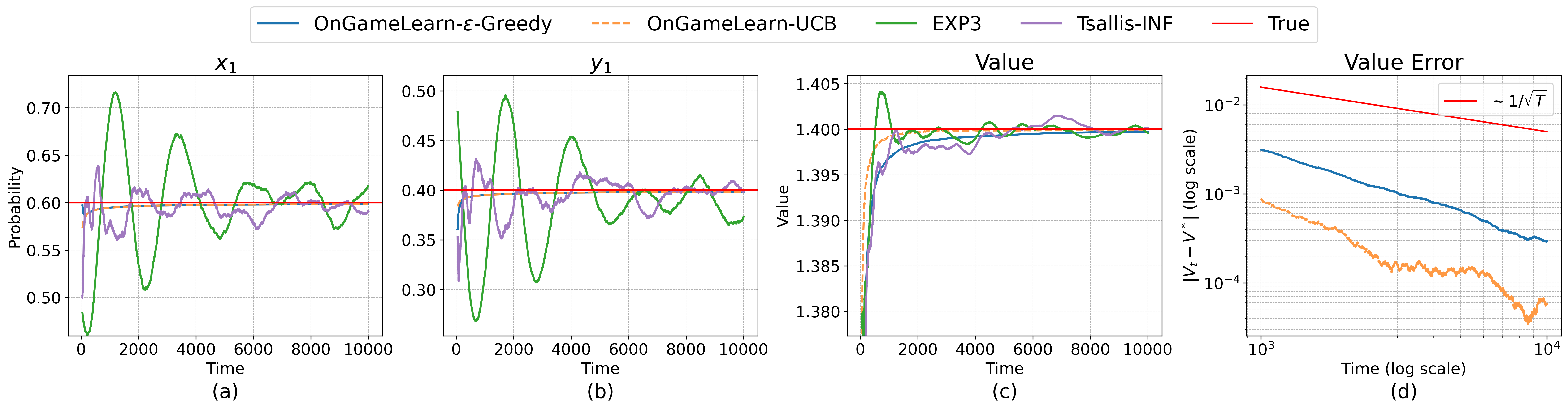}
    \caption{\textbf{Panels (a) and (b)} illustrate the convergence of the estimated Nash equilibrium components $\widehat{x}_{t1}$ and $\widehat{y}_{t1}$ toward their true values $x_1^\ast$ and $y_1^\ast$, respectively. The red horizontal lines indicate the corresponding true Nash equilibrium values. The curves for OnGameLearn under $\epsilon$-Greedy and UCB exploration almost completely overlap. \textbf{Panels (c) and (d)} illustrate the $\sqrt{T}$-consistency of the doubly robust value estimator $\widehat{V}_T^{\mathrm{DR}}$. Panel (c) shows the trajectory of the average estimated value $\widehat{V}_T^{\mathrm{DR}}$, with the red horizontal line indicating the optimal policy value $V_A^\ast=1.4$. Panel (d) presents a log--log plot of $|\widehat{V}_T^{\mathrm{DR}}-V_A^\ast|$. The red line represents the theoretical ${O}_p(T^{-1/2})$ convergence rate.}
    \label{fig:multi_NE}
\end{figure}

\begin{figure}
    \centering
    \includegraphics[width=0.75\linewidth]{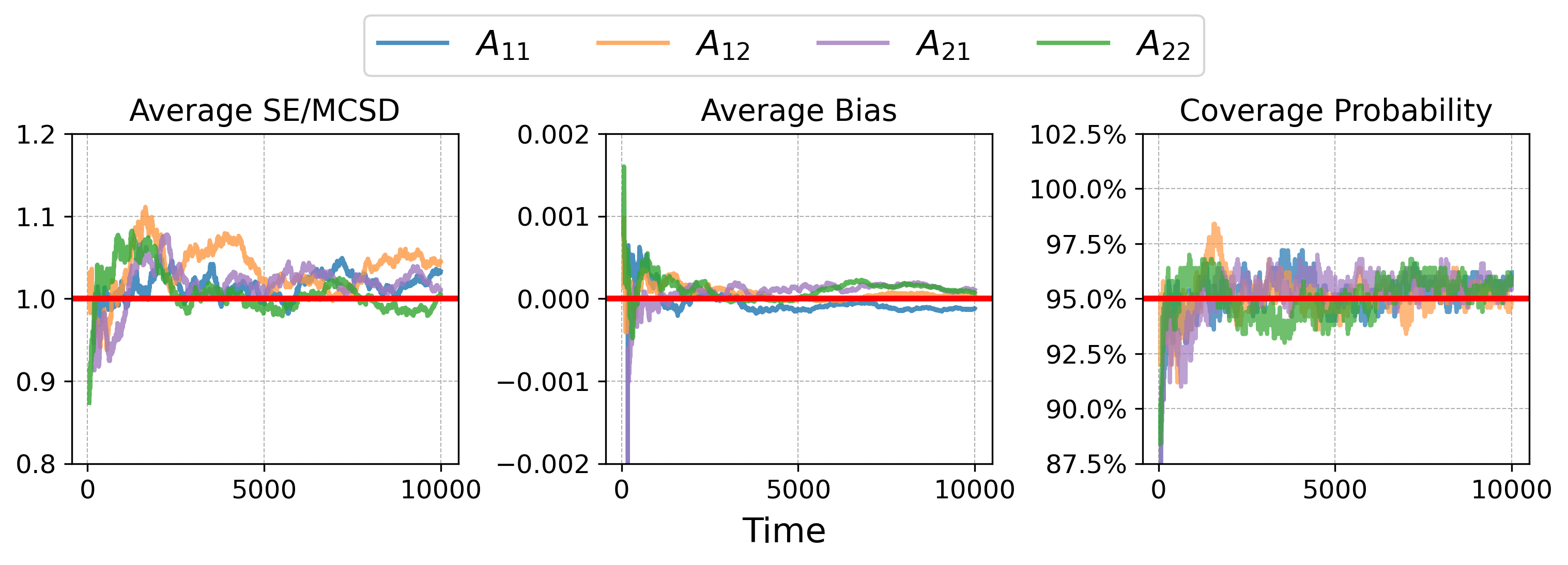}
    \caption{Consistency and asymptotic normality of payoff estimator $\widehat{A}_t$ under $\epsilon$-Greedy exploration. \textbf{Left panel}: the ratio between the standard error and the Monte Carlo standard deviation, with the red line indicating the nominal level of 1. \textbf{Middle panel}: the bias between the $\widehat{A}_t$ and $A$. \textbf{Right panel}: the coverage probabilities of the 95\% two-sided Wald-type confidence interval, with the red line indicating the nominal level of 95\%.}
    \label{fig:multi_est}
\end{figure}

\noindent\textbf{Convergence of estimated optimal policy.} 
Figure~\ref{fig:multi_NE} (a) and (b) show the convergence of $\widehat{x}_{t1}$ and $\widehat{y}_{t1}$ to the true equilibrium values $x_1^\ast$ and $y_1^\ast$, respectively. Since each player has two actions, the second component is determined by $\widehat{x}_{t2}=1-\widehat{x}_{t1}$ and $\widehat{y}_{t2}=1-\widehat{y}_{t1}$. The red horizontal lines denote the true equilibrium values. Both variants of OnGameLearn converge steadily and remain stable after the initial exploration period, while the baseline methods converge more slowly and exhibit larger fluctuations. In particular, Tsallis-INF assumes bounded payoffs, which is violated in our setting.

\noindent\textbf{Statistical inference of payoff matrix.} Figure \ref{fig:multi_est} presents the performance evaluation of OnGameLearn under $\epsilon$-Greedy through three key metrics: the average ratio of the standard error (SE) to the Monte Carlo standard deviation (MCSD), (2) the average parameter estimation bias computed, and (3) the coverage probability of 95\% two-sided Wald confidence intervals. These confidence intervals are constructed using the parameter estimates and their corresponding standard errors derived from Equation \eqref{equ:var_est} in Theorem \ref{thm:norm_matrix}. The results demonstrate strong empirical performance: the SE/MCSD ratio converges to 1, indicating proper variance estimation; the estimation bias approaches zero, showing unbiasedness; and the coverage probabilities for all $A_{ij}$ parameters attain the nominal 95\% level. Note that the baseline methods, EXP3 and Tsallis-INF, are designed for regret minimization and do not provide statistical inference for the payoff parameters.

\noindent\textbf{$\sqrt{T}$-consistent value estimator.} Figure \ref{fig:multi_NE} (c) illustrates the empirical performance of the doubly robust value estimator $\widehat{V}_T^\text{DR}$. The results demonstrate strong consistency: $\widehat{V}_T^\text{DR}$ converges to the optimal policy value $V^\ast_A$, as evidenced by the left panel. In contrast, the baseline methods exhibit larger fluctuations and slower stabilization; in particular, other baseline estimates remain noticeably more variable or display a persistent finite-sample bias over the observed time horizon. 
Furthermore, the log-log plot of estimation errors (Figure \ref{fig:multi_NE} (d)) exhibits parallel slopes aligned with the theoretical $1/\sqrt{T}$ benchmark, confirming the expected $\sqrt{T}$-convergence rate stated in Theorem \ref{thm:saddlerate}. 

Additional results for Tsallis-INF under different hyperparameter choices, as well as statistical inference results under UCB exploration, are provided in Section~\ref{app:matrix_simu} of the supplementary material.

\subsection{Simulation Studies for Online Contextual Matrix Games}
\label{sec:simu_context}
\noindent\textbf{Simulation setup.} 
We conduct experiments for contextual matrix games following Algorithm~\ref{BOCG}. The simulation setup incorporates contextual information with two covariates and set $d=3$ with accounting for 1 as the intercept. The context vector $\bm{z} = (1,z_1,z_2)$ is generated as follows: $z_1 \stackrel{\text{i.i.d}}{\sim} \text{Uniform}(0, 5)$, and $z_2$ follows a truncated normal distribution with mean zero, variance one, and support $[0, 10]$, ensuring Assumption \ref{ass:bound} is satisfied, while Assumption~\ref{ass:mix} is verified numerically. The game involves $m=2$ actions for the row player and $k=2$ actions for the column player. The true parameters are specified as: $\bm{\beta}^{11} = (1,2,2)^\top$, $\bm{\beta}^{12} = (-2,-1,1)^\top$, $\bm{\beta}^{21} = (-1,1,-2)^\top$, $\bm{\beta}^{22} = (3,1,2)^\top$. We set regularization parameter $\eta_t=10t^{-1/2}$. We use the same time horizon $T$, noise terms $e_t$, exploration rate $\epsilon_t$, and UCB hyperparameter $c_t$ as in Section~\ref{sec:simu_multiarm} with 500 replications. 
We also include the contextual extensions of the baseline methods considered in Section~\ref{sec:simu_multiarm}, namely, EXP4~\citep{auer2002nonstochastic} and LC-Tsallis-INF~\citep{kato25lc}.

\begin{figure}[!t]
    \centering
    \includegraphics[width=1.0\linewidth]{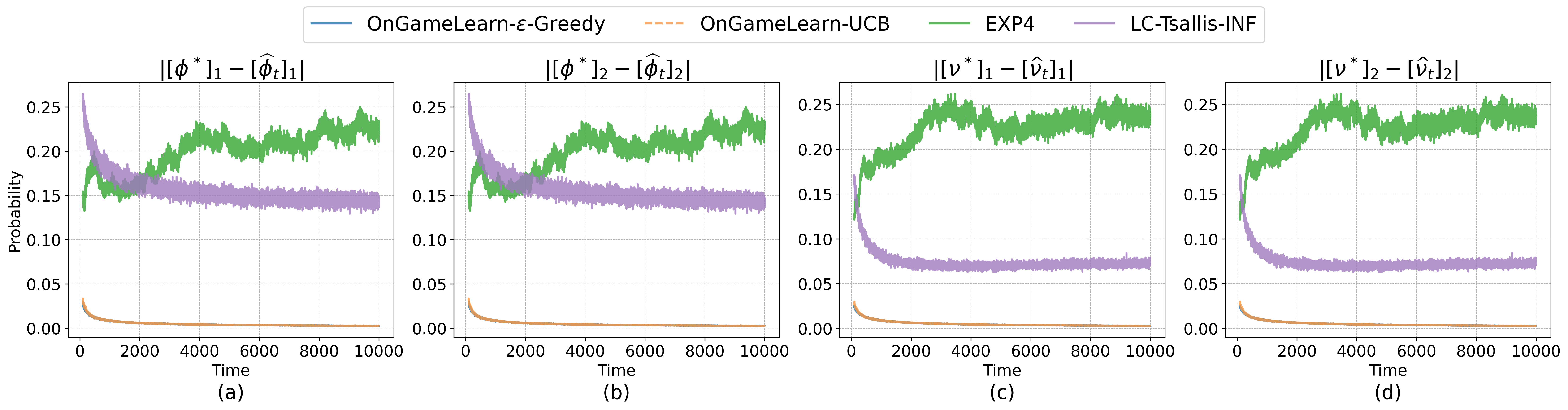}
    \caption{Convergence of the policy, measured by the distances between estimated mixed strategies $[\tilde{\phi}_t]_1$, $[\tilde{\phi}_t]_2$, $[\tilde{\nu}_t]_1$, $[\tilde{\nu}_t]_2$ and the true Nash equilibrium $[{\phi}^\ast]_1$, $[{\phi}^\ast]_2$, $[{\nu}^\ast]_1$, $[{\nu}^\ast]_2$. The curves for OnGameLearn under $\epsilon$-greedy and UCB exploration almost completely overlap.}
    \label{fig:con_NE}
\end{figure}

\begin{figure}[!t]
    \centering
    \includegraphics[width=1.0\linewidth]{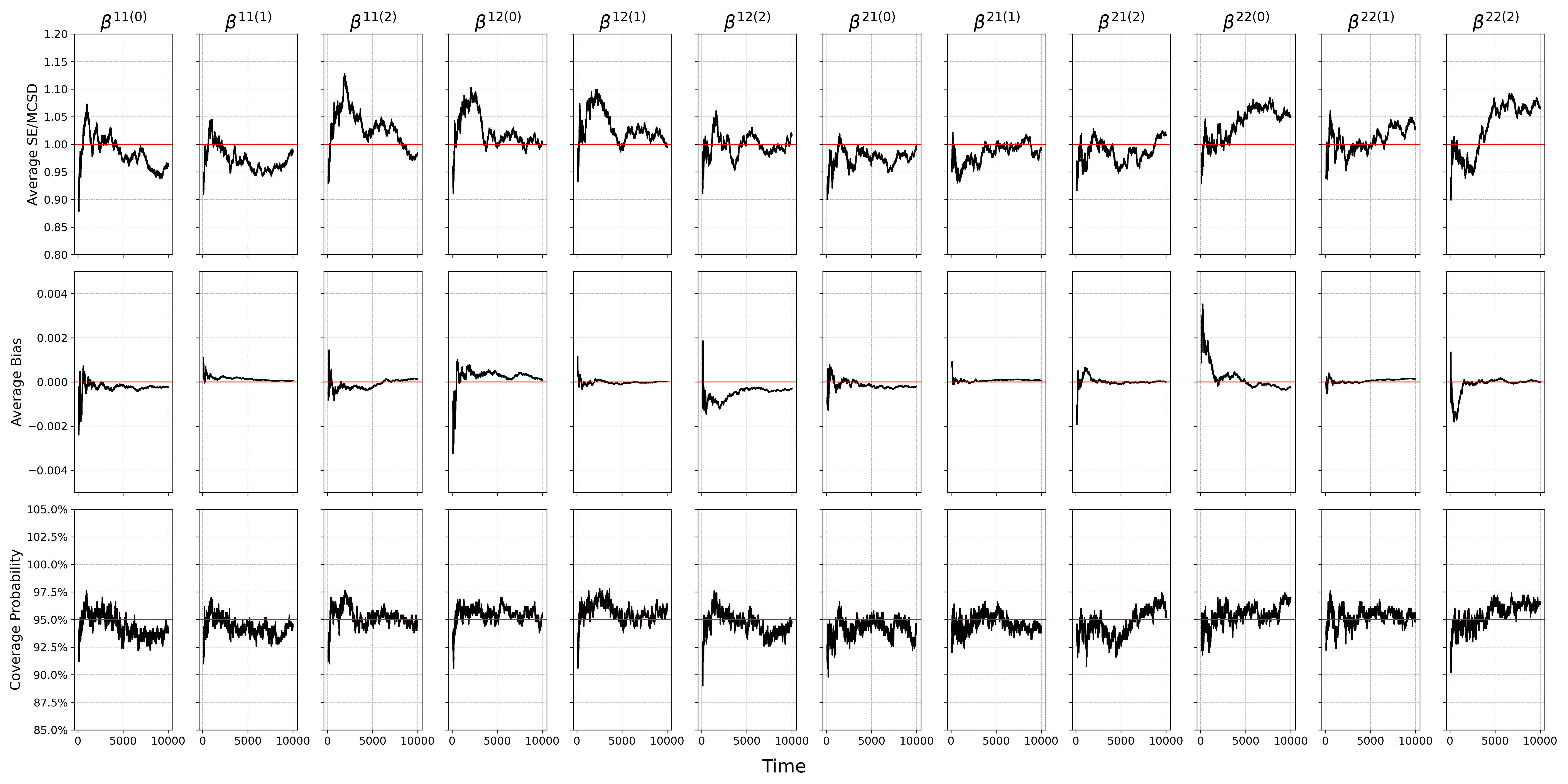}
    \caption{Consistency and asymptotic normality of estimated parameters $\widehat{\bm{\beta}}^{11}_t$, $\widehat{\bm{\beta}}^{12}_t$, $\widehat{\bm{\beta}}^{21}_t$, $\widehat{\bm{\beta}}^{22}_t$ under $\epsilon$-Greedy exploration}. Each row corresponds to a different evaluation metric, aligned with the left, middle, and right panels of Figure~\ref{fig:multi_est}.
    \label{fig:con_est}
\end{figure}

\begin{figure}[!t]
    \centering
    \includegraphics[width=0.7\linewidth]{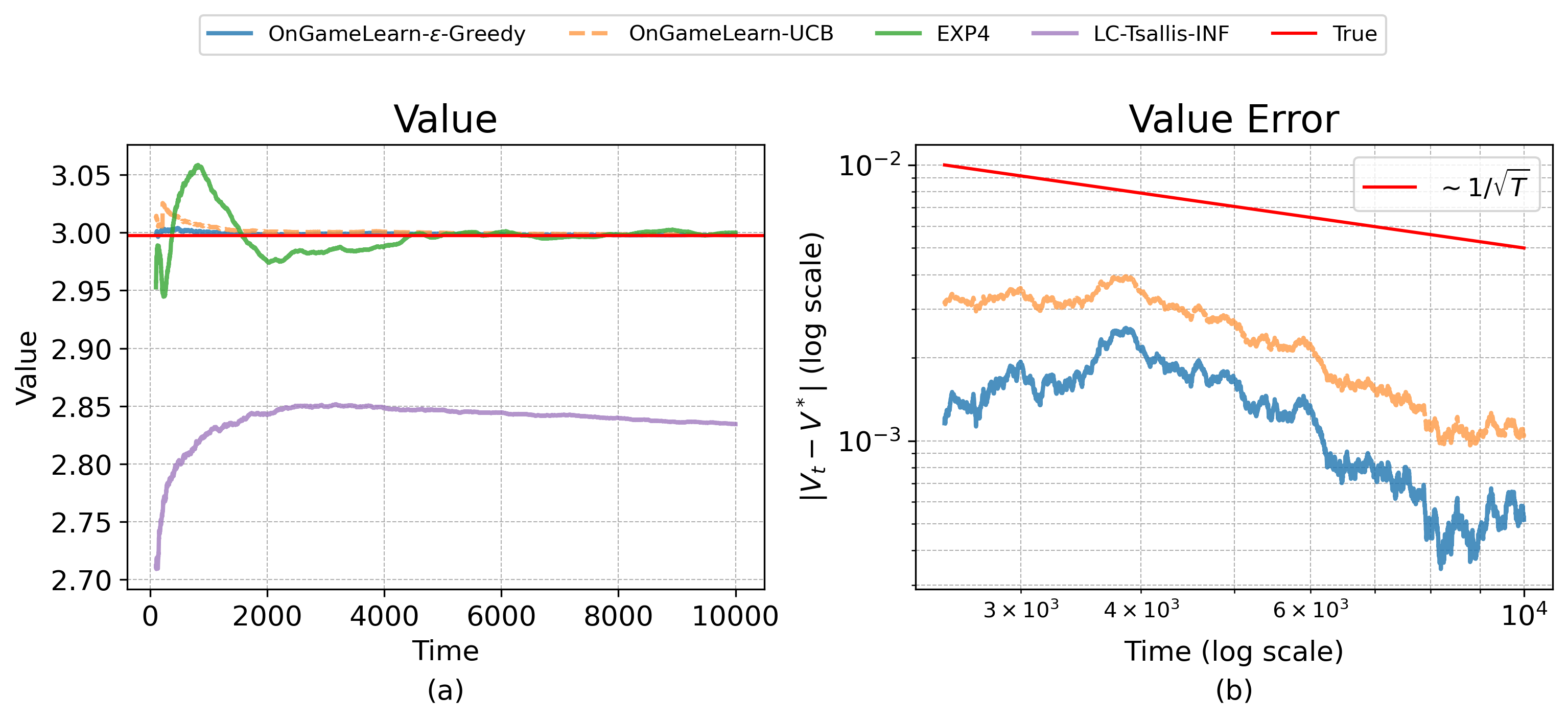}
    \caption{$\sqrt{T}$-consistency of the doubly robust value estimator $\widehat{V}_T^\text{conDR}$.}
    \label{fig:con_value}
\end{figure}

\noindent\textbf{Results.} Figures \ref{fig:con_NE}, \ref{fig:con_est}, and \ref{fig:con_value} present results analogous to those in Figures \ref{fig:multi_NE} and \ref{fig:multi_est}, demonstrating the effectiveness of our theoretical guarantees in the contextual setting. Across the three evaluation metrics, the parameter estimates exhibit stable and accurate performance. Moreover, the mixed strategies estimated by OnGameLearn progressively approach the true Nash equilibrium, as reflected by the decreasing distances over time. In contrast, as shown in Figure~\ref{fig:con_NE}, EXP4 and LC-Tsallis-INF are regret-based contextual bandit methods that generally do not provide last-iterate convergence to the Nash equilibrium. They also do not support statistical inference for the payoff parameters. Statistical inference figures under UCB exploration are provided in Section~\ref{app:context_simu} of the supplementary material.
The optimal policy value $V^\ast$ is approximated via Monte Carlo simulations. As shown in Figure~\ref{fig:con_value}, the doubly robust value estimator $\widehat{V}_T^{\mathrm{conDR}}$ continues to exhibit the expected $\sqrt{T}$-convergence rate established in Theorem~\ref{conthm:drrate}, with a slope closely matching the nominal benchmark. In contrast, LC-Tsallis-INF exhibits a visible gap between $V^\ast$ and $\widehat{V}_T^{\mathrm{conDR}}$, which may also be attributed to the unbounded Gaussian observation noise in our setting.
In this section, we report results for $m=k=2$. Results for larger action spaces are provided in Section~\ref{app:mk} of the supplementary material.

\section{Real Data Application}
\label{sec:real}

In this section, we evaluate OnGameLearn in a real-world hotel pricing application. The results show that OnGameLearn provides valid statistical inference and $\sqrt{T}$-rate convergence of the doubly robust value estimator under bounded equilibrium probabilities, while also yielding practical insights into competitive hotel pricing under Nash equilibrium strategies.

\noindent\textbf{Hotel pricing games.}
We use publicly available data from a large urban hotel chain \citep{bodea2009data}, containing room purchases and revenues for check-in dates from March 12 to April 15, 2007. We focus on ``Hotel 2'' and ``Hotel 4'' as two competing roadside hotels with similar room capacities, treating ``Hotel 2'' as the row player and ``Hotel 4'' as the column player.
For each hotel, the action is binary: $i_t=1$ or $j_t=1$ indicates a nightly rate above the average rate for the corresponding room type, while $i_t=0$ or $j_t=0$ indicates a lower rate. We use the lowest observed nightly rate as a proxy for the room cost. If a customer chooses ``Hotel 4'', the payoff is the resulting profit of ``Hotel 4''; if the customer chooses ``Hotel 2'', the payoff is the negative of the corresponding profit. Profit is calculated as the difference between the nightly rate and the estimated cost, multiplied by the number of rooms and length of stay.

\begin{figure}[!t]
    \centering
    \includegraphics[width=1.0\linewidth]{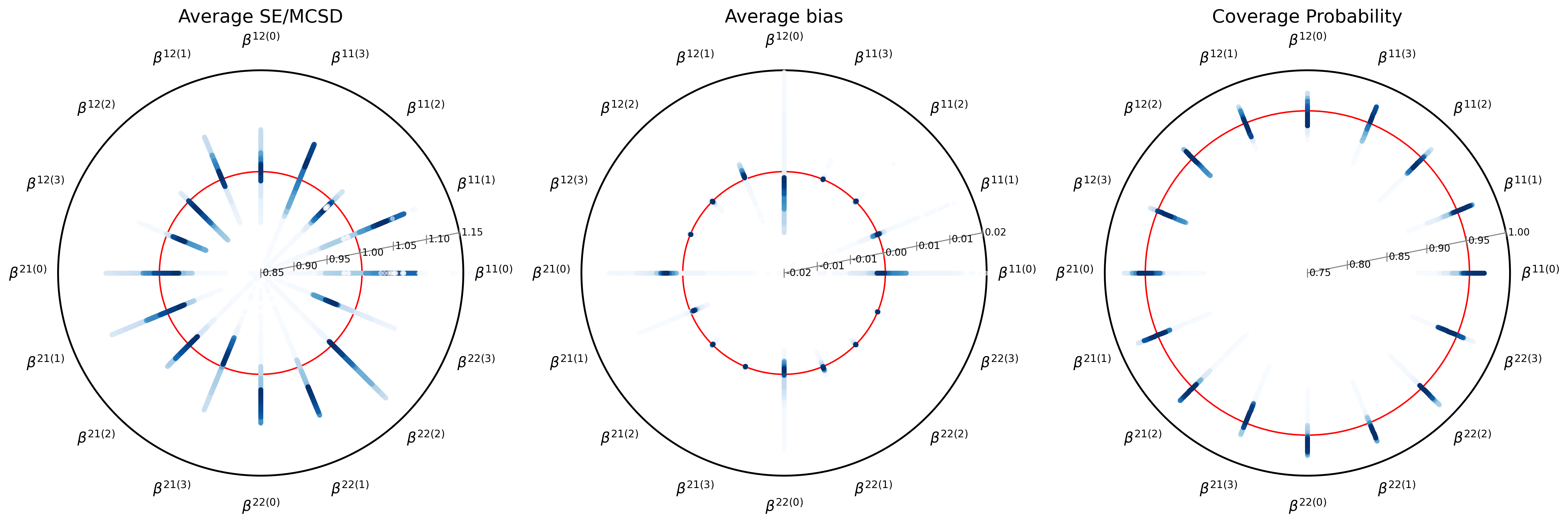}
    \caption{Consistency and asymptotic normality of the estimated parameters for the hotel dataset. \textbf{Each ray represents a parameter, with dot colors transitioning from light to dark as the time steps increase.} 
    \textbf{Left panel:} Average SE/MCSD, with the red circle indicating the nominal level of 1. \textbf{Middle panel:} Average bias, with the red circle indicating the nominal level of 0. \textbf{Right panel:} Coverage probability, with the red circle indicating the nominal level of 95\%.}
    \label{fig:real_infer}
\end{figure}

We incorporate three contextual features (with $d = 4$, including an intercept): advance purchase days, length of stay, and party size, as they capture key characteristics of the customers. To construct the offline dataset, we retain only transactions in which a product was actually purchased by the customer. To improve scaling, we apply a logarithmic transformation to the advance purchase days. We then treat the model fitted on the contextual features as the ground truth for the payoff. Using Algorithm~\ref{BOCG} with $\epsilon$-Greedy along with Theorems~\ref{conthm:para} and~\ref{conthm:drrate}, we evaluate the performance of the parameter estimates and the doubly robust value estimates from 500 replications, as shown in Figures~\ref{fig:real_infer} and \ref{fig:real_value}.

\noindent\textbf{Results.} Figure~\ref{fig:real_infer} reports three inference metrics: the average SE-to-MCSD ratio, average bias, and coverage probability. Dot colors become darker as the time steps increase, while the red circles indicate the nominal levels. Across all three metrics, OnGameLearn steadily approaches the corresponding nominal values.
In practice, Assumption~\ref{conass:margin} is difficult to verify directly. We therefore impose a small fixed lower bound on the estimated Nash equilibrium probabilities to stabilize the inverse probability weights. As shown in Figure~\ref{fig:real_value}, the doubly robust value estimator retains a $T^{-1/2}$ convergence rate, demonstrating the robustness of OnGameLearn.

\begin{figure}[!t]
    \centering
    \includegraphics[width=0.7\linewidth]{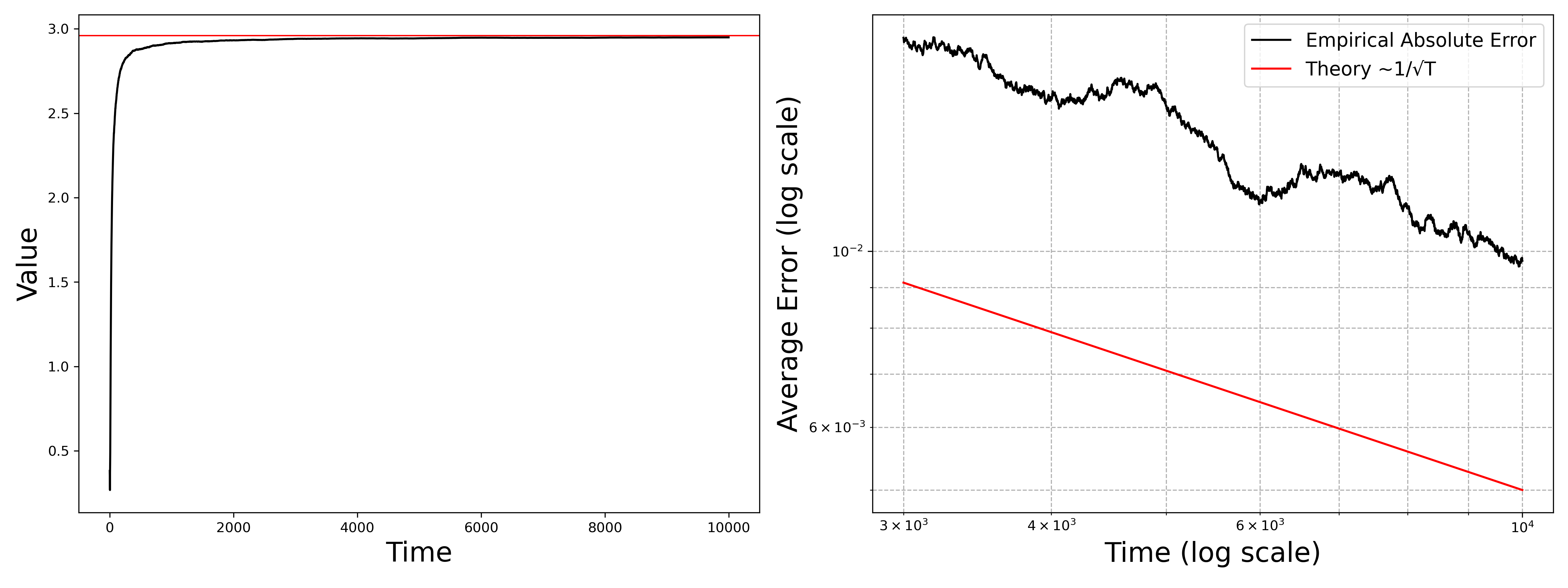}
    \caption{$\sqrt{T}$-consistency of the doubly robust value estimates for hotel dataset. The y-axis values of the left panel represent profit (scaled by a factor of 10).}
    \label{fig:real_value}
\end{figure}

The expected value of the zero-sum game between the two hotels is approximately \$29.4. This indicates that, on average, under the Nash equilibrium for both hotels, Hotel~2 is expected to incur a loss of about \$29.4 in profit per transaction, while Hotel~4 is expected to gain the same amount. This suggests that Hotel~4 is more likely to be chosen by customers than Hotel~2 when both adopt the Nash equilibrium strategies.

\section{Conclusion}
\label{sec:conclusion}
In this paper, we propose a policy learning and evaluation framework for online contextual matrix games. 
We introduce OnGameLearn in both multi-armed setting and the novel contextual matrix games setting with theoretical guarantees, including statistical inference for parameter estimation and a $\sqrt{T}$-consistent doubly robust value estimator. Numerical experiments and a real-world application demonstrate the effectiveness of our approach.

There are several promising directions for future work. First, building on the $\sqrt{T}$-consistency of our estimator, we plan to apply bootstrap methods \citep[see e.g.,][]{hong2020numerical,li2025proximal} to construct two-sided Wald-type confidence intervals for the optimal value, thereby completing the inference for optimal value evaluation. Second, we aim to extend the current two-player setting to more general multi-agent scenarios. In such settings, the optimal strategies are still characterized by the Nash equilibrium, where no agent can benefit from unilaterally deviating given others’ strategies. We will explore the corresponding statistical inference techniques in this multi-agent context.

\bibliography{citations}
\bibliographystyle{agsm}

\newpage

\appendix
\newtheorem{lemma}{Lemma}   
 
\counterwithin{figure}{section}
\counterwithin{table}{section}
\counterwithin{equation}{section}

This supplementary material provides additional experimental results and detailed proofs for the theoretical results of OnGameLearn in both online matrix games and online contextual matrix games. Section~\ref{app:bandit_alg} presents the specific behavior policies under different exploration strategies. Section~\ref{app:dis} provides further discussion on the convergence rate of the estimated Nash equilibrium, the computational complexity of the regularized Nash equilibrium, and robustness to model misspecification, while Section~\ref{app:exp} reports additional experimental results. Sections~\ref{app:proof_bandit} and~\ref{app:proof_context} contain the detailed statements and proofs of the theoretical results in Sections~\ref{sec:theory_bandit} and~\ref{sec:theory_con}, respectively. Section~\ref{app:sec:mis_model} provides additional analysis of contextual matrix games under model misspecification. Finally, Section~\ref{app:lemmas} collects the auxiliary results used in the proofs.



\section{Behavior Policy}
\label{app:bandit_alg}
In this section, we first introduce the estimated Nash equilibrium and behavior policy for online matrix games, and then extend these constructions to contextual matrix games.

Let $(\widehat{\bm{x}}_t,\widehat{\bm{y}}_t)$ denote the unique Nash equilibrium of $\mathcal{L}_t(\bm{x},\bm{y})$ and the saddle value of $\mathcal{L}_t$ as $\mathcal{L}_t(\widehat{\bm{x}}_t,\widehat{\bm{y}}_t)$ \citep{milgrom2002envelope}. 
We define $X^\ast:\mathbb{R}^{m \times k} \times \mathbb{R} \rightarrow \Delta_m$ and $Y^\ast:\mathbb{R}^{m \times k} \times \mathbb{R} \rightarrow \Delta_k$ as the set-valued functions for the Nash equilibrium of the two players with parameters $A$ and $\eta$, respectively. Thus,
\begin{equation}
\begin{aligned}
    \widehat{\bm{x}}_t
     =
    X^\ast(\widehat{A}_{t-1},\eta_t)
    =
    \arg\min_{\bm{x} \in \Delta_m}
    \max_{\bm{y} \in \Delta_k}
    \mathcal{L}_t(\bm{x},\bm{y}), \quad
    \widehat{\bm{y}}_t
    =
    Y^\ast(\widehat{A}_{t-1},\eta_t)
    =
    \arg\max_{\bm{y} \in \Delta_k}
    \min_{\bm{x} \in \Delta_m}
    \mathcal{L}_t(\bm{x},\bm{y}).
\end{aligned}
\end{equation}
To avoid getting trapped in a specific action pair during online learning, we incorporate the three commonly used bandit algorithms to explore other action pairs. The behavior policy is defined as ${\pi_t(i,j)}= \mathbb{E}[\mathbb{I}(i_t=i,j_t=j) \vert \mathcal{H}_{t-1}]$, which represents the conditional probability of selecting action pair $(i,j)$ at time $t$, given the history $\mathcal{H}_{t-1}$. \\
\textbf{Upper Confidence Bound (UCB) \citep{li2011unbiased}:} 
Let $\mathcal{N}_{t-1}^{ij}$ denote the number of times that the action pair 
$(i,j)$ has been observed prior to time $t$. Based on the history 
$\mathcal{H}_{t-1}$, we define the estimated standard deviation associated 
with the payoff entry $(i,j)$ as $\widehat{\sigma}_{t}^{ij}=1/\sqrt{\max\left\{1,\mathcal{N}_{t-1}^{ij}\right\}}$.
The corresponding optimistic estimate of the payoff matrix is given by
\[
[\widetilde{A}_{t}]_{ij} = [\widehat{A}_{t-1}]_{ij} + c_t\widehat{\sigma}_{t}^{ij},
\]
where $c_t>0$ is a tuning parameter that controls the level of exploration. We then compute the Nash equilibrium strategies based on the  optimistic payoff matrix: $X^\ast(\widetilde{A}_{t},\eta_t)$ and $Y^\ast(\widetilde{A}_{t},\eta_t)$.
To ensure sufficient exploration of the available actions, we apply clipping to the mixed strategy of each player. For any dimension $d$ and minimum probability threshold $\epsilon_t$, define the truncated simplex as
\[
    \Delta_d(\epsilon_t) =
    \left\{
        \bm{u}\in\mathbb{R}^d:
        \sum_{\ell=1}^{d} u_\ell = 1,
        \quad
        u_\ell \geq \epsilon_t,
        \quad
        \forall \ell\in[d]
    \right\}.
\]
Accordingly, the clipped strategies of the two players are
\[
    \widetilde{\bm{x}}_t =
    \operatorname{Clip}_m
    \left(
        X^\ast(\widetilde{A}_{t},\eta_t),
        \frac{\epsilon_t}{m}
    \right), \quad 
    \widetilde{\bm{y}}_t =
    \operatorname{Clip}_k
    \left(
        Y^\ast(\widetilde{A}_{t},\eta_t),
        \frac{\epsilon_t}{k}
    \right),
\] where the clipping operator is defined as the Euclidean projection onto the 
truncated simplex 
$\operatorname{Clip}_d(\bm{u},\epsilon_t) =
    \underset{\bm{u}_0\in\Delta_d(\epsilon_t)}
    {\arg\min}
    \left\|
        \bm{u}_0 -\bm{u}
    \right\|_2.$ \\
\textbf{Thompson Sampling (TS) \citep{agrawal2013thompson}.}
Following \citet{shen2024doubly}, we assume that the observed payoff associated with action pair $(i,j)$ follows a Gaussian distribution $\mathcal{N}(\widetilde{A}_{ij},\rho^2)$, where $\rho^2$ is a known variance parameter. Based on the history $\mathcal{H}_{t-1}$, the posterior distribution of the unknown payoff entry $A_{ij}$ at time $t$ is given by
\begin{equation}
  \widetilde{A}_{ij} \mid \mathcal{H}_{t-1}
    \sim
    \mathcal{N}
    \left(
        [\widehat{A}_{t-1}]_{ij},
        \frac{\rho^2}{
            \max\left\{1,\mathcal{N}_{t-1}^{ij}\right\}
        }
    \right).
\end{equation}
At each time $t$, we draw posterior samples as $[\widetilde{A}_t]_{ij}$, and then compute the Nash equilibrium strategies based on the sampled payoff matrix $\widetilde{A}_t$: $X^\ast(\widetilde{A}_t,\eta_t)$, and $Y^\ast(\widetilde{A}_t,\eta_t)$.
Similarly as UCB, the clipped strategies are $\widetilde{\bm{x}}_t=\operatorname{Clip}_m \left(X^\ast(\widetilde{A}_{t},\eta_t), \epsilon_t/m \right)$, and $\widetilde{\bm{y}}_t=\operatorname{Clip}_k \left( Y^\ast(\widetilde{A}_{t},\eta_t), \epsilon_t/k \right)$. \\
\textbf{$\epsilon$-Greedy \citep{sutton1998reinforcement}:}
At round $t$, each player independently explores with probability $\epsilon_t$ and follows the estimated equilibrium strategy with probability $1-\epsilon_t$. Specifically, the actions $i_t$ and $j_t$ are sampled from the mixed strategies 
\begin{equation*} 
\widetilde{\bm{x}}_t = (1-\epsilon_t)\widehat{\bm{x}}_t + \frac{\epsilon_t}{m}\bm{1}_m, \quad 
\widetilde{\bm{y}}_t = (1-\epsilon_t)\widehat{\bm{y}}_t + \frac{\epsilon_t}{k}\bm{1}_k.  
\end{equation*}
For the three exploration algorithms, the corresponding behavior policy is $\pi_t(i,j) = \widetilde{x}_{ti}\widetilde{y}_{tj} \ge \epsilon_t^2/mk, \, i\in[m],\, j\in[k].$

For contextual matrix games, we extend the exploration schemes above to the context-dependent setting, ensuring that each action pair is selected with probability at least $\epsilon_t^2/(mk)$. For UCB, the optimistic payoff estimate is
\begin{equation*}
[\widetilde M_{t-1}(\bm z_t)]_{ij}
= [\widehat M_{t-1}(\bm z_t)]_{ij} + c_t \sqrt{\bm z_t^\top \left(\sum_{s=1}^{t-1}
\mathbb I(i_s=i,j_s=j)\bm z_s\bm z_s^\top\right)^{-1} \bm z_t}.
\end{equation*}
The behavior policy is constructed with $\widetilde{\phi}_t(\bm z_t)=\operatorname{Clip}_m \left(X^\ast(\widetilde{M}_{t-1}(\bm{z}_t),\eta_t), \epsilon_t/m \right)$ and $\widetilde{\nu}_t(\bm z_t)=\operatorname{Clip}_k \left(Y^\ast(\widetilde{M}_{t-1}(\bm{z}_t),\eta_t), \epsilon_t/k \right)$. For TS, we draw samples from the distribution
\[
\widetilde{\bm\beta}^{ij}
\mid \mathcal H_{t-1}
\sim
\mathcal N\left(
\widehat{\bm\beta}_{t-1}^{ij},
\rho^2 \left(\sum_{s=1}^{t-1} \mathbb I(i_s=i,j_s=j)\bm z_s\bm z_s^\top \right)^{-1}
\right),
\]
as $\widetilde{\bm\beta}_{t}^{ij}$. The clipped strategies are then given by $\widetilde{\phi}_t(\bm z_t)=\operatorname{Clip}_m \left(X^\ast(\widetilde{M}_{t-1}(\bm{z}_t),\eta_t), \epsilon_t/m \right)$ and $\widetilde{\nu}_t(\bm z_t)=\operatorname{Clip}_k \left(Y^\ast(\widetilde{M}_{t-1}(\bm{z}_t),\eta_t), \epsilon_t/k \right)$ with $[\widetilde{M}_{t-1}(\bm{z}_t)]_{ij}=\bm{z}_t^\top \widetilde{\bm{\beta}}^{ij}_{t}$. For $\epsilon-$Greedy, similarly, we set $\widetilde{\phi}_t(\bm z_t) = (1-\epsilon_t)\widehat{\phi}_t(\bm z_t) + \frac{\epsilon_t}{m}\bm{1}_m$, and $\widetilde{\nu}_t(\bm z_t)= (1-\epsilon_t)\widehat{\nu}_t(\bm z_t) + \frac{\epsilon_t}{k}\bm{1}_k$.
The resulting behavior policy for the two players is therefore denoted as ${\pi}_t^{\mathrm{con}}(i,j \vert \bm{z}_t)=[\widetilde{\phi}_t(\bm z_t)]_i [\widetilde{\nu}_t(\bm z_t)]_j$.

\section{Discussion}
\label{app:dis}
In this section, we first establish the convergence rate of the estimated Nash equilibrium. We then discuss the case where the true model for the contextual matrix game is not linear.

\subsection{Convergence Rate of Estimated Nash Equilibrium}
\label{app:con_rate_ne}
Based on Theorems~\ref{thm:nash_con} and~\ref{conthm:nash_con}, we have established the consistency of the estimated Nash equilibrium. We now analyze its explicit convergence rate. The detailed proofs are provided in Section~\ref{app:thm:rate_nash_rate_proof} of the supplementary material.

\begin{theorem}[\textbf{\emph{Convergence rate of the estimated Nash equilibrium}}]
\label{thm:rate_nash_rate}
$(\widehat{\bm x}_t,\widehat{\bm y}_t)$ denotes the regularized Nash equilibrium computed from the estimated payoff matrix $\widehat A_{t-1}$ with regularization term $\eta_t$. Then,
\[
\operatorname{dist}
\left(
(\widehat{\bm x}_t,\widehat{\bm y}_t),
\mathcal E(A)
\right)
=
\widetilde O_p
\left(
\frac{\sqrt{mk}}{\epsilon_t\sqrt t}
+
\eta_t
\right).
\]
Moreover, under the additional realized error-bound condition in Assumption~\ref{conass:uniform_realized_error_bound}, the same argument extends to contextual matrix games. Specifically,
\[
\operatorname{dist}
\left(
(
\widehat\phi_t(\bm z_t),
\widehat\nu_t(\bm z_t)
),
\mathcal E(\bm M(\bm z_t))
\right)
=
\widetilde O_p
\left(
\frac{\sqrt{mk}}{\epsilon_t\sqrt t}
+
\eta_t
\right).
\]
\end{theorem}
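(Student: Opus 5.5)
The plan is to split the distance into a statistical error from estimating the payoff matrix and a bias from the entropic regularization. Both are controlled through the Nash gap $\operatorname{Gap}_A$, which is then converted into a distance to $\mathcal E(A)$ via an error-bound (metric subregularity) property of bilinear saddle problems over polytopes.

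\emph{Step 1 (regularization bias in the gap).} The pair $(\widehat{\bm x}_t,\widehat{\bm y}_t)$ is the saddle point of $\mathcal L_t$. Hence, for all $\bm x\in\Delta_m$ and $\bm y\in\Delta_k$,
\[
\mathcal L_t(\widehat{\bm x}_t,\bm y)\le \mathcal L_t(\widehat{\bm x}_t,\widehat{\bm y}_t)\le \mathcal L_t(\bm x,\widehat{\bm y}_t).
\]
Expanding $\mathcal L_t$ and using $0\le R_X\le \ln m$ and $0\le R_Y\le \ln k$ on the simplices, the regularizer terms contribute at most $\eta_t(\ln m+\ln k)$. This gives $\widehat{\bm x}_t^\top\widehat A_{t-1}\bm y-\bm x^\top\widehat A_{t-1}\widehat{\bm y}_t\le \eta_t(\ln m+\ln k)$. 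Taking the supremum over $\bm y$ and $\bm x$ yields $\operatorname{Gap}_{\widehat A_{t-1}}(\widehat{\bm x}_t,\widehat{\bm y}_t)\le \eta_t(\ln m+\ln k)$.

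\emph{Step 2 (estimation error in the gap).} For mixed strategies, $|\bm x^\top(A-\widehat A_{t-1})\bm y|\le \|\widehat A_{t-1}-A\|_{\max}$. Therefore
\[
\operatorname{Gap}_A(\widehat{\bm x}_t,\widehat{\bm y}_t)\le \operatorname{Gap}_{\widehat A_{t-1}}(\widehat{\bm x}_t,\widehat{\bm y}_t)+2\|\widehat A_{t-1}-A\|_{\max}.
\]
To bound the max-norm, apply Theorem~\ref{thm:tail_bandit} at time $t-1$ with
\[
h=2\sigma_{\max}\sqrt{mk\log(mk/\delta)}\big/\bigl(\epsilon_{t-1}\sqrt{t-1}\bigr),
\]
and take a union bound over the $mk$ entries. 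Assuming $t\epsilon_t^2\to\infty$, the first exponential term in Theorem~\ref{thm:tail_bandit} is negligible. This gives $\|\widehat A_{t-1}-A\|_{\max}=\widetilde O_p\bigl(\sqrt{mk}/(\epsilon_t\sqrt t)\bigr)$, using that $\epsilon_t$ is non-increasing so that $\epsilon_{t-1}\sqrt{t-1}$ and $\epsilon_t\sqrt t$ are comparable. Combining the two steps, $\operatorname{Gap}_A(\widehat{\bm x}_t,\widehat{\bm y}_t)=\widetilde O_p\bigl(\sqrt{mk}/(\epsilon_t\sqrt t)+\eta_t\bigr)$.

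\emph{Step 3 (from gap to distance), the main obstacle.} We need a constant $C_A>0$, depending only on $A$, such that
\[
\operatorname{dist}\bigl((\bm x,\bm y),\mathcal E(A)\bigr)\le C_A\,\operatorname{Gap}_A(\bm x,\bm y)\quad\text{for all }(\bm x,\bm y)\in\Delta_m\times\Delta_k.
\]
The route I would take is to write $\mathcal E(A)$ as the solution set of a linear program: $\bm x$ satisfies $\max_j(\bm x^\top A)_j\le V_A^\ast$, $\bm y$ satisfies $\min_i(A\bm y)_i\ge V_A^\ast$, together with the simplex constraints. The gap is exactly the sum of the violations of these linear inequalities. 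Hoffman's lemma then gives the linear error bound, as in the metric subregularity used by \citet{wei2020linear}. Linearity matters: a merely Hölder bound would degrade the rate. Note that no uniqueness is needed here, since the bound is to the equilibrium set. Combining with Step 2 proves the first display.

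\emph{Contextual case.} Repeat Steps 1--2 with $\widehat M_{t-1}(\bm z_t)$ in place of $\widehat A_{t-1}$. Bound $|[\widehat M_{t-1}(\bm z_t)-M(\bm z_t)]_{ij}|\le L_z\|\widehat{\bm\beta}^{ij}_{t-1}-\bm\beta^{ij}\|_1$ using Assumption~\ref{ass:bound}, and control the $\ell_1$ error by Theorem~\ref{conthm:tail}. Under $t\epsilon_t^2/\log t\to\infty$ this yields the same order up to logarithmic factors. The Hoffman constant now depends on $\bm z_t$, so a uniform constant over realized contexts is required; this is exactly what Assumption~\ref{conass:uniform_realized_error_bound} supplies, and it is what makes Step 3 go through with a deterministic constant.
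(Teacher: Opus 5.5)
Your proposal is correct and follows essentially the same route as the paper. It bounds the regularization bias as $\operatorname{Gap}_{\widehat A_{t-1}}(\widehat{\bm x}_t,\widehat{\bm y}_t)\le\eta_t(\ln m+\ln k)$, adds the perturbation term $2\|\widehat A_{t-1}-A\|_{\max}$ controlled by Theorem~\ref{thm:tail_bandit} with a union bound, and uses Hoffman's error bound for the polyhedral equilibrium set to pass from gap to distance; in the contextual case it uses Assumption~\ref{conass:uniform_realized_error_bound} for a uniformly bounded constant, and you are more explicit than the paper in bounding the payoff error via $L_z\|\widehat{\bm\beta}^{ij}_{t-1}-\bm\beta^{ij}\|_1$ and Theorem~\ref{conthm:tail}.
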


\subsection{Computation of the Regularized Nash Equilibrium}
\label{app:computation}

For a given estimated payoff matrix $\widehat A_{t-1}$ in Eq.~\eqref{equ:NE_bandit}, or a given estimated contextual payoff matrix $\widehat M_{t-1}(\bm{z}_t)$ in Eq.~\eqref{equ:policy_con}, the optimizers are computed by solving the corresponding regularized saddle-point problem over the probability simplices. In our implementation, we use a gradient descent--ascent procedure.

For example, in the non-contextual setting, i.e., zero-sum online matrix games, at each inner iteration, the gradients with respect to $\bm{x}$ and $\bm{y}$ are computed as
\[
\nabla_{\bm{x}}
\left\{
\bm{x}^\top \widehat A_{t-1}\bm{y}
-\eta_t R_X(\bm{x})
+\eta_t R_Y(\bm{y})
\right\}
=
\widehat A_{t-1}\bm{y}
-\eta_t\nabla R_X(\bm{x}),
\]
and
\[
\nabla_{\bm{y}}
\left\{
\bm{x}^\top \widehat A_{t-1}\bm{y}
-\eta_t R_X(\bm{x})
+\eta_t R_Y(\bm{y})
\right\}
=
\widehat A_{t-1}^\top\bm{x}
+\eta_t\nabla R_Y(\bm{y}),
\]
followed by projection onto the probability simplices $\Delta_m$ and $\Delta_k$. The same procedure applies to the contextual setting after replacing $\widehat A_{t-1}$ with $\widehat M_{t-1}(\bm{z}_t)$.

For existence, the feasible sets $\Delta_m$ and $\Delta_k$ are compact and convex, and the regularized objective is continuous. Therefore, the optimization problems in Eq.~\eqref{equ:NE_bandit} and Eq.~\eqref{equ:policy_con} admit solutions for any fixed estimated payoff matrix. Moreover, the regularization terms are introduced to make the saddle-point problem well behaved and to ensure uniqueness of the regularized Nash equilibrium. Thus, for each observed context $\bm{z}_t$, the estimated strategies $\widehat\phi_t(\bm{z}_t)$ and $\widehat\nu_t(\bm{z}_t)$ are well-defined.

Regarding computational cost, in the non-contextual setting, each gradient descent--ascent iteration mainly involves matrix--vector multiplications with $\widehat A_{t-1}$, which cost $O(mk)$. Therefore, if $N$ inner iterations are used, the main computational cost for computing the optimizers in Eq.~\eqref{equ:NE_bandit} is $O(Nmk)$.

In the contextual setting, for each observed context $\bm{z}_t$, we first construct the estimated payoff matrix $\widehat M_{t-1}(\bm{z}_t)$. Since this matrix has $mk$ entries and each entry $\bm{z}_t^\top\widehat\beta_{t-1}^{ij}$ requires an inner product between two $d$-dimensional vectors, constructing $\widehat M_{t-1}(\bm{z}_t)$ costs $O(mkd)$. After that, solving the regularized minimax problem by gradient descent--ascent requires an additional $O(Nmk)$ cost. Thus, the main per-context computational cost for Eq.~\eqref{equ:policy_con} is $O(mkd + Nmk)$.

When a large number of decisions need to be made, the computational burden can be reduced by reducing the number of inner iterations $N$. In the non-contextual setting, the estimated payoff matrix typically changes gradually over time as new observations are collected. Therefore, the gradient descent--ascent algorithm can be warm-started from the optimizer obtained at the previous time point, rather than initialized from scratch. This can reduce the number of inner iterations required for convergence in practice.

\subsection{Extension to Model Misspecification}
\label{app:mis_model}
We now consider the contextual setting in which the working linear payoff model may be misspecified. Let $r_t=\varphi^{i_t,j_t}(\bm z_t)+e_t$ denote the true conditional mean payoff associated with action pair $(i,j)$, where $\{\varphi^{ij}(\bm z)\}_{i\in [m],j\in[k]}$ are unknown arbitrary measurable functions of $\bm z_t$, and $\bm z_t$ and $e_t$ satisfy the same conditions as before. For each action pair $(i,j)$, define the least-false parameter $\bm \beta_\dagger^{ij}=\arg\min_{\bm\beta\in\mathbb R^d}\mathbb E\left[\left\{\varphi^{ij}(\bm z_t)-\bm z_t^\top\bm\beta\right\}^2\right]$. Under model misspecification, follow \cite{chen2021statistical}, we use the inverse-probability-weighted least-squares estimator
\begin{equation*}
\widehat{\bm\beta}_{\dagger,t}^{ij}
=
\left\{
\frac{1}{t}
\sum_{s=1}^t
\frac{a_{\dagger,s}^{ij}}{
\pi_{\dagger,s}^{ij}}
\bm z_s\bm z_s^\top
\right\}^{-1}
\left\{
\frac{1}{t}
\sum_{s=1}^t
\frac{a_{\dagger,s}^{ij}}{\pi_{\dagger,s}^{ij}}
\bm z_sr_s
\right\},
\end{equation*}
where $a_{\dagger,s}^{ij}=\mathbb{I}(i_s=i,j_s=j)$ and $\pi_{\dagger,s}^{ij}=\mathbb{P}(i_s=i,j_s=j\vert \mathcal{H}_{s-1}^{\mathrm{con}}, \bm{z}_s)$.
Under the same exploration and regularity conditions as in the correctly specified case, the estimated payoff matrix converges to $M_\dagger(\bm z)$ with $[M_\dagger(\bm z)]_{ij}=\bm z^\top \bm \beta_\dagger^{ij}$. Consequently, the estimated contextual Nash policies converge to the Nash equilibrium policies of the pseudo-payoff game. The corresponding behavior policies also converge to the clipped or randomized versions of these limiting policies, depending on the exploration strategy. Moreover, the least-squares payoff estimator admits an asymptotic normal approximation. We also establish the consistency of the value estimator under model misspecification. 
Specifically, the value estimator converges to the value of the pseudo-true contextual game induced by Nash equilibrium of $M_\dagger(\bm z)$, rather than to the value of the game induced by the true conditional mean functions ${\varphi^{ij}(\bm z)}$.
We defer all formal statements and proofs to Section~\ref{app:sec:mis_model} of the supplementary material. The corresponding simulation results are provided in Section~\ref{app:mis_linear}.

\section{Additional Experiments Results}
\label{app:exp}

In this section, we report additional experimental results supplementing the simulations in Section~\ref{sec:simulation}. Specifically, Section~\ref{app:matrix_simu} presents results for Tsallis-INF under different hyperparameter choices, as well as statistical inference results under UCB exploration in the non-contextual online matrix game setting. Section~\ref{app:context_simu} reports additional results for the contextual matrix game setting under UCB exploration. Section~\ref{app:mis_linear} presents simulation results under model misspecification. Finally, Section~\ref{app:mk} provides results for settings with larger action spaces, where $m,k>2$.

\subsection{Additional Results for Online Matrix Games}
\label{app:matrix_simu}
For Tsallis-INF~\citep{zimmert2021tsallis}, we consider different learning-rate coefficients in the schedule $\eta_t=\eta_0/\sqrt{t+1}$. The original Tsallis-INF paper suggests $\eta_0=2.0$ under the standard bounded-loss setting. However, our simulation setting differs from the original theoretical setup because the observed payoff contains Gaussian noise and is therefore not uniformly bounded. Since importance-weighted bandit updates can be sensitive to large noisy observations, we evaluate several choices of $\eta_0$ under unbounded observation noise. Specifically, we use the default choice $\eta_0=2.0$ in Figure~\ref{fig:multi_NE_sens_2}, and additionally report results for $\eta_0=1.0$ in Figure~\ref{fig:multi_NE} and $\eta_0=4.0$ in Figure~\ref{fig:multi_NE_sens_4}.

\begin{figure}[ht]
    \centering
    \includegraphics[width=1.0\linewidth]{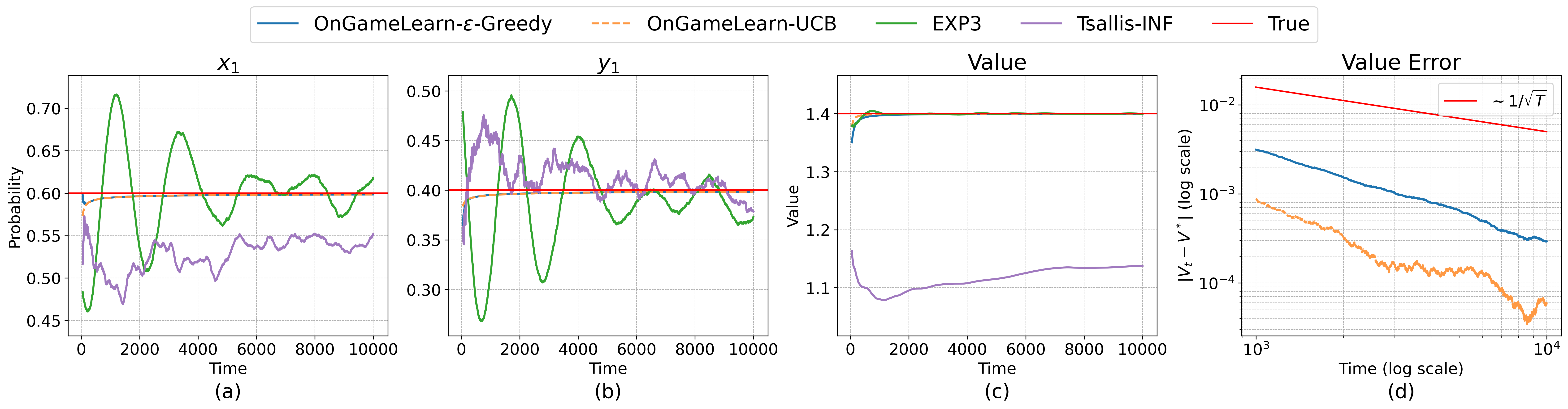}
    \caption{Convergence of the estimated Nash equilibrium and $\sqrt{T}$-consistency of the doubly robust value estimator $\widehat{V}_T^{\mathrm{DR}}$ when we choose $\eta_t = 2.0/\sqrt{t+1}$ for Tsallis-INF.}
    \label{fig:multi_NE_sens_2}
\end{figure}

\begin{figure}[ht]
    \centering
    \includegraphics[width=1.0\linewidth]{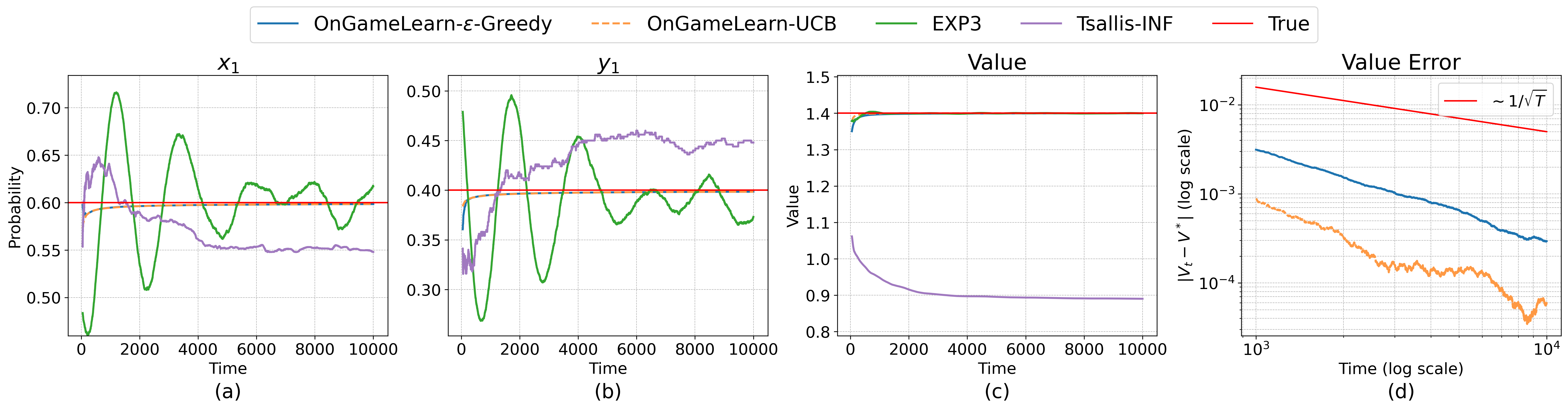}
    \caption{Convergence of the estimated Nash equilibrium and $\sqrt{T}$-consistency of the doubly robust value estimator $\widehat{V}_T^{\mathrm{DR}}$ when we choose $\eta_t = 4.0/\sqrt{t+1}$ for Tsallis-INF.}
    \label{fig:multi_NE_sens_4}
\end{figure}

In addition to the $\epsilon$-greedy results reported in the main text, we also provide statistical inference results for OnGameLearn under the UCB exploration strategy in Figure~\ref{fig:multi_est_ucb}. The results show that OnGameLearn continues to perform well under UCB across all three evaluation metrics, indicating that the proposed inference procedure is not limited to the $\epsilon$-greedy exploration strategy. These empirical findings are consistent with the theoretical guarantee in Theorem~\ref{thm:norm_matrix} in Section~\ref{sec:theory_bandit}.

\begin{figure}[ht]
    \centering
    \includegraphics[width=1.0\linewidth]{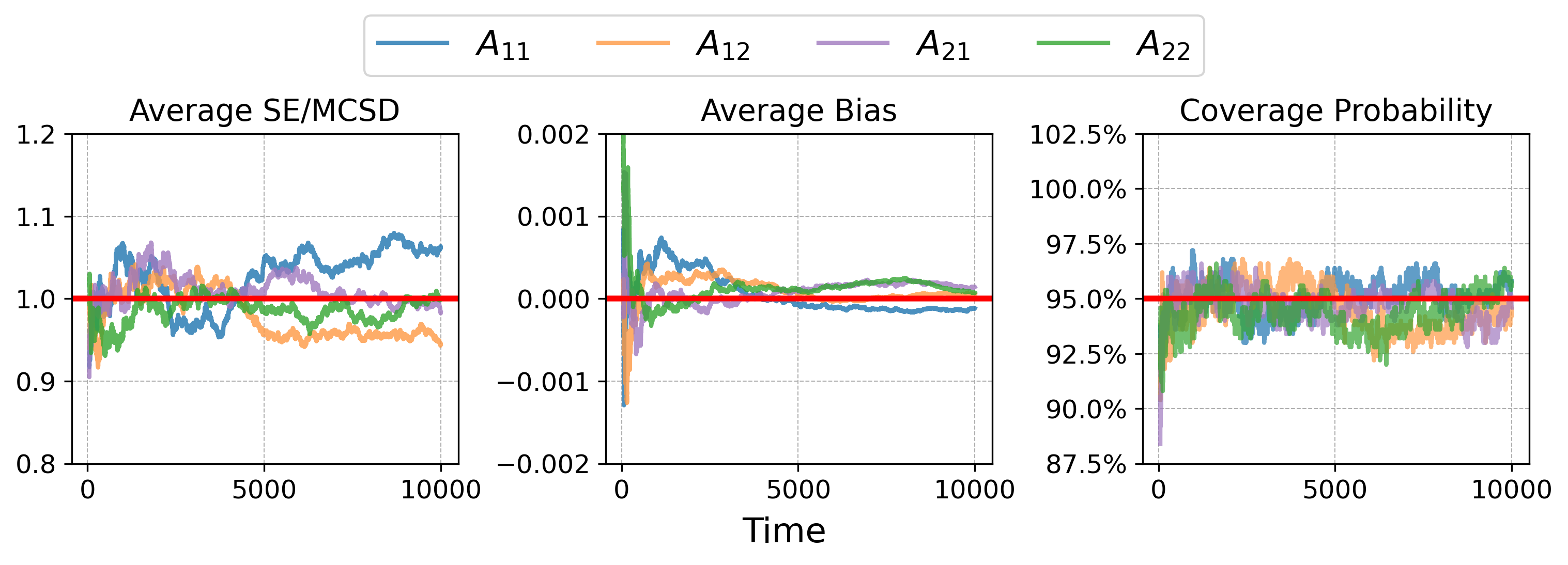}
    \caption{Consistency and asymptotic normality of payoff estimator $\widehat{A}_t$ under UCB exploration.}
    \label{fig:multi_est_ucb}
\end{figure}

\subsection{Additional Results for Contextual Matrix Games}
\label{app:context_simu}

Similarly, Figure~\ref{fig:con_est_ucb} presents the statistical inference results for OnGameLearn under the UCB exploration strategy in the contextual matrix game setting. The results show that OnGameLearn continues to perform well under UCB across all evaluation metrics, further supporting that the proposed inference procedure applies beyond the original $\epsilon$-greedy exploration strategy. These findings are consistent with Theorem~\ref{conthm:para} in Section~\ref{sec:theory_con}.

\begin{figure}[ht]
    \centering
    \includegraphics[width=1.0\linewidth]{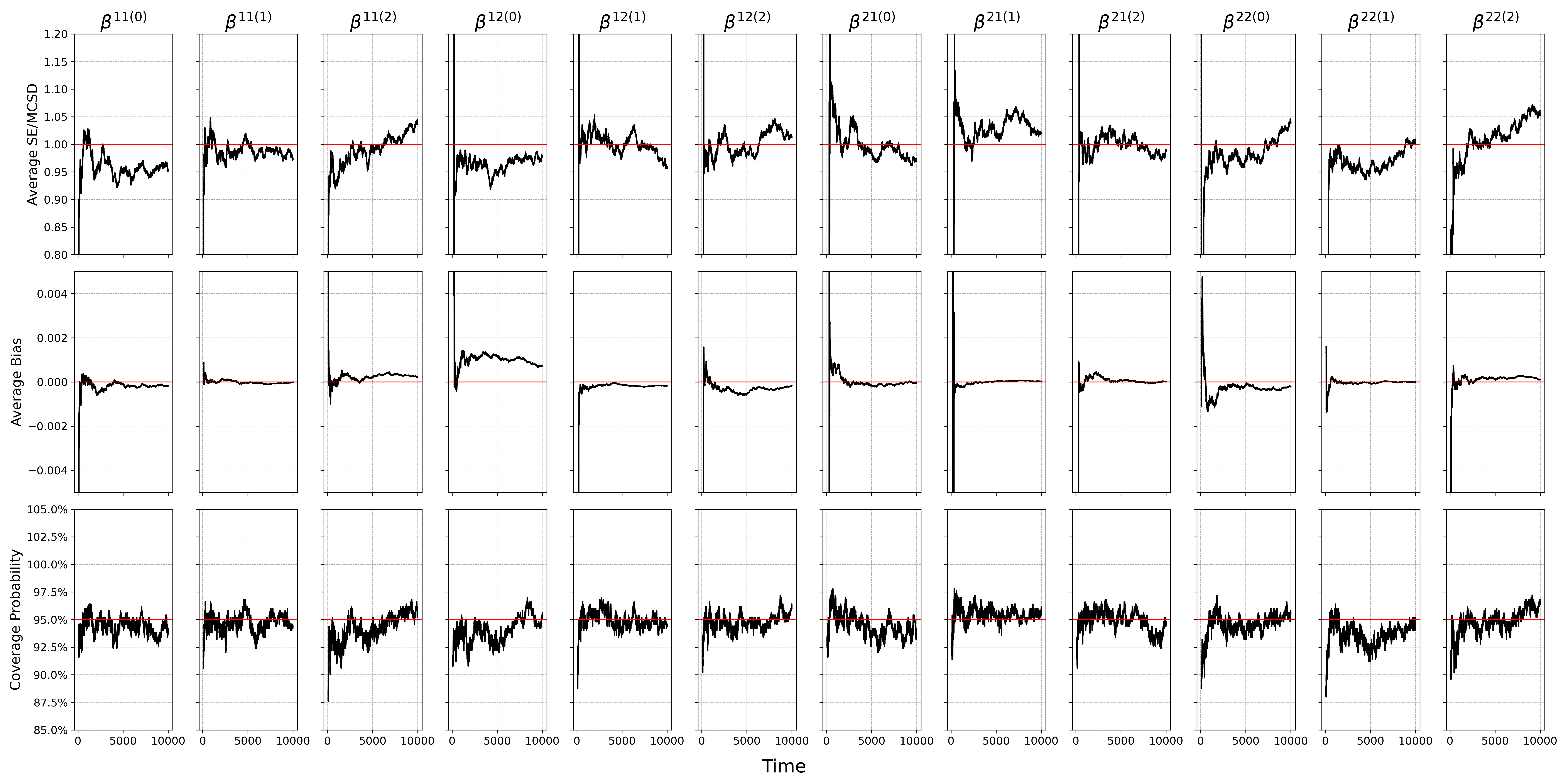}
    \caption{Consistency and asymptotic normality of estimated parameters $\widehat{\bm{\beta}}^{11}_t$, $\widehat{\bm{\beta}}^{12}_t$, $\widehat{\bm{\beta}}^{21}_t$, $\widehat{\bm{\beta}}^{22}_t$ under UCB exploration.}
    \label{fig:con_est_ucb}
\end{figure}

\subsection{Simulation Results under Model Misspecification}
\label{app:mis_linear}

The true model for the misspecified case in Section~\ref{app:mis_model} is set as logistic model
$$\varphi^{ij}(\bm z)=\frac{\exp\{\bm{z}^{\top}\bm{\beta}^{ij}\}}{1+\exp\{\bm{z}^{\top}\bm{\beta}^{ij}\}},$$ 
where $\bm{\beta}^{11} = (1,2,2)^\top$, $\bm{\beta}^{12} = (-2,-1,1)^\top$, $\bm{\beta}^{21} = (-1,1,-2)^\top$, $\bm{\beta}^{22} = (3,1,2)^\top$, as in the simulation setting of Section~\ref{sec:simu_context}. The generation of $\bm z_t$ and the noise terms $e_t$ also follows the setting in Section~\ref{sec:simu_context}. Note that we set $\epsilon_t=0.01$ as a constant exploration probability to ensure that the positivity condition in Theorem~\ref{conthm:wls_normality_misspecified} of Section~\ref{app:sec:mis_model} holds. Specifically, this guarantees that each action pair is selected with probability at least $c_\pi>0$.
For the remaining hyperparameters, we use the same choices as in Section~\ref{sec:simu_context}. Each experiment is repeated 500 times under $\epsilon$-greedy exploration with termination time $T=50{,}000$. The least-false parameter $\bm \beta_\dagger^{ij}$ and the true value under model misspecification, $V^\ast_\dagger$, are computed via Monte Carlo approximation.

Figure~\ref{fig:mis_con_NE} presents the convergence of the estimated Nash equilibrium and the consistency of the doubly robust value estimator $\widehat V_{\dagger,T}^{\mathrm{conDR}}$ under model misspecification. The results show that the estimated Nash equilibrium converges to its target and that $\widehat V_{\dagger,T}^{\mathrm{conDR}}$ accurately estimates the least-false game value, which is consistent with Theorems \ref{conthm:behavior_misspecified} and~\ref{conthm:value_rootT_misspecified}.

\begin{figure}[ht]
    \centering
    \includegraphics[width=1.0\linewidth]{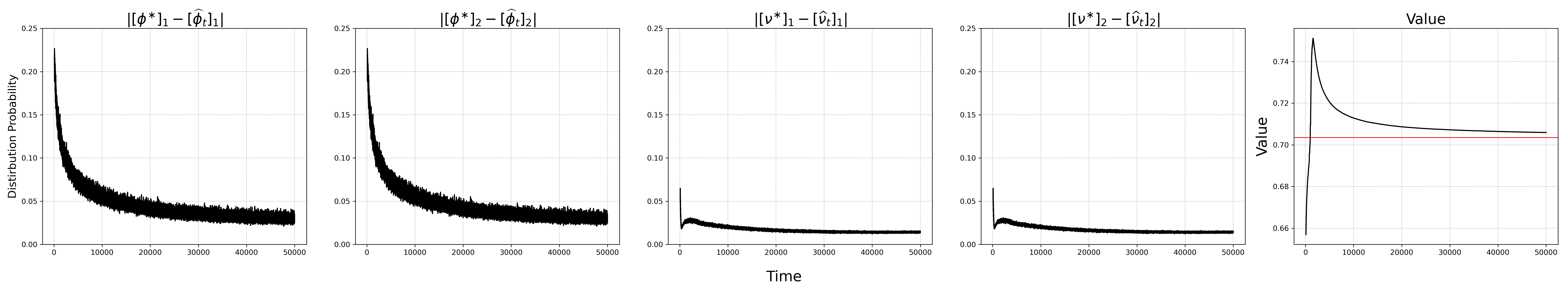}
    \caption{Convergence of the estimated Nash equilibrium and consistency of the doubly robust value estimator $\widehat V_{\dagger,T}^{\mathrm{conDR}}$ under model misspecification.}
    \label{fig:mis_con_NE}
\end{figure}

Figure~\ref{fig:mis_con_est} provides empirical evidence for the asymptotic normality of the WLS estimator for the least-false parameters, as established in Theorem~\ref{conthm:wls_normality_misspecified}.

\begin{figure}[ht]
    \centering
    \includegraphics[width=1.0\linewidth]{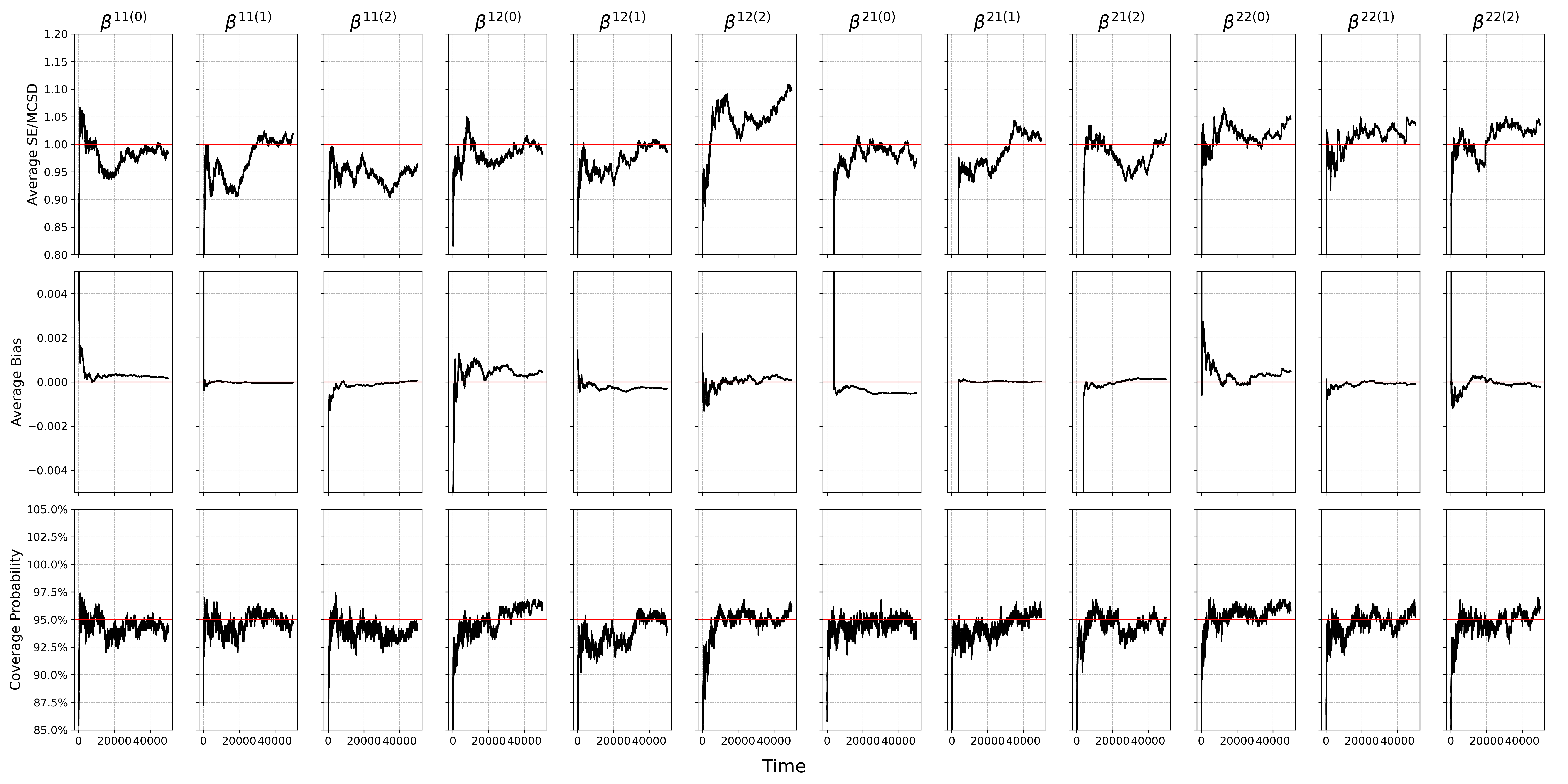}
    \caption{Consistency and asymptotic normality of estimated parameters $\widehat{\bm{\beta}}^{11}_{\dagger,t}$, $\widehat{\bm{\beta}}^{12}_{\dagger,t}$, $\widehat{\bm{\beta}}^{21}_{\dagger,t}$, $\widehat{\bm{\beta}}^{22}_{\dagger,t}$ under $\epsilon$-Greedy exploration and model misspecification.}
    \label{fig:mis_con_est}
\end{figure}

\subsection{Simulation Results for Larger Action Spaces}
\label{app:mk}

\subsubsection{Online Matrix Games}

To further assess the performance of OnGameLearn in larger action spaces, we conduct additional experiments with $m=k=4$. We consider a $4\times4$ payoff matrix $A =((2.00,  1.68,  2.02,  2.53), (1.28,  1.60,  2.96,  1.79), (1.67,  3.00,  1.54,  1.18), (3.53,  1.70,  1.43,  2.23))$.
The Nash equilibrium of matrix $A$ is $\bm{x}^\ast = (0.40, 0.20, 0.25, 0.15)^\top$ and 
$\bm{y}^\ast = (0.15, 0.35, 0.30, 0.20)^\top$. Then the value of matrix game A can be derived as $V_A^\ast=2.0$.
We use the same hyperparameter choices as in Section~\ref{sec:simu_multiarm}: $\epsilon_t = 0.1t^{-1/4}$, $\eta_t = t^{-1/2}$, and $c_t=1$. For Tsallis-INF, we set $\eta_t = 1.0/\sqrt{t+1}$. Each experiment is repeated 500 times.

\begin{figure}[ht]
    \centering
    \includegraphics[width=1.0\linewidth]{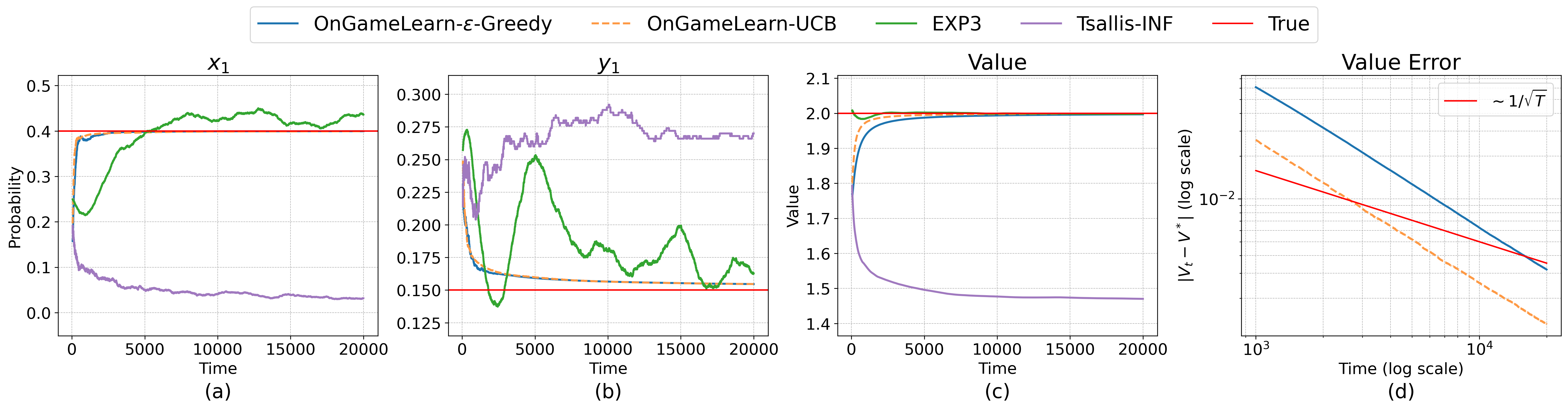}
    \caption{Convergence of the estimated Nash equilibrium and $\sqrt{T}$-consistency of the doubly robust value estimator $\widehat{V}_T^{\mathrm{DR}}$ when $m=k=4$.}
    \label{fig:est_mk}
\end{figure}

\begin{figure}[ht]
    \centering
    \includegraphics[width=1.0\linewidth]{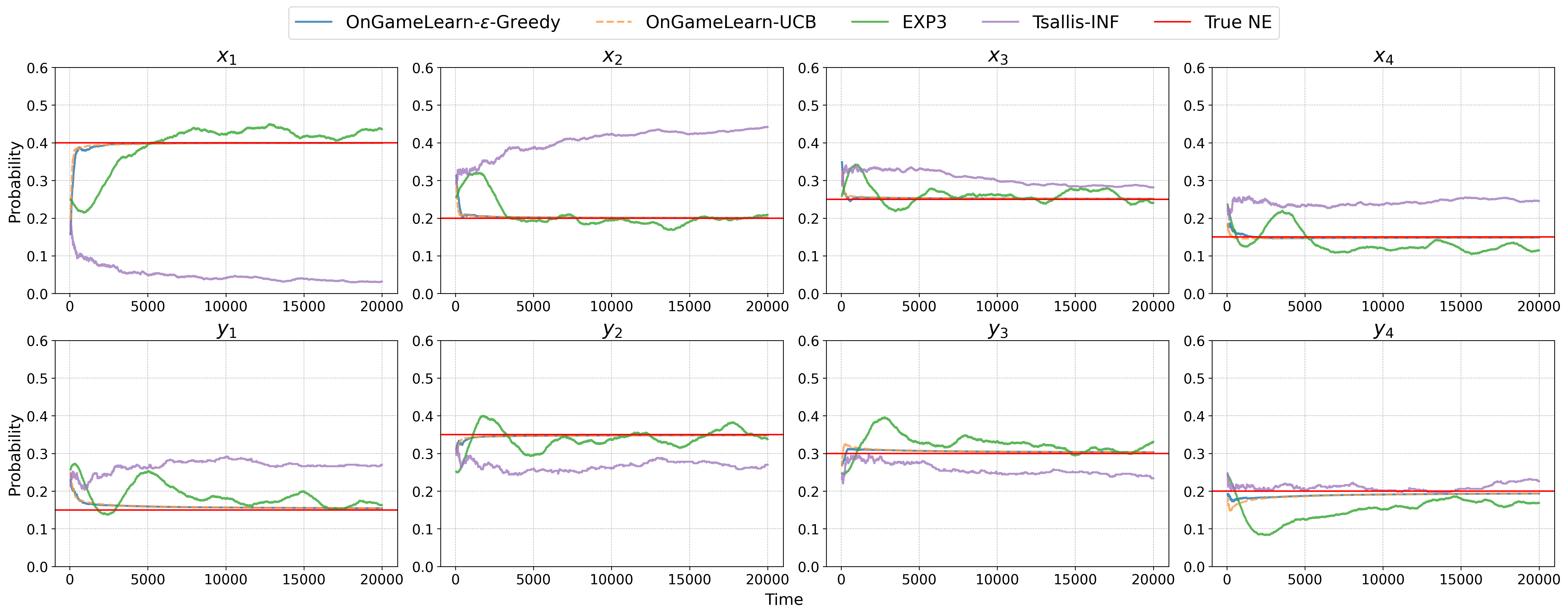}
    \caption{Convergence of all the coordinates of the estimated Nash equilibrium when $m=k=4$. The curves for OnGameLearn under $\epsilon$-greedy and UCB exploration almost completely overlap.}
    \label{fig:est_NEpoint_mk}
\end{figure}

\begin{figure}[ht]
    \centering
    \includegraphics[width=1.0\linewidth]{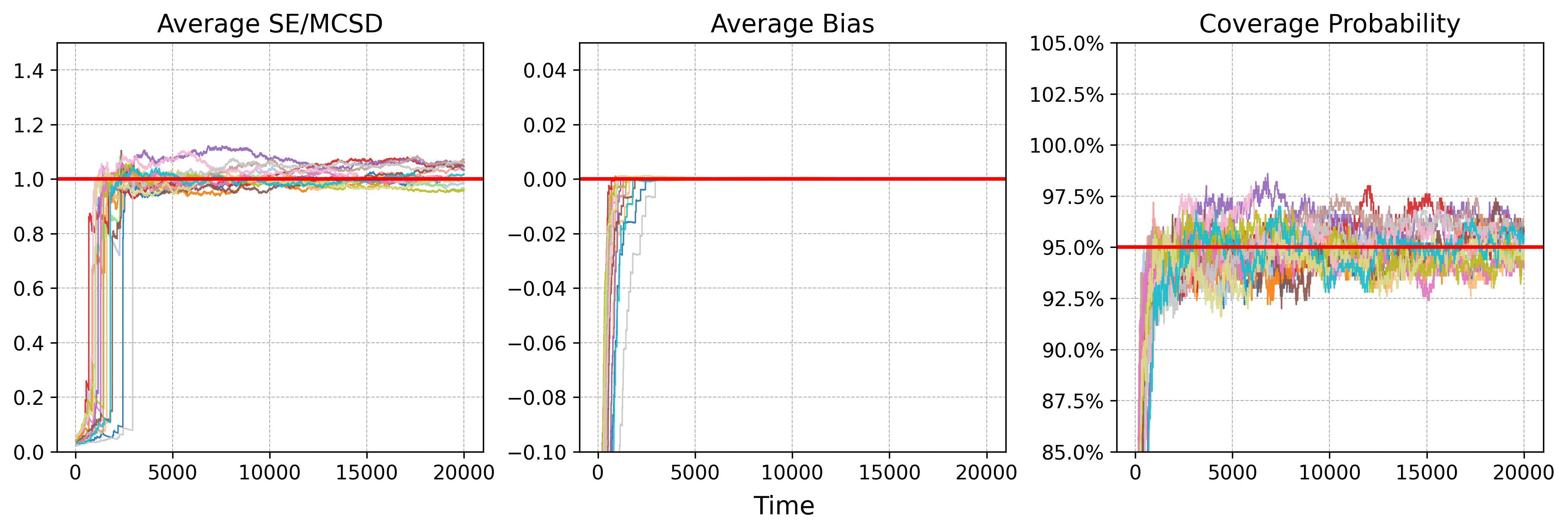}
    \caption{Consistency and asymptotic normality of the payoff matrix estimator $\widehat{A}_t$ under $\epsilon$-Greedy exploration with $m=k=4$. Each color corresponds to one element of the $4\times4$ matrix $\widehat{A}_t$. The 16 curves approach the nominal reference line across all three evaluation metrics.}
    \label{fig:est_inf_eps_mk}
\end{figure}

\begin{figure}[ht]
    \centering
    \includegraphics[width=1.0\linewidth]{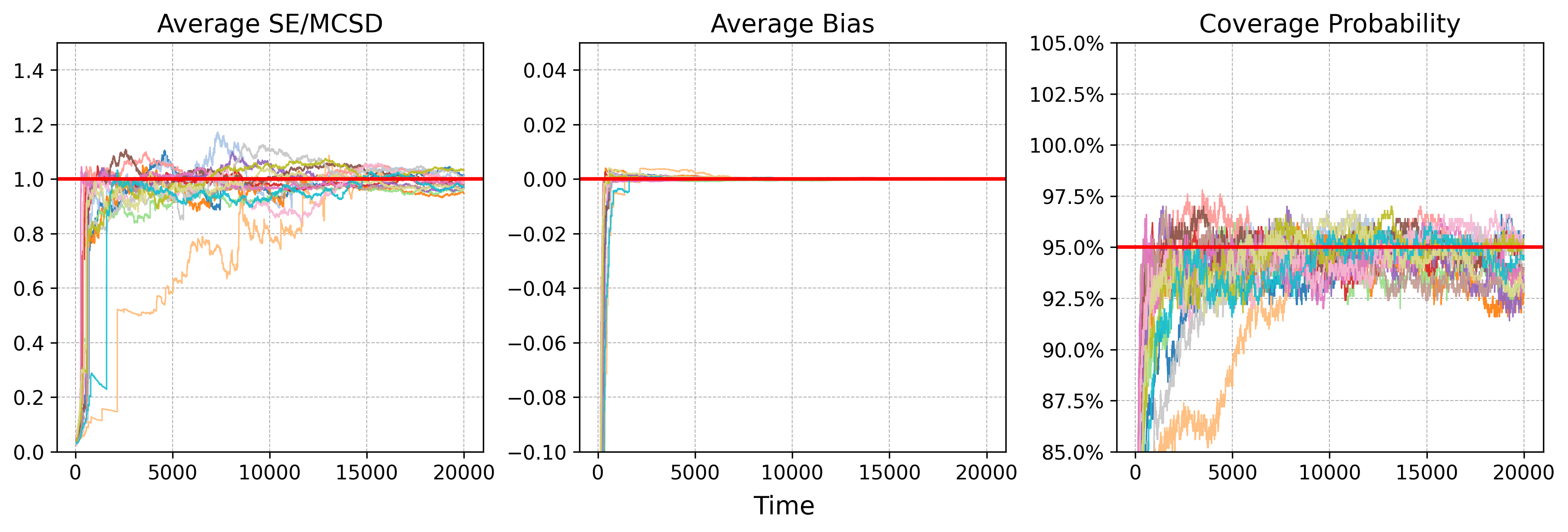}
    \caption{Consistency and asymptotic normality of the payoff matrix estimator $\widehat{A}_t$ under UCB exploration with $m=k=4$. Each color corresponds to one element of the $4\times4$ matrix $\widehat{A}_t$. The 16 curves approach the nominal reference line across all three evaluation metrics.}
    \label{fig:est_inf_ucb_mk}
\end{figure}

Figure~\ref{fig:est_mk} follows the same design as Figures~\ref{fig:multi_NE_sens_2} and~\ref{fig:multi_NE_sens_4}. It reports the convergence of the first coordinate of the estimated Nash equilibrium for both two players, as well as the convergence of the value estimator. In Figures~\ref{fig:est_mk}(a) and~\ref{fig:est_mk}(b), the curves under UCB and $\epsilon$-greedy nearly overlap, indicating that OnGameLearn performs similarly under the two exploration strategies.

Since we consider the larger action-space setting with $m=k=4$, Figure~\ref{fig:est_NEpoint_mk} further reports the convergence of all coordinates of the estimated Nash equilibrium under OnGameLearn and the baseline methods. The results show that OnGameLearn, under both $\epsilon$-greedy and UCB, provides consistent estimates of the Nash equilibrium. Moreover, the two OnGameLearn curves are very close and nearly overlap across all coordinates. In contrast, the baseline methods, EXP3 and Tsallis-INF, do not yield stable or consistent estimates of the Nash equilibrium.

Figures~\ref{fig:est_inf_eps_mk} and~\ref{fig:est_inf_ucb_mk} demonstrate the asymptotic normality of the payoff matrix estimator $\widehat{A}_t$ under both $\epsilon$-greedy and UCB exploration strategies, thereby verifying Theorem~\ref{thm:norm_matrix} in the larger action-space setting with $m=k=4$.

\subsubsection{Contextual Matrix Games}

We also conduct simulations for a larger action-space case in the contextual matrix game setting. We follow the general setup in Section~\ref{sec:simu_context}. The context vector $\bm z=(1,z_2)\in\mathbb R^2$ is generated as follows: $z_1\stackrel{\mathrm{i.i.d.}}{\sim}\mathrm{Uniform}(0,5)$.
The game involves $m=4$ actions for the row player and $k=4$ actions for the column player. The true parameters are specified as
\begin{equation*}
\begin{aligned}
\bm{\beta}^{11} &= (-1.694, -0.459), &
\bm{\beta}^{12} &= (1.120, -0.077), &
\bm{\beta}^{13} &= (-0.246, 0.690), &
\bm{\beta}^{14} &= (0.894, -0.760), & \\
\bm{\beta}^{21} &= (1.912, 0.161), &
\bm{\beta}^{22} &= (0.153, 0.720), &
\bm{\beta}^{23} &= (0.004, -0.431), &
\bm{\beta}^{24} &= (-1.712, 0.078), & \\
\bm{\beta}^{31} &= (-0.925, 0.654), &
\bm{\beta}^{32} &= (-0.002, -0.586), &
\bm{\beta}^{33} &= (0.716, 0.038), &
\bm{\beta}^{34} &= (1.215, 0.401), & \\
\bm{\beta}^{41} &= (-0.476, 0.270), &
\bm{\beta}^{42} &= (-1.736, -0.052), &
\bm{\beta}^{43} &= (-0.848, -0.472), &
\bm{\beta}^{44} &= (1.638, -0.015).
\end{aligned}
\end{equation*}
We use the same choices of $\epsilon_t$, $\eta_t$, and $c_t$ as in Section~\ref{sec:simu_context}. The termination time is set to $T=20{,}000$.

\begin{figure}[ht]
    \centering
    \includegraphics[width=1.0\linewidth]{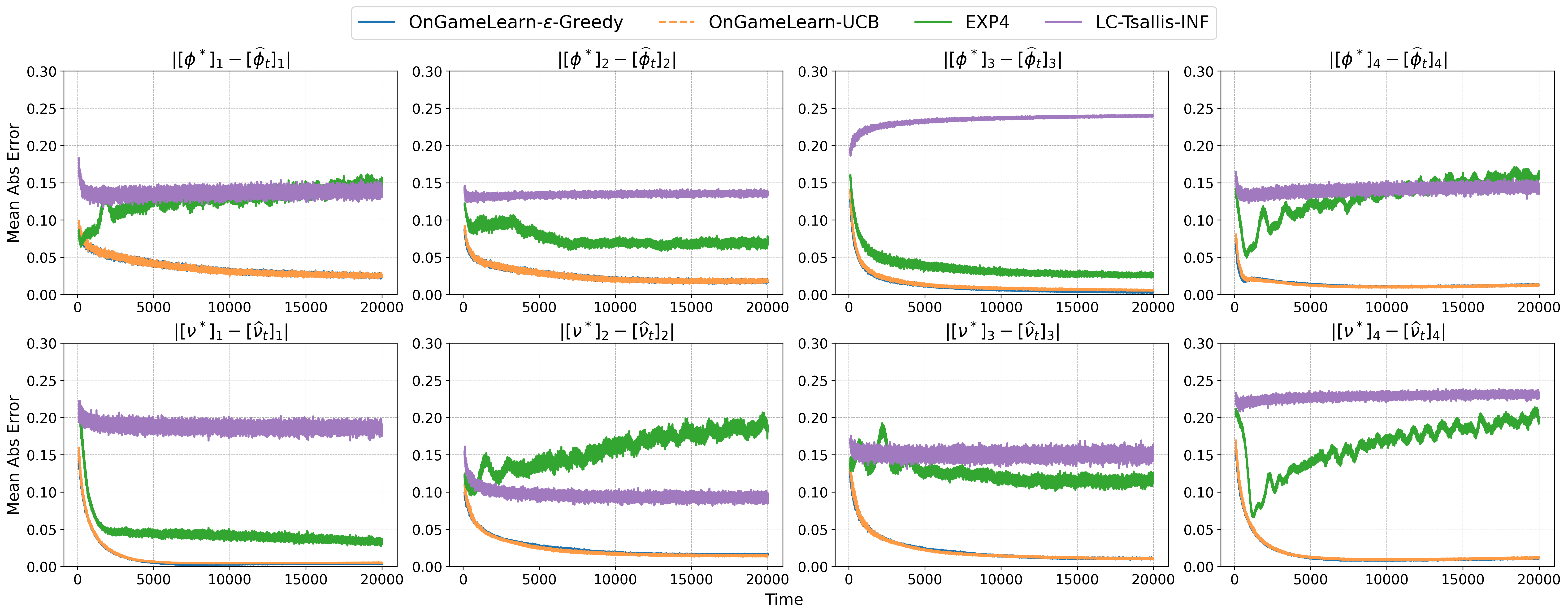}
    \caption{Convergence of the estimated Nash equilibrium when $m=k=4$. The curves under $\epsilon$-greedy and UCB exploration almost completely overlap. The curves under $\epsilon$-greedy and UCB exploration almost completely overlap.}
    \label{fig:est_NE_context_mk}
\end{figure}

\begin{figure}[ht]
    \centering
    \includegraphics[width=0.8\linewidth]{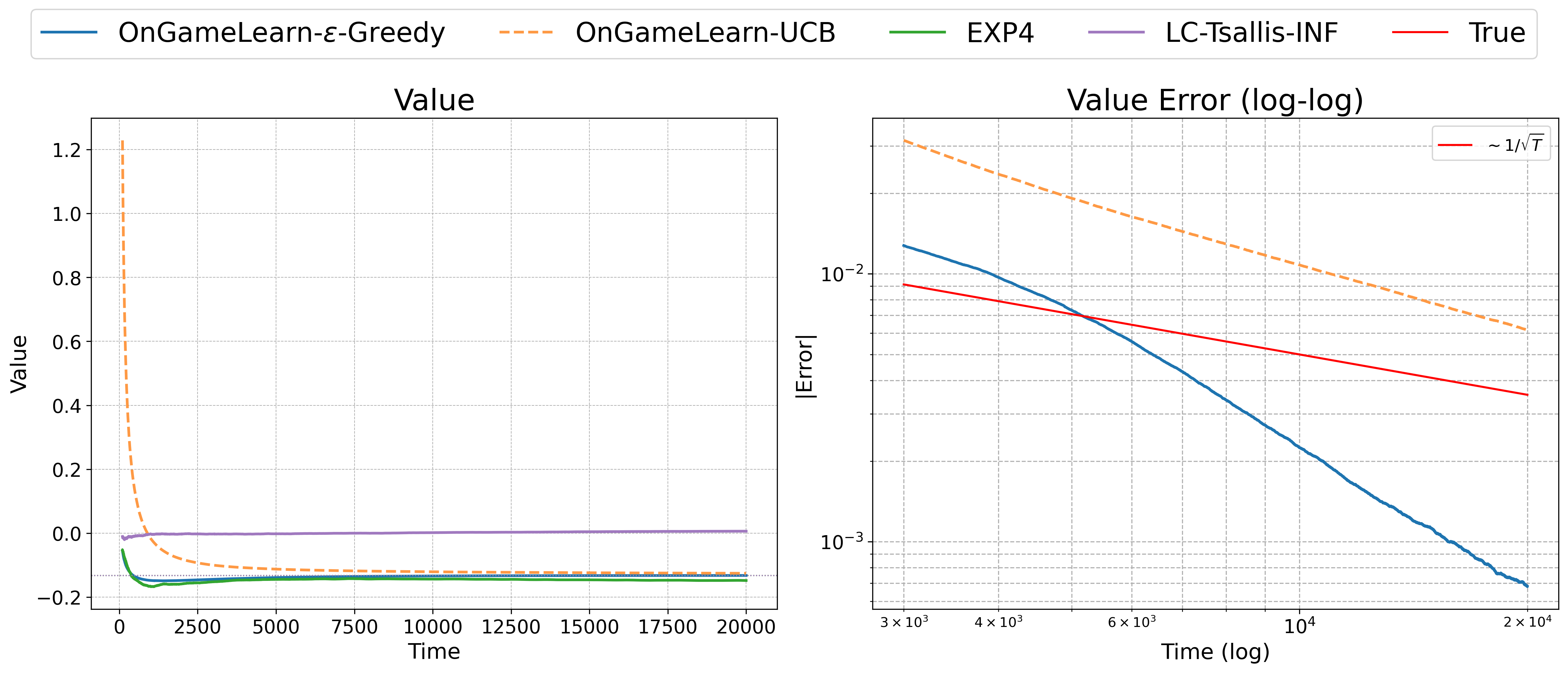}
    \caption{$\sqrt{T}$-consistency of the doubly robust value estimator $\widehat{V}_T^\text{conDR}$ when $m=k=4$.}
    \label{fig:est_value_context_mk}
\end{figure}

\begin{figure}[ht]
    \centering
    \includegraphics[width=1.0\linewidth]{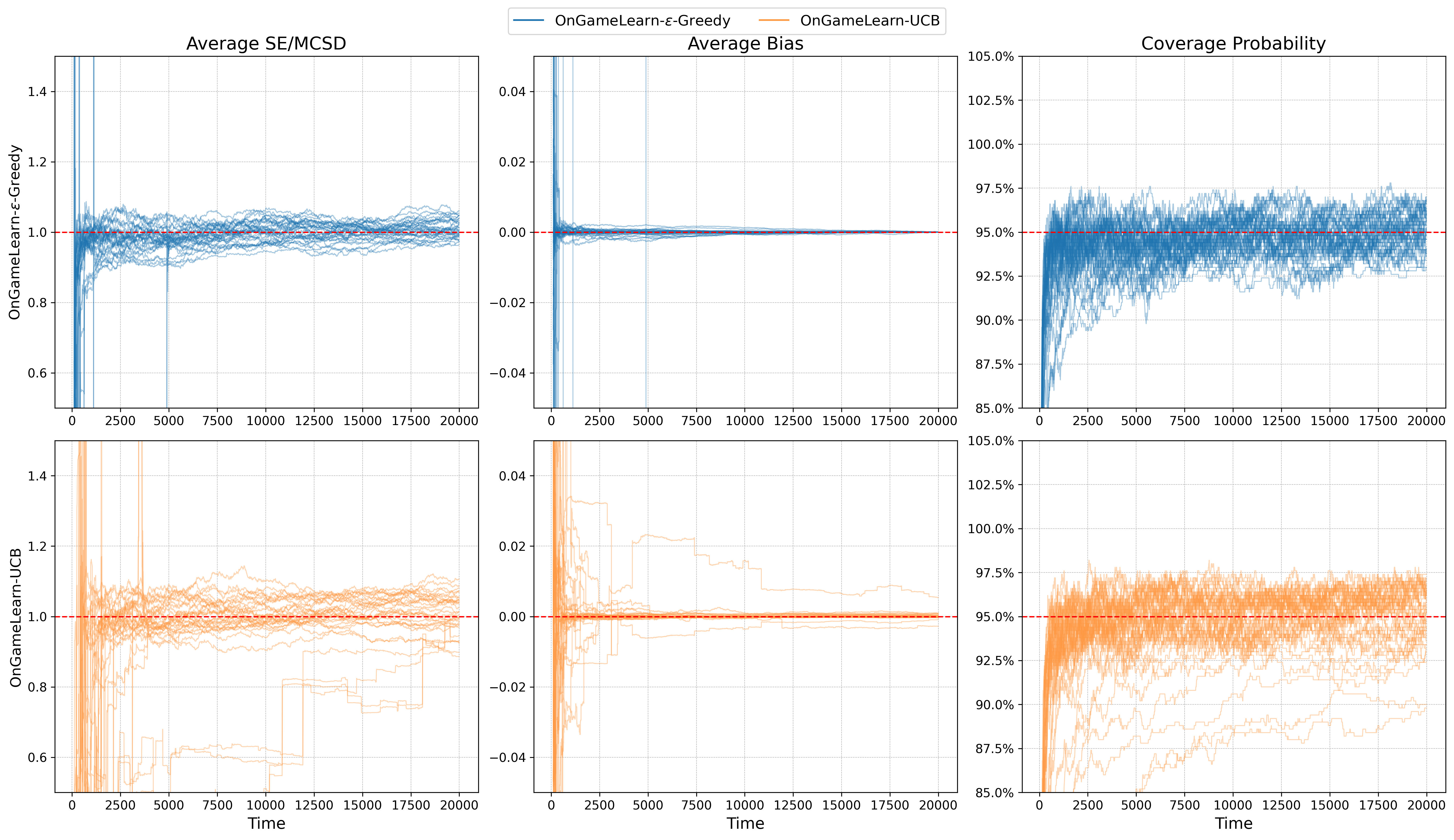}
    \caption{Consistency and asymptotic normality of the estimated parameters $\widehat{\bm{\beta}}_t^{11}, \widehat{\bm{\beta}}_t^{12}, \dots, \widehat{\bm{\beta}}_t^{44}$ under $\epsilon$-Greedy and UCB exploration with $m = k = 4$. Each curve corresponds to one parameter component, and all $4 \times 4 \times 2 = 32$ curves converge to the nominal reference line across the three evaluation metrics. \textbf{Upper panel:} OnGameLearn-$\epsilon$-Greedy. \textbf{Lower panel:} OnGameLearn-UCB.}
    \label{fig:est_inf_context_mk}
\end{figure}

Figures~\ref{fig:est_NE_context_mk},~\ref{fig:est_value_context_mk} and~\ref{fig:est_inf_context_mk} demonstrate that OnGameLearn continues to perform well in contextual settings with a larger number of actions, significantly outperforming the baselines. Furthermore, OnGameLearn-UCB exhibits a slightly slower convergence rate compared to OnGameLearn-$\epsilon$-Greedy.

\section{Proof of Theoretical Results for Online Matrix Games}
\label{app:proof_bandit}

\subsection{Tail Bound of Payoff Matrix Estimator (Theorem \ref{thm:tail_bandit})}
\label{app:tail_bandit_proof}

\begin{proof}
Consider an action pair $(i,j)$ and define $\mathcal N_t^{ij}=\sum_{s=1}^t \mathbb I(i_s=i,j_s=j)$.
Recall that $\pi_s(i,j) = \mathbb P(i_s=i,j_s=j\mid \mathcal H_{s-1}) \ge \epsilon_s^2/mk$ for all $s\le t$, where $\epsilon_s$ is non-increasing, i.e., \(\epsilon_s\ge \epsilon_t\) for all \(s\le t\). Therefore,
\[
\sum_{s=1}^t \pi_s(i,j)
\ge
\sum_{s=1}^t \epsilon_s^2/mk
\ge t \epsilon_t^2/mk.
\]
Since
\[
\mathbb E\left[
\mathbb I(i_s=i,j_s=j)-\pi_s(i,j)
\mid
\mathcal H_{s-1}
\right]
=
\mathbb E(\mathbb I(i_s=i,j_s=j)\mid \mathcal H_{s-1})
- \pi_s(i,j)=  0
\]
and $\mathbb I(i_s=i,j_s=j)-\pi_s(i,j)\in[-1,1]$.
Hence, $\left\{ \mathbb I(i_s=i,j_s=j)-\pi_s(i,j) \right\}_{s=1}^t$ is a bounded martingale difference sequence.
By Lemma~\ref{lem:adaptive_chernoff}, for any $0<c<1$,
\[
\mathbb P\left(
\mathcal N_t^{ij}
\le
(1-c)\sum_{s=1}^t \pi_s(i,j)
\right)
\le
\exp\left\{
-\frac{c^2}{2}\sum_{s=1}^t \pi_s(i,j)
\right\}.
\]
Taking $c=1/2$, we obtain
\[
\mathbb P\left(
\mathcal N_t^{ij}
\le
\frac12 \sum_{s=1}^t \pi_s(i,j)
\right)
\le
\exp\left\{
-\frac18\sum_{s=1}^t \pi_s(i,j)
\right\}.
\]
Since $\sum_{s=1}^t \pi_s(i,j)\ge t\epsilon_t^2/mk$, it follows that
\begin{equation}
\label{equ1}
\mathbb P\left(
\mathcal N_t^{ij}
\le
\frac{t \epsilon_t^2}{2mk}
\right)
\le
\exp\left\{
-\frac{t \epsilon_t^2}{8mk}
\right\}.
\end{equation}
Define the event
\[
E_t^{ij}
=
\left\{
\mathcal N_t^{ij}
\ge
\frac{t \epsilon_t^2}{2mk}
\right\}.
\]
Then
\[
\mathbb P(E_t^{ij})
\ge
1-\exp\left\{
-\frac{t \epsilon_t^2}{8mk}
\right\}.
\]
Note that
\[
[\widehat A_t]_{ij}-A_{ij}
=
\frac{1}{\mathcal N_t^{ij}}
\sum_{s=1}^t e_s \mathbb I(i_s=i,j_s=j).
\]
On the event $E_t^{ij}$, we have $\mathcal N_t^{ij} \ge \frac{t \epsilon_t^2}{2mk}$.
Hence, $\left| [\widehat A_t]_{ij}-A_{ij} \right|>h$ and $E_t^{ij}$
imply that
\[
\left|
\sum_{s=1}^t e_s \mathbb I(i_s=i,j_s=j)
\right|
>
\frac{t \epsilon_t^2 h}{2mk}.
\]
Conditional on the selected action pair $(i,j)$, $e_s$ is mean-zero sub-Gaussian with variance $\sigma_{ij}^2$.
By Lemma~\ref{lem:adaptive-subgaussian}, it follows that
\[
\mathbb P\left(
\left|
[\widehat A_t]_{ij}-A_{ij}
\right|>h,
E_t^{ij}
\right)
\le
2\exp\left\{
-\frac{
t\epsilon_t^2h^2
}{
4mk\sigma_{ij}^2
}
\right\}.
\]
Combining the two bounds, we get
\begin{align*}
\mathbb P\left(
\left|
[\widehat A_t]_{ij}-A_{ij}
\right|>h
\right)
&\le
\mathbb P\left((E_t^{ij})^c\right) + \mathbb P\left(
\left|
[\widehat A_t]_{ij}-A_{ij}
\right|>h,
E_t^{ij}
\right) \\
& \le
\exp\left\{
-\frac{t \epsilon_t^2}{8mk}
\right\}
+
2\exp\left\{
-\frac{t \epsilon_t^2 h^2}{4mk\sigma_{ij}^2}
\right\}.
\end{align*}
Equivalently,
\[
\mathbb P\left(
\left|
[\widehat A_t]_{ij}-A_{ij}
\right|
\le h
\right)
\ge
1
-
\exp\left\{
-\frac{t \epsilon_t^2}{8mk}
\right\}
-
2\exp\left\{
-\frac{t \epsilon_t^2 h^2}{4mk\sigma_{ij}^2}
\right\}.
\]
\end{proof}

\subsection{Convergence of the Estimated Nash Equilibrium (Theorem \ref{thm:nash_con})}
\label{app:nash_con_proof}
\begin{proof}
We will Lemma~\ref{lem:adapted-continuity-equilibria} to prove the convergence of estimated Nash equilibrium. Under our setting, let $\widetilde{f}: \Delta_m \times \Delta_k \times \mathbb{R}^{m \times k} \times \mathbb{R}\rightarrow \mathbb{R}$ and $\widetilde{f}(\bm{x},\bm{y},A,\eta) = \bm{x}^\top A \bm{y} - \eta R_X(\bm{x})+\eta R_Y(\bm{y})$, then $\widetilde{f}(\bm{x},\bm{y},\widehat{A}_{t-1},\eta_t) = {\mathcal{L}}_t(\bm{x},\bm{y})$. 
${\mathcal{L}}_t(\bm{x},\bm{y})$ has a unique Nash equilibrium because it's strongly convex-strongly concave. 
To be more specific, we have $\widehat{\bm{x}}_t = X^\ast(\widehat{A}_{t-1},\eta_t)=\arg \min_{x \in \Delta_m} \max_{y \in \Delta_k}{\mathcal{L}}_t(\bm{x},\bm{y}) $ and $\widehat{\bm{y}}_t=Y^\ast(\widehat{A}_{t-1},\eta_t)=\arg  \max_{y \in \Delta_k}\min_{x \in \Delta_m}{\mathcal{L}}_t(\bm{x},\bm{y})$. Check Assumptions (A1)-(A4) in Lemma~\ref{lem:adapted-continuity-equilibria}.\\
(A1) We just need to show the two conditions in Remark 22 in \citet{feinberg2022continuity} holds. $\widetilde{f}$ is a continuous function and $\Delta_{k}$ is a simplex so the limit in condition(ii) must exist. \\
(A2) Similar arguments by using Remark 23. \\
(A3) For any parameter value \((A,\eta)\), the feasible strategy set of the row player is always \(\Delta_m\). 
Therefore, the row player's feasible-set correspondence constant with respect to \((A,\eta)\). 
Hence, it is lower semi-continuous. 
Indeed, for any open set \(G\subseteq \mathbb R^m\) satisfying 
\(G\cap \Delta_m\neq\emptyset\) for all \((A,\eta)\). 
Thus, Assumption (A3) holds. \\
(A4) Similar arguments as (A3).

Given that assumptions (A1)-(A4) are satisfied and that the strategy spaces $\Delta_{m}$ and $\Delta_{k}$ are convex, Lemma~\ref{lem:adapted-continuity-equilibria} establishes the upper semi-continuity for set-valued functions $X^\ast(\cdot,\cdot)$ and $Y^\ast(\cdot,\cdot)$.
Define the product norm on $\mathbb{R}^{m \times k} \times \mathbb{R}$ as
$$\Vert (A,\eta) \Vert_2\coloneqq (\Vert A \Vert_F^2+\eta^2)^{\frac{1}{2}},$$ 
where $\Vert A \Vert_F = \sqrt{\sum_{i=1} ^m \sum_{j=1}^k A_{ij}^2}$ denotes the Frobenius norm. The well-definedness of this norm is proved in Section \ref{lemma:norm}. We adopt the Euclidean norm $\Vert \cdot \Vert_2$ for vectors in $\mathbb{R}^m$ and $\mathbb{R}^n$.
By Corollary~\ref{coro:consistent_bandit}, for every $i\in[m]$ and
$j\in[k]$, we have $\left|[\widehat A_{t-1}]_{ij}-A_{ij}\right| \stackrel{p}{\to}0$.
Moreover, with $\eta_t\to0$, we get
\[
\left\Vert
(\widehat A_{t-1},\eta_t)-(A,0)
\right\Vert_2
\stackrel{p}{\to}0.
\]
Therefore,
\begin{equation*}
\operatorname{dist}
\left(X^\ast(\widehat A_{t-1},\eta_t),X^\ast(A,0) \right)
\stackrel{p}{\to}0, \quad 
\operatorname{dist} 
\left(Y^\ast(\widehat A_{t-1},\eta_t),Y^\ast(A,0) \right)
\stackrel{p}{\to}0.
\end{equation*}
Equivalently,
\[
\operatorname{dist}
\left(
(\widehat{\bm x}_t,\widehat{\bm y}_t),
\mathcal E(A)
\right)
\stackrel{p}{\to}
0.
\]
\end{proof}

\subsection{Convergence of the Behavior Strategies (Theorem~\ref{thm:behave_con})}
\label{app:nash_behave_proof}

The proof of Theorem~\ref{thm:behave_con} consists three parts to show the probability of exploration under $\epsilon$-Greedy, UCB and TS, respectively. Their explicit forms under different exploration strategies are given as follows:
\begin{enumerate}
    \item[\textnormal{(1)}] UCB and TS: $\widetilde{\bm x}^\ast = \operatorname{Clip}_{m}(\bm x^\ast,\epsilon_\infty/m)$, $\widetilde{\bm y}^\ast = \operatorname{Clip}_{k}(\bm y^\ast,\epsilon_\infty/k)$.
    \item[\textnormal{(2)}] $\epsilon$-Greedy: $\widetilde{\bm x}^\ast = (1-\epsilon_\infty)\bm x^\ast + \frac{\epsilon_\infty}{m}\bm 1_m$, $\widetilde{\bm y}^\ast =(1-\epsilon_\infty)\bm y^\ast + \frac{\epsilon_\infty}{k}\bm 1_k$.
\end{enumerate}
In particular, if $\epsilon_\infty=0$, then the exploration-induced
perturbation vanishes asymptotically, i.e., $\widetilde{\bm x}^\ast=\bm x^\ast$, and $\widetilde{\bm y}^\ast=\bm y^\ast$ for all three bandit algorithms.

\subsubsection{Proof of $\epsilon$-Greedy}
\begin{proof}
By Theorem~\ref{thm:nash_con}, we have $\widehat{\bm x}_t \stackrel{p}{\to} \bm x^\ast$, and $\widehat{\bm y}_t \stackrel{p}{\to} \bm y^\ast$. Under the $\epsilon$-Greedy exploration strategy, the behavior strategies are
\[
\widetilde{\bm x}_t =
(1-\epsilon_t)\widehat{\bm x}_t
+ \frac{\epsilon_t}{m}\mathbf 1_m,
\qquad
\widetilde{\bm y}_t = (1-\epsilon_t)\widehat{\bm y}_t
+ \frac{\epsilon_t}{k}\mathbf 1_k.
\]
With $\epsilon_t\to\epsilon_\infty$, the Continuous Mapping Theorem implies
\begin{align*}
\widetilde{\bm x}_t
\stackrel{p}{\to}
(1-\epsilon_\infty)\bm x^\ast
+ \frac{\epsilon_\infty}{m}\mathbf 1_m, 
\qquad
\widetilde{\bm y}_t
\stackrel{p}{\to}
(1-\epsilon_\infty)\bm y^\ast
+ \frac{\epsilon_\infty}{k}\mathbf 1_k,
\end{align*}
which are the limiting behavior policies under the $\epsilon$-Greedy.
\end{proof}

\subsubsection{Proof of UCB}
\label{app:nash_behave_proof_ucb}
\begin{proof}
The upper bound on each $A_{ij}$ at time $t$ is given by
\begin{equation*}
[\widehat{A}_{t-1}]_{i j}+c_t \widehat{\sigma}_{t-1}^{ij} = [\widehat{A}_{t-1}]_{i j}+c_t \frac{1}{\sqrt{\max\{1,\mathcal{N}_{t-1}^{ij}\}}},
\end{equation*}
where $c_t$ is a constant controlling the exploration level. By Corollary~\ref{coro:consistent_bandit}, $[\widetilde{A}_{t-1}]_{i j}$ is consistent when $t\epsilon_t^2 \to \infty$. Therefore, it remains to show that the
exploration bonus vanishes in probability. By \eqref{equ1}, for every action pair $(i,j)$,
\begin{equation*}
\mathbb P\left(
\mathcal N_{t-1}^{ij}\ge
\frac{(t-1) \epsilon_{t-1}^2}{2mk}
\right) \ge
1- \exp\left\{ -\frac{(t-1) \epsilon_{t-1}^2}{8mk} \right\}.
\end{equation*}
Since $(t-1) \epsilon_{t-1}^2\to\infty$, we also have $\exp\left\{ -{(t-1) \epsilon_{t-1}^2}/{8mk} \right\} \to 0$. When $\mathcal N_{t-1}^{ij} > \frac{(t-1) \epsilon_{t-1}^2}{2mk}$, we have
\[
c_t
\frac{1}{\sqrt{\mathcal N_{t-1}^{ij}}}
\le
c_t \sqrt{\frac{2mk}{(t-1) \epsilon_{t-1}^2}}.
\]
Consequently, since $c_t/\sqrt{(t-1) \epsilon_{t-1}^2}\to 0$ as $t \to\infty$, for any fixed small $h>0$ and all sufficiently large $t$ satisfying $c_t \sqrt{\frac{2mk}{(t-1) \epsilon_{t-1}^2} } \le h$, there is
\[
\mathbb P\left( c_t \frac{1}{\sqrt{\mathcal N_{t-1}^{ij}}} \le h \right)
\ge
\mathbb P\left( \mathcal N_{t-1}^{ij} \ge \frac{(t-1) \epsilon_{t-1}^2}{2mk} \right)
\ge
1 - \exp\left\{ -\frac{(t-1) \epsilon_{t-1}^2}{8mk} \right\}
\to 1.
\]
Therefore,
\begin{equation}
\label{equ3}
c_t \frac{1}{\sqrt{\mathcal N_{t-1}^{ij}}} \stackrel{p}{\to} 0.    
\end{equation}
Combining this with $[\widehat A_t]_{ij} \stackrel{p}{\to} A_{ij}$, we obtain $[\widetilde A_t]_{ij} \stackrel{p}{\to} A_{ij}$. 
The Nash equilibrium induced by the optimistic payoff matrix $\widetilde A_t$ and $\eta_t$ is given by $X^\ast(\widetilde A_t,\eta_t)$ and $Y^\ast(\widetilde A_t,\eta_t)$, as defined in Section~\ref{app:nash_con_proof}. 

Since Euclidean projection onto a nonempty closed convex set is non-expansive,
the clipping mapping $(\bm u,\epsilon) \mapsto \operatorname{Clip}_d(\bm u,\epsilon)$
is jointly continuous for $\epsilon<1/d$. The clipped behavior strategies are 
\[
\widetilde{\bm x}_t =
\operatorname{Clip}_m
\left(
X^\ast(\widetilde A_t,\eta_t),
\frac{\epsilon_t}{m}
\right),
\qquad
\widetilde{\bm y}_t =
\operatorname{Clip}_k
\left(
Y^\ast(\widetilde A_t,\eta_t),
\frac{\epsilon_t}{k}
\right).
\]
Under Assumption~\ref{ass:unique} $\epsilon_t\to\epsilon_\infty$, the Continuous Mapping Theorem implies
\[
\widetilde{\bm x}_t
\stackrel{p}{\to}
\operatorname{Clip}_m
\left(
\bm x^\ast,
\epsilon_\infty
\right),
\qquad
\widetilde{\bm y}_t
\stackrel{p}{\to}
\operatorname{Clip}_k
\left(
\bm y^\ast,
\epsilon_\infty
\right).
\]
Therefore, the clipped behavior strategies converge to the projections of the
true Nash equilibrium strategies onto the corresponding truncated simplices.
\end{proof}

\subsubsection{Proof of TS}
\label{app:nash_behave_proof_ts}
\begin{proof}
Under Thompson sampling in the non-contextual matrix game setting, we draw samples
\[
\widetilde A_{ij}
\mid \mathcal H_{t-1}
\sim
\mathcal N\left(
[\widehat A_{t-1}]_{ij},
\frac{\rho^2}{\mathcal N_{t-1}^{ij}}
\right)
\]
to derive $[\widetilde A_{t}]_{ij}$. Equivalently, we may write
\[
[\widetilde A_{t}]_{ij} =
[\widehat A_{t-1}]_{ij} + \frac{\rho}{\sqrt{\mathcal N_{t-1}^{ij}}}\varsigma_t^{ij},
\]
where $\varsigma_t^{ij}\sim \mathcal N(0,1)$ and $\rho$ is a known parameter. By Corollary~\ref{coro:consistent_bandit}, we have $[\widehat A_{t-1}]_{ij} \stackrel{p}{\to} A_{ij}$. It remains to show that the posterior
sampling error vanishes in probability, i.e.,
\[
\frac{\rho}{\sqrt{\mathcal N_{t-1}^{ij}}}\varsigma_t^{ij}
\stackrel{p}{\to}
0.
\]
Using the same argument as in the proof of \eqref{equ3}, we obtain
\[
\frac{\rho}{\sqrt{\mathcal N_{t-1}^{ij}}}
\stackrel{p}{\to} 0
\]
for every action pair $(i,j)$. Moreover, with $|\varsigma_t^{ij}|=O_p(1)$,
thus we have,
\[
\frac{\rho}{\sqrt{\mathcal N_{t-1}^{ij}}}\varsigma_t^{ij}
= o_p(1)O_p(1) = o_p(1).
\]
Combining this with the consistency of $[\widehat A_{t-1}]_{ij}$, we obtain $\widetilde A_t^{ij} \stackrel{p}{\to} A_{ij}$.
Hence, the Thompson-sampled payoff matrix converges in probability to the true
payoff matrix entrywise. 
By arguments analogous to those in Sections~\ref{app:nash_con_proof} and~\ref{app:nash_behave_proof_ucb}, the behavior strategies induced by the Nash equilibrium of the Thompson-sampled game converge in probability to the projections of the true equilibrium strategies onto the corresponding truncated simplices with lower-bound constraints $\epsilon_\infty/m$ and $\epsilon_\infty/k$, respectively.
\end{proof}

\subsection{Asymptotic Normality of Payoff Matrix Estimator (Theorem \ref{thm:norm_matrix})}
\label{app:norm_matrix_proof}

\begin{proof}
The estimator $[\widehat{A}_t]_{ij}$ is given by,
\begin{equation*}
    [\widehat{A}_t]_{ij} = \left(\sum_{s=1}^{t} \mathbb{I}(i_s=i,j_s=j) \right)^{-1} \left( \sum_{s=1}^{t}r_s \mathbb{I}(i_s=i,j_s=j) \right) = \left(\sum_{s=1}^{t} a_s^{ij} \right)^{-1} \left( \sum_{s=1}^{t} a_s^{ij} r_s \right).
\end{equation*}
Then we have
\begin{equation*}
    \sqrt{t}\left([\widehat{A}_t]_{ij} - A_{ij}\right) = \left(\frac{1}{t}\sum_{s=1}^{t} a_s^{ij} \right)^{-1} \left(\frac{1}{\sqrt{t}}\sum_{s=1}^{t} a_s^{ij} e_s\right).
\end{equation*}
1. First we want to show 
$$\frac{1}{\sqrt{t}} \sum_{s=1}^{t}a_s^{ij} e_s\stackrel{D}{\to} \mathcal{N}\left(0,\widetilde{x}_i^\ast \widetilde{y}_j^\ast \sigma_{ij}^2 \right).$$

For $1 \le k \le t$ and $t \ge 1$, define $\mathcal{H}_{tk} = \mathcal{H}_k$ and
\[
\mathcal M_{tk} = \frac{1}{\sqrt{t}} \sum_{s=1}^k a_s^{ij} e_s.
\]
Note that $\mathbb{E}\left( a_s^{ij} e_s \vert \mathcal{H}_{t,s-1} \right) = \mathbb{E}\left(\mathbb{E}\left(a_s^{ij} e_s \vert i_s,j_s \right)\vert \mathcal{H}_{t,s-1} \right)=0$, thus $\left\{\mathcal M_{tk},\mathcal{H}_{tk}, 1\le k\le t,t\right\}$ is a Martingale array. Then we can get use of Martingale Central Limit Theorem to prove the convergence (see Theorem 2.3 in \citet{hall1980martingale}).

(a) Check the conditional Lindeberg condition. For any $\delta>0$,
\begin{align*}
    & \sum_{s=1}^t \mathbb{E} \left\{\frac{1}{t} a_s^{ij} e_s^2 \mathbb{I}\left\{\vert a_s^{ij} e_s \vert > \delta \sqrt{t} \right\} \Big \vert \mathcal{H}_{t,s-1}\right\} \\
    = & \frac{1}{t} \sum_{s=1}^t \mathbb{E} \left\{a_s^{ij} e_s^2\mathbb{I}\left\{ a_s^{ij} e_s^2 > \delta^2 t \right\} \Big \vert \mathcal{H}_{s-1}\right\} \\
    = & \frac{1}{t} \sum_{s=1}^t \mathbb{P}(i_s=i,j_s=j\vert \mathcal{H}_{s-1}) \mathbb{E} \left\{e_s^2\mathbb{I}\left\{e_s^2 a_s^{ij} > \delta^2 t \right\} \Big \vert \mathcal{H}_{s-1}\right\} \\
    \le & \frac{1}{t} \sum_{s=1}^t \mathbb{E} \left\{e_{s(i,j)}^2 \mathbb{I}\left\{e_{s(i,j)}^2 > \delta^2 t \right\} \Big \vert \mathcal{H}_{s-1}\right\},
\end{align*}
where $e_{s(i,j)}=e_s$ if $i_s=i,j_s=j$ and 0 otherwise. $e_s$ conditioned on $i_s,j_s$ are $i.i.d.$ from sub-Gaussian distribution. Then the last inequality comes to be 
\begin{equation*}
    \mathbb{E} \left\{e_{1(i,j)}^2 \mathbb{I}\left\{e_{1(i,j)}^2 > \delta^2 t \right\}\right\}.
\end{equation*}
Because $e_{1(i,j)}^2 \mathbb{I}\left\{e_{1(i,j)}^2 > \delta^2 t \right\}$ is dominated by $e_{1(i,j)}^2$ with $\mathbb{E}(e_{1(i,j)}^2)\le\sigma_{ij}^2$ and it converges to 0 almost surely when $t\to \infty$, then by Dominated Convergence Theorem, we have
\begin{equation*}
    \sum_{s=1}^t \mathbb{E} \left[\frac{1}{t} a_s^{ij} e_s^2 \mathbb{I}\left\{\vert  a_s^{ij} e_s \vert > \delta \sqrt{t} \right\} \Big \vert \mathcal{H}_{t,s-1}\right] \to 0,
\end{equation*}
as $t \to \infty$.

(b) Find the limit of the conditional variance. The conditional variance is
\begin{align*} 
    \zeta_t^2 &\coloneqq \sum_{s=1}^t \mathbb{E} \left(\frac{1}{t} a_s^{ij} e_s^2 \Big \vert \mathcal{H}_{t,s-1}\right) \\
    &= \frac{1}{t}\sum_{s=1}^t \mathbb{E} \left\{ \mathbb{E} \left(a_s^{ij} e_s^2\vert \mathcal{H}_{s-1}, i_s=i,j_s=j \right) \Big \vert \mathcal{H}_{s-1}\right\} \\
    &= \frac{1}{t}\sum_{s=1}^t \mathbb{E} \left\{a_s^{ij} \vert \mathcal{H}_{s-1}\right\} \mathbb{E} \left(e_s^2 \vert i_s=i,j_s=j \right)\\
    &= \frac{\sigma_{ij}^2}{t} \sum_{s=1}^t \mathbb{P}\left(i_s=i,j_s=j \vert \mathcal{H}_{s-1}\right) \\
    &= \frac{\sigma_{ij}^2}{t} \sum_{s=1}^t \widetilde{x}_{si} \widetilde{y}_{sj}.
\end{align*}
By Theorem~\ref{thm:behave_con} and the Continuous Mapping Theorem, $\widetilde{x}_{si} \widetilde{y}_{sj} \to \widetilde{x}_i^\ast \widetilde{y}_j^\ast$, then we have $\zeta_t^2 \stackrel{p}{\to} \widetilde{x}_i^\ast \widetilde{y}_j^\ast \sigma_{ij}^2$. It follows by Martingale Central Limit Theorem that 
\[
\frac{1}{\sqrt{t}} \sum_{s=1}^{t}a_s^{ij} e_s \stackrel{D}{\to} \mathcal{N}\left(0,\widetilde{x}_i^\ast \widetilde{y}_j^\ast \sigma_{ij}^2 \right).
\]

2. Next, we want to find the limit of $\left(\frac{1}{t}\sum_{s=1}^{t} a_s^{ij} \right)^{-1}$. We first derive the limit of $\xi_t \coloneqq\frac{1}{t}\sum_{s=1}^{t} a_s^{ij}$.

Note that $a_s^{ij}$ is $\mathcal{H}_s$-measurable for each $s$. Let $w$ be a random variable that always take 1. Then $\mathbb{E} (w) = 1 < \infty$. Also, for any $h >0$ and each $s$, $\mathbb{P}(a_s^{ij} > h) \le \mathbb{P}(w > h)$ because $a_s^{ij} \le w$ always holds. 
By Theorem 2.19 in \citet{hall1980martingale},
\begin{equation*}
    \frac{1}{t}\sum_{s=1}^{t} \left\{ a_s^{ij} - \mathbb{E}(a_s^{ij} \Big\vert \mathcal{H}_{s-1}) \right\} = \xi_t - \frac{\zeta_t^2}{\sigma_{ij}^2}  \stackrel{p}{\to} 0.
\end{equation*}
We have proved $\zeta_t^2 \stackrel{p}{\to} \widetilde{x}_i^\ast \widetilde{y}_j^\ast \sigma_{ij}^2$ in the last part. Thus we have $\xi_t \stackrel{p}{\to} \widetilde{x}_i^\ast \widetilde{y}_j^\ast$. By Continuous Mapping Theorem,
\begin{equation*}
    \left(\frac{1}{t}\sum_{s=1}^{t} a_s^{ij} \right)^{-1} \stackrel{p}{\to} \left(\widetilde{x}_i^\ast \widetilde{y}_j^\ast \right)^{-1}.
\end{equation*}
Finally, combine above results using Slutsky Theorem, we have
\begin{equation*}
    \sqrt{t}\left([\widehat{A}_t]_{ij} - A_{ij}\right) \stackrel{D}{\to} \mathcal{N}(0,s_{ij}^2),
\end{equation*}
where $s_{ij}^2 = \left(\widetilde{x}_i^\ast \widetilde{y}_j^\ast \right)^{-1} \sigma_{ij}^2$.

The consistency of the variance estimator requires to show
\begin{equation*}
    \frac{\sum_{s=1}^t a_s^{ij} \widehat{e}_s^2}{\sum_{s=1}^t a_s^{ij}} \left(\frac{1}{t}\sum_{s=1}^{t} a_s^{ij} \right)^{-1} \stackrel{p}{\to} s_{ij}^2,
\end{equation*}
where $\widehat{e}_s = r_s - [\widehat{A}_t]_{ij}$. We just need to show $\frac{\sum_{s=1}^t a_s^{ij} \widehat{e}_s^2}{\sum_{s=1}^t a_s^{ij}} \stackrel{p}{\to} \sigma_{ij}^2$.
We expand the square term of $\widehat{e}_s^2$ and get
\begin{equation*}
    \widehat{e}_s^2 = \left(A_{ij} - [\widehat{A}_t]_{ij}\right)^2 + 2\left(A_{ij} - [\widehat{A}_t]_{ij}\right)e_s+e_s^2.
\end{equation*}
For the first term, by Corollary \ref{coro:consistent_bandit},
\begin{equation*}
    \frac{\sum_{s=1}^t a_s^{ij} \left(A_{ij} - [\widehat{A}_t]_{ij}\right)^2}{\sum_{s=1}^t a_s^{ij}} = \left(A_{ij} - [\widehat{A}_t]_{ij}\right)^2 \stackrel{p}{\to} 0.
\end{equation*}
The second term 
\begin{equation*}
    \frac{\sum_{s=1}^t a_s^{ij} \left(A_{ij} - [\widehat{A}_t]_{ij}\right) e_s}{\sum_{s=1}^t a_s^{ij}} = \left(A_{ij} - [\widehat{A}_t]_{ij}\right)\frac{\sum_{s=1}^t a_s^{ij}  e_s}{\sum_{s=1}^t a_s^{ij}} \stackrel{p}{\to} 0,
\end{equation*}
since $ [\widehat{A}_t]_{i,j} \stackrel{p}{\to} [A]_{i,j}$ and  $\frac{\sum_{s=1}^t a_s^{ij} e_s}{\sum_{s=1}^t a_s^{ij}} \stackrel{p}{\to}0$ by Lemma~\ref{lem:adaptive-subgaussian}. Lastly for the final term, according to Weak Law of Large Numbers, we have
\begin{equation*}
    \frac{\sum_{s=1}^t a_s^{ij} e_s^2}{\sum_{s=1}^t a_s^{ij}} \stackrel{p}{\to} \sigma_{ij}^2.
\end{equation*}
The proof is completed by combing all the results above and using Continuous Mapping Theorem.
\end{proof}

\subsection{Convergence Rate of of Saddle Value (Corollary \ref{thm:NEvalue})}
\label{app:saddlevalue_rate}
\begin{proof}
Following Lemma \ref{lemma:sd_dif}, we define the function $g:\Delta_m \times \Delta_k \times I\subseteq \mathbb{R}^m\times\mathbb{R}^m\times \mathbb{R}^{mk+1}$, where $I_1 = [a_{11},b_{11}] \times \dots \times [a_{1k},b_{1k}] \times \dots \times [a_{mk},b_{mk}] \times [0,\eta_1]$ with consistent $[\widehat{A}_t]_{ij} \in [a_{ij},b_{ij}]$. The function $g$ is expressed as
    \begin{equation*}
        g(\bm{x},\bm{y},\bm{t}) =g(\bm{x},\bm{y},(A,\eta)) = \widetilde{f}(\bm{x},\bm{y},A,\eta) = \bm{x}^\top A \bm{y} -\eta R_X(\bm{x}) + \eta R_Y(\bm{y}),
    \end{equation*}
    where $\widetilde{f}$ is defined in Section \ref{app:nash_con_proof}.
While $g$ and $\widetilde{f}$ are functionally equivalent, the key distinction lies in their domain specification: $g$ explicitly combines the matrix $A$ and and scalar $\eta$ into a single parameter space $\mathbb{R}^{mk+1}$, whereas $\widetilde{f}$ treats them separately. The saddle value function is $V(\bm{t})=V((A,\eta))=\min_{\bm{x}\in X} \max_{\bm{y}\in Y}\widetilde{f}(\bm{x},\bm{y},A,\eta) = \widetilde{f}(X^\ast(A,\eta),Y^\ast(A,\eta),A,\eta)$. 

Let $\bm{t}_1 = (\widehat{A}_{t-1},\eta_t)$ and $\bm{t}_2 = (A,0)$, by Lemma \ref{lemma:sd_dif}, we have 
\begin{equation*}
    \nabla_tg(X^\ast(\bm{t}_1),Y^\ast(\bm{t}_2),\bar{\bar{\bm{t}}})(\bm{t}_1-\bm{t}_2) \le \mathcal{L}_t(\widehat{\bm{x}}_t,\widehat{\bm{y}}_t) - V_A^\ast = V(\bm{t}_1) - V(\bm{t}_2) \le \nabla_tg(X^\ast(\bm{t}_2),Y^\ast(\bm{t}_1),\bar{\bm{t}})(\bm{t}_1-\bm{t}_2),
\end{equation*}
where $\bar{\bm{t}}$ and $\bar{\bar{\bm{t}}}$ satisfying $\bar{\bm{t}} = 
(1-\theta_1)\bm{t}_1+\theta_1\bm{t}_2$ and $\bar{\bar{\bm{t}}} = 
(1-\theta_2)\bm{t}_1+\theta_2\bm{t}_2$, $0<\theta_1,\theta_2<1$.
By the boundedness of the regularization terms $R_X$ and $R_Y$ and the compactness of $\Delta_m$ and $\Delta_k$, there exists a constant $C_g>0$ such that $\Vert \nabla_tg(X^\ast(\bm{t}_2),Y^\ast(\bm{t}_1),\bar{\bm{t}})\Vert_2 \le C_g$ and $\Vert \nabla_tg(X^\ast(\bm{t}_1),Y^\ast(\bm{t}_2),\bar{\bar{\bm{t}}})\Vert_2 \le C_g$ for any $\theta_1$ and $\theta_2$. Then we have $|\mathcal{L}_t(\widehat{\bm{x}}_t,\widehat{\bm{y}}_t) - V_A^\ast| \le C_g \Vert \bm{t}_1-\bm{t}_2\Vert_2$, where the parameter distance is given by
\begin{equation*}
    \Vert \bm{t}_1-\bm{t}_2\Vert_2 = \sqrt{\sum_{i=1}^m\sum_{j=1}^k \left([\widehat{A}_{t-1}]_{ij}-A_{ij}\right)^2 + \eta_t^2} = O_p\left((t^{-1}+\eta_t^2)^{1/2}\right).
\end{equation*}
Therefore, we have proved $\mathcal{L}_t(\widehat{\bm{x}}_t,\widehat{\bm{y}}_t) - V_A^\ast = O_p\left((t^{-1}+\eta_t^2)^{1/2}\right)$.
\end{proof}

\subsection{$\sqrt{T}$-consistency of Value Estimator (Theorem \ref{thm:saddlerate})}
\label{app:dr}

\begin{proof}
Here we first state the formal scenario of Assumption~\ref{ass:mix}.
\begin{assumption}[{\emph{Margin: Formal Statement}}]
\label{ass:margin_formal}
Let $(\bm x^\ast,\bm y^\ast)$ denote the Nash equilibrium of the true payoff
matrix $A$. One of the following conditions holds:
\begin{enumerate}
    \item[\textnormal{(i)}]
    There exists a constant $C_l>0$ such that $x_i^\ast\ge C_l$ and $y_j^\ast\ge C_l$ for all $i\in[m]$ and $j\in[k]$.
    \item[\textnormal{(ii)}]
    Let $(\bar{\bm x}^\ast,\bar{\bm y}^\ast)$ denote the Nash equilibrium of the regularized game associated with any payoff matrix $A'$ and regularization parameter $\eta$. There exists a constant $\delta>0$ such that, whenever $\left\|(A',\eta)-(A,0)\right\|_2<\delta$, we have
    \[
    x_{i_0}^\ast=0
    \ \to\
    \bar x_{i_0}^\ast=0,
    \qquad
    y_{j_0}^\ast=0
    \ \to\
    \bar y_{j_0}^\ast=0,
    \]
    for all $i_0\in[m]$ and $j_0\in[k]$.
\end{enumerate}
\end{assumption}
Here, $\left\| (A',\eta)-(A,0) \right\|_2 = \sqrt{ \left\| A'-A \right\|_F^2 + \eta^2 }$ denotes the product norm on $\mathbb R^{mk}\times\mathbb R$.
Here $\Vert(\cdot\,,\cdot) \Vert_2$ denotes the product norm in $\mathbb{R}^{m k}\times \mathbb{R}$, and the details are provided in Lemma \ref{lemma:norm}. 

We need several steps to prove the square root rate of convergence. We define 
\begin{align*}
    \widetilde{V}_T &\equiv \frac{1}{T}\sum_{t=1}^T \widehat{\bm{x}}_t^\top \widehat{A}_{t-1}\widehat{\bm{y}}_t + \frac{\widehat{x}_{t,i_t}\widehat{y}_{t,j_t}}{{\pi}_t(i_t,j_t)}(r_t - A_{i_t,j_t}).
\end{align*}

\textbf{Step 1:} We first show $\widehat{V}_T^\text{DR} - \widetilde{V}_T = O_p(T^{-1/2})$.

\begin{align*} 
\widehat V_T^{\mathrm{DR}}-\widetilde V_T &= \frac{1}{T} \sum_{t=1}^T \frac{ \widehat{x}_{t,i_t}\widehat{y}_{t,j_t} }{ \pi_t(i_t,j_t) } \left[ A-\widehat A_{t-1} \right]_{i_t,j_t}, 
\end{align*} 
where $\pi_t(i,j) = \widetilde{x}_{ti}\widetilde{y}_{tj}$ denotes the conditional probability of selecting action pair $(i,j)$ under the behavior policy. 

We now consider the three exploration strategies separately. 
\paragraph{$\epsilon$-Greedy.} 
Under the $\epsilon$-Greedy strategy, 
\[ \widetilde{\bm x}_t = (1-\epsilon_t)\widehat{\bm x}_t + \frac{\epsilon_t}{m}\bm 1_m, \qquad \widetilde{\bm y}_t = (1-\epsilon_t)\widehat{\bm y}_t + \frac{\epsilon_t}{k}\bm 1_k. \] 
For any action pair $(i,j)$, 
\begin{align*} 
\frac{ \widehat{x}_{ti}\widehat{y}_{tj} }{ \widetilde{x}_{ti}\widetilde{y}_{tj} } = \frac{ \widehat{x}_{ti}\widehat{y}_{tj} }{\left[(1-\epsilon_t)\widetilde{x}_{ti}+\frac{\epsilon}{m}\right] \left[(1-\epsilon_t)\widetilde{y}_{tj}+\frac{\epsilon}{k}\right]} \le \frac{ \widehat{x}_{ti}\widehat{y}_{tj} }{ (1-\epsilon_t)^2 \widehat{x}_{ti}\widehat{y}_{tj} } = \frac{1}{(1-\epsilon_t)^2} \le C_\epsilon, 
\end{align*} 
whenever $\widehat{x}_{ti}\widehat{y}_{tj}>0$ for all sufficiently large $t$ under condition~\textnormal{(i)} of Assumption~\ref{ass:margin_formal}. Under condition~\textnormal{(ii)}, actions outside the support of the true Nash equilibrium remain outside the support of the estimated equilibrium for all sufficiently large $t$. Hence, whenever $\widehat{x}_{ti}\widehat{y}_{tj}=0$, the corresponding importance ratio is equal to zero.

Therefore, there exists a constant $C_\epsilon<\infty$ such that, for all sufficiently large $t$, \[ 0 \le \frac{ \widehat{x}_{ti}\widehat{y}_{tj} }{ \pi_t(i,j) } \le C_\epsilon \] uniformly over all action pairs $(i,j)$.

\paragraph{UCB and TS.} 
We establish a uniform bound for the importance ratio $\widehat{x}_{ti}\widehat{y}_{tj}/\pi_t(i,j)$ by first proving the almost-sure convergence of the estimated strategies.
Using the entrywise tail bound established in Theorem~\ref{thm:tail_bandit} and the condition $t\epsilon_t^2/{\log t} \to \infty$, since the number of action pairs is finite, it follows that
\[
\sum_{t=1}^{\infty}
\mathbb P\left(
\left\|
\widehat A_t-A
\right\|_F
>
h
\right)
<
\infty.
\]
By the Borel--Cantelli lemma, for every fixed $h>0$,
\[
\mathbb P\left(
\left\|
\widehat A_t-A
\right\|_F
>
h
\text{ infinitely often}
\right)
=
0.
\]
To conclude convergence, consider the countable sequence $h_\ell = \frac{1}{\ell}$, $\ell\in\mathbb N$. For each $\ell\in\mathbb N$, with probability one, there exists an almost-surely finite time $T_\ell$ such that $\Vert \widehat A_t-A \Vert_F \le {1}/{\ell}$
for all $t\ge T_\ell$. Taking the countable intersection of these probability-one events yields
\[
\widehat A_t
\stackrel{\mathrm{a.s.}}{\to}
A.
\]
Since $\eta_t\to0$ and the equilibrium mapping is continuous at $(A,0)$, the almost-sure convergence of $\widehat A_t$ further implies $\widehat{\bm x}_t \stackrel{a.s.}{\to} \bm x^\ast$ and $\widehat{\bm y}_t \stackrel{a.s.}{\to} \bm y^\ast$.

Under the condition $\sum_{t=1}^{\infty} \mathbb P\left( \Vert \widehat A_t-A \Vert_F > h \right) < \infty$, which is sufficient for 
$$\sum_{t=1}^{\infty} \mathbb P\left( \Vert \widetilde A_t-A \Vert_F > h \right) < \infty,$$
we also have
\[
\widetilde{\bm x}_t
\stackrel{a.s.}{\to}
\operatorname{Clip}_m
\left(
\bm x^\ast,\frac{\epsilon_\infty}{m}
\right),
\qquad
\widetilde{\bm y}_t
\stackrel{a.s.}{\to}
\operatorname{Clip}_k
\left(
\bm y^\ast,\frac{\epsilon_\infty}{k}
\right)
\]
for UCB and TS.

Under condition~\textnormal{(i)} of Assumption~\ref{ass:mix}, all entries of
$\bm x^\ast$ and $\bm y^\ast$ are bounded away from zero. Hence, for every
action pair $(i,j)$ and sufficiently large $t$, there exists a constant $\delta$ such that $\widehat{x}_{ti} > C_l-\delta$ and $\widehat{y}_{tj} > C_l-\delta$ for all $i \in[m]$, $j \in [k]$. Thus for all sufficiently large $t>T_0$, 
\[
\frac{
\widehat{x}_{ti}\widehat{y}_{tj}
}{
\widetilde{x}_{ti}\widetilde{y}_{tj}
}
\le
\frac{1}{
\max\{C_l-\delta,\epsilon_\infty/m \}
\max\{C_l-\delta,\epsilon_\infty/k \}
}
<\infty
\]
almost surely.

Under condition~\textnormal{(ii)}, for all sufficiently large $t$, by local support stability, the corresponding numerator satisfies $\widehat{x}_{ti}\widehat{y}_{tj}=0$ for any action pair $(i,j)$ outside the support of the true Nash equilibrium almost surely. Therefore, the importance ratio is eventually equal to zero for such action pairs. 
Therefore, there exists a finite constant $C_\epsilon={1}/{
\max\{C_l-\delta,\epsilon_\infty/m \}
\max\{C_l-\delta,\epsilon_\infty/k \}}< \infty$ and an almost-surely finite time $T_0$ such that
\[
0
\le
\frac{
\widehat{x}_{ti}\widehat{y}_{tj}
}{
\widetilde{x}_{ti}\widetilde{y}_{tj}
}
\le C_\epsilon
\]
for all $i\in[m]$, $j\in[k]$, and $t\ge T_0$.

Combining the above results, under each of the three exploration strategies, there exist a finite constant $c_t>0$ and an almost-surely finite time $T_0$ such that $0 \le \frac{ \widehat{x}_{ti}\widehat{y}_{tj} }{ \pi_t(i,j) } \le C_\epsilon$ uniformly over all action pairs $(i,j)$ and all $t\ge T_0$. Therefore, 
\begin{align*} 
\left| \widehat V_T^{\mathrm{DR}}-\widetilde V_T \right| \le \frac{1}{T} \sum_{t=1}^{T_0-1} \frac{ \widehat{x}_{t,i_t}\widehat{y}_{t,j_t} }{ \pi_t(i_t,j_t) } \left| \left[ A-\widehat A_{t-1} \right]_{i_t,j_t} \right| + \frac{C}{T} \sum_{t=T_0}^{T} \left| \left[ A-\widehat A_{t-1} \right]_{i_t,j_t} \right|. 
\end{align*} 
The first term contains only finitely many summands and is therefore $O_p(T^{-1})$. By Theorem~\ref{thm:norm_matrix}, the second term is $O_p(T^{-1/2})$. Hence, \[ \widehat V_T^{\mathrm{DR}}-\widetilde V_T = O_p(T^{-1/2}). \]

\textbf{Step 2:} Then we will show $\widetilde{V}_T - V^\ast_A = O_p(T^{-1/2})$.

\begin{equation}
\label{tildeV_T - barV_T}
    \widetilde{V}_T - V^\ast_A = \frac{1}{T}\sum_{t=1}^T \widehat{\bm{x}}_t^\top \widehat{A}_{t-1}\widehat{\bm{y}}_t - (\bm{x}^\ast)^\top A \bm{y}^\ast + \frac{\widehat{x}_{t,i_t}\widehat{y}_{t,j_t}}{{\pi}_t(i_t,j_t)}(r_t-A_{i_t,j_t})
\end{equation}

Based on Corollary \ref{thm:NEvalue}, we know that 
\begin{align*}
    &\widehat{\bm{x}}_t^\top \widehat{A}_{t-1}\widehat{\bm{y}}_t - (\bm{x}^\ast)^\top A \bm{y}^\ast\\
    =& \mathcal{L}_t(\widehat{\bm{x}}_t,\widehat{\bm{y}}_t)+ \eta_t R_X(\widehat{\bm{x}}_t)- \eta_t R_Y(\widehat{\bm{y}}_t)-(\bm{x}^\ast)^\top A \bm{y}^\ast\\
    \le &\mathcal{L}_t(\widehat{\bm{x}}_t,\widehat{\bm{y}}_t)- (\bm{x}^\ast)^\top A \bm{y}^\ast + 2\max\left\{\ln m,\ln k\right\}\eta_t\\
    =& O_p(t^{-1/2}),
\end{align*}
when $\eta_t = \mathcal{O}(t^{-1/2})$. Thus we have
\begin{equation}
\label{firstinstep2}
\frac{1}{T} \sum_{t=1}^T \widehat{\bm{x}}_t^\top \widehat{A}_{t-1}\widehat{\bm{y}}_t - (\bm{x}^\ast)^\top A \bm{y}^\ast = O_p(T^{-1/2}).
\end{equation}

Then we focus on the second term in Equation~\eqref{tildeV_T - barV_T}. Define 
\[
Z_t = \frac{ \widehat{x}_{t,i_t}\widehat{y}_{t,j_t} }{ \pi_t(i_t,j_t) } \left( r_t-A_{i_t,j_t} \right). 
\] 
Then We aim to show that 
\[ 
\frac{1}{T}\sum_{t=1}^T Z_t = O_p(T^{-1/2}). 
\] 
By the uniform bound established in Step~1, there exists a constant $C_\epsilon$ such that 
\[ 
\max_{i\in[m],\,j\in[k]} \frac{ \widehat{x}_{t,i}\widehat{y}_{t,j} }{ \pi_t(i,j) }\le C_\epsilon 
\] 
almost surely for all sufficiently large $t$. Moreover, the importance ratios are finite at every time point under all three exploration strategies. For any fixed $N\ge1$, define 
\[ 
\mathcal E_N = \left\{ \sup_{t\ge N}\max_{i\in[m],\,j\in[k]} \frac{ \widehat{x}_{ti}\widehat{y}_{tj} }{ \pi_t(i,j) }\le C_\epsilon \right\},
\] 
and we have $\mathbb P(\mathcal E_N)\to 1$ as $N\to\infty$. Define the localized sequence 
\[ 
Z_t^{(C_\epsilon)} = Z_t \, \mathbb I \left\{\max_{i\in[m],\,j\in[k]} \frac{ \widehat{x}_{t,i}\widehat{y}_{t,j} }{ \pi_t(i,j) }\le C_\epsilon \right\}. 
\] 
Since the estimated target policy and the behavior policy are determined before the reward at time $t$ is observed, we have 
\[
\mathbb E\left( Z_t^{(C_\epsilon)} \mid \mathcal H_{t-1} \right) = \mathbb E\left[ \left. \mathbb I \left\{\max_{i\in[m],\,j\in[k]} \frac{ \widehat{x}_{ti}\widehat{y}_{tj} }{ \pi_t(i,j) }\le C\right\} \frac{ \widehat{x}_{t,i_t}\widehat{y}_{t,j_t} }{ \pi_t(i_t,j_t) } \mathbb E\left( e_t \mid i_t,j_t,\mathcal H_{t-1} \right) \right| \mathcal H_{t-1} \right] =0. 
\]
Therefore, $\{Z_t^{(C_\epsilon)},\mathcal H_t\}_{t\ge1}$ is a martingale difference sequence. 
Since that the conditional noise variances are uniformly bounded: i.e., $\max_{i\in[m],\,j\in[k]} \sigma_{ij}^2 < \infty$. Then 
\begin{align*} 
\mathbb E\left[ \left. \left( Z_t^{(C_\epsilon)} \right)^2 \right| \mathcal H_{t-1} \right] 
&= \mathbb E\left[ \left. \mathbb I \left\{\max_{i\in[m],\,j\in[k]} \frac{ \widehat{x}_{t,i}\widehat{y}_{t,j} }{ \pi_t(i,j) }\le C_\epsilon\right\} \left( \frac{ \widehat{x}_{t,i_t}\widehat{y}_{t,j_t} }{ \pi_t(i_t,j_t) } \right)^2 e_t^2 \right| \mathcal H_{t-1} \right] \\ 
&\le C_\epsilon^2 \max_{i\in[m],\,j\in[k]} \sigma_{ij}^2. 
\end{align*} 
Since the cross terms of a martingale difference sequence vanish, for every fixed $N$, 
\[
\mathbb E\left[ \left( \frac{1}{\sqrt{T}} \sum_{t=N}^T Z_t^{(C_\epsilon)} \right)^2 \right] = \frac{1}{T} \sum_{t=N}^T \mathbb E\left[ \left( Z_t^{(C_\epsilon)} \right)^2 \right] \le C_\epsilon^2 \max_{i\in[m],\,j\in[k]} \sigma_{ij}^2. 
\]
By Chebyshev's inequality, 
\[ 
\frac{1}{\sqrt{T}} \sum_{t=N}^T Z_t^{(C_\epsilon)} = O_p(1). 
\] 
On the event $\mathcal E_N$, we have $Z_t^{(C_\epsilon)}=Z_t$ for all $t\ge N$. Since $\mathbb P(\mathcal E_N)\to 1$ as $N\to\infty$, it follows that 
\[ 
\frac{1}{\sqrt{T}} \sum_{t=N}^T Z_t = O_p(1). 
\] 
The initial segment contains only finitely many terms. Since each importance ratio is upper bounded by $mk/\epsilon_N^2$, \[ \frac{1}{T} \sum_{t=1}^{N-1}Z_t = O_p(T^{-1}) = o_p(T^{-1/2}). \] Therefore, \[ \frac{1}{T} \sum_{t=1}^T \frac{ \widehat{x}_{t,i_t}\widehat{y}_{t,j_t} }{ \pi_t(i_t,j_t) } \left( r_t-A_{i_t,j_t} \right) = O_p(T^{-1/2}). \] Combining this result with \eqref{firstinstep2}, we obtain \[ \widetilde V_T-V_A^\ast = O_p(T^{-1/2}). \]
This completes the proof.

\end{proof}

\subsection{Regret Bound (Theorem~\ref{thm:regret_bandit})}
\label{app:thm:rerget_proof}

\begin{proof}
For each time $t$, let $B_t$ denote the payoff matrix used to compute the
regularized equilibrium strategies and the corresponding strategies
\[
\bm x_t^B
=
X^\ast(B_t,\eta_t),
\qquad
\bm y_t^B
=
Y^\ast(B_t,\eta_t).
\]
Recall that
\[
\operatorname{Gap}_A(\bm x,\bm y)
=
\max_{\bm y'\in\Delta_k}
\bm x^\top A\bm y'
-
\min_{\bm x'\in\Delta_m}
(\bm x')^\top A\bm y.
\]
For the implemented behavior strategies, we insert the intermediate strategies
$(\bm x_t^B,\bm y_t^B)$ and the intermediate payoff matrix $B_t$:
\begin{align*}
\operatorname{Gap}_A
\left(
\widetilde{\bm x}_t,\widetilde{\bm y}_t
\right)
=&
\max_{\bm y\in\Delta_k}
\widetilde{\bm x}_t^\top A\bm y
-
\min_{\bm x\in\Delta_m}
\bm x^\top A\widetilde{\bm y}_t
\\
=&
\left[
\max_{\bm y\in\Delta_k}
\widetilde{\bm x}_t^\top A\bm y
-
\max_{\bm y\in\Delta_k}
(\bm x_t^B)^\top A\bm y
\right]
+
\left[
\max_{\bm y\in\Delta_k}
(\bm x_t^B)^\top A\bm y
-
\max_{\bm y\in\Delta_k}
(\bm x_t^B)^\top B_t\bm y
\right]
\\
&+
\left[
\max_{\bm y\in\Delta_k}
(\bm x_t^B)^\top B_t\bm y
-
\min_{\bm x\in\Delta_m}
\bm x^\top B_t\bm y_t^B
\right]
+
\left[
\min_{\bm x\in\Delta_m}
\bm x^\top B_t\bm y_t^B
-
\min_{\bm x\in\Delta_m}
\bm x^\top A\bm y_t^B
\right]
\\
&+
\left[
\min_{\bm x\in\Delta_m}
\bm x^\top A\bm y_t^B
-
\min_{\bm x\in\Delta_m}
\bm x^\top A\widetilde{\bm y}_t
\right].
\end{align*}

We first control the perturbations caused by the implemented behavior strategies.
Since there exists a constant $L_A$ satisfying
$\|A\|_{\max}\equiv\max_{i\in[m],\,j\in[k]}|A_{ij}|\le L_A$, we have
\begin{equation*}
\max_{\bm y\in\Delta_k}
\widetilde{\bm x}_t^\top A\bm y
-
\max_{\bm y\in\Delta_k}
(\bm x_t^B)^\top A\bm y
\le
\max_{\bm y\in\Delta_k}
(\widetilde{\bm x}_t-\bm x_t^B)^\top A\bm y
\le L_A
\left\|
\widetilde{\bm x}_t-\bm x_t^B
\right\|_1,
\end{equation*}
and similarly,
\begin{equation*}
\min_{\bm x\in\Delta_m}
\bm x^\top A\bm y_t^B
-
\min_{\bm x\in\Delta_m}
\bm x^\top A\widetilde{\bm y}_t
\le L_A
\left\|
\widetilde{\bm y}_t-\bm y_t^B
\right\|_1.
\end{equation*}
For the $\epsilon$-Greedy strategy, the bound follows from the fact that the
$\ell_1$ distance between any two probability vectors is at most two.
For UCB and TS, separate clipping moves at most $\epsilon_t$ total probability
mass for each player. Since the total mass added by clipping equals the total
mass removed, the resulting $\ell_1$ distance is at most $2\epsilon_t$.
Hence,
\[
L_A
\left[
\left\|
\widetilde{\bm x}_t-\bm x_t^B
\right\|_1
+
\left\|
\widetilde{\bm y}_t-\bm y_t^B
\right\|_1
\right]
\le
4L_A\epsilon_t.
\]
We next control the perturbations caused by replacing the true payoff matrix
$A$ with $B_t$. We have
\begin{equation*}
\max_{\bm y\in\Delta_k}
(\bm x_t^B)^\top A\bm y
-
\max_{\bm y\in\Delta_k}
(\bm x_t^B)^\top B_t\bm y
\le
\max_{\bm y\in\Delta_k}
(\bm x_t^B)^\top(A-B_t)\bm y
\le
\left\|A-B_t\right\|_{\max}.
\end{equation*}
Similarly,
\begin{equation*}
\min_{\bm x\in\Delta_m}
\bm x^\top B_t\bm y_t^B
-
\min_{\bm x\in\Delta_m}
\bm x^\top A\bm y_t^B
\le
\left\|A-B_t\right\|_{\max}.
\end{equation*}
Hence, the sum of the two payoff-matrix perturbation terms is bounded by
$2\left\|A-B_t\right\|_{\max}$.
Finally, we control the regularization bias. Since
$(\bm x_t^B,\bm y_t^B)$ is the saddle point of the regularized game
\[
\bm x^\top B_t\bm y
+
\eta_t R_X(\bm x)
-
\eta_t R_Y(\bm y),
\]
where $\bm x$ is the minimizing player and $\bm y$ is the maximizing player,
the saddle-point property implies that, for the row player,
\begin{equation*}
(\bm x_t^B)^\top B_t\bm y_t^B+
\eta_t R_X(\bm x_t^B) -
\eta_t R_Y(\bm y_t^B)
\le
\bm x^\top B_t\bm y_t^B
+ \eta_t R_X(\bm x)
- \eta_t R_Y(\bm y_t^B).
\end{equation*}
After canceling the common term $-\eta_tR_Y(\bm y_t^B)$, we obtain
\[
(\bm x_t^B)^\top B_t\bm y_t^B
-
\bm x^\top B_t\bm y_t^B
\le
\eta_t
\left[
R_X(\bm x)-R_X(\bm x_t^B)
\right].
\]
Therefore,
\begin{equation*}
(\bm x_t^B)^\top B_t\bm y_t^B
-
\min_{\bm x\in\Delta_m}
\bm x^\top B_t\bm y_t^B \le
\eta_t
\left[
\sup_{\bm x\in\Delta_m}R_X(\bm x)
-
\inf_{\bm x\in\Delta_m}R_X(\bm x)
\right]
= \eta_t\log m.
\end{equation*}
Similarly, for the column player, the saddle-point property implies that
\begin{equation*}
(\bm x_t^B)^\top B_t\bm y
+
\eta_t R_X(\bm x_t^B)
-
\eta_t R_Y(\bm y)
\le
(\bm x_t^B)^\top B_t\bm y_t^B
+
\eta_t R_X(\bm x_t^B)
-
\eta_t R_Y(\bm y_t^B).
\end{equation*}
After canceling the common term $\eta_tR_X(\bm x_t^B)$, we obtain
\[
(\bm x_t^B)^\top B_t\bm y
-
(\bm x_t^B)^\top B_t\bm y_t^B
\le
\eta_t
\left[
R_Y(\bm y)-R_Y(\bm y_t^B)
\right].
\]
Therefore,
\begin{equation*}
\max_{\bm y\in\Delta_k}
(\bm x_t^B)^\top B_t\bm y
-
(\bm x_t^B)^\top B_t\bm y_t^B
\le
\eta_t
\left[
\sup_{\bm y\in\Delta_k}R_Y(\bm y)
-
\inf_{\bm y\in\Delta_k}R_Y(\bm y)
\right]
= \eta_t\log k.
\end{equation*}
Combining the two bounds yields
\begin{equation*}
\max_{\bm y\in\Delta_k}
(\bm x_t^B)^\top B_t\bm y
-
\min_{\bm x\in\Delta_m}
\bm x^\top B_t\bm y_t^B
\le \eta_t(\log m+\log k).
\end{equation*}
Therefore, combining the three bounds yields
\begin{equation}
\label{equ:single-step-gap-bound}
\operatorname{Gap}_A
\left(
\widetilde{\bm x}_t,\widetilde{\bm y}_t
\right)
\le
4L_A\epsilon_t
+
2\left\|A-B_t\right\|_{\max}
+
\eta_t(\log m+\log k).
\end{equation}

We next establish a uniform high-probability bound for the payoff-matrix
estimation error. Let
$\sigma_{\max}=\max_{i\in[m],\,j\in[k]}\sigma_{ij}$.
By the entrywise tail bound for the payoff estimator, for every action pair
$(i,j)$, every $t\ge1$, and every $h>0$,
\begin{align}
\label{equ:entrywise-payoff-tail}
\mathbb P\left(
\left|
[\widehat A_t]_{ij} - A_{ij}
\right|
> h
\right)
\le&
\exp\left\{
-
\frac{t\epsilon_t^2}{8mk}
\right\} +
2\exp\left\{
-
\frac{t\epsilon_t^2h^2}{4mk\sigma_{ij}^2}
\right\}.
\end{align}
Since $\sigma_{ij}\le\sigma_{\max}$, the second term can be bounded uniformly
over all action pairs by replacing $\sigma_{ij}$ with $\sigma_{\max}$.
Fix any probability $\delta\in(0,1)$. Since ${t\epsilon_t^2}/{\log t}\to\infty$,
we have
\[
\sum_{t=1}^{\infty}
\exp\left\{
-\frac{t\epsilon_t^2}{8mk}
\right\}
<
\infty.
\]
Therefore, there exists an integer $t_0=t_0(\delta)$ such that
\begin{equation}
\label{equ:count-tail-summability}
mk
\sum_{t=t_0}^{\infty}
\exp\left\{
-\frac{t\epsilon_t^2}{8mk}
\right\}
\le
\frac{\delta}{2}.
\end{equation}
For any horizon $T\ge t_0$, define
\[
h_t(\delta)
=
2\sigma_{\max}
\sqrt{\frac{mk\log\left(4mkT/\delta\right)}{t\epsilon_t^2}},
\qquad
t_0\le t\le T.
\]
Applying a union bound over the $mk$ action pairs and over all time points
$t=t_0,\ldots,T$, we obtain
\begin{align*}
&\mathbb P\left(
\exists\,
t\in\{t_0,\ldots,T\}
:
\left\|
\widehat A_t-A
\right\|_{\max}
>
h_t(\delta)
\right)
\\
\le&
\sum_{t=t_0}^T
\sum_{i=1}^m
\sum_{j=1}^k
\mathbb P\left(
\left|
[\widehat A_t]_{ij}-A_{ij}
\right|
> h_t(\delta)
\right)
\le
mk
\sum_{t=t_0}^T
\exp\left\{
-\frac{t\epsilon_t^2}{8mk}
\right\}
+ 2mk
\sum_{t=t_0}^T
\exp\left\{
-
\frac{t\epsilon_t^2h_t^2(\delta)}{4mk\sigma_{\max}^2}
\right\}.
\end{align*}
By the definition of $h_t(\delta)$,
\[
\frac{
t\epsilon_t^2h_t^2(\delta)
}{
4mk\sigma_{\max}^2
}
=
\log\left(
\frac{4mkT}{\delta}
\right).
\]
Hence,
\begin{equation*}
2mk \sum_{t=t_0}^T
\exp\left\{
-
\frac{t\epsilon_t^2h_t^2(\delta)}{4mk\sigma_{\max}^2}
\right\} =
2mk(T-t_0+1)
\exp\left\{
-
\log\left(
\frac{4mkT}{\delta}
\right)
\right\}
\le \frac{\delta}{2}.
\end{equation*}
Combining this inequality with
\eqref{equ:count-tail-summability} yields
\[
\mathbb P\left(
\left\|
\widehat A_t-A
\right\|_{\max}
\le
2\sigma_{\max}
\sqrt{
\frac{
mk\log\left(4mkT/\delta\right)
}{
t\epsilon_t^2
}
}
\text{ for all }
t=t_0,\ldots,T
\right)
\ge
1-\delta.
\]
Therefore, with probability at least $1-\delta$, we have
\begin{align*}
\sum_{t=t_0}^T
\left\|
\widehat A_t-A
\right\|_{\max}
&\le
2\sigma_{\max}
\sqrt{
mk\log\left(4mkT/\delta\right)
}
\sum_{t=t_0}^T
\frac{1}{\epsilon_t\sqrt t}.
\end{align*}
Since the finite number of initial rounds $t<t_0$ contributes at most an
$O(1)$ term to the cumulative regret, we obtain
\[
\sum_{t=1}^T
\left\|
\widehat A_t-A
\right\|_{\max}
=
\widetilde O_p
\left(
\sqrt{mk}
\sum_{t=1}^T
\frac{1}{\epsilon_t\sqrt t}
\right),
\]
where $\widetilde O_p(\cdot)$ suppresses logarithmic factors in $T$.

We then analyze the three bandit algorithms separately.

Under the $\epsilon$-Greedy strategy, $B_t = \widehat A_{t-1}$. Substituting the estimation-error bound into
\eqref{equ:single-step-gap-bound} and summing over $t$ gives
\[
\operatorname{Reg}_{\epsilon\text{-greedy}}(T)
=
\widetilde O_p
\left(
\sqrt{mk}
\sum_{t=1}^T
\frac{1}{\epsilon_t\sqrt t}
+
\sum_{t=1}^T
\epsilon_t
+
\sum_{t=1}^T
\eta_t
\right).
\]
Under the UCB strategy, $B_t = \widehat A_{t-1} + \mathcal B_t$,
where
\[
[\mathcal B_t]_{ij}
=
\frac{c_t}{\sqrt{\max\left\{1,\mathcal N_{t-1}^{ij}\right\}}}.
\]
The adaptive count concentration bound implies that, uniformly over all action
pairs and all sufficiently large $t$,
\[
\mathcal N_{t-1}^{ij}
\ge
\frac{
(t-1)\epsilon_{t-1}^2
}{
2mk
}
\]
with probability tending to one. Therefore,
\[
\left\|
\mathcal B_t
\right\|_{\max}
=
\max_{i,j}
\frac{c_t}{\sqrt{\mathcal N_{t-1}^{ij}}}
= O_p
\left(
\frac{c_t\sqrt{mk}}{\epsilon_t\sqrt t}
\right).
\]
Using $\left\|
A-B_t
\right\|_{\max}
\le
\left\|
A-\widehat A_{t-1}
\right\|_{\max}
+ \left\|
\mathcal B_t
\right\|_{\max}$,
and substituting into
\eqref{equ:single-step-gap-bound}, we obtain
\[
\operatorname{Reg}_{\mathrm{UCB}}(T)
=
\widetilde O_p
\left(
\sqrt{mk}
\sum_{t=1}^T
\frac{1+c_t}{\epsilon_t\sqrt t}
+
\sum_{t=1}^T
\epsilon_t
+
\sum_{t=1}^T
\eta_t
\right).
\]
Under TS, $B_t = \widehat A_{t-1} + \mathcal S_t$,
where
\[
[\mathcal S_t]_{ij}
= \frac{\rho\,\xi_t^{ij}}{\sqrt{\mathcal N_{t-1}^{ij}}},
\qquad
\xi_t^{ij}
\stackrel{\mathrm{i.i.d.}}{\sim}
\mathcal N(0,1),
\]
and $\rho>0$ controls the scale of the Thompson-sampling perturbation.
Using the same count lower bound and the Gaussian tail bound for
$\xi_t^{ij}$, we obtain
\[
\left\|
\mathcal S_t
\right\|_{\max}
=
\widetilde O_p
\left(
\frac{\rho\sqrt{mk}}{\epsilon_t\sqrt t}
\right).
\]
Since $\left\|
A-B_t
\right\|_{\max}
\le
\left\|
A-\widehat A_{t-1}
\right\|_{\max}
+
\left\|
\mathcal S_t
\right\|_{\max}$,
substituting into
\eqref{equ:single-step-gap-bound} yields
\[
\operatorname{Reg}_{\mathrm{TS}}(T)
=
\widetilde O_p
\left(
(1+\rho)
\sqrt{mk}
\sum_{t=1}^T
\frac{1}{\epsilon_t\sqrt t}
+
\sum_{t=1}^T
\epsilon_t
+
\sum_{t=1}^T
\eta_t
\right).
\]
Finally, if $\epsilon_t=t^{-\alpha}$, then $\sum_{t=1}^T
\epsilon_t
=
O\left(
T^{1-\alpha}
\right)$,
and $\sum_{t=1}^T
{1}/(\epsilon_t\sqrt t) =
O\left(T^{1/2+\alpha} \right)$.
Taking $\alpha=1/4$ to balance the two exponents, we then have $\operatorname{Reg}(T) = \widetilde O_p \left( T^{3/4} \right)$.
The proof of Theorem~\ref{conthm:regret_bandit} is analogous.
\end{proof}

\subsection{Convergence Rate of Estimated Nash Equilibrium (Theorem~\ref{thm:rate_nash_rate})}
\label{app:thm:rate_nash_rate_proof}
\begin{proof}
\textbf{Step 1: Online matrix game case.}
By Theorem~\ref{thm:tail_bandit}, for each action pair $(i,j)$ and any $h>0$,
\[
\mathbb P\left(
\left|
[\widehat A_t]_{ij}-A_{ij}
\right|
> h
\right)
\le
\exp\left\{
-\frac{t\epsilon_t^2}{8mk}
\right\}
+
2\exp\left\{
-\frac{t\epsilon_t^2h^2}
{4mk\sigma_{ij}^2}
\right\}.
\]
Since the number of action pairs is finite, applying the union bound over $(i,j)$ gives
\[
\mathbb P\left(
\|\widehat A_t-A\|_{\max}
> h
\right)
\le
mk
\exp\left\{
-\frac{t\epsilon_t^2}{8mk}
\right\}
+2
\sum_{i=1}^m
\sum_{j=1}^k
\exp\left\{
-\frac{t\epsilon_t^2h^2}
{4mk\sigma_{ij}^2}
\right\},
\]
where $\|\cdot\|_{\max}$ denotes the maximum absolute entry of a matrix.
Therefore, 
\[
\|\widehat A_t-A\|_{\max}
=
\widetilde O_p
\left(
\frac{\sqrt{mk}}{\epsilon_t\sqrt t}
\right).
\]
We next translate the payoff estimation error into a Nash-gap bound. As we show in Section~\ref{app:thm:rerget_proof},
\[
\operatorname{Gap}_{\widehat A_{t-1}}
\left(
\widehat{\bm x}_t,
\widehat{\bm y}_t
\right)
\le
\eta_t(\log m+\log k).
\]
For any two payoff matrices $A$ and $B$, and any $(\bm x,\bm y)\in\Delta_m\times\Delta_k$, $\left|
\bm x^\top(A-B)\bm y
\right|
\le
\|A-B\|_{\max}$.
Hence,  $\left|
\operatorname{Gap}_{A}(\bm x,\bm y)
-
\operatorname{Gap}_{B}(\bm x,\bm y)
\right|
\le 2\|A-B\|_{\max}$.
Thus we obtain
\begin{align*}
\operatorname{Gap}_{A}
\left(
\widehat{\bm x}_t,
\widehat{\bm y}_t
\right)
&\le
\operatorname{Gap}_{\widehat A_{t-1}}
\left(
\widehat{\bm x}_t,
\widehat{\bm y}_t
\right)
+
2\|\widehat A_{t-1}-A\|_{\max} \\
&\le
\eta_t(\log m+\log k)
+
2\|\widehat A_{t-1}-A\|_{\max} \\
&=
\widetilde O_p
\left(
\frac{\sqrt{mk}}{\epsilon_t\sqrt t}
+
\eta_t
\right),
\end{align*}
where replacing $t-1$ by $t$ only changes constants and logarithmic factors.

Finally, we convert the Nash-gap bound into a distance-to-equilibrium bound. Since $\mathcal E(A)$ is a nonempty polyhedral set, by Hoffman's error bound for linear systems (Theorem 4.1 in~\cite{abderrahim2000hoffman}), there exists a constant $C_A<\infty$, depending only on the true payoff matrix $A$, such that $\operatorname{dist}\left((\bm x,\bm y),\mathcal E(A)\right)\le C_A \operatorname{Gap}_{A}(\bm x,\bm y)$
for all $(\bm x,\bm y)\in\Delta_m\times\Delta_k$. Applying this bound to
$(\bm x,\bm y)=(\widehat{\bm x}_t,\widehat{\bm y}_t)$ gives
\[
\operatorname{dist}
\left(
(\widehat{\bm x}_t,\widehat{\bm y}_t),
\mathcal E(A)
\right)
\le
C_A
\operatorname{Gap}_{A}
\left(
\widehat{\bm x}_t,
\widehat{\bm y}_t
\right).
\]
Therefore,
\[
\operatorname{dist}
\left(
(\widehat{\bm x}_t,\widehat{\bm y}_t),
\mathcal E(A)
\right)
=
\widetilde O_p
\left(
\frac{\sqrt{mk}}{\epsilon_t\sqrt t}
+ \eta_t
\right).
\]

\textbf{Step 2: Contextual matrix game case.}
For each context $\bm z$, let $C_{M(\bm z)}$ denote the Hoffman-type constant associated with the contextual zero-sum game whose payoff matrix is $M(\bm z)$. That is, $C_{M(\bm z)}$ is a constant satisfying
\[
\operatorname{dist}
\left(
(\bm x,\bm y),
\mathcal E(M(\bm z))
\right)
\le
C_{M(\bm z)}
\operatorname{Gap}_{M(\bm z)}(\bm x,\bm y),
\qquad
\forall(\bm x,\bm y)\in\Delta_m\times\Delta_k .
\]

We impose the following realized-context error-bound condition.

\begin{assumption}[Uniform realized error bound]
\label{conass:uniform_realized_error_bound}
There exists a set $\mathcal K$ of payoff matrices such that $\mathbb P\left(
\bm M(\bm z_t)\in\mathcal K
\right)
\to 1$,
and $\sup_{M\in\mathcal K} C_{M}<\infty$.
\end{assumption}
\begin{remark}
Assumption~\ref{conass:uniform_realized_error_bound} is closely related to imposed Assumptions~\ref{ass:mix},~\ref{ass:bound},  and~\ref{conass:margin}. Bounded contexts and a continuous payoff map ensure that the realized payoff matrices lie in a bounded set, while the uniform realized error-bound condition further rules out nearly degenerate games on this set.
\end{remark}

By Assumption~\ref{conass:uniform_realized_error_bound}, for every $\delta>0$, choosing any constant $M_\delta>\sup_{M\in\mathcal K} C_{M}$ gives
\[
\limsup_{t\to\infty}
\mathbb P\left(
C(\bm z_t)>M_\delta
\right)
\le \delta.
\]
This proves that $C(\bm z_t)=O_p(1)$. By the definition of $C_{M(\bm z_t)}$,
\[
\operatorname{dist}
\left(
(
\widehat\phi_t(\bm z_t),
\widehat\nu_t(\bm z_t)
),
\mathcal E(M(\bm z_t))
\right)
\le
C_{M(\bm z_t)}
\operatorname{Gap}_{M(\bm z_t)}
\left(
\widehat\phi_t(\bm z_t),
\widehat\nu_t(\bm z_t)
\right).
\]
Since $C(\bm z_t)=O_p(1)$, and the Nash-gap bound follows from the same payoff-perturbation and regularization-bias argument as in the non-contextual
case, the desired rate follows.
\end{proof}

\section{Proof of Theoretical Results for Online Contextual Matrix Games}
\label{app:proof_context}

\subsection{Tail Bound of Parameter Estimator (Theorem \ref{conthm:tail})}
\label{app:conthm:tail_proof}

\begin{proof}
We use the same notation as in Section~\ref{app:tail_bandit_proof} whenever the meaning is unchanged. Let $\mathcal S_t = \{1,\ldots,t\}$. Fix an action pair $(i,j)$ and also define $a_s^{ij} = \mathbb I\{i_s=i,j_s=j\}$.
By Equation~\eqref{equ:para_con}, we have
\[
\widehat{\bm\beta}_t^{ij}
-
\bm\beta^{ij}
=
\left(\sum_{s=1}^t a_s^{ij}\bm z_s\bm z_s^\top \right)^{-1}
\left( \sum_{s=1}^t a_s^{ij}\bm z_s e_s \right).
\]

We first control the minimum eigenvalue of the contextual design matrix.
Define the event
\[
\widetilde E_{ij}^t
=
\left\{
\lambda_{\min}
\left(
\sum_{s=1}^t a_s^{ij}\bm z_s\bm z_s^\top
\right)
>
\frac{
t\epsilon_t^2\lambda
}{
2mk
}
\right\}.
\]
For each $s\le t$, since $\|\bm z_s\|_2\le L_z$, we have $0 \preceq a_s^{ij}\bm z_s\bm z_s^\top \preceq L_z^2 I_d$. Moreover, 
\begin{equation*} 
\mathbb E\left( a_s^{ij}\bm z_s\bm z_s^\top \mid \mathcal H_{s-1}^{\mathrm{con}} \right) = \mathbb E\left[ \mathbb E\left( a_s^{ij}\bm z_s\bm z_s^\top \mid \mathcal H_{s-1}^{\mathrm{con}}, \bm z_s \right) \mid \mathcal H_{s-1}^{\mathrm{con}} \right] = \mathbb E\left[ \pi_s^\mathrm{con}(i,j\mid\bm z_s) \bm z_s\bm z_s^\top \mid \mathcal H_{s-1}^{\mathrm{con}} \right]. 
\end{equation*} 
Since $\pi_s(i,j\mid\bm z_s) \ge {\epsilon_s^2}/{mk}$, we obtain 
\begin{align*} 
\mathbb E\left( a_s^{ij}\bm z_s\bm z_s^\top \mid \mathcal H_{s-1}^{\mathrm{con}} \right) \succeq \frac{\epsilon_s^2}{mk} \mathbb E\left( \bm z_s\bm z_s^\top \mid \mathcal H_{s-1}^{\mathrm{con}} \right) \succeq \frac{\epsilon_s^2\lambda}{mk} I_d, 
\end{align*} 
where the last inequality follows from Assumption~\ref{ass:bound}. Since $\epsilon_s$ is non-increasing, $\epsilon_s\ge\epsilon_t$ for all $s\le t$. Therefore, 
\[ 
\sum_{s=1}^t \mathbb E\left( a_s^{ij}\bm z_s\bm z_s^\top \mid \mathcal H_{s-1}^{\mathrm{con}} \right) \succeq \frac{ t\epsilon_t^2\lambda }{ mk } I_d. 
\] 
Applying Lemma~\ref{lem:adapted-matrix-chernoff} with $R=L_z^2$, $\mu_t = \frac{ t\epsilon_t^2\lambda }{ mk }$ and $\delta=\frac12$, gives 
\begin{equation} 
\label{equ:context-design-event} 
\mathbb P\left( \lambda_{\min} \left(
\sum_{s=1}^t a_s^{ij}\bm z_s\bm z_s^\top
\right) \le \frac{ t\epsilon_t^2\lambda }{ 2mk } \right) \le d \exp\left\{ - \frac{ t\epsilon_t^2\lambda }{ 8mkL_z^2 } \right\}. 
\end{equation}

We next control the estimation error on the event $\widetilde E_{ij}^t$. On this event, every eigenvalue of $\sum_{s=1}^t a_s^{ij}\bm z_s\bm z_s^\top$ is greater than ${t\epsilon_t^2\lambda}/{2mk}$.
Since the function $v\mapsto\frac{1+v}{v^2}$ is decreasing on $(0,\infty)$, we have
\[
\frac{1}{v^2}
\le
\frac{
1+t\epsilon_t^2\lambda/(2mk)
}{
\left[
t\epsilon_t^2\lambda/(2mk)
\right]^2
}
\frac{1}{1+v}.
\]
Applying this inequality to each eigenvalue yields the desired
semidefinite inequality.
\[
\left(
\sum_{s=1}^t a_s^{ij}\bm z_s\bm z_s^\top
\right)^{-2}
\preceq
\frac{
1+t\epsilon_t^2\lambda/(2mk)
}{
\left[
t\epsilon_t^2\lambda/(2mk)
\right]^2
}
\left(
I_d+\sum_{s=1}^t a_s^{ij}\bm z_s\bm z_s^\top
\right)^{-1}.
\]
It follows that
\begin{align*}
\left\|
\widehat{\bm\beta}_t^{ij}
-
\bm\beta^{ij}
\right\|_1
&=
\left\|
\left(
\sum_{s=1}^t a_s^{ij}\bm z_s\bm z_s^\top
\right)^{-1}
\left( \sum_{s=1}^t a_s^{ij}\bm z_s e_s \right)
\right\|_1
\\
&\le
\sqrt d
\left\|
\left(
\sum_{s=1}^t a_s^{ij}\bm z_s\bm z_s^\top
\right)^{-1}
\left( \sum_{s=1}^t a_s^{ij}\bm z_s e_s \right)
\right\|_2
\\
&\le
\sqrt{
\frac{
d\left[
1+t\epsilon_t^2\lambda/(2mk)
\right]
}{
\left[
t\epsilon_t^2\lambda/(2mk)
\right]^2
}
}
\left\|
\left(
I_d+\sum_{s=1}^t a_s^{ij}\bm z_s\bm z_s^\top
\right)^{-1/2}
\left( \sum_{s=1}^t a_s^{ij}\bm z_s e_s \right)
\right\|_2.
\end{align*}
Consequently, $\left\{\left\|\widehat{\bm\beta}_t^{ij}-\bm\beta^{ij}\right\|_1>h\right\}$ and $\widetilde E_{ij}^t$ imply that
\[
\left\|
\left(
I_d+\sum_{s=1}^t a_s^{ij}\bm z_s\bm z_s^\top
\right)^{-1/2}
\sum_{s=1}^t a_s^{ij}\bm z_s e_s
\right\|_2
>
\frac{
h t\epsilon_t^2\lambda
}{
2mk
\sqrt{
d\left[
1+t\epsilon_t^2\lambda/(2mk)
\right]
}
}.
\]
Applying Lemma~\ref{lem:context-self-normalized} with 
$u = {
h t\epsilon_t^2\lambda}/{2mk
\sqrt{
d\left[
1+t\epsilon_t^2\lambda/(2mk)
\right]}}$,
we obtain
\begin{equation}
\label{equ:context-error-on-design-event}
\mathbb P\left(
\left\|
\widehat{\bm\beta}_t^{ij}
-
\bm\beta^{ij}
\right\|_1
>
h,
\,
\widetilde E_{ij}^t
\right)
\nonumber \le
\left(
1+tL_z^2
\right)^{d/2}
\exp\left\{
-
\frac{
t^2\epsilon_t^4\lambda^2h^2
}{
8dm^2k^2\sigma_{ij}^2
\left[
1+t\epsilon_t^2\lambda/(2mk)
\right]
}
\right\}.
\end{equation}
Combining \eqref{equ:context-design-event} and \eqref{equ:context-error-on-design-event}, we obtain
\begin{align*}
\mathbb P\left(
\left\|
\widehat{\bm\beta}_t^{ij}
-
\bm\beta^{ij}
\right\|_1
>
h
\right)
&\le 
\mathbb P\left(
(\widetilde E_{ij}^t)^c
\right)
+
\mathbb P\left(
\left\|
\widehat{\bm\beta}_t^{ij}
-
\bm\beta^{ij}
\right\|_1
>h,
\widetilde E_{ij}^t
\right)
\\
&\le
d\exp\left\{-\frac{t\epsilon_t^2\lambda}{8mkL_z^2}\right\}
+
\left(1+tL_z^2\right)^{d/2}
\exp\left\{
-
\frac{
t^2\epsilon_t^4\lambda^2h^2
}{
8dm^2k^2\sigma_{ij}^2
\left[
1+\frac{t\epsilon_t^2\lambda}{2mk}
\right]
}
\right\}.
\end{align*}
Equivalently,
\begin{align*}
\mathbb P\left(
\left\|
\widehat{\bm\beta}_t^{ij}
-
\bm\beta^{ij}
\right\|_1
\le
h\right) \ge
1-d\exp\left\{
-
\frac{
t\epsilon_t^2\lambda}{
8mkL_z^2}
\right\}
-
\left(
1+tL_z^2
\right)^{d/2}
\exp\left\{
-
\frac{
t^2\epsilon_t^4\lambda^2h^2}{8dm^2k^2\sigma_{ij}^2
\left[
1+\frac{t\epsilon_t^2\lambda}{2mk}
\right]
}
\right\}.
\end{align*}
\end{proof}

\subsection{Consistency of Payoff Matrix Estimator (Corollary \ref{concoro:consistent})}
\label{app:conthm:coro_tail_proof}
\begin{proof} 
For any fixed $h>0$, Theorem~\ref{conthm:tail} gives 
\begin{align*} 
\mathbb P\left( \left\| \widehat{\bm\beta}_t^{ij} - \bm\beta^{ij} \right\|_1 > h \right) \le d\exp\left\{ - \frac{ t\epsilon_t^2\lambda }{ 8mkL_z^2 } \right\} + \left( 1+tL_z^2 \right)^{d/2} \exp\left\{ - \frac{ t^2\epsilon_t^4\lambda^2h^2 }{ 8dm^2k^2\sigma_{ij}^2 \left[ 1+t\epsilon_t^2\lambda/(2mk) \right] } \right\}. 
\end{align*} 
Since ${ t\epsilon_t^2 }/{ \log t } \to \infty$,  we have $t\epsilon_t^2\to\infty$. Hence, for all sufficiently large $t$, 
\[ 
1+ \frac{ t\epsilon_t^2\lambda }{ 2mk } \le \frac{ t\epsilon_t^2\lambda }{ mk }. 
\] 
Therefore, 
\begin{align*} 
\frac{ t^2\epsilon_t^4\lambda^2h^2 }{ 8dm^2k^2\sigma_{ij}^2 \left[ 1+t\epsilon_t^2\lambda/(2mk) \right] } \ge \frac{ t\epsilon_t^2\lambda h^2 }{ 8dmk\sigma_{ij}^2 }. 
\end{align*} 
It follows that 
\begin{align*} 
\left( 1+tL_z^2 \right)^{d/2} \exp\left\{ - \frac{ t^2\epsilon_t^4\lambda^2h^2 }{ 8dm^2k^2\sigma_{ij}^2 \left[ 1+t\epsilon_t^2\lambda/(2mk) \right] } \right\} 
\le \exp\left\{ \frac d2 \log(1+tL_z^2) - \frac{ t\epsilon_t^2\lambda h^2 }{ 8dmk\sigma_{ij}^2 } \right\} \to0. 
\end{align*} 
Since $\log\left(1+tL_z^2 \right)= O(\log t)$, i.e., there exists a constant $C>0$ such that, for all sufficiently large $t$,
\[
\log\left(
1+tL_z^2
\right)
\le
C\log t.
\]
Consequently,
\begin{align*}
\frac d2
\log\left(
1+tL_z^2
\right)
- \frac{
t\epsilon_t^2\lambda h^2}{8dmk\sigma_{ij}^2} \le
\log t
\left[
\frac{dC}{2}
- \frac{\lambda h^2}{8dmk\sigma_{ij}^2}
\frac{t\epsilon_t^2}{\log t}
\right]
\to -\infty.
\end{align*}
Therefore,
\begin{align*}
&
\left(
1+tL_z^2
\right)^{d/2}
\exp\left\{
-
\frac{
t^2\epsilon_t^4\lambda^2h^2
}{
8dm^2k^2\sigma_{ij}^2
\left[
1+t\epsilon_t^2\lambda/(2mk)
\right]
}
\right\}
\to
0.
\end{align*}
The first term also converges to zero. Thus, for every fixed $h>0$, $\mathbb P\left( \left\| \widehat{\bm\beta}_t^{ij} - \bm\beta^{ij} \right\|_1 > h \right) \to 0$, which proves $\widehat{\bm\beta}_t^{ij} \stackrel{p}{\to} \bm\beta^{ij}$.
\end{proof}

\subsection{Convergence of the Estimated Nash Equilibrium (Theorem \ref{conthm:nash_con})}
\label{app:conthm:nash_con_proof}

\begin{proof}
The proof follows arguments similar to those in
Section~\ref{app:nash_con_proof}. We use the same function $\widetilde{f}(\bm x,\bm y,\widehat M_{t-1}(\bm z_t),\eta_t) = \mathcal L_t^{\mathrm{con}}(\bm x,\bm y)$
as defined previously. In the contextual setting, the payoff matrix $A$ is replaced by the context-dependent matrix $M(\bm z_t)$ or its estimator $\widehat M_{t-1}(\bm z_t)$. By Corollary~\ref{concoro:consistent}, for every $i\in[m]$ and
$j\in[k]$, we have $\Vert \widehat{\bm\beta}_{t}^{ij}-\bm\beta^{ij} \Vert_1 \stackrel{p} {\to}0$. Since $\|\bm z\|_\infty\le L_z$ for all $\bm z\in\mathcal Z$, we have
\[
\sup_{\bm z\in\mathcal Z}
\left|
[\widehat M_{t-1}(\bm z)]_{ij}
-
[M(\bm z)]_{ij}
\right|
\le
L_z
\left\Vert
\widehat{\bm\beta}_{t-1}^{ij}-\bm\beta^{ij}
\right\Vert_1
\stackrel{p}{\to}0.
\]
Therefore,
\[
\sup_{\bm z\in\mathcal Z}
\left\Vert
\widehat M_{t-1}(\bm z)-M(\bm z)
\right\Vert_F
\stackrel{p}{\to}0.
\]
Since $\eta_t\to0$, it follows that
\[
\left\Vert
\left(\widehat M_{t-1}(\bm z_t),\eta_t\right)
- \left(M(\bm z_t),0\right)
\right\Vert \le 
\sup_{\bm z\in\mathcal Z}
\left\Vert
\left(\widehat M_{t-1}(\bm z),\eta_t\right)
- \left(M(\bm z),0\right)
\right\Vert
\stackrel{p}{\to}0.
\]
By Lemma~\ref{lem:adapted-continuity-equilibria}, whose assumptions
(A1)--(A4) have been verified, the equilibrium mappings $X^\ast(\cdot,\cdot)$ and
$Y^\ast(\cdot,\cdot)$ are upper semi-continuous.
Consequently, we get 
$$
\operatorname{dist} \left( (\widehat{\phi}_t(\bm{z_t}), \widehat{\nu}_t(\bm{z_t})),\mathcal{E}(M(\bm{z}_t) \right) \stackrel{p}{\to} 0.
$$
\end{proof}

\subsection{Convergence of the Contextual Behavior Policy (Theorem~\ref{conthm:behave_con})}
\label{app:conthm:nash_behave_proof}

Similar to the arguments in Section~\ref{app:nash_behave_proof}, the proof of Theorem~\ref{conthm:behave_con} is divided into three parts, which establish the contextual exploration probabilities under $\epsilon$-Greedy, UCB, and TS, respectively.
The limiting behavior strategies under different exploration
strategies are given by:
\begin{enumerate}
    \item[\textnormal{(1)}] UCB and TS:
    $\widetilde{\phi}^{\ast}(\bm z) =
    \operatorname{Clip}_{m}
    \left(
    \phi^\ast(\bm z),
    \frac{\epsilon_\infty}{m}
    \right)$,
    $\widetilde{\nu}^{\ast}(\bm z) =
    \operatorname{Clip}_{k}
    \left(
    \nu^\ast(\bm z),
    \frac{\epsilon_\infty}{k}
    \right)$.
    \item[\textnormal{(2)}] $\epsilon$-Greedy:
    $\widetilde{\phi}^{\ast}(\bm z) =
    (1-\epsilon_\infty)\phi^\ast(\bm z)
    + \frac{\epsilon_\infty}{m}\bm 1_m$,
    $\widetilde{\nu}^{\ast}(\bm z) =
    (1-\epsilon_\infty)\nu^\ast(\bm z)
    + \frac{\epsilon_\infty}{k}\bm 1_k$.
\end{enumerate}
In particular, if $\epsilon_\infty=0$, then 
$\Vert
\widetilde{\phi}_t(\bm z_t) -
\phi^\ast(\bm z_t)
\Vert_2
\stackrel{p}{\to}
0$,
$\Vert
\widetilde{\nu}_t(\bm z_t) -
\nu^\ast(\bm z_t)
\Vert_2
\stackrel{p}{\to}
0$.

\subsubsection{Proof of $\epsilon$-Greedy}
By Theorem~\ref{conthm:nash_con} and Assumption~\ref{conass:unique}, we have $\Vert \widehat{\phi}_t(\bm z_t) - \phi^\ast(\bm z_t) \Vert_2 \stackrel{p}{\to} 0$, and $\Vert \widehat{\nu}_t(\bm z_t) - \nu^\ast(\bm z_t) \Vert_2 \stackrel{p}{\to} 0$.
Under the $\epsilon$-Greedy exploration strategy, the conditional behavior
policies are given by
\[
\widetilde{\phi}_t(\bm z_t)
=
(1-\epsilon_t)\widehat{\phi}_t(\bm z_t)
+
\frac{\epsilon_t}{m}\mathbf 1_m,
\quad
\widetilde{\nu}_t(\bm z_t)
=
(1-\epsilon_t)\widehat{\nu}_t(\bm z_t)
+
\frac{\epsilon_t}{k}\mathbf 1_k.
\]
With $\epsilon_t\to\epsilon_\infty$, the Continuous Mapping Theorem implies
\[
\Vert
\widetilde{\phi}_t(\bm z_t) -
\widetilde{\phi}^{\ast}(\bm z_t)
\Vert_2
\stackrel{p}{\to} 0,
\quad
\Vert
\widetilde{\nu}_t(\bm z_t) -
\widetilde{\nu}^{\ast}(\bm z_t)
\Vert_2
\stackrel{p}{\to} 0,
\]
where
\[
\widetilde{\phi}^{\ast}(\bm z_t)
=
(1-\epsilon_\infty)\phi^\ast(\bm z_t)
+
\frac{\epsilon_\infty}{m}\mathbf 1_m,
\quad
\widetilde{\nu}^{\ast}(\bm z_t)
=
(1-\epsilon_\infty)\nu^\ast(\bm z_t)
+
\frac{\epsilon_\infty}{k}\mathbf 1_k.
\]

\subsubsection{Proof of UCB}
\label{app:connash_behave_proof_ucb}
\begin{proof}
For the contextual setting, the upper confidence bound for action pair $(i,j)$
at time $t$ is given by
\[
[\widehat M_{t-1}(\bm z_t)]_{ij} + c_t \widehat \sigma_{t-1}^{ij}(\bm z_t),
\]
where $[\widehat M_{t-1}(\bm z_t)]_{ij} = \bm z_t^\top \widehat{\bm\beta}_{t-1}^{ij}$
and
\[
\widehat \sigma_{t-1}^{ij}(\bm z_t) =
\sqrt{\bm z_t^\top \left(\sum_{s=1}^{t-1}
\mathbb I(i_s=i,j_s=j)\bm z_s\bm z_s^\top\right)^{-1} \bm z_t} = \sqrt{\bm z_t^\top \left(\sum_{s=1}^{t-1} a_{s}^{ij}\bm z_s\bm z_s^\top \right)^{-1} \bm z_t}.
\]
$c_t>0$ is a parameter controlling the exploration level. As what we showed in Section~\ref{app:conthm:nash_con_proof}, it remains to show that the exploration bonus $c_t \widehat \sigma_{t-1}^{ij}(\bm z_t)$ vanishes in probability.
By~\eqref{equ:context-design-event}, for every action pair $(i,j)$,
\[
\mathbb P\left( \lambda_{\min} \left(
\sum_{s=1}^t a_s^{ij}\bm z_s\bm z_s^\top
\right) \le \frac{ t\epsilon_t^2\lambda }{ 2mk } \right) \le d \exp\left\{ - \frac{ t\epsilon_t^2\lambda }{ 8mkL_z^2 } \right\}.
\]
When $\lambda_{\min}(\sum_{s=1}^{t-1} a_{s}^{ij}\bm z_s\bm z_s^\top)
\ge (t-1) \epsilon_{t-1}^2\lambda/2mk$, there is
\[
\lambda_{\max}\left\{
\left(\sum_{s=1}^{t-1} a_{s}^{ij}\bm z_s\bm z_s^\top\right)^{-1}
\right\}
=
\frac{1}{\lambda_{\min}(\sum_{s=1}^{t-1} a_{s}^{ij}\bm z_s\bm z_s^\top)}
\le
\frac{2mk}{(t-1) \epsilon_{t-1}^2 \lambda}.
\]
Therefore, we get
\begin{align*}
c_t
\sqrt{
\bm z_t^\top
\left(\sum_{s=1}^{t-1} a_{s}^{ij}\bm z_s\bm z_s^\top\right)^{-1}
\bm z_t
}
&\le
c_t
\sqrt{
\|\bm z_t\|_2^2
\,\lambda_{\max}\left\{
\left(\sum_{s=1}^{t-1} a_{s}^{ij}\bm z_s\bm z_s^\top\right)^{-1}
\right\}
} \\
&\le
c_t
\sqrt{
dL_z^2
\frac{2mk}{(t-1) \epsilon_{t-1}^2\lambda}
} \\
&=
c_t L_z
\sqrt{
\frac{2dmk}{(t-1) \epsilon_{t-1}^2\lambda}} \to 0,
\end{align*}
since $t \epsilon_t^2\to\infty$.
Similar to the arguments in Section~\ref{app:nash_behave_proof_ucb}, we conclude that
\begin{equation}
\label{equ4}
c_t \sqrt{
\bm z_t^\top
\left(\sum_{s=1}^{t-1} a_{s}^{ij}\bm z_s\bm z_s^\top\right)^{-1}
\bm z_t } \stackrel{p}{\to} 0.
\end{equation}
Thus we obtain $[\widehat M_{t-1}(\bm z_t)]_{ij} + c_t \widehat \sigma_{t-1}^{ij}(\bm z_t)-[M(\bm z_t)]_{ij} \stackrel{p}{\to} 0.$
Again, by arguments analogous to those in Section~\ref{app:nash_con_proof} and Section~\ref{app:nash_behave_proof_ucb}, the clipped behavior policy under UCB converges to the projection of the true contextual equilibrium-induced joint action distribution onto the simplices with lower-bound constraints $\epsilon_\infty/m$ and $\epsilon_\infty/k$.
\end{proof}

\subsubsection{Proof of TS}
\begin{proof}
Under TS, we draw
\[
\widetilde{\bm\beta}_{t}^{ij}
\mid \mathcal H_{t-1}
\sim
\mathcal N\left(
\widehat{\bm\beta}_{t-1}^{ij},
\rho^2 \left(\sum_{s=1}^{t-1} \mathbb I(i_s=i,j_s=j)\bm z_s\bm z_s^\top \right)^{-1}
\right),
\]
Equivalently, we may write
\[
\widetilde{\bm\beta}_{t}^{ij}
= \widehat{\bm\beta}_{t-1}^{ij} +
\rho
\left(\sum_{s=1}^{t-1} a_s^{ij}\bm z_s\bm z_s^\top\right)^{-1/2}\bm\varsigma_t^{ij},
\]
where $\bm\varsigma_t^{ij}\sim \mathcal N_d(0,I_d)$. It remains to show that the posterior sampling error vanishes in probability.
Similar to the arguments to show~\eqref{equ4} in Section~\ref{app:nash_behave_proof_ucb}, we conclude that
\begin{equation*}
\rho \sqrt{
\bm z_t^\top
\left(\sum_{s=1}^{t-1} a_{s}^{ij}\bm z_s\bm z_s^\top\right)^{-1}
\bm z_t } \stackrel{p}{\to} 0.    
\end{equation*}
Since $\|\bm\varsigma_t^{ij}\|_2=O_p(1)$,
it follows that
\[
\rho
\left(\sum_{s=1}^{t-1} a_{s}^{ij} \bm z_s\bm z_s^\top \right)^{-1/2}\bm\varsigma_t^{ij}
\stackrel{p}{\to}
0.
\]
Hence, $\widetilde{\bm\beta}_{t}^{ij} \stackrel{p}{\to} \bm\beta^{ij}$.
Thus we obtain $\bm z_t^\top \widetilde{\bm\beta}_{t}^{ij}-[M(\bm z_t)]_{ij} \stackrel{p}{\to} 0.$
By arguments analogous to those in Section~\ref{app:nash_behave_proof_ucb} and Section~\ref{app:nash_behave_proof_ucb}, the clipped behavior policy under TS converges to the projection of the true contextual equilibrium-induced joint action distribution onto the simplices with lower-bound constraints $\epsilon_\infty/m$ and $\epsilon_\infty/k$.
\end{proof}

\subsection{Asymptotic Normality of Online Estimator (Theorem \ref{conthm:para})}
\label{app:conthm:para_proof}

\begin{proof}
The estimator $\widehat{\bm{\beta}}_t^{ij}$ is given by,
\begin{align*}
    \widehat{\bm{\beta}}_t^{ij} &= \left(\frac{1}{t} \sum_{s=1}^{t} \mathbb{I}\left\{i_s=i,j_s=j \right\} \bm{z}_s \bm{z}_s^{\top}\right)^{-1} \left( \frac{1}{t} \sum_{s=1}^t \mathbb{I}\left\{i_s=i,j_s=j \right\} \bm{z}_s r_s \right) \\
    &= \left(\frac{1}{t} \sum_{s=1}^{t} a_s^{ij} \bm{z}_s \bm{z}_s^{\top}\right)^{-1} \left( \frac{1}{t} \sum_{s=1}^t a_s^{ij} \bm{z}_s r_s \right).
\end{align*}
Then we have
\begin{equation*}
    \sqrt{t}\left(\widehat{\bm{\beta}}_t^{ij} - \bm{\beta}^{ij}\right) = \left(\frac{1}{t} \sum_{s=1}^{t} a_s^{ij} \bm{z}_s \bm{z}_s^{\top}\right)^{-1} \left( \frac{1}{\sqrt{t}} \sum_{s=1}^t a_s^{ij} \bm{z}_s e_s \right).
\end{equation*}
1. First we want to show 
\[
\frac{1}{\sqrt{t}} \sum_{s=1}^t a_s^{ij} \bm{z}_s e_s \xrightarrow{d} \mathcal{N}_d(0, G_{ij}).
\]
Using Cramer-Wold device, it suffices to show for any $\bm{v} \in \mathbb{R}^d$,
\[
\frac{1}{\sqrt{t}} \sum_{s=1}^t a_s^{ij} \bm{v}^\top \bm{z}_s e_s \xrightarrow{d} \mathcal{N}(0, \eta^2 = \bm{v}^\top G_{ij} \bm{v}).
\]
For $1 \le k \le t$ and $t \ge 1$, define $\mathcal{H}^{\mathrm{con}}_{tk} = \mathcal{H}^{\mathrm{con}}_k$ and
\[
\mathcal M^{\mathrm{con}}_{tk} = \frac{1}{\sqrt{t}} \sum_{s=1}^k a_s^{ij} \bm{v}^\top \bm{z}_s e_s.
\]
Notice that $\mathbb{E}(a_s^{ij} \bm{v}^\top \bm{z}_s e_s \mid \mathcal{H}^{\mathrm{con}}_{t,s-1}) = 0$, $\{\mathcal M^{\mathrm{con}}_{tk}, \mathcal{H}^{\mathrm{con}}_{tk}, 1 \le k \le t, t \ge 1\}$ is a martingale array. 

(a) Check the conditional Lindeberg condition. For any $\delta > 0$,
\begin{align*}
&\sum_{s=1}^t \mathbb{E} \left[ \frac{1}{t} a_s^{ij} (\bm{v}^\top \bm{z}_s)^2 e_s^2 \mathbb{I} \left\{ |a_s^{ij} \bm{v}^\top \bm{z}_s e_s| > \delta \sqrt{t} \right\} \middle| \mathcal{H}^{\mathrm{con}}_{t, s-1} \right] \\
\le & \frac{\|\bm{v}\|_2^2 L_z^2 d}{t} \sum_{s=1}^t \mathbb{E} \left( a_s^{ij} e_s^2 \mathbb{I} \left\{ a_s^{ij} e_s^2 > \frac{\delta^2 t}{\|\bm{v}\|_2^2 L_z^2 d} \right\} \middle| \mathcal{H}^{\mathrm{con}}_{s-1} \right) \\
= & \frac{\|\bm{v}\|_2^2 L_z^2 d}{t} \sum_{s=1}^t P(a_s^{ij} = 1 \mid \mathcal{H}^{\mathrm{con}}_{s-1}) \mathbb{E} \left[ e_s^2 \mathbb{I} \left\{e_s^2 > \frac{\delta^2 t}{\|\bm{v}\|_2^2 L_z^2 d} \right\} \middle| \mathcal{H}^{\mathrm{con}}_{s-1} \right] \\
\le & \|\bm{v}\|_2^2 L_z^2 d \mathbb{E} \left[ e_{s(i,j)}^2 \mathbb{I} \left\{ e_{s(i,j)}^2 > \frac{\delta^2 t}{\|\bm{v}\|_2^2 L_z^2 d} \right\} \right],
\end{align*}
where $e_{s(i,j)}=e_s$ if $i_s=i,j_s=j$ and 0 otherwise. $e_s$ conditioned on $i_s,j_s$ are $i.i.d.$ from sub-Gaussian distribution. Since $e_{s(i,j)}^2 \mathbb I \left\{ e_{s(i,j)}^2 > \frac{\delta^2 t}{\|\bm{v}\|_2^2 L_z^2 d} \right\}$ is dominated by $e_{s(i,j)}^2$ with $\mathbb{E} e_{s(i,j)}^2 \le \sigma_{ij}^2$ and it converges to 0 almost surely when \( t \to \infty \). Therefore we can apply Dominated Convergence Theorem and get 
\[
\sum_{s=1}^t \mathbb{E} \left[ \frac{1}{t} a_s^{ij} (\bm{v}^\top \bm{z}_s)^2 e_s^2 \, \mathbb I \left\{ |a_s^{ij} \bm{v}^\top \bm{z}_s e_s| > \delta \sqrt{t} \right\} \middle| \mathcal{H}^{\mathrm{con}}_{t,s-1} \right] \to 0,
\]
as $t \to \infty$.

(b) Find the limit of the conditional variance. The conditional variance is
\begin{align*}
    \widehat{\gamma}_t^2 &\coloneqq \sum_{s=1}^t \mathbb{E} \left[ \frac{1}{t} a_s^{ij} (\bm{v}^\top \bm{z}_s)^2 e_s^2 \middle| \mathcal{H}^{\mathrm{con}}_{t,s-1} \right]
    = \frac{\sigma_{ij}^2}{t} \sum_{s=1}^t \mathbb{E} \left[a_s^{ij} \bm{v}^\top \bm{z}_s \bm{z}_s^\top \bm{v}  | \mathcal{H}^{\mathrm{con}}_{s-1} \right].
\end{align*}
By the law of iterated expectations over $\bm{z}_s$, we have
\begin{align*}
    \widehat{\gamma}_t^2 & = \frac{\sigma_{ij}^2}{t} \sum_{s=1}^t \mathbb{E}\left[ \mathbb{P}(i_s=i,j_s=j) \bm{v}^\top \bm{z}_s \bm{z}_s^\top \bm{v}  | \mathcal{H}^{\mathrm{con}}_{s-1} \right] 
    = \frac{\sigma_{ij}^2}{t} \sum_{s=1}^t \mathbb{E}\left[[\widetilde{\phi}_s(\bm{z}_s)]_i [\widetilde{\nu}_s(\bm{z}_s)]_j \bm{v}^\top \bm{z}_s \bm{z}_s^\top \bm{v}  | \mathcal{H}^{\mathrm{con}}_{s-1} \right].
\end{align*}
By Theorem~\ref{conthm:behave_con}, we have $\Vert \widetilde{\phi}_s(\bm{z}_s) - \widetilde \phi^\ast(\bm{z}_s) \Vert_2 \stackrel{p}{\to} 0$, and $\Vert \widetilde{\nu}_s(\bm{z}_s) - \widetilde \nu^\ast(\bm{z}_s) \Vert_2 \stackrel{p}{\to} 0$. Since the last equation is bounded by $d\Vert \bm{v} \Vert_2^2 L_z^2$, by Dominated Convergence Theorem and Continuous Mapping Theorem, we will get
\begin{align*}
    \mathbb{E}\left[[\widetilde{\phi}_s(\bm{z}_s)]_i [\widetilde{\nu}_s(\bm{z}_s)]_j \bm{v}^\top \bm{z}_s \bm{z}_s^\top \bm{v}  | \mathcal{H}^{\mathrm{con}}_{s-1} \right] \stackrel{p}{\to} \, \bm{v}^\top \mathbb{E}\left[ [\widetilde \phi^\ast(\bm{z}_s)]_i [\widetilde \nu^\ast(\bm{z}_s)]_j \bm{z}_s \bm{z}_s^\top | \mathcal{H}^{\mathrm{con}}_{s-1} \right] \bm{v} .
\end{align*}
Hence, we obtain 
\begin{align*}
    \widehat{\gamma}_t^2 & \stackrel{p}{\to} \sigma_{ij}^2 \bm{v}^\top \mathbb{E}\left[ [\widetilde \phi^\ast(\bm{z}_s)]_i [\widetilde \nu^\ast(\bm{z}_s)]_j \right] \bm{v} 
    = \sigma_{ij}^2 \bm{v}^\top \left\{ \int [\widetilde \phi^\ast(\bm{z}_s)]_i [\widetilde \nu^\ast(\bm{z}_s)]_j \bm{z} \bm{z}^\top \,d\mathcal{P}_z \right\}\bm{v} \equiv \gamma^2
\end{align*}
It follows by Martingale Central Limit Theorem that
\[
\frac{1}{\sqrt{t}} \sum_{s=1}^t a_s^{ij} \bm{z}_s e_s \xrightarrow{d} \mathcal{N}_d(0, G_{ij}),
\]
with
\[
G_{ij} = \sigma_{ij}^2 \int [\widetilde \phi^\ast(\bm{z}_s)]_i [\widetilde \nu^\ast(\bm{z}_s)]_j \bm{z} \bm{z}^\top \,d\mathcal{P}_z.
\]
2. Next, we want to find the limit of the second moment term. Using Lemma~\ref{lem:matrix-vector}, we first find the limit of 
\[
\widehat{\psi}_t \coloneqq \frac{1}{t} \sum_{s=1}^{t} a_s^{ij} \bm{v}^\top \bm{z}_s \bm{z}_s^{\top} \bm{v},
\]
for any $\bm{v} \in \mathbb{R}^d$. Let $\bm{z} \sim \mathcal{P}_z$, we have $\mathbb{E}(\bm{v}^T \bm{z}\bm{z}^T \bm{v}) \leq \bm{v}^T 11^T \bm{v} L_z^2 < \infty$. Also, for any $h \geq 0$ and each $s$,  
$\mathbb{P}(a_s^{ij} \bm{v}^T \bm{z}_s \bm{z}_s^T \bm{v} > h) \leq \mathbb{P}(\bm{v}^T \bm{z}\bm{z}^T\bm{v} > h).$ Then by Theorem 2.19 from \citet{hall1980martingale},  
\begin{equation*}
    \frac{1}{t} \sum_{s=1}^t \left\{ a_s^{ij} \bm{v}^T \bm{z}_s \bm{z}_s^T \bm{v} - \mathbb{E} \left( a_s^{ij} \bm{v}^T \bm{z}_s \bm{z}_s^T \bm{v} | \mathcal{H}^{\mathrm{con}}_{s-1} \right) \right\} = \widehat{\psi}_t - \frac{1}{\sigma_{ij}^2} \widehat{\gamma}_t^2 \overset{p}{\to} 0
\end{equation*}
Based on our results in the first part, we know that $\widehat{\psi}_t \overset{p}{\to} \gamma^2 / \sigma_{ij}^2$. By Lemma~\ref{lem:matrix-vector} and Continuous Mapping Theorem,
\[
\left(\frac{1}{t} \sum_{s=1}^{t} a_s^{ij} \bm{z}_s \bm{z}_s^{\top} \right)^{-1} \overset{p}{\to} \sigma_{ij}^2G_{ij}^{-1}.
\]
Finally, combine above results using Slutsky Theorem, we have 
\[
\sqrt{t}\left(\widehat{\bm{\beta}}_t^{ij} - \bm{\beta}^{ij}\right) \overset{D}{\to} \mathcal{N}_d \left( 0, \sigma_{ij}^2 \left\{ \int [\widetilde \phi^\ast(\bm{z})]_i [\widetilde \nu^\ast(\bm{z})]_j \bm{z} \bm{z}^\top \,d\mathcal{P}_z \right\}^{-1} \right).
\]
The consistency of the variance estimator needs to show
\[
\frac{\sum_{s=1}^{t} a_s^{ij}\widehat{e}_s^2}{\sum_{s=1}^{t} a_s^{ij}} \left( \frac{1}{t} \sum_{s=1}^{t} a_s^{ij} \bm{z}_s \bm{z}_s^T \right)^{-1} \overset{P}{\rightarrow} S_{ij}^2.
\]
Using the results we already have, it suffices to prove
\[
\widehat{\sigma}_{ij}^2 := \frac{\sum_{s=1}^{t} a_s^{ij}\widehat{e}_s^2}{\sum_{s=1}^{t} a_s^{ij}} = \frac{\sum_{s=1}^{t} a_s^{ij} \left\{(\bm{\beta}^{ij} - \widehat{\bm{\beta}}_t^{ij})^\top \bm{z}_s + e_s \right\}^2}{\sum_{s=1}^{t} a_s^{ij}} \overset{P}{\rightarrow} \sigma_{ij}^2.
\]
There are three terms in $\widehat{\sigma}_{ij}^2$ after expanding the square term, 
\[
\widehat{\sigma}_{ij}^2 = \frac{\sum_{s=1}^{t} a_s^{ij} \left[\left((\bm{\beta}^{ij} - \widehat{\bm{\beta}}_t^{ij})^\top \bm{z}_s\right)^2+2(\bm{\beta}^{ij} - \widehat{\bm{\beta}}_t^{ij})^\top \bm{z}_se_s+e_s^2\right]}{\sum_{s=1}^{t} a_s^{ij}}.
\]
The first term 
\[
(\bm{\beta}^{ij} - \widehat{\bm{\beta}}_t^{ij})^\top \frac{\sum_{s=1}^t a_s^{ij} \bm{z}_s \bm{z}_s^T}{\sum_{s=1}^t a_s^{ij}} (\bm{\beta}^{ij} - \widehat{\bm{\beta}}_t^{ij}) \leq L_z^2 \| (\bm{\beta}^{ij} - \widehat{\bm{\beta}}_t^{ij})\|_1^2 \overset{P}{\to} 0,
\]
by Corollary \ref{concoro:consistent}. Define $e_{s(i,j)}, s = 1, \dots, t$ to be i.i.d. random variables, then the second term 
\[
2(\bm{\beta}^{ij} - \widehat{\bm{\beta}}_t^{ij})^\top \frac{\sum_{s=1}^t a_s^{ij} \bm{z}_s e_{s(i,j)}}{\sum_{s=1}^t a_s^{ij}} \overset{P}{\to} 0,
\]
since $\widehat{\bm{\beta}}_t^{ij} \stackrel{p}{\to}\bm{\beta}^{ij}$ by Corollary \ref{concoro:consistent}, and $(\sum_{s=1}^t a_s^{ij})^{-1} \sum_{s=1}^t a_s^{ij} \bm{z}_s e_{s(i,j)} \overset{P}{\to} 0$ by Lemma~\ref{lemma1_in_chen}. Finally, the last term
\[
\frac{\sum_{s=1}^t a_s^{ij}(e_{s(i,j)})^2}{\sum_{s=1}^t a_s^{ij}} \overset{P}{\to} \mathbb{E}(e_{s(i,j)})^2 = \sigma_{ij}^2
\]
is by Weak Law of Large Numbers. Combining all three terms with Continuous Mapping Theorem gives the result.
\end{proof}

\subsection{Convergence Rate of Saddle Value (Corollary~\ref{conthm:NEvalue})}
\label{app:conthm:NEvalue_proof}

\begin{proof}
Following the proof in Section \ref{app:saddlevalue_rate}, we adopt the same function 
\[
g(\bm{x},\bm{y},\bm{t})=g(\bm{x},\bm{y},(\widehat{M}_{t-1}(\bm{z}_t),\eta_t)) = {\mathcal{L}}_t^{\mathrm{con}}(\bm{x},\bm{y})
\]
as defined previously, where in the contextual setting we substitute $A$ with $M(\bm{z}_t)$ or $\widehat{M}_{t-1}(\bm{z}_t)$. Let $\bm{t}_1 = (\widehat{M}_{t-1}(\bm{z}_t),\eta_t)$ and $\bm{t}_2 = (M(\bm{z}_t),0)$.

Applying analogous arguments yields $\vert {\mathcal{L}}_t^{\mathrm{con}}(\widehat{\phi}_t(\bm{z}_t),\widehat{\nu}_t(\bm{z}_t)) - \phi^\ast(\bm{z}_t)^\top M(\bm{z}_t) \nu^\ast(\bm{z}_t) \vert \le C_g^\prime \Vert \bm{t}_1 - \bm{t}_2 \Vert_2$ for some constant $C_g^\prime >0$, where
\begin{align*}
    \Vert \bm{t}_1 - \bm{t}_2 \Vert_2 &= \sqrt{\sum_{i=1}^m\sum_{j=1}^k \left([\widehat{M}_{t-1}(\bm{z_t})]_{ij}-[M(\bm{z}_t)]_{ij}\right)^2 + \eta_t^2} \\
    &= \sqrt{\sum_{i=1}^m\sum_{j=1}^k \left(\bm{z}_t^\top(\widehat{\bm{\beta}}_{t-1}^{ij}-\bm{\beta}^{ij}) \right)^2 + \eta_t^2}\\
    & \le \sqrt{L_z^2\sum_{i=1}^m\sum_{j=1}^k \Vert\widehat{\bm{\beta}}_{t-1}^{ij}-\bm{\beta}^{ij}\Vert_2^2 + \eta_t^2}\\
    &= O_p\left((t^{-1}+\eta_t^2)^{1/2}\right).
\end{align*}
Here, the final rate follows from Theorem \ref{conthm:para}, which establishes $\Vert \widehat{\bm{\beta}}_{t-1}^{ij}-\bm{\beta}^{ij}\Vert_2 = O_p(t^{-1/2})$ through the asymptotic distribution of the estimators. Consequently, we obtain the desired convergence rate for the saddle value
\[
\vert {\mathcal{L}}_t^{\mathrm{con}}(\widehat{\phi}_t(\bm{z}_t),\widehat{\nu}_t(\bm{z}_t)) - \phi^\ast(\bm{z}_t)^\top M(\bm{z}_t) \nu^\ast(\bm{z}_t) \vert = O_p\left((t^{-1}+\eta_t^2)^{1/2}\right).
\]
\end{proof}

\subsection{$\sqrt{T}$-consistency of Contextual Saddle Value Estimator (Theorem \ref{conthm:drrate})}
\label{app:conthm:drrate_proof}

Here we first state the formal scenario of Assumption~\ref{conass:margin}.
\begin{assumption}[{\emph{Uniform Margin: Formal Statement}}]
\label{conass:margin_formal}
For any $\bm{z} \sim \mathcal{P}_z$, one of the following conditions holds:
\begin{enumerate}
\item[\textnormal{(i)}]
There exist a constant $C_l^\prime>0$ such that for all $\bm{z}$, $i \in [m]$ and $j \in [k]$, $\phi^\ast(\bm{z}) > C_l^\prime$ and $\nu^\ast(\bm{z})>C_l^\prime$.
\item[\textnormal{(ii)}]
Let $(\bar{\bm x}^\ast,\bar{\bm y}^\ast)$ denote the Nash equilibrium of the regularized contextual game associated with any payoff matrix $M^\prime$ and regularization parameter $\eta\ge 0$. There exists a constant $\delta>0$ such that, whenever
\[
\sup_{\bm z\in\mathcal Z}
\Vert
\left(M^\prime,\eta\right) -
\left(M(\bm z),0\right)
\Vert_2
< \delta,
\]
we have
\[
[\phi^\ast(\bm z)]_{i_0}=0
\ \to\
\bar{\bm x}^\ast_{i_0}=0,
\qquad
[\nu^\ast(\bm z)]_{j_0}=0
\ \to\
\bar{\bm y}^\ast_{j_0}=0,
\]
for all $\bm z\in\mathcal Z$, $i_0\in[m]$, and $j_0\in[k]$.
\end{enumerate}
\end{assumption}

\begin{proof}

The proof is similar as those in Section \ref{app:dr}. We need several steps to prove the square root rate of convergence. We define 
\begin{align*}
    \bar{V}_T &\equiv \frac{1}{T}\sum_{t=1}^T \widehat{\phi}_t(\bm{z}_t)^\top \widehat{M}_{t-1}(\bm{z}_t)\widehat{\nu}_t(\bm{z}_t) + \frac{[\widehat{\phi}_t(\bm{z}_t)]_{i_t}[\widehat{\nu}_t(\bm{z}_t)]_{j_t}}{{\pi}_t^{\mathrm{con}}(i_t,j_t \vert \bm{z}_t)}(r_t - [M(\bm{z}_t)]_{i_t,j_t}), \\
\end{align*}

\textbf{Step 1:} We first show $\widehat{V}_T^\text{conDR} - \bar{V}_T = O_p(T^{-1/2})$.

\begin{align*} 
       \widehat{V}_T^\text{conDR} - \bar{V}_T &= \frac{1}{T}\sum_{t=1}^T \frac{[\widehat{\phi}_t(\bm{z}_t)]_{i_t}[\widehat{\nu}_t(\bm{z}_t)]_{j_t}}{{\pi}_t^{\mathrm{con}}(i_t,j_t \vert \bm{z}_t)} \left[M(\bm{z}_t) - [\widehat{M}_{t-1}(\bm{z}_t)\right]_{i_t,j_t} \\
    &= \frac{1}{T}\sum_{t=1}^T \frac{[\widehat{\phi}_t(\bm{z}_t)]_{i_t}[\widehat{\nu}_t(\bm{z}_t)]_{j_t}}{[\widetilde{\phi}_t(\bm{z}_t)]_{i_t}[\widetilde{\nu}_t(\bm{z}_t)]_{j_t}} \bm{z}_t^\top \left(\widehat{\bm{\beta}}_{t-1}^{ij}-\bm{\beta}^{ij}  \right) \\
    &\le L_z \frac{1}{T}\sum_{t=1}^T \frac{[\widehat{\phi}_t(\bm{z}_t)]_{i_t}[\widehat{\nu}_t(\bm{z}_t)]_{j_t}}{[\widetilde{\phi}_t(\bm{z}_t)]_{i_t}[\widetilde{\nu}_t(\bm{z}_t)]_{j_t}} \left\Vert \left(\widehat{\bm{\beta}}_{t-1}^{ij}-\bm{\beta}^{ij} \right) \right\Vert_2.
\end{align*}

We now consider the three exploration strategies separately. 
\paragraph{$\epsilon$-Greedy.}
Similar as the arguments what we used in Section~\ref{app:dr}, since $\epsilon_t\to\epsilon_\infty<1$, there exist constants $C_\epsilon$ and $T_0<\infty$ such that for all $t\ge T_0$, there is
\[
\frac{
[\widehat{\phi}(\bm z_t)]_i
[\widehat{\nu}(\bm z_t)]_j
}{\pi_t^\mathrm{con}(i,j | \bm z_t)
}
\le C_\epsilon^\prime
\]
uniformly over all action pairs $(i,j)$ and all sufficiently large $t$.

\paragraph{UCB and TS.}
We next consider UCB and TS. Analogous to the arguments in Section~\ref{conthm:behave_con}, the corresponding contextual payoff matrices satisfy
\[
\sup_{\bm z \sim P_z}
\Vert
\bm z ^\top \widetilde{\bm{\beta}}_t^{ij}-M(\bm z) 
\Vert_F \le L_z 
\Vert
\widetilde{\bm{\beta}}_t^{ij}- {\bm{\beta}}^{ij}
\Vert_2
\stackrel{\mathrm{a.s.}}{\to}
0,
\]
where $\widetilde{\bm{\beta}}_t^{ij}$ is the estimated parameter  for the behavior strategies under UCB or TS. Since $\eta_t\to0$, the equilibrium mapping is continuous, and the clipping operator is jointly continuous,
\[
\sup_{\bm z\in\mathcal Z}
\left\|
\widetilde{\phi}_t(\bm z)
-
\operatorname{Clip}_m
\left(
\phi^\ast(\bm z),
\frac{\epsilon_\infty}{m}
\right)
\right\|_2
\stackrel{\mathrm{a.s.}}{\to}
0,
\]
and
\[
\sup_{\bm z\in\mathcal Z}
\left\|
\widetilde{\nu}_t(\bm z)
-
\operatorname{Clip}_k
\left(
\nu^\ast(\bm z),
\frac{\epsilon_\infty}{k}
\right)
\right\|_2
\stackrel{\mathrm{a.s.}}{\to}
0.
\]

Consequently, under either condition of Assumption~\ref{conass:margin_formal}, there exist a finite constant $C_\epsilon^\prime$ and an almost-surely finite time $T_0$ such that
\[
\frac{
[\widehat{\phi}(\bm z_t)]_i
[\widehat{\nu}(\bm z_t)]_j
}{
\pi_t^\mathrm{con}(i,j\mid\bm z_t)
}
\le C_\epsilon^\prime
\]
uniformly over all $i\in[m]$, $j\in[k]$, and $t\ge T_0$.
Thus we get $$\widehat{V}_T^\text{conDR} - \bar{V}_T =  O_p\left(\sum_{t=1}^T \left(\widehat{\bm{\beta}}_{t-1}^{ij}-\bm{\beta}^{ij} \right)\right)=O_p(T^{-1/2})$$ by Theorem \ref{conthm:para}.

\textbf{Step 2:} Then we will show $\bar{V}_T - V^\ast = O_p(T^{-1/2})$.

\begin{equation}
\label{barV_T - V_star}
    \bar{V}_T - V^\ast = \frac{1}{T}\sum_{t=1}^T \widehat{\phi}_t(\bm{z}_t)^\top \widehat{M}_{t-1}(\bm{z}_t)\widehat{\nu}_t(\bm{z}_t) - \mathbb{E} \left\{\phi^\ast(\bm{z})^\top M(\bm{z})\nu^\ast(\bm{z})\right\} + \frac{[\widehat{\phi}_t(\bm{z}_t)]_{i_t}[\widehat{\nu}_t(\bm{z}_t)]_{j_t}}{{\pi}_t^{\mathrm{con}}(i_t,j_t \vert \bm{z}_t)}(r_t - [M(\bm{z}_t)]_{i_t,j_t}).
\end{equation}
We will decompose the Equation \eqref{barV_T - V_star} into two parts, where
\begin{equation*}
    J_1 \coloneqq \frac{1}{T}\sum_{t=1}^T \widehat{\phi}_t(\bm{z}_t)^\top \widehat{M}_{t-1}(\bm{z}_t)\widehat{\nu}_t(\bm{z}_t) - \mathbb{E} \left\{\phi^\ast(\bm{z})^\top M(\bm{z})\nu^\ast(\bm{z})\right\},
\end{equation*}
and
\begin{equation*}
    J_2 \coloneqq \frac{1}{T}\sum_{t=1}^T \frac{[\widehat{\phi}_t(\bm{z}_t)]_{i_t}[\widehat{\nu}_t(\bm{z}_t)]_{j_t}}{{\pi}_t^{\mathrm{con}}(i_t,j_t \vert \bm{z}_t)}(r_t - [M(\bm{z}_t)]_{i_t,j_t}).
\end{equation*}
We analyze $J_1$ first. Add and minus the same term, we have
\begin{align}
    J_1 =& \frac{1}{T}\sum_{t=1}^T \widehat{\phi}_t(\bm{z}_t)^\top \widehat{M}_{t-1}(\bm{z}_t)\widehat{\nu}_t(\bm{z}_t) - \phi^\ast(\bm{z}_t)^\top M(\bm{z}_t)\nu^\ast(\bm{z}_t) + \phi^\ast(\bm{z}_t)^\top M(\bm{z}_t)\nu^\ast(\bm{z}_t)-
    \mathbb{E} \left\{\phi^\ast(\bm{z})^\top M(\bm{z})\nu^\ast(\bm{z})\right\} \\
    =& \frac{1}{T}\sum_{t=1}^T \widehat{\phi}_t(\bm{z}_t)^\top \widehat{M}_{t-1}(\bm{z}_t)\widehat{\nu}_t(\bm{z}_t) - \phi^\ast(\bm{z}_t)^\top M(\bm{z}_t)\nu^\ast(\bm{z}_t) \label{equ:J3}\\
    +& \frac{1}{T}\sum_{t=1}^T \phi^\ast(\bm{z}_t)^\top M(\bm{z}_t)\nu^\ast(\bm{z}_t)-
    \mathbb{E} \left\{\phi^\ast(\bm{z})^\top M(\bm{z})\nu^\ast(\bm{z})\right\}\label{equ:J4}
\end{align}
By Corollary \ref{conthm:NEvalue} and $\eta_t = O_p(t^{-1/2})$, Equation \eqref{equ:J3} is bounded by $\frac{1}{T}\sum_{t=1}^TO_p(t^{-1/2})$, which becomes $O_p(T^{-1/2})$. Because $\phi^\ast(\bm{z}_1)^\top M(\bm{z}_1)\nu^\ast(\bm{z}_1), \dots, \phi^\ast(\bm{z}_T)^\top M(\bm{z}_T)\nu^\ast(\bm{z}_T)$ can be considered as $i.i.d$ samples, by Central Limit Theorem, Equation \eqref{equ:J4} is $O_p(T^{-1/2})$. Hence, we know that $J_1 = O_p(T^{-1/2})$.

Next, we consider $J_2$. The argument is analogous to the non-contextual case in Step 2 of Section~\ref{app:dr}. Here we define
\[
Z_t = \frac{
[\widehat{\phi}_t(\bm z_t)]_{i_t}
[\widehat{\nu}_t(\bm z_t)]_{j_t}
}{
\pi_t^\mathrm{con}(i_t,j_t\mid\bm z_t)
}
\left(
r_t-\bm z_t^\top\bm\beta^{i_t,j_t}
\right).
\]
Since
\[
\mathbb E\left(
r_t-\bm z_t^\top\bm\beta^{i_t,j_t}
\mid
\bm z_t,i_t,j_t,\mathcal H_{t-1}^{\mathrm{con}}
\right)
=
0,
\]
we have
\[
\mathbb E\left(
Z_t
\mid
\mathcal H_{t-1}^{\mathrm{con}}
\right)
=
0.
\]
Thus, $\{Z_t,\mathcal H_t^{\mathrm{con}}\}_{t\ge1}$ is a martingale
difference sequence. By the eventual uniform bound on the importance ratios and
the boundedness of the conditional noise variances, there exists a constant
$C_\epsilon^\prime<\infty$ such that
\[
\mathbb E\left(
Z_t^2
\mid
\mathcal H_{t-1}^{\mathrm{con}}
\right)
\le C_\epsilon^\prime
\]
almost surely for all sufficiently large $t$. Therefore,
\[
\frac{1}{T}
\sum_{t=1}^T Z_t
=
O_p(T^{-1/2}).
\]
Then we get $J_2 = O_p(T^{-1}) = o_p(T^{-1/2})$. With $J_1=O_p(T^{-1/2})$ shown before, Equation \eqref{barV_T - V_star} can be bounded by $O_p(T^{-1/2})$. Hence we have proved
\[
\bar{V}_T - V^\ast= O_p(T^{-1/2}).
\]
Therefore, $\widehat{V}_T^\text{conDR} - V^\ast = O_p(T^{-1/2})$. The proof of Theorem \ref{conthm:drrate} is completed.
\end{proof}

\section{Analysis of Contextual Matrix games under Model Misspecification}
\label{app:sec:mis_model}
Under model misspecification, we use the inverse-probability-weighted least-squares estimator
\begin{equation}
\label{equ:wls_context_misspecified}
\widehat{\bm\beta}_{\dagger,t}^{ij}
=
\left\{
\frac{1}{t}
\sum_{s=1}^t
\frac{
a_{\dagger,s}^{ij}
}{
\pi_{\dagger,s}^{ij}
}
\bm z_s\bm z_s^\top
\right\}^{-1}
\left\{
\frac{1}{t}
\sum_{s=1}^t
\frac{
a_{\dagger,s}^{ij}
}{
\pi_{\dagger,s}^{ij}
}
\bm z_sr_s
\right\}.
\end{equation}
We impose the following regularity conditions.

\begin{assumption}[\emph{Bounded linear approximation error}]
\label{ass:context_misspecified}
The approximation error is uniformly bounded:
\[
\left|
\varphi^{ij}(\bm z) - \bm z^\top\bm\beta_\dagger^{ij}
\right|
\le
L_\varphi
\]
for any $\bm z \sim P_z$.
\end{assumption}

\begin{theorem}[\textbf{\emph{Tail bound for the contextual WLS estimator under model misspecification}}]
\label{conthm:wls_tail_misspecified}
Suppose that Assumption~\ref{ass:bound} and Assumption~\ref{ass:context_misspecified} hold, we have
\begin{align*}
\mathbb P
\left(
\left\|
\widehat{\bm\beta}_{\dagger,t}^{ij}
-
\bm\beta_\dagger^{ij}
\right\|_1
>h
\right)
&\le
d\exp
\left\{
-
\frac{t\epsilon_t^2\lambda}{8mkL_z^2}
\right\}
+
2d\exp
\left\{
-
\frac{t\epsilon_t^2\lambda^2h^2}{32mkd^2
\left(L_z^2L_\varphi^2
+ {L_zL_\varphi\lambda h}/{12d}
\right)
}
\right\}
\\
&\quad+
2d
\exp
\left[
-\frac{t\epsilon_t^2}{4mk}
\min
\left\{
\frac{
\lambda^2h^2}{16d^2L_z^2\sigma_{ij}^2},
\frac{
\lambda h}{4dL_z\sigma_{ij}}
\right\}
\right].
\end{align*}
\end{theorem}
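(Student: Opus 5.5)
We decompose the estimation error of the inverse-probability-weighted estimator \eqref{equ:wls_context_misspecified}. Write $w_s=a_{\dagger,s}^{ij}/\pi_{\dagger,s}^{ij}$, $\widehat\Sigma_t=t^{-1}\sum_{s}w_s\bm z_s\bm z_s^\top$, and
\[
r_s=\bm z_s^\top\bm\beta_\dagger^{ij}+b_s+e_s,\qquad b_s=\varphi^{ij}(\bm z_s)-\bm z_s^\top\bm\beta_\dagger^{ij}.
\]
This gives
\[
\widehat{\bm\beta}_{\dagger,t}^{ij}-\bm\beta_\dagger^{ij}=\widehat\Sigma_t^{-1}\Bigl(\tfrac1t\textstyle\sum_s w_s\bm z_s b_s+\tfrac1t\sum_s w_s\bm z_s e_s\Bigr).
\]
The proof then has three parts, one for each exponential term in the bound: a design (eigenvalue) event, an approximation-bias term, and a noise term. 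We bound $\|\cdot\|_1\le \sqrt d\|\cdot\|_2$, or coordinatewise, and then apply a union bound over the $d$ coordinates. This is where the $2d$ prefactors come from.

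\emph{Step 1 (design).} We work on the event $\{\lambda_{\min}(\sum_s a_{\dagger,s}^{ij}\bm z_s\bm z_s^\top)>t\epsilon_t^2\lambda/(2mk)\}$, which fails with probability at most $d\exp\{-t\epsilon_t^2\lambda/(8mkL_z^2)\}$ by \eqref{equ:context-design-event}; this gives the first term. Because $w_s\ge a_{\dagger,s}^{ij}$, the weighted Gram matrix dominates the unweighted one. Alternatively, one can use directly that $\mathbb E(w_s\bm z_s\bm z_s^\top\mid\mathcal H_{s-1})=\Sigma\succeq\lambda I$. Either way, on this event $\|\widehat\Sigma_t^{-1}\|_2\lesssim 2/\lambda$, up to the $\epsilon_t^2/mk$ scaling that we carry through. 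It then suffices to show that each coordinate of the two averaged vectors exceeds roughly $\lambda h/(4d)$ only with the stated probabilities.

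\emph{Step 2 (bias term).} The key point is that $\bm\beta_\dagger^{ij}$ is least-false. Its first-order condition gives $\mathbb E[\bm z(\varphi^{ij}(\bm z)-\bm z^\top\bm\beta_\dagger^{ij})]=0$. Since $\bm z_s$ is drawn fresh and $\mathbb E(w_s\mid\bm z_s,\mathcal H_{s-1})=1$, the summands $w_s z_{s\ell}b_s$ form a martingale difference sequence. Their magnitude is at most $L_zL_\varphi\,mk/\epsilon_t^2$ by Assumption~\ref{ass:context_misspecified} and the positivity bound $\pi\ge\epsilon_t^2/mk$. Their conditional variance is at most $L_z^2L_\varphi^2\,mk/\epsilon_t^2$, because $\mathbb E(w_s^2\mid\cdot)=1/\pi$. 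A Freedman (martingale Bernstein) inequality applied to each coordinate yields the second term, whose denominator has the Bernstein form (variance plus range$\cdot h$ over 3). The constant $12d$ arises from threshold $\lambda h/(4d)$ times $1/3$.

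\emph{Step 3 (noise term).} The summands $w_s z_{s\ell}e_s$ are again martingale differences. Here $e_s$ is sub-Gaussian with proxy $\sigma_{ij}$, and the weight is bounded by $mk/\epsilon_t^2$, so each summand is conditionally sub-exponential with parameters of order $(L_z\sigma_{ij}\sqrt{mk}/\epsilon_t,\ L_z\sigma_{ij}mk/\epsilon_t^2)$. A Bernstein-type martingale inequality for sub-exponential increments then gives the $\min\{\cdot^2,\cdot\}$ form of the third term.

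We expect the \emph{main obstacle} to be Steps 2 and 3: controlling the heavy inverse-propensity weights while keeping the exact exponents. In particular we must check that the variance proxy scales like $mk/\epsilon_t^2$ rather than $(mk/\epsilon_t^2)^2$, and choose the splitting threshold $h/2$ between the bias and noise terms consistently with the stated constants.
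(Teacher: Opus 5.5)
Your Step~1 does not work along the route you present first; the rest of the plan is the paper's proof. Specifically, the decomposition, the reduction to coordinate thresholds $\lambda h/(4d)$ with a union bound over $\ell\in[d]$, the Freedman bound for the approximation-error score, and the Chernoff bound for the noise score all match. The Freedman bound rests on the least-false first-order condition plus $\mathbb E(w_s\mid\bm z_s,\mathcal H_{s-1}^{\mathrm{con}})=1$, with range $mkL_zL_\varphi/\epsilon_t^2$ and variance $mkL_z^2L_\varphi^2/\epsilon_t^2$. The two score bounds together are Lemma~\ref{lem:context_ipw_score_concentration}.

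\textbf{The unweighted route falls short.} The two routes you offer in Step~1 are not interchangeable, and the claim that ``either way'' $\|\widehat\Sigma_t^{-1}\|_2\lesssim 2/\lambda$ is false for the first one. On the unweighted event $\{\lambda_{\min}(\sum_s a_{\dagger,s}^{ij}\bm z_s\bm z_s^\top)>t\epsilon_t^2\lambda/(2mk)\}$, the domination $w_s\ge a_{\dagger,s}^{ij}$ only yields $\lambda_{\min}(\widehat\Sigma_t)>\epsilon_t^2\lambda/(2mk)$. That gives $\|\widehat\Sigma_t^{-1}\|_2<2mk/(\epsilon_t^2\lambda)$, not $2/\lambda$. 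Carrying this factor through shrinks the coordinate threshold to $\epsilon_t^2\lambda h/(4dmk)$. The exponents of the second and third terms then degrade by factors between $\epsilon_t^2/(mk)$ and $(\epsilon_t^2/(mk))^2$. That is not the stated bound. For $\epsilon_t\asymp t^{-1/4}$ it would not even tend to zero, since the relevant exponent scales like $t\epsilon_t^6\to 0$.

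\textbf{The weighted route is what you need.} Apply Lemma~\ref{lem:adapted-matrix-chernoff} directly to $Z_s=w_s\bm z_s\bm z_s^\top$. These matrices satisfy $\mathbb E(Z_s\mid\mathcal H_{s-1}^{\mathrm{con}})=\Sigma\succeq\lambda I_d$ and $Z_s\preceq (mkL_z^2/\epsilon_t^2)I_d$ by positivity. Taking $\mu_t=t\lambda$, $R=mkL_z^2/\epsilon_t^2$ and $\delta=1/2$ gives $\lambda_{\min}(\widehat\Sigma_t)>\lambda/2$ outside an event of probability $d\exp\{-t\epsilon_t^2\lambda/(8mkL_z^2)\}$. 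This is numerically the same first term as the unweighted bound, which is probably why the routes looked equivalent. Now, however, you get $\|\widehat\Sigma_t^{-1}\|_2\le 2/\lambda$, which is exactly what the thresholds $\lambda h/(4d)$ in Steps~2--3 require. This is the paper's argument. With that choice fixed, and the noise step's constants obtained from the explicit conditional MGF bound valid for $|q|\le\epsilon_t^2/(2mkL_z\sigma_{ij})$, your plan reproduces the stated inequality.
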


\begin{proof}
Define the weighted design matrix
\[
\widetilde\Sigma_t^{ij}
=
\frac1t
\sum_{s=1}^t
\frac{a_{\dagger,s}^{ij}}{\pi_{\dagger,s}^{ij}
}
\bm z_s\bm z_s^\top.
\]
By the definition of the contextual WLS estimator,
\begin{align*}
\widehat{\bm\beta}_{\dagger,t}^{ij}-\beta_\dagger^{ij} =
\left(
\widetilde\Sigma_t^{ij}
\right)^{-1}
\left[
\frac1t
\sum_{s=1}^t
\frac{a_{\dagger,s}^{ij}}{\pi_{\dagger,s}^{ij}}
\bm z_s
\left(\varphi^{ij}(\bm z_s) - \bm z_s^\top\bm\beta_\dagger^{ij} \right)
+
\frac1t
\sum_{s=1}^t
\frac{a_{\dagger,s}^{ij}}{\pi_{\dagger,s}^{ij}}
\bm z_s e_s
\right].
\end{align*}
We first control the minimum eigenvalue of the weighted design matrix.
Since $\|\bm z_s\|_2\le L_z$, we have
\[
0
\preceq
\frac{ a_{\dagger,s}^{ij} }{ \pi_{\dagger,s}^{ij} } \bm z_s\bm z_s^\top
\preceq
\frac{L_z^2}{\epsilon_t^2/mk}I_d.
\]
Moreover,
\begin{align*}
\mathbb E
\left[
\left.
\frac{ a_{\dagger,s}^{ij} }{ \pi_{\dagger,s}^{ij} } \bm z_s\bm z_s^\top
\right|
\mathcal H_{s-1}^{\mathrm{con}}
\right]
=\mathbb E
\left[
\left.
\mathbb E
\left(
\left.
\frac{a_{\dagger,s}^{ij}}{\pi_{\dagger,s}^{ij}}
\bm z_s\bm z_s^\top
\right|
\mathcal H_{s-1}^{\mathrm{con}},
\bm z_s
\right)
\right|
\mathcal H_{s-1}^{\mathrm{con}}
\right]
=\mathbb E
\left(\bm z_s\bm z_s^\top\right)
=\Sigma.
\end{align*}
Therefore,
\[
\lambda_{\min}
\left\{
\sum_{s=1}^t
\mathbb E
\left[
\left.
\frac{ a_{\dagger,s}^{ij} }{ \pi_{\dagger,s}^{ij} } \bm z_s\bm z_s^\top
\right|
\mathcal H_{s-1}^{\mathrm{con}}
\right]
\right\}
\ge
t\lambda.
\]
Applying Lemma~\ref{lem:adapted-matrix-chernoff} with $R={L_z^2}/{\epsilon_t^2/mk}$, $\mu_t=t\lambda$, and $\delta=1/2$, yields
\begin{equation}
\label{equ:wls_weighted_design_tail}
\mathbb P
\left(
\lambda_{\min}
\left(
\widetilde\Sigma_t^{ij}
\right)
\le
\frac{\lambda}{2}\right)
\le
d\exp
\left\{
- \frac{t\epsilon_t^2\lambda}{8mkL_z^2}
\right\}.
\end{equation}
Define the event
\[
\widetilde E_t^{ij}
=
\left\{
\lambda_{\min}
\left(
\widetilde\Sigma_t^{ij}
\right)
>
\frac{\lambda}{2}
\right\}.
\]
On this event,
\[
\left\|
\left(
\widetilde\Sigma_t^{ij}
\right)^{-1}
\right\|_2
= \lambda_{\max}
\left\{
\left(\widetilde\Sigma_t^{ij} \right)^{-1}
\right\}
\le
\frac{2}{\lambda}.
\]
Hence,
\begin{align*}
\left\|
\widehat{\bm\beta}_{\dagger,t}^{ij}
-
\beta_\dagger^{ij}
\right\|_1
\le
\frac{2\sqrt d}{\lambda}
\left\|
\frac1t
\sum_{s=1}^t
\frac{a_{\dagger,s}^{ij}}{\pi_{\dagger,s}^{ij}}
\bm z_s
\left(\varphi^{ij}(\bm z_s) - \bm z_s^\top\bm\beta_\dagger^{ij} \right)
+
\frac1t
\sum_{s=1}^t
\frac{a_{\dagger,s}^{ij}}{\pi_{\dagger,s}^{ij}}
\bm z_se_s
\right\|_2.
\end{align*}
Therefore, the event
\[
\left\{
\left\|
\widehat{\bm\beta}_{\dagger,t}^{ij}
-
\beta_\dagger^{ij}
\right\|_1
> h
\right\}
\cap
\widetilde E_t^{ij}
\]
implies that, for at least one coordinate $\ell\in[d]$, either
\[
\left|
\frac1t
\sum_{s=1}^t
\frac{a_{\dagger,s}^{ij}}{\pi_{\dagger,s}^{ij}}
z_{s,\ell}
\left(\varphi^{ij}(\bm z_s) - \bm z_s^\top\bm\beta_\dagger^{ij} \right)
\right|
>q_h
\]
or
\[
\left|
\frac1t
\sum_{s=1}^t
\frac{a_{\dagger,s}^{ij}}{\pi_{\dagger,s}^{ij}}
z_{s,\ell} e_{s(i,j)}
\right|
>q_h,
\]
where $q_h={\lambda h}/{4d}$.
Applying Lemma~\ref{lem:context_ipw_score_concentration} gives
\begin{align*}
\mathbb P
\left(
\left|
\frac1t
\sum_{s=1}^t
\frac{
a_{\dagger,s}^{ij}}{\pi_{\dagger,s}^{ij}}z_{s,\ell}
\left(\varphi^{ij}(\bm z_s) - \bm z_s^\top\bm\beta_\dagger^{ij} \right)
\right| >q_h
\right)
\le
2\exp
\left\{
-\frac{t\epsilon_t^2q_h^2}{2mk\left(L_z^2L_\varphi^2
+L_zL_\varphi q_h/3
\right)
}
\right\}.
\end{align*}
and
\begin{align*}
\mathbb P
\left(
\left|
\frac1t
\sum_{s=1}^t
\frac{a_{\dagger,s}^{ij}}{\pi_{\dagger,s}^{ij}}
z_{s,\ell} e_{s(i,j)}
\right| >q_h
\right)
\le
2\exp
\left[-\frac{t\epsilon_t^2}{4mk}
\min
\left\{
\frac{q_h^2}{L_z^2\sigma_{ij}^2},
\frac{q_h}{L_z\sigma_{ij}}
\right\}
\right].
\end{align*}

Applying a union bound over the $d$ coordinates and combining the result with
\eqref{equ:wls_weighted_design_tail} completes the proof.
\end{proof}

\begin{corollary}[\textbf{\emph{Consistency of the contextual WLS estimator under model misspecification}}]
\label{concoro:wls_consistency_misspecified}
Suppose that the conditions of
Theorem~\ref{conthm:wls_tail_misspecified} hold. If $t\epsilon_t^2\to\infty$ as $t \to \infty$ then $\widehat{\bm\beta}_{\dagger,t}^{ij}$ is a consistent estimator for  $\beta_\dagger^{ij}$ for every action pair $(i,j)$.
\end{corollary}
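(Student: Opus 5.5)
The plan is to read the corollary directly off the tail bound in Theorem~\ref{conthm:wls_tail_misspecified}. We fix an action pair $(i,j)$ and an arbitrary $h>0$, and show that each of the three terms on the right-hand side tends to zero as $t\to\infty$ whenever $t\epsilon_t^2\to\infty$. This gives $\mathbb P(\Vert\widehat{\bm\beta}_{\dagger,t}^{ij}-\bm\beta_\dagger^{ij}\Vert_1>h)\to 0$ for every $h>0$, which is consistency in $\ell_1$ and hence in any norm on $\mathbb R^d$. Since there are finitely many pairs $(i,j)$, the statement follows for all of them at once.

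The three terms are handled separately.

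\textbf{First term.} The term $d\exp\{-t\epsilon_t^2\lambda/(8mkL_z^2)\}$ vanishes because $d,m,k,L_z,\lambda$ are fixed constants and $t\epsilon_t^2\to\infty$.

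\textbf{Second term.} Here the exponent is $t\epsilon_t^2$ multiplied by the quantity
\[
\frac{\lambda^2h^2}{32mkd^2\left(L_z^2L_\varphi^2+L_zL_\varphi\lambda h/(12d)\right)},
\]
which is a strictly positive constant independent of $t$. This uses Assumption~\ref{ass:context_misspecified}, which gives $L_\varphi<\infty$. If $L_\varphi=0$ the model is correctly specified, that term is absent, and we interpret it as zero. The second term is therefore $2d\exp\{-c_1t\epsilon_t^2\}\to0$.

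\textbf{Third term.} The exponent is $t\epsilon_t^2/(4mk)$ multiplied by $\min\{\lambda^2h^2/(16d^2L_z^2\sigma_{ij}^2),\ \lambda h/(4dL_z\sigma_{ij})\}$, again a fixed positive constant for fixed $h$ with $\sigma_{ij}<\infty$. So the third term also decays like $\exp\{-c_2t\epsilon_t^2\}$.

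Unlike the correctly specified case (Corollary~\ref{concoro:consistent}), no polynomial prefactor $(1+tL_z^2)^{d/2}$ appears here. This is why $t\epsilon_t^2\to\infty$ suffices and the stronger $\log t$ condition is not needed.

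The only real obstacle is not in the limit argument but in making sure the tail bound is legitimately applicable. The bound rests on the lower bound $\pi_{\dagger,s}^{ij}\ge\epsilon_s^2/(mk)\ge\epsilon_t^2/(mk)$, which holds for $s\le t$ provided $\epsilon_t$ is non-increasing, as assumed throughout. I would note this explicitly, then invoke Theorem~\ref{conthm:wls_tail_misspecified} and take limits term by term to conclude.
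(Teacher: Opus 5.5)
Your proposal is correct and follows the paper's own argument. For fixed $h>0$, each of the three exponential terms in Theorem~\ref{conthm:wls_tail_misspecified} decays like $\exp\{-c\,t\epsilon_t^2\}$ for some constant $c>0$, so all three vanish when $t\epsilon_t^2\to\infty$. Your extra remarks are accurate and make explicit what the paper leaves tacit: there is no $(1+tL_z^2)^{d/2}$ prefactor, which is why no $\log t$ factor is needed, and the bound implicitly relies on non-increasing $\epsilon_t$ to get $\pi_{\dagger,s}^{ij}\ge\epsilon_t^2/(mk)$.
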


For every fixed $h>0$, all three exponential terms in Theorem~\ref{conthm:wls_tail_misspecified} converge to zero because $t\epsilon_t^2 \to \infty$ as $t \to \infty$.

\begin{theorem}[\textbf{\emph{Convergence of the contextual behavior strategies under model misspecification}}]
\label{conthm:behavior_misspecified}
Suppose that the conditions of Corollary~\ref{concoro:wls_consistency_misspecified} hold. Define the least-false contextual payoff matrix as $[M_\dagger(\bm z)]_{ij}=\bm z^\top\bm\beta_\dagger^{ij}$. If $\eta_t\to 0$ as $t \to \infty$, we have
\begin{equation*}
    \operatorname{dist} \left( (\widehat{\phi}_{\dagger,t}(\bm{z_t}), \widehat{\nu}_{\dagger,t}(\bm{z_t})),\mathcal{E}(M_{\dagger}(\bm{z}_t)) \right) \stackrel{p}{\to} 0.
    \end{equation*}
\end{theorem}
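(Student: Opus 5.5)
The plan mirrors the proof of Theorem~\ref{conthm:nash_con} in Section~\ref{app:conthm:nash_con_proof}, with the true parameters $\bm\beta^{ij}$ replaced by the least-false parameters $\bm\beta_\dagger^{ij}$ and the true payoff $M(\bm z)$ replaced by $M_\dagger(\bm z)$. The argument first establishes uniform convergence of the estimated contextual payoff matrix, and then transfers it to the equilibrium through the upper semi-continuity of the equilibrium correspondence. The estimated strategies are $\widehat\phi_{\dagger,t}(\bm z_t)=X^\ast(\widehat M_{\dagger,t-1}(\bm z_t),\eta_t)$ and $\widehat\nu_{\dagger,t}(\bm z_t)=Y^\ast(\widehat M_{\dagger,t-1}(\bm z_t),\eta_t)$, where $[\widehat M_{\dagger,t-1}(\bm z)]_{ij}=\bm z^\top\widehat{\bm\beta}_{\dagger,t-1}^{ij}$.

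\textbf{Step 1: uniform consistency of the payoff matrix.} By Corollary~\ref{concoro:wls_consistency_misspecified}, $\|\widehat{\bm\beta}_{\dagger,t-1}^{ij}-\bm\beta_\dagger^{ij}\|_1\stackrel{p}{\to}0$ for each of the finitely many pairs $(i,j)$. Assumption~\ref{ass:bound} gives $\|\bm z\|_\infty\le L_z$, so Hölder's inequality yields
\[
\sup_{\bm z\in\mathcal Z}\bigl|[\widehat M_{\dagger,t-1}(\bm z)]_{ij}-[M_\dagger(\bm z)]_{ij}\bigr|\le L_z\|\widehat{\bm\beta}_{\dagger,t-1}^{ij}-\bm\beta_\dagger^{ij}\|_1 .
\]
Summing over the pairs gives $\sup_{\bm z}\|\widehat M_{\dagger,t-1}(\bm z)-M_\dagger(\bm z)\|_F\stackrel{p}{\to}0$. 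Combined with $\eta_t\to0$, this gives
\[
\bigl\|(\widehat M_{\dagger,t-1}(\bm z_t),\eta_t)-(M_\dagger(\bm z_t),0)\bigr\|_2\stackrel{p}{\to}0
\]
in the product norm of Lemma~\ref{lemma:norm}. This bound holds even though $\bm z_t$ is random, because it is dominated by the supremum over $\bm z$.

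\textbf{Step 2: continuity of the equilibrium map.} The equilibrium correspondences $X^\ast,Y^\ast$ were shown to be upper semi-continuous in Section~\ref{app:nash_con_proof}, by verifying (A1)--(A4) of Lemma~\ref{lem:adapted-continuity-equilibria}. That verification does not depend on which payoff matrix is the limit point, so it applies at $(M_\dagger(\bm z),0)$ unchanged.

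\textbf{Step 3: passing to the limit (the main obstacle).} Upper semi-continuity is a pointwise statement at each limit $(M_\dagger(\bm z),0)$, but here the limit point moves with the random context $\bm z_t$. A uniform modulus over the relevant set of limit matrices is therefore needed. I would handle this by compactness. The set $\mathcal K_\dagger=\{M_\dagger(\bm z):\|\bm z\|_\infty\le L_z\}$ is compact, and the graph of the equilibrium correspondence restricted to the compact set $\mathcal K_\dagger\times[0,1]$ is closed, since the objective is jointly continuous and the strategy sets $\Delta_m,\Delta_k$ are fixed and compact. A standard contradiction-plus-subsequence argument then gives the following: for every $\varepsilon>0$ there is $\delta>0$ such that $\operatorname{dist}((X^\ast(B,\eta),Y^\ast(B,\eta)),\mathcal E(M))<\varepsilon$ whenever $M\in\mathcal K_\dagger$ and $\|(B,\eta)-(M,0)\|_2<\delta$.

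Applying this with $M=M_\dagger(\bm z_t)$ gives
\[
\mathbb P\bigl(\operatorname{dist}>\varepsilon\bigr)\le\mathbb P\bigl(\|(\widehat M_{\dagger,t-1}(\bm z_t),\eta_t)-(M_\dagger(\bm z_t),0)\|_2\ge\delta\bigr)\to0
\]
by Step 1, which is the claim. In the subsequence argument, one takes a violating sequence $(B_n,\eta_n,M_n)$ and extracts convergent subsequences $M_n\to M$ and $(X^\ast,Y^\ast)\to(\bar x,\bar y)$. The saddle inequalities then pass to the limit, showing $(\bar x,\bar y)\in\mathcal E(M)$. The remaining point to check is that $\operatorname{dist}(\cdot,\mathcal E(M_n))$ cannot stay bounded away from zero along the subsequence. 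Closedness alone does not guarantee this, because $\mathcal E(\cdot)$ may fail to be lower semi-continuous. If necessary, I would resolve this by invoking the uniform Hoffman-type bound of Assumption~\ref{conass:uniform_realized_error_bound} applied to $M_\dagger$: bound the distance by $C\cdot\operatorname{Gap}_{M_\dagger(\bm z_t)}$, and control the gap by $\eta_t(\log m+\log k)+2\|\widehat M_{\dagger,t-1}(\bm z_t)-M_\dagger(\bm z_t)\|_{\max}\stackrel{p}{\to}0$, exactly as in the proof of Theorem~\ref{thm:rate_nash_rate}.
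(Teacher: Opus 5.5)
Your Steps 1 and 2 match the paper's argument. The paper proves this theorem only by pointing back to the proof of Theorem~\ref{conthm:nash_con}. That proof establishes $\sup_{\bm z}$-convergence of the estimated payoff matrix and then applies upper semi-continuity at the random point $(M(\bm z_t),0)$ directly, without addressing that the limit moves with $\bm z_t$.

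You are right to worry about this step, but your Step 3 does not resolve it. The uniform modulus you derive from compactness of $\mathcal K_\dagger$ is false in general. Take, for instance, $M_\dagger(\bm z)=z_1N$, where $N$ is a game whose unique equilibrium $(\bm x_N,\bm y_N)$ is not the uniform pair. Then $M_n=n^{-1}N\in\mathcal K_\dagger$ and $(B_n,\eta_n)=(0,n^{-1})$ satisfy $\|(B_n,\eta_n)-(M_n,0)\|_2\to0$. Yet the entropy-regularized equilibrium of $(0,\eta_n)$ is the uniform pair, which stays at a fixed positive distance from $\mathcal E(M_n)=\{(\bm x_N,\bm y_N)\}$. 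This is exactly the failure of lower semi-continuity of $\mathcal E(\cdot)$, at the degenerate game $0\in\mathcal K_\dagger$, that you flag. Your fallback, a uniform Hoffman constant over the games $M_\dagger(\bm z)$, is not among the theorem's hypotheses. Assumption~\ref{conass:uniform_realized_error_bound} is stated for $M$, not $M_\dagger$, and only for the rate result. So as written you prove the statement only under an added assumption.

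The missing idea is that no uniformity over $\mathcal K_\dagger$ is needed, because $\bm z_t\sim\mathcal P_z$ has the same law for every $t$. Define $\rho(\bm z,\delta)$ as the supremum of $\operatorname{dist}((X^\ast(B,\eta),Y^\ast(B,\eta)),\mathcal E(M_\dagger(\bm z)))$ over $\eta>0$ and $\|(B,\eta)-(M_\dagger(\bm z),0)\|_2<\delta$. For each fixed $\bm z$ the target set is fixed, so the closed-graph property gives $\rho(\bm z,\delta)\downarrow0$ as $\delta\downarrow0$. Also, $\rho(\cdot,\delta)$ is a supremum of lower semi-continuous functions of $\bm z$, hence Borel. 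Let $\delta_t=\sup_{\bm z}\|(\widehat M_{\dagger,t-1}(\bm z),\eta_t)-(M_\dagger(\bm z),0)\|_2$, which satisfies $\delta_t\stackrel{p}{\to}0$ by your Step 1. Then
\[
\mathbb P\left(\operatorname{dist}\left((\widehat\phi_{\dagger,t}(\bm z_t),\widehat\nu_{\dagger,t}(\bm z_t)),\mathcal E(M_\dagger(\bm z_t))\right)>\varepsilon\right)\le\mathbb P(\delta_t\ge\delta)+\mathcal P_z\left(\rho(\bm z,\delta)>\varepsilon\right).
\]
The last term does not depend on $t$, and it tends to zero as $\delta\downarrow0$ by continuity of $\mathcal P_z$ from above. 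So choose $\delta$ first, then let $t\to\infty$.

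The same observation repairs your Hoffman route without any extra assumption. For each fixed $M$ the constant $C_M$ is finite, and $C_{M_\dagger(\bm z_t)}$ has a $t$-invariant law, so it is automatically $O_p(1)$.
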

Additionally suppose that, for every context $\bm z$, the least-false contextual game $M_\dagger(\bm z)$ admits a unique Nash equilibrium $\left(\phi_\dagger^\ast(\bm z),\nu_\dagger^\ast(\bm z)\right)$.
If $\epsilon_t\to\epsilon_\infty$ and $c_t/\sqrt{(t-1)\epsilon_{t-1}^2} \to 0$ as $t \to \infty$, then
\[
\Vert
\widetilde{\phi}_{\dagger,t}(\bm z_t) - \widetilde{\phi}_\dagger^{\ast}(\bm z_t)
\Vert_2
\stackrel{p}{\to} 0,
\qquad
\Vert
\widetilde{\nu}_{\dagger,t}(\bm z_t) - \widetilde{\nu}_\dagger^{\ast}(\bm z_t)
\Vert_2
\stackrel{p}{\to} 0,
\]
where the limiting behavior strategies under different exploration
strategies are given by:
\begin{enumerate}
    \item[\textnormal{(1)}] UCB and TS:
    $\widetilde{\phi}_\dagger^{\ast}(\bm z) =
    \operatorname{Clip}_{m}
    \left(
    \phi_\dagger^\ast(\bm z),
    \frac{\epsilon_\infty}{m}
    \right)$,
    $\widetilde{\nu}_\dagger^{\ast}(\bm z) =
    \operatorname{Clip}_{k}
    \left(
    \nu_\dagger^\ast(\bm z),
    \frac{\epsilon_\infty}{k}
    \right)$.
    \item[\textnormal{(2)}] $\epsilon$-Greedy:
    $\widetilde{\phi}_\dagger^{\ast}(\bm z) =
    (1-\epsilon_\infty)\phi_\dagger^\ast(\bm z)
    + \frac{\epsilon_\infty}{m}\bm 1_m$,
    $\widetilde{\nu}_\dagger^{\ast}(\bm z) =
    (1-\epsilon_\infty)\nu_\dagger^\ast(\bm z)
    + \frac{\epsilon_\infty}{k}\bm 1_k$.
\end{enumerate}
We can use similar arguments in Section~\ref{app:conthm:nash_con_proof} and Section~\ref{app:conthm:nash_behave_proof} to prove Theorem~\ref{conthm:behavior_misspecified}.

\begin{theorem}[\textbf{\emph{Asymptotic normality of the contextual WLS estimator under model misspecification}}]
\label{conthm:wls_normality_misspecified}
Suppose that conditions in Theorem~\ref{conthm:behavior_misspecified} hold. Then for every action pair $(i,j)$, we have $\pi_t^{ij} - [\widetilde{\phi}_\dagger^{\ast}(\bm z_t)]_i [\widetilde{\nu}_\dagger^{\ast}(\bm z_t)]_j \stackrel{p}{\to} 0$.
Suppose further that there exists a constant $c_\pi>0$ such that $\min_{i\in[m],\,j\in[k]} [\widetilde{\phi}_\dagger^\ast(\bm z)]_i [\widetilde{\nu}_\dagger^\ast(\bm z)]_j \ge c_\pi$ for any $\bm z\sim P_z$. Then,
\[
\sqrt t
\left(
\widehat{\bm\beta}_{\dagger,t}^{ij}
-
\beta_\dagger^{ij}
\right)
\stackrel{d}{\to}
\mathcal N_d
\left(
\bm 0,
\Sigma^{-1}H_{ij}\Sigma^{-1}
\right),
\]
where \[
H_{ij} =
\mathbb E
\left[
\frac{
\bm z\bm z^\top
\left[
\left\{
\varphi^{ij}(\bm z)-\bm z^\top\bm\beta_\dagger^{ij}
\right\}^2
+
\sigma_{ij}^2
\right]
}{
[\widetilde{\phi}_\dagger^\ast(\bm z)]_i
[\widetilde{\nu}_\dagger^\ast(\bm z)]_j
}
\right].
\]
A consistent estimator for $\Sigma^{-1} H_{ij} \Sigma^{-1}$ is given by $(\widehat{\Sigma}_{\dagger,t}^{ij})^{-1} \widehat H_{\dagger,t}^{ij} (\widehat{\Sigma}_{\dagger,t}^{ij})^{-1}$ with
\[
\widehat{\Sigma}_{\dagger,t}^{ij}
=
\frac{1}{t}
\sum_{s=1}^t
\frac{a_{\dagger,s}^{ij}}{\pi_{\dagger,s}^{ij}}
\bm z_s\bm z_s^\top
\]
and
\[
\widehat H_{\dagger,t}^{ij}
= \frac{1}{t}
\sum_{s=1}^t
\frac{a_{\dagger,s}^{ij}}{(\pi_{\dagger,s}^{ij})^2}
\bm z_s\bm z_s^\top
\left(
r_s -
\bm z_s^\top
\widehat{\bm\beta}_{\dagger,t}^{ij}
\right)^2.
\]
\end{theorem}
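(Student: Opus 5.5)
The first claim follows directly from Theorem~\ref{conthm:behavior_misspecified}. Since $\pi_{\dagger,t}^{ij}=[\widetilde\phi_{\dagger,t}(\bm z_t)]_i[\widetilde\nu_{\dagger,t}(\bm z_t)]_j$ and each factor converges in $\ell_2$ to its limit $\widetilde\phi_\dagger^\ast(\bm z_t)$, $\widetilde\nu_\dagger^\ast(\bm z_t)$, and all quantities lie in $[0,1]$, the continuous mapping theorem gives $\pi_{\dagger,t}^{ij}-[\widetilde\phi_\dagger^\ast(\bm z_t)]_i[\widetilde\nu_\dagger^\ast(\bm z_t)]_j\stackrel{p}{\to}0$.

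I will need this convergence in a stronger form: uniformly in $\bm z$, conditional on the history. To get it, I combine the tail bound of Theorem~\ref{conthm:wls_tail_misspecified} with Assumption~\ref{ass:bound}, which give
\[
\sup_{\bm z}\big|\bm z^\top(\widehat{\bm\beta}^{ij}_{\dagger,t-1}-\bm\beta^{ij}_\dagger)\big|\stackrel{p}{\to}0 .
\]
I then use continuity of the (unique) equilibrium map at $(M_\dagger(\bm z),0)$ and non-expansiveness of the clipping projection. On a compact context set I will argue this continuity is uniform. The conclusion is $\sup_{\bm z}|\pi_{\dagger,s}^{ij}(\bm z)-\pi^\ast(\bm z)|\stackrel{p}{\to}0$, where $\pi^\ast(\bm z)=[\widetilde\phi_\dagger^\ast(\bm z)]_i[\widetilde\nu_\dagger^\ast(\bm z)]_j$. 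Because $\pi^\ast\ge c_\pi$, the event $G_s=\{\inf_{\bm z}\pi_{\dagger,s}^{ij}(\bm z)\ge c_\pi/2\}$ has probability tending to one.

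For the normality, write $u_s=\varphi^{ij}(\bm z_s)-\bm z_s^\top\bm\beta_\dagger^{ij}$ and $w_s=a^{ij}_{\dagger,s}/\pi^{ij}_{\dagger,s}$. Then
\[
\sqrt t(\widehat{\bm\beta}^{ij}_{\dagger,t}-\bm\beta^{ij}_\dagger)=(\widetilde\Sigma^{ij}_t)^{-1}\,\frac{1}{\sqrt t}\sum_{s=1}^t w_s\bm z_s(u_s+e_s).
\]
The key observation is that $\mathbb E[w_s\bm z_s(u_s+e_s)\mid\mathcal H^{\mathrm{con}}_{s-1}]=\mathbb E[\bm z u]+0=0$. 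This holds because conditioning on $\bm z_s$ removes the weight, and $\mathbb E[\bm z(\varphi^{ij}(\bm z)-\bm z^\top\bm\beta_\dagger^{ij})]=0$ is exactly the first-order condition defining the least-false parameter. So the score is a martingale difference array, and I apply the martingale CLT \citep{hall1980martingale} with the Cramér--Wold device, as in Section~\ref{app:conthm:para_proof}.

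The conditional variance of $\bm v^\top$ times the score is
\[
\frac1t\sum_{s=1}^t\mathbb E\big[(\bm v^\top\bm z)^2(u^2+\sigma_{ij}^2)/\pi^{ij}_{\dagger,s}(\bm z)\mid\mathcal H^{\mathrm{con}}_{s-1}\big].
\]
On $G_s$ the integrand is bounded by $2L_z^2d\|\bm v\|^2(L_\varphi^2+\sigma_{ij}^2)/c_\pi$. By the uniform convergence above and dominated convergence, each summand tends to $\bm v^\top H_{ij}\bm v$, and a Cesàro argument then gives convergence of the average. Lindeberg follows as before from boundedness of $\bm z$ and $u$, the bound on $w_s$ on $G_s$, and the sub-Gaussian tails of $e_s$. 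The early rounds and the complement of $G_s$ I will handle by the same localization device ($\mathcal E_N$) used in Step 2 of Section~\ref{app:dr}.

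For the design matrix, $\mathbb E[w_s\bm z_s\bm z_s^\top\mid\mathcal H^{\mathrm{con}}_{s-1}]=\Sigma$ exactly. The martingale second moment is $\le t^{-2}\sum_s mkL_z^4/\epsilon_s^2\to0$ when $t\epsilon_t^2\to\infty$ (and trivially when $\epsilon_\infty>0$). Hence $\widetilde\Sigma^{ij}_t\stackrel{p}{\to}\Sigma$, and Slutsky yields the sandwich limit $\Sigma^{-1}H_{ij}\Sigma^{-1}$.

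For the variance estimator, expand
\[
r_s-\bm z_s^\top\widehat{\bm\beta}^{ij}_{\dagger,t}=u_s+e_s+\bm z_s^\top(\bm\beta^{ij}_\dagger-\widehat{\bm\beta}^{ij}_{\dagger,t}).
\]
The cross and squared terms involving the estimation error vanish by consistency (Corollary~\ref{concoro:wls_consistency_misspecified}) together with boundedness of $w_s$ on $G_s$. The main term $\frac1t\sum_s (a_s/\pi_s^2)\bm z_s\bm z_s^\top(u_s+e_s)^2$ has conditional mean $\mathbb E[\bm z\bm z^\top(u^2+\sigma_{ij}^2)/\pi_s(\bm z)]\to H_{ij}$, and a martingale weak law handles the deviations. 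I expect the main obstacle to be the uniform-in-$\bm z$ convergence of the behavior policy, needed to pass the limit inside the conditional expectations. This needs uniform continuity of the equilibrium correspondence over the context set; if it fails in general, I will fall back on pointwise convergence plus dominated convergence in $\bm z$ for each fixed realization of the estimates.
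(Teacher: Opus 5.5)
Your proposal is correct and follows essentially the same route as the paper: the sandwich decomposition, the martingale-difference property of the weighted score via the least-false first-order condition, and the martingale CLT with Cram\'er--Wold, with $H_{ij}$ identified through uniform convergence of the behavior policy on a positivity event. You also handle the variance estimator with the same three-term expansion. Your departures are refinements: a direct $L^2$ martingale bound using $\pi_{\dagger,s}^{ij}\ge\epsilon_s^2/(mk)$ to get $\widetilde\Sigma_t^{ij}\stackrel{p}{\to}\Sigma$ in place of Hall--Heyde's Theorem~2.19, and an explicit argument for the uniform-in-$\bm z$ convergence of $\pi_{\dagger,s}^{ij}$, which the paper simply asserts.
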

\begin{remark}
The condition $\min_{i\in[m],\,j\in[k]} [\widetilde{\phi}_\dagger^\ast(\bm z)]_i [\widetilde{\nu}_\dagger^\ast(\bm z)]_j \ge c_\pi$ is automatically satisfied when $\epsilon_t\to\epsilon_\infty>0$.
\end{remark}

\begin{proof} 
\label{app:conthm:wls_normality_misspecified_proof}
By Equation~\eqref{equ:wls_context_misspecified}, 
\begin{align} 
\label{equ:wls_normality_decomposition} 
\sqrt t \left( \widehat{\bm\beta}_{\dagger,t}^{ij} - \bm\beta_\dagger^{ij} \right) = \left\{ \frac{1}{t} \sum_{s=1}^t \frac{ a_{\dagger,s}^{ij} }{ \pi_{\dagger,s}^{ij} } \bm z_s\bm z_s^\top \right\}^{-1} \left\{ \frac{1}{\sqrt t} \sum_{s=1}^t \frac{ a_{\dagger,s}^{ij} }{ \pi_{\dagger,s}^{ij} } \bm z_s \left[ \varphi^{ij}(\bm z_s) - \bm z_s^\top\bm\beta_\dagger^{ij} + e_{s(i,j)} \right] \right\}. 
\end{align} 

\paragraph{Step 1: asymptotic normality of the weighted score.}
We first show that 
\begin{align} 
\label{equ:wls_weighted_score_clt} 
\frac{1}{\sqrt t} \sum_{s=1}^t \frac{ a_{\dagger,s}^{ij} }{ \pi_{\dagger,s}^{ij} } \bm z_s \left[ \varphi^{ij}(\bm z_s) - \bm z_s^\top\bm\beta_\dagger^{ij} + e_{s(i,j)} \right] \stackrel{d}{\to} \mathcal N_d \left( \bm 0, H_{ij} \right). 
\end{align} 
By Cramer-Wold device, it suffices to show that, for every fixed $\bm v\in\mathbb R^d$, 
\[ 
\frac{1}{\sqrt t} \sum_{s=1}^t D_{s}^{ij}(\bm v) \stackrel{d}{\to} \mathcal N \left( 0, \bm v^\top H_{ij}\bm v \right), 
\] 
where 
\[ 
D_s^{ij}(\bm v) = \frac{ a_{\dagger,s}^{ij} }{ \pi_{\dagger,s}^{ij} } \bm v^\top\bm z_s \left[ \varphi^{ij}(\bm z_s) - \bm z_s^\top\bm\beta_\dagger^{ij} + e_{s(i,j)} \right]. 
\] 
We first verify that $\{D_s^{ij}(\bm v)\}_{s\ge1}$ is a martingale-difference sequence. Using iterated conditional expectation, we obtain 
\begin{align*} 
\mathbb E \left[ D_s^{ij}(\bm v) \mid \mathcal H_{s-1}^{\mathrm{con}} \right] = \mathbb E \Bigg[ \left. \mathbb E \Bigg[ \left. \frac{ a_{\dagger,s}^{ij} }{ \pi_{\dagger,s}^{ij} } \bm v^\top\bm z_s \left[ \varphi^{ij}(\bm z_s) - \bm z_s^\top\bm\beta_\dagger^{ij} + e_{s(i,j)} \right] \right| \mathcal H_{s-1}^{\mathrm{con}}, \bm z_s \Bigg] \right| \mathcal H_{s-1}^{\mathrm{con}} \Bigg]. 
\end{align*} 
For the approximation-error term, 
\begin{align*} 
\mathbb E \left[ \left. \frac{ a_{\dagger,s}^{ij} }{ \pi_{\dagger,s}^{ij} } \bm v^\top\bm z_s \left[ \varphi^{ij}(\bm z_s) - \bm z_s^\top\bm\beta_\dagger^{ij} \right] \right| \mathcal H_{s-1}^{\mathrm{con}}, \bm z_s \right] &= \bm v^\top\bm z_s \left[ \varphi^{ij}(\bm z_s) - \bm z_s^\top\bm\beta_\dagger^{ij} \right] \frac{ \mathbb E \left[ a_{\dagger,s}^{ij} \mid \mathcal H_{s-1}^{\mathrm{con}}, \bm z_s \right] }{ \pi_{\dagger,s}^{ij} } \\ &= \bm v^\top\bm z_s \left[ \varphi^{ij}(\bm z_s) - \bm z_s^\top\bm\beta_\dagger^{ij} \right]. 
\end{align*} 
For the reward-noise term, 
\begin{align*} 
\mathbb E \left[ \left. \frac{ a_{\dagger,s}^{ij} }{ \pi_{\dagger,s}^{ij} } \bm v^\top\bm z_s e_{s(i,j)} \right| \mathcal H_{s-1}^{\mathrm{con}}, \bm z_s \right] = \frac{ \bm v^\top\bm z_s }{ \pi_{\dagger,s}^{ij} } \mathbb E \left[ \left. a_{\dagger,s}^{ij}e_{s(i,j)} \right| \mathcal H_{s-1}^{\mathrm{con}}, \bm z_s \right] &= 0. 
\end{align*} 
Therefore, 
\begin{align*} 
\mathbb E \left[ D_s^{ij}(\bm v) \mid \mathcal H_{s-1}^{\mathrm{con}} \right] = \bm v^\top \mathbb E \left[ \bm z \left\{ \varphi^{ij}(\bm z) - \bm z^\top\bm\beta_\dagger^{ij} \right\} \right] = \bm 0, 
\end{align*} 
which comes from the first-order optimality condition associated with the least-false parameter. Thus, $\mathbb E \left[ D_s^{ij}(\bm v) \mid \mathcal H_{s-1}^{\mathrm{con}} \right] = 0$.  \\
We next verify the conditional Lindeberg condition. Denote $u_s^{ij}=\varphi^{ij}(\bm z_s)-\bm z_s^\top\bm\beta_\dagger^{ij}+e_{s(i,j)}$.
For any fixed $\bm v\in\mathbb R^d$ and any $\delta>0$, it suffices to show that
\begin{align}
\label{equ:wls_conditional_lindeberg}
\frac{1}{t}
\sum_{s=1}^t
\mathbb E
\Bigg[
\left.
\frac{(a_{\dagger,s}^{ij})^2}{(\pi_{\dagger,s}^{ij})^2}
(\bm v^\top\bm z_s)^2 (u_s^{ij})^2
\right.
\left.
\times
\mathbb I
\left\{
\left|
\frac{a_{\dagger,s}^{ij}}{\pi_{\dagger,s}^{ij}}
\bm v^\top\bm z_s
u_s^{ij}
\right|
>
\delta\sqrt t
\right\}
\right|
\mathcal H_{s-1}^{\mathrm{con}}
\Bigg]
\stackrel{p}{\to} 0.
\end{align}
By the asymptotic uniform positivity condition, with probability tending to one, there exists an integer $s_0$ such that $\pi_{\dagger,s}^{ij}\ge {c_\pi}/{2}$ for every $s\ge s_0$ and every $\bm z\in\mathcal Z$. Hence, for every $s\ge s_0$, the left-hand side of
\eqref{equ:wls_conditional_lindeberg} is bounded above by
\begin{align*}
&\frac{4\|\bm v\|_2^2L_z^2}{tc_\pi^2}
\sum_{s=s_0}^t
\mathbb E
\Bigg[
\left.
a_{\dagger,s}^{ij}
(u_s^{ij})^2
\right.
\left.
\times
\mathbb I
\left\{
a_{\dagger,s}^{ij}
(u_s^{ij})^2 >
\frac{\delta^2tc_\pi^2}{4\|\bm v\|_2^2L_z^2}
\right\}
\right|
\mathcal H_{s-1}^{\mathrm{con}}
\Bigg] \\
\le &
\frac{4\|\bm v\|_2^2L_z^2}{c_\pi^2}
\mathbb E
\Bigg[
(u_s^{ij})^2
\times
\mathbb I
\left\{
(u_s^{ij})^2 >
\frac{\delta^2tc_\pi^2}{4\|\bm v\|_2^2L_z^2}
\right\}
\Bigg].
\end{align*}
The term inside the expectation brackets is dominated by $\left[\varphi^{ij}(\bm z_s)-\bm z_s^\top\bm\beta_\dagger^{ij}+e_{s(i,j)}
\right]^2$,
with $$\mathbb E \left[ \left[ \varphi^{ij}(\bm z) - \bm z^\top\bm\beta_\dagger^{ij} + e_{s(i,j)} \right]^2 \right] \le L_\varphi^2 +\sigma_{ij}^2<\infty.$$
Together with
\[
\frac{
\delta^2tc_\pi^2
}{
4\|\bm v\|_2^2L_z^2
}
\to
\infty,
\]
we have
\begin{align*}
\left[
\varphi^{ij}(\bm z)
-
\bm z^\top\bm\beta_\dagger^{ij}
+
e^{ij}
\right]^2
\times
\mathbb I
\left\{
\left[
\varphi^{ij}(\bm z)
-
\bm z^\top\bm\beta_\dagger^{ij}
+
e^{ij}
\right]^2
>
\frac{
\delta^2tc_\pi^2
}{
4\|\bm v\|_2^2L_z^2
}
\right\}
\to
0
\end{align*}
by applying Dominated Convergence Theorem. The finitely many terms corresponding to $s<s_0$ also vanish after division by $t$. 

We next calculate the limiting conditional covariance matrix. Since
\[
\mathbb E
\left[
\left.
\frac{
a_{\dagger,s}^{ij}
}{
\pi_{\dagger,s}^{ij}
}
\bm z_s
\left[
\varphi^{ij}(\bm z_s)
-
\bm z_s^\top\bm\beta_\dagger^{ij}
+ e_{s(i,j)}
\right]
\right|
\mathcal H_{s-1}^{\mathrm{con}}
\right]
= \bm 0,
\]
the conditional covariance matrix is equal to the conditional second-moment matrix. Then
\begin{align*}
&\frac1t
\sum_{s=1}^t
\mathbb E
\left[
\left.
\left\{
D_s^{ij}(\bm v)
\right\}^2
\right|
\mathcal H_{s-1}^{\mathrm{con}}
\right] \\
=&
\frac1t
\sum_{s=1}^t
\mathbb E
\left[
\left.
\frac{(a_{\dagger,s}^{ij})^2}{(\pi_{\dagger,s}^{ij})^2}
(\bm v^\top\bm z_s)^2
\left[
\varphi^{ij}(\bm z_s)
-
\bm z_s^\top\bm\beta_\dagger^{ij}
+
e_{s(i,j)}
\right]^2
\right|
\mathcal H_{s-1}^{\mathrm{con}}
\right] \\
=& \frac1t
\sum_{s=1}^t
\mathbb E
\left[
\left.
\frac{
(\bm v^\top\bm z_s)^2}{\pi_{\dagger,s}^{ij}(\bm z_s)}
\mathbb E
\left[
\left.
\left[
\varphi^{ij}(\bm z_s)
-
\bm z_s^\top\bm\beta_\dagger^{ij}
+
e_{s(i,j)}
\right]^2
\right|
\mathcal H_{s-1}^{\mathrm{con}},
\bm z_s,
i_s=i,
j_s=j
\right]
\right|
\mathcal H_{s-1}^{\mathrm{con}}
\right] \\
=& \frac1t
\sum_{s=1}^t
\mathbb E
\left[
\left.
\frac{
(\bm v^\top\bm z_s)^2}{\pi_{\dagger,s}^{ij}(\bm z_s)}
\left[
\left\{
\varphi^{ij}(\bm z_s)
-
\bm z_s^\top\bm\beta_\dagger^{ij}
\right\}^2
+
\sigma_{ij}^2
\right]
\right|
\mathcal H_{s-1}^{\mathrm{con}}
\right] \\
=& \frac1t
\sum_{s=1}^t
\int_{\mathcal Z}
\frac{
(\bm v^\top\bm z)^2
}{
\pi_{\dagger,s}^{ij}(\bm z)
}
\left[
\left\{
\varphi^{ij}(\bm z)
-
\bm z^\top\bm\beta_\dagger^{ij}
\right\}^2
+
\sigma_{ij}^2
\right]
dP_{\bm z}(\bm z).
\end{align*}
By the convergence of the contextual behavior strategies, $\sup_{\bm z\in\mathcal Z}\left|\pi_{\dagger,s}^{ij}(\bm z)-\pi_\dagger^{ij}(\bm z)\right|\stackrel{p}{\to}0$,
where $\pi_\dagger^{ij}(\bm z)=[\widetilde{\phi}_\dagger^\ast(\bm z)]_i[\widetilde{\nu}_\dagger^\ast(\bm z)]_j$.
Moreover, since there exists $c_\pi>0$ such that $\inf_{\bm z\in\mathcal Z} \pi_\dagger^{ij}(\bm z) \ge c_\pi$.
Hence, with probability tending to one, $\inf_{\bm z\in\mathcal Z}\pi_{\dagger,s}^{ij}(\bm z)\ge {c_\pi}/{2}$ for all sufficiently large $s$, and consequently,
\[
\sup_{\bm z\in\mathcal Z}
\left|
\frac1{\pi_{\dagger,s}^{ij}(\bm z)}
-
\frac1{\pi_\dagger^{ij}(\bm z)}
\right|
\stackrel{p}{\to} 0.
\]
Since $\|\bm z\|_2\le L_z$ and $\left| \varphi^{ij}(\bm z) - \bm z^\top\bm\beta_\dagger^{ij} \right| \le L_\varphi$,
the integrand is uniformly bounded by an integrable envelope. Therefore,
\[
\int_{\mathcal Z}
\frac{(\bm v^\top\bm z)^2}{\pi_{\dagger,s}^{ij}(\bm z)}
\left[
\left\{
\varphi^{ij}(\bm z)
-
\bm z^\top\bm\beta_\dagger^{ij}
\right\}^2
+
\sigma_{ij}^2
\right]
dP_{\bm z}(\bm z)
\stackrel{p}{\to}
\int_{\mathcal Z}
\frac{(\bm v^\top\bm z)^2}{\pi_\dagger^{ij}(\bm z)}
\left[
\left\{
\varphi^{ij}(\bm z)
-
\bm z^\top\bm\beta_\dagger^{ij}
\right\}^2
+
\sigma_{ij}^2
\right]
dP_{\bm z}(\bm z).
\]
By Martingale Central Limit Theorem, it follows that
\[
\frac1{\sqrt t}
\sum_{s=1}^t
\frac{a_{\dagger,s}^{ij}}{\pi_{\dagger,s}^{ij}}
\bm z_s
\left[
\varphi^{ij}(\bm z_s)
-
\bm z_s^\top\bm\beta_\dagger^{ij}
+e_{s(i,j)}
\right]
\stackrel{d}{\to}
\mathcal N_d
\left(
\bm 0,
H_{ij}
\right)
\]
where 
$$H_{ij}
=
\mathbb E
\left[
\frac{\bm z\bm z^\top}{\pi_\dagger^{ij}(\bm z)}
\left[
\left\{
\varphi^{ij}(\bm z)
-
\bm z^\top\bm\beta_\dagger^{ij}
\right\}^2
+
\sigma_{ij}^2
\right]
\right].$$

\paragraph{Step 2: convergence of the weighted design matrix.}
Next, we find the limit of the second moment term. Using Lemma~\ref{lem:matrix-vector}, we find the limit of
\[
\widehat{\xi}_t
=
\frac{1}{t}
\sum_{s=1}^t
\frac{a_{\dagger,s}^{ij}}{\pi_{\dagger,s}^{ij}}
\bm v^\top \bm z_s \bm z_s^\top \bm v,
\]
for any $\bm v\in\mathbb R^d$. Let $\bm z\sim \mathcal P_{\bm z}$. We have
\[
\mathbb E
\left(
\frac{2}{c_\pi}
\bm v^\top \bm z\bm z^\top \bm v
\right)
\le
\frac{
2\bm v^\top \bm 1\bm 1^\top \bm v L_z^2
}{
c_\pi
}
<\infty.
\]
Moreover, for any $\kappa \ge 0$ and each $s$,
\[
\mathbb P
\left(
\frac{a_{\dagger,s}^{ij}}{\pi_{\dagger,s}^{ij}}
\bm v^\top \bm z_s\bm z_s^\top \bm v
>
\kappa
\right)
\le
\mathbb P
\left(
\frac{2}{c_\pi}
\bm v^\top \bm z\bm z^\top \bm v
>
\kappa
\right).
\]
Then all conditions of Theorem~2.19 from \citet{hall1980martingale} are met,
so
\[
\frac{1}{t}
\sum_{s=1}^t
\left\{
\frac{a_{\dagger,s}^{ij}}{\pi_{\dagger,s}^{ij}}
\bm v^\top \bm z_s\bm z_s^\top \bm v
-
\mathbb E
\left(
\left.
\frac{a_{\dagger,s}^{ij}}{\pi_{\dagger,s}^{ij}}
\bm v^\top \bm z_s\bm z_s^\top \bm v
\right|
\mathcal F_{s-1}
\right)
\right\}
=
\widehat{\xi}_t
-
\mathbb E
\left(
\bm v^\top \bm z\bm z^\top \bm v
\right)
\stackrel{p}{\to}
0.
\]
By Lemma~\ref{lem:matrix-vector} and the Continuous Mapping Theorem,
\[
\left(
\frac{1}{t}
\sum_{s=1}^t
\frac{a_{\dagger,s}^{ij}}{\pi_{\dagger,s}^{ij}}
\bm z_s\bm z_s^\top
\right)^{-1}
\stackrel{p}{\to}
\left(
\int \bm z\bm z^\top dP_{\bm z}
\right)^{-1}
=
\Sigma^{-1}.
\]
Finally, combining the above results using Slutsky's theorem, we have
\[
\sqrt t
\left(
\widehat{\bm\beta}_{\dagger,t}^{ij}
-
\beta_\dagger^{ij}
\right)
\stackrel{d}{\to}
\mathcal N_d
\left(
\bm 0,
\Sigma^{-1}H_{ij}\Sigma^{-1}
\right).
\]

\paragraph{Step 3: consistency of the asymptotic variance estimator.} 
Since the consistency of $\widehat{\Sigma}_{\dagger,t}^{ij}$ follows from the Law of Large Numbers easily, it suffices to show $\widehat H_{\dagger,t}^{ij}\stackrel{p}{\to} H_{ij}$.
Expanding the squared term gives 

\begin{equation}
\label{equ:H_estimator_expansion} 
\widehat H_{\dagger,t}^{ij} = \frac{1}{t} \sum_{s=1}^t \frac{ a_{\dagger,s}^{ij} }{ (\pi_{\dagger,s}^{ij})^2 } \bm z_s\bm z_s^\top \left[\left[ \bm z_s^\top \left( \bm\beta_\dagger^{ij} - \widehat{\bm\beta}_{\dagger,t}^{ij} \right) \right]^2 +2u_s^{ij} \bm z_s^\top \left( \bm\beta_\dagger^{ij} - \widehat{\bm\beta}_{\dagger,t}^{ij} \right) + (u_s^{ij}) ^2\right],
\end{equation} 
where $u_s=r_s - \bm z_s^\top \bm\beta_\dagger^{ij} = \varphi^{ij}(\bm z_s) - \bm z_s^\top \bm\beta_\dagger^{ij} + e_{s(i,j)}$.
Now, the first term satisfies
\begin{align*}
&
\frac{1}{t}
\sum_{s=1}^t
\frac{a_{\dagger,s}^{ij}}{(\pi_{\dagger,s}^{ij})^2}
\bm z_s
\left[
\left(
\bm\beta_\dagger^{ij}
-
\widehat{\bm\beta}_{\dagger,t}^{ij}
\right)^\top
\bm z_s
\right]^2
\bm z_s^\top
=
\frac{
\left\|
\bm\beta_\dagger^{ij}
-
\widehat{\bm\beta}_{\dagger,t}^{ij}
\right\|_2^2}{t}
\sum_{s=1}^t
\frac{a_{\dagger,s}^{ij}\|\bm z_s\|_2^2\bm z_s\bm z_s^\top}{(\pi_{\dagger,s}^{ij})^2}
\stackrel{p}{\to} 0
\end{align*}
by Theorem~\ref{conthm:wls_tail_misspecified}, since the summand is bounded. The second term satisfies
\begin{align*}
&
\frac{1}{t}
\sum_{s=1}^t
\frac{2a_{\dagger,s}^{ij}}{(\pi_{\dagger,s}^{ij})^2}
\left(
\bm\beta_\dagger^{ij}
-
\widehat{\bm\beta}_{\dagger,t}^{ij}
\right)^\top
\bm z_s
\left\{
\varphi^{ij}(\bm z_s) 
-
\bm z_s^\top\bm\beta_\dagger^{ij}+e_{s(i,j)}
\right\}
\bm z_s\bm z_s^\top
\stackrel{p}{\to}
0,
\end{align*}
by Lemma~\ref{lem:matrix-vector} because, for any $\bm v\in\mathbb R^d$,
\[
\left(
\bm\beta_\dagger^{ij}
-
\widehat{\bm\beta}_{\dagger,t}^{ij}
\right)^\top
\left[
\frac{1}{t}
\sum_{s=1}^t
\frac{2a_{\dagger,s}^{ij}}{(\pi_{\dagger,s}^{ij})^2}
\bm z_s
\left\{
\phi_i(\bm z_s)
-
\bm z_s^\top\bm\beta_\dagger^{ij}
+e_{s(i,j)}
\right\}
\bm v^\top
\bm z_s\bm z_s^\top
\bm v
\right]
\stackrel{p}{\to} 0,
\]
by Theorem~\ref{conthm:wls_tail_misspecified} and Lemma~\ref{lem:adaptive-subgaussian}.
Finally, for any $\bm v\in\mathbb R^d$,
\[
\frac{1}{t}
\sum_{s=1}^t
\left(
\frac{a_{\dagger,s}^{ij}}{(\pi_{\dagger,s}^{ij})^2}
\bm v^\top\bm z_s\bm z_s^\top\bm v
(u_s^{ij}) ^2
-
\mathbb E
\left[
\left.
\frac{a_{\dagger,s}^{ij}}{(\pi_{\dagger,s}^{ij})^2}
\bm v^\top\bm z_s\bm z_s^\top\bm v
(u_s^{ij}) ^2
\right|
\mathcal H_{s-1}^{\mathrm{con}}
\right]
\right)
\stackrel{p}{\to} 0
\]
by Theorem~2.19 in \citet{hall1980martingale}, because for $\bm z\sim\mathcal P_z$ and a $\sigma$-sub-Gaussian $e\perp \bm z$ with $\sigma \ge \sigma_{ij}$,
\[
\mathbb E
\left[
\frac{4}{c_\pi^2}
\bm v^\top\bm z\bm z^\top\bm v
\left(
L_\varphi+e
\right)^2
\right]
\le
\frac{4}{c_\pi^2}
\bm v^\top\bm 1\bm 1^\top\bm v
L_z^2
\left(
L_\varphi^2+\sigma^2
\right)
<\infty,
\]
and for any $\kappa\ge0$ and sufficiently large $s$,
\[
\mathbb P
\left[
\frac{a_{\dagger,s}^{ij}}{(\pi_{\dagger,s}^{ij})^2}
\bm v^\top\bm z_s\bm z_s^\top\bm v
(u_s^{ij}) ^2
>
\kappa
\right]
\le
\mathbb P
\left[
\frac{4}{c_\pi^2}
\bm v^\top\bm z\bm z^\top\bm v
\left(
L_\varphi+e
\right)^2
>
\kappa
\right].
\]
Combining the above convergence with
\[
\frac{1}{t}
\sum_{s=1}^t
\mathbb E
\left[
\left.
\frac{
a_{\dagger,s}^{ij}
}{
(\pi_{\dagger,s}^{ij})^2
}
\bm v^\top\bm z_s\bm z_s^\top\bm v
(u_s^{ij}) ^2
\right|
\mathcal H_{s-1}^{\mathrm{con}}
\right]
\stackrel{p}{\to}
\bm v^\top H_{ij}\bm v,
\]
we have the last term converges in probability to $H_{ij}$ by Lemma~\ref{lem:matrix-vector}. Combining all three terms with the Continuous Mapping Theorem gives the consistency of the variance estimator.
\end{proof}

\begin{theorem}[\textbf{\emph{Consistency of the contextual value estimator under model misspecification}}]
\label{conthm:value_rootT_misspecified}
Suppose that the conditions of Theorem~\ref{conthm:wls_normality_misspecified} hold for every action pair
$(i,j)$. Let $\Phi(\bm z)=\left[\varphi^{ij}(\bm z)\right]_{i\in[m],j\in[k]}$
denote the true contextual payoff matrix. Define $$V_\dagger^\ast
= \mathbb E
\left[
\phi_\dagger^\ast(\bm z)^\top
\Phi(\bm z)
\nu_\dagger^\ast(\bm z)
\right].$$
The value estimator is given by
\[
\widehat V_{\dagger,T}^{\mathrm{conDR}}
=
\frac{1}{T} \sum_{t=1}^T \Bigg[ \widehat{\phi}_{\dagger,t}(\bm z_t)^\top \widehat M_{\dagger,t-1}(\bm z_t) \widehat{\nu}_{\dagger,t}(\bm z_t) + \frac{ [\widehat{\phi}_{\dagger,t}(\bm z_t)]_{i_t} [\widehat{\nu}_{\dagger,t}(\bm z_t)]_{j_t} }{ \pi_{\dagger,t}^{i_t,j_t} } \left\{ r_t-[\widehat M_{\dagger,t-1}(\bm z_t)]_{i_t,j_t} \right\} \Bigg].
\]
Assume further that $\eta_t=O(t^{-1/2})$. Then
\[
\widehat V_{\dagger,T}^{\mathrm{conDR}}
- V_\dagger^\ast = o_p(1).
\]
\end{theorem}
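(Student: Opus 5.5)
The plan is to avoid linearizing around the least-false model $M_\dagger$. Instead, I would write the outcome in terms of the true mean, $r_t=[\Phi(\bm z_t)]_{i_t,j_t}+e_t$, and use the fact that the importance-weighted correction is conditionally unbiased for the gap between $\Phi$ and $\widehat M_{\dagger,t-1}$. Write $w_t=[\widehat\phi_{\dagger,t}(\bm z_t)]_{i_t}[\widehat\nu_{\dagger,t}(\bm z_t)]_{j_t}/\pi_{\dagger,t}^{i_t,j_t}$ and $\Delta_t=\Phi(\bm z_t)-\widehat M_{\dagger,t-1}(\bm z_t)$. Both the target strategies and the behavior policy are fixed before $(i_t,j_t)$ is drawn, so
\[
\mathbb E\left[w_t[\Delta_t]_{i_t,j_t}\mid \mathcal H^{\mathrm{con}}_{t-1},\bm z_t\right]=\widehat\phi_{\dagger,t}(\bm z_t)^\top\Delta_t\,\widehat\nu_{\dagger,t}(\bm z_t).
\]
This gives the exact decomposition
\[
\widehat V_{\dagger,T}^{\mathrm{conDR}}-V_\dagger^\ast=\underbrace{\frac1T\sum_{t=1}^T\widehat\phi_{\dagger,t}(\bm z_t)^\top\Phi(\bm z_t)\widehat\nu_{\dagger,t}(\bm z_t)-V_\dagger^\ast}_{K_1}+\underbrace{\frac1T\sum_{t=1}^T D_t}_{K_2},
\]
where
\[
D_t=w_t[\Delta_t]_{i_t,j_t}-\widehat\phi_{\dagger,t}(\bm z_t)^\top\Delta_t\,\widehat\nu_{\dagger,t}(\bm z_t)+w_te_t .
\]
The $\widehat M$ terms cancel, which is the double-robustness mechanism. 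As a result, misspecification of $M_\dagger$ never enters the bias; it only affects the variance.

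For $K_2$, I would follow Step 2 of the proof of Theorem~\ref{thm:saddlerate}. First, $\{D_t\}$ is a martingale difference sequence with respect to $\mathcal H^{\mathrm{con}}_t$. Next I would bound its conditional second moments eventually, in three pieces.
\begin{itemize}
\item The weight satisfies $w_t\le 1/\pi_{\dagger,t}^{i_t,j_t}$. The first claim of Theorem~\ref{conthm:wls_normality_misspecified} together with the positivity bound $c_\pi$ gives $\pi_{\dagger,t}^{ij}\ge c_\pi/2$ with probability tending to one, and this is automatic if $\epsilon_\infty>0$.
\item The gap satisfies $|[\Delta_t]_{ij}|\le L_\varphi+L_z\|\widehat{\bm\beta}^{ij}_{\dagger,t-1}-\bm\beta^{ij}_\dagger\|_1$ by Assumption~\ref{ass:context_misspecified} and Assumption~\ref{ass:bound}. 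The second term is $o_p(1)$ by Corollary~\ref{concoro:wls_consistency_misspecified}.
\item The noise variances $\sigma_{ij}^2$ are bounded.
\end{itemize}
I would then localize on the event $\mathcal E_N$ where these bounds hold for all $t\ge N$, exactly as in that proof. Chebyshev's inequality then gives $K_2=O_p(T^{-1/2})=o_p(1)$, and the finitely many initial terms contribute $O_p(T^{-1})$.

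For $K_1$, I would split
\begin{align*}
K_1={}&\frac1T\sum_{t=1}^T\left\{\widehat\phi_{\dagger,t}(\bm z_t)^\top\Phi(\bm z_t)\widehat\nu_{\dagger,t}(\bm z_t)-\phi_\dagger^\ast(\bm z_t)^\top\Phi(\bm z_t)\nu_\dagger^\ast(\bm z_t)\right\}\\
&+\frac1T\sum_{t=1}^T\phi_\dagger^\ast(\bm z_t)^\top\Phi(\bm z_t)\nu_\dagger^\ast(\bm z_t)-V_\dagger^\ast .
\end{align*}
The second sum is an average of i.i.d.\ bounded variables, since $|\Phi_{ij}|\le L_z\|\bm\beta^{ij}_\dagger\|_1+L_\varphi$, so the weak law of large numbers sends it to zero.

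For the first sum, each summand is bounded in absolute value by $\|\Phi(\bm z_t)\|_{\max}(\|\widehat\phi_{\dagger,t}-\phi^\ast_\dagger\|_1+\|\widehat\nu_{\dagger,t}-\nu^\ast_\dagger\|_1)$, evaluated at $\bm z_t$. The strategies lie in simplices, so this bound is uniformly bounded. Theorem~\ref{conthm:behavior_misspecified}, under the added uniqueness of the least-false equilibrium, says it tends to zero in probability. Bounded convergence then gives that its expectation tends to zero. A Ces\`aro argument shows the expected average tends to zero, and Markov's inequality yields $o_p(1)$.

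The main obstacle I expect is this last step. Theorem~\ref{conthm:behavior_misspecified} gives only pointwise-in-$t$ convergence at the realized context, and with no rate; this is exactly why the conclusion is $o_p(1)$ rather than $O_p(T^{-1/2})$. Converting it into convergence of the Ces\`aro average needs the uniform boundedness and the passage through $L^1$. A second point to check is that $\widehat\phi_{\dagger,t}$ is evaluated at a fresh $\bm z_t$ independent of $\mathcal H^{\mathrm{con}}_{t-1}$, so the i.i.d.\ structure of the second sum is not disturbed. Finally, if one wanted a $\sqrt T$ rate, one would additionally need a Lipschitz bound for the regularized equilibrium map, such as Theorem~\ref{thm:rate_nash_rate}; I would not pursue that here.
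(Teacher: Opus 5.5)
Your proposal is correct and is essentially the paper's proof. Your $K_1$ is the paper's $J_1$, handled by the same add-and-subtract of $\phi_\dagger^\ast(\bm z_t)^\top\Phi(\bm z_t)\nu_\dagger^\ast(\bm z_t)$, and your single martingale $K_2$ is just the paper's Step~1 difference $\widehat V_{\dagger,T}^{\mathrm{conDR}}-\bar V_{\dagger,T}$ merged with its $J_2$, so skipping the oracle $\bar V_{\dagger,T}$ is cosmetic; your bounded-convergence/Ces\`aro/Markov step also usefully spells out what the paper asserts in one line. The one soft spot, which the paper shares, is localizing on $\mathcal E_N$: requiring the weight and gap bounds for all $t\ge N$ at once needs more than the pointwise-in-$t$ convergence in probability you cite, and when $\epsilon_\infty>0$ you can get it because the weights are deterministically bounded by $mk/\epsilon_\infty^2$ and Borel--Cantelli applied to Theorem~\ref{conthm:wls_tail_misspecified} makes the gap bound hold eventually almost surely.
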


\begin{proof}
The proof is similar to that of Theorem~\ref{conthm:drrate}. 
We define 
\begin{align*} 
\bar V_{\dagger,T} = \frac{1}{T} \sum_{t=1}^T \Bigg[ \widehat{\phi}_{\dagger,t}(\bm z_t)^\top \Phi(\bm z_t) \widehat{\nu}_{\dagger,t}(\bm z_t) + \frac{ [\widehat{\phi}_{\dagger,t}(\bm z_t)]_{i_t} [\widehat{\nu}_{\dagger,t}(\bm z_t)]_{j_t} }{ \pi_{\dagger,t}^{i_t,j_t} } \left\{ r_t-\varphi^{i_t,j_t}(\bm z_t) \right\} \Bigg]. 
\end{align*} 
\textbf{Step 1:} We first show that $\widehat V_{\dagger,T}^{\mathrm{conDR}} - \bar V_{\dagger,T} = O_p(T^{-1/2})$.
By definition, 
\begin{align*} 
\widehat V_{\dagger,T}^{\mathrm{conDR}} - \bar V_{\dagger,T} =\,& \frac{1}{T} \sum_{t=1}^T \Bigg[ \widehat{\phi}_{\dagger,t}(\bm z_t)^\top \left\{ \widehat M_{\dagger,t-1}(\bm z_t) - \Phi(\bm z_t) \right\} \widehat{\nu}_{\dagger,t}(\bm z_t) \\ 
&\qquad+ \frac{ [\widehat{\phi}_{\dagger,t}(\bm z_t)]_{i_t} [\widehat{\nu}_{\dagger,t}(\bm z_t)]_{j_t} }{ \pi_{\dagger,t}^{i_t,j_t}} \left\{ \varphi^{i_t,j_t}(\bm z_t) - [\widehat M_{\dagger,t-1}(\bm z_t)]_{i_t,j_t} \right\} \Bigg]. 
\end{align*} 
Denote the summand above by $\Delta_t$. Conditional on $\mathcal H_{t-1}^{\mathrm{con}}$ and $\bm z_t$, we have 
\begin{align*} 
& \mathbb E \left[ \left. \frac{ [\widehat{\phi}_{\dagger,t}(\bm z_t)]_{i_t} [\widehat{\nu}_{\dagger,t}(\bm z_t)]_{j_t} }{ \pi_{\dagger,t}^{i_t,j_t}} \left\{ \varphi^{i_t,j_t}(\bm z_t) - [\widehat M_{\dagger,t-1}(\bm z_t)]_{i_t,j_t} \right\} \right| \mathcal H_{t-1}^{\mathrm{con}}, \bm z_t \right] \\ =& \sum_{i=1}^m \sum_{j=1}^k [\widehat{\phi}_{\dagger,t}(\bm z_t)]_i [\widehat{\nu}_{\dagger,t}(\bm z_t)]_j \left\{ \varphi^{ij}(\bm z_t) - [\widehat M_{\dagger,t-1}(\bm z_t)]_{ij} \right\} \\ =&\widehat{\phi}_{\dagger,t}(\bm z_t)^\top \left\{ \Phi(\bm z_t) - \widehat M_{\dagger,t-1}(\bm z_t) \right\} \widehat{\nu}_{\dagger,t}(\bm z_t). 
\end{align*} 
Therefore, $\mathbb E \left[ \Delta_t \mid \mathcal H_{t-1}^{\mathrm{con}}, \bm z_t \right] = 0$. Hence, $\left\{ \Delta_t, \mathcal H_t^{\mathrm{con}} \right\}_{t\ge1}$ is a martingale-difference sequence. By the asymptotic uniform boundedness of the importance ratios, the boundedness of $\bm z_t$, and the boundedness of the approximation error $\left| \varphi^{ij}(\bm z_t) - \bm z_t^\top\bm\beta_\dagger^{ij} \right| \le L_\varphi$, there exists a constant $C<\infty$ such that $\mathbb E \left[ \Delta_t^2 \mid \mathcal H_{t-1}^{\mathrm{con}} \right] \le C$ almost surely for all sufficiently large $t$. Therefore, using the same localization argument as Step 2 in Section~\ref{app:conthm:drrate_proof}, we have 
\[ \frac{1}{T} \sum_{t=1}^T \Delta_t = O_p(T^{-1/2}). \] 
Thus, $\widehat V_{\dagger,T}^{\mathrm{conDR}} - \bar V_{\dagger,T} = O_p(T^{-1/2})$.

\textbf{Step 2:} We next show that $\bar V_{\dagger,T}-V_\dagger^\ast=O_p(T^{-1/2})$.
Decompose $\bar V_{\dagger,T}-V_\dagger^\ast=J_1+J_2$,
where
\begin{align*}
J_1
&=
\frac{1}{T}
\sum_{t=1}^T
\widehat{\phi}_{\dagger,t}(\bm z_t)^\top
\Phi(\bm z_t)
\widehat{\nu}_{\dagger,t}(\bm z_t)
-
V_\dagger^\ast,
\\
J_2
&=
\frac{1}{T}
\sum_{t=1}^T
\frac{
[\widehat{\phi}_{\dagger,t}(\bm z_t)]_{i_t}
[\widehat{\nu}_{\dagger,t}(\bm z_t)]_{j_t}}{\pi_{\dagger,t}^{i_tj_t}}
\left\{
r_t-\varphi^{i_tj_t}(\bm z_t)
\right\}.
\end{align*}
For $J_1$, adding and subtracting $\phi_\dagger^\ast(\bm z_t)^\top\Phi(\bm z_t)\nu_\dagger^\ast(\bm z_t)$,
we obtain
\begin{align*}
J_1=
\frac{1}{T}
\sum_{t=1}^T
\left[
\widehat{\phi}_{\dagger,t}(\bm z_t)^\top
\Phi(\bm z_t)
\widehat{\nu}_{\dagger,t}(\bm z_t)
-
\phi_\dagger^\ast(\bm z_t)^\top
\Phi(\bm z_t)
\nu_\dagger^\ast(\bm z_t)
\right]
+
\frac{1}{T}
\sum_{t=1}^T
\left[
\phi_\dagger^\ast(\bm z_t)^\top
\Phi(\bm z_t)
\nu_\dagger^\ast(\bm z_t)
-
V_\dagger^\ast
\right].
\end{align*}
By convergence of the target strategies in Theorem~\ref{conthm:behavior_misspecified} and the boundedness of $\Phi(\bm z)$, the first average converges to 0. By the central limit theorem, the second average is \(O_p(T^{-1/2})\). Hence, $J_1 \stackrel{p}{\to} 0$.

For \(J_2\), by an argument similar to Step 1 of Section~\ref{app:conthm:wls_normality_misspecified_proof}, we have
\[
\mathbb E
\left[
\left.
\frac{
[\widehat{\phi}_{\dagger,t}(\bm z_t)]_{i_t}
[\widehat{\nu}_{\dagger,t}(\bm z_t)]_{j_t}}{\pi_{\dagger,t}^{i_tj_t}(\bm z_t)}
\left\{
r_t-\varphi^{i_tj_t}(\bm z_t)
\right\}
\right\vert
\mathcal H_{t-1}^{\mathrm{con}}
\right]
=
0,
\]
which is therefore a martingale difference sequence. By the asymptotic uniform boundedness of the importance ratios and the boundedness of the conditional noise variances, the conditional variance is bounded for all sufficiently large $t$. Therefore, $J_2 = O_p(T^{-1/2})$.
Combining the bounds for $J_1$ and $J_2$, we obtain $\bar V_{\dagger,T} \stackrel{p}{\to} V_\dagger^\ast$. Together with Step~1, this gives $\widehat V_{\dagger,T}^{\mathrm{conDR}} \stackrel{p}{\to} V_\dagger^\ast$.

Under model misspecification, the pseudo value is defined with respect to the pseudo-optimal strategies. However, the value is still evaluated under the true contextual payoff matrix $\Phi(\bm z)$. Since the pseudo-target strategies are
not necessarily saddle-point optimal for $\Phi(\bm z)$, the plug-in value error contains first-order policy-estimation terms. Thus, policy consistency yields consistency of the value estimator, but a $\sqrt{T}$ rate requires an additional
orthogonality condition or a sufficiently fast policy convergence rate.
\end{proof}

\section{Auxiliary Results}
\label{app:lemmas}

\begin{lemma}[Adaptive Chernoff bound]
\label{lem:adaptive_chernoff}
Let $\{\mathcal H_s\}_{s=0}^t$ be a filtration. Suppose that
$a_s\in\{0,1\}$ and $\mathbb E(a_s| \mathcal H_{s-1})=\pi_s$,
where $\pi_s$ is $\mathcal H_{s-1}$-measurable. Let $S_t=\sum_{s=1}^t a_s$, and $\mu_t=\sum_{s=1}^t \pi_s$.
If $\mu_t\ge \mu$ almost surely for some $\mu>0$, then for any
$c\in(0,1)$,
\[
\mathbb P\left(S_t\le (1-c)\mu_t\right)
\le
\exp\left\{
-\frac{c^2\mu}{2}
\right\}.
\]
\end{lemma}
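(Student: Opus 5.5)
The plan is the standard exponential-supermartingale (Freedman/Bernstein-type) argument for the lower tail of a sum of adapted Bernoulli variables. Write $D_s = \pi_s - a_s$, which is a martingale difference sequence with $D_s\le \pi_s\le 1$. The event $\{S_t\le (1-c)\mu_t\}$ is the event $\{\sum_{s\le t} D_s \ge c\mu_t\}$. For a fixed $\lambda>0$ we bound the conditional moment generating function of $-a_s$:
\[
\mathbb E\left(e^{-\lambda a_s}\mid\mathcal H_{s-1}\right)=1-\pi_s(1-e^{-\lambda})\le \exp\left\{-\pi_s(1-e^{-\lambda})\right\}.
\]
It follows that
\[
M_k=\exp\left\{-\lambda S_k+(1-e^{-\lambda})\mu_k\right\},\qquad k\le t,
\]
is a nonnegative supermartingale with $M_0=1$, so $\mathbb E M_t\le 1$.

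On the bad event $S_t\le(1-c)\mu_t$ we have
\[
M_t\ge \exp\left\{\mu_t\left[(1-e^{-\lambda})-\lambda(1-c)\right]\right\}.
\]
Choose $\lambda=-\log(1-c)$. Then the bracket equals $c+(1-c)\log(1-c)$, which is at least $c^2/2$ by the elementary inequality $(1-c)\log(1-c)\ge -c+c^2/2$ on $(0,1)$. This inequality is checked by comparing Taylor series or by differentiating twice. Since $\mu_t\ge\mu$ almost surely and the exponent coefficient is positive, on the bad event $M_t\ge e^{c^2\mu/2}$. Markov's inequality then gives $\mathbb P(S_t\le (1-c)\mu_t)\le \mathbb E M_t\, e^{-c^2\mu/2}\le e^{-c^2\mu/2}$.

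The main subtlety is that $\mu_t$ is random, so we cannot simply optimize a deterministic threshold. This is handled by placing $\mu_t$ inside the supermartingale and fixing $\lambda$ to depend only on $c$, not on $\mu_t$. The only place the random level enters is through the almost-sure lower bound $\mu_t\ge\mu$, which is used after the factor multiplying $\mu_t$ has been shown to be nonnegative. The remaining steps, namely the supermartingale property via the tower rule and the scalar inequality, are routine.
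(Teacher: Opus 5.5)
Your proposal is correct and is essentially the paper's proof: with $\lambda=-\log(1-c)$, your process $M_t=\exp\{-\lambda S_t+(1-e^{-\lambda})\mu_t\}$ is exactly the paper's $\exp\{pS_t-(e^p-1)\mu_t\}$ with $p=\log(1-c)=-\lambda$. The remaining steps coincide with the paper's: the lower bound on the bad event using $\mu_t\ge\mu$, Markov's inequality, and the inequality $c+(1-c)\log(1-c)\ge c^2/2$.
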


\begin{proof}
Fix any $c\in(0,1)$ and $p=\log(1-c)<0$.
Since $a_s\in\{0,1\}$ and $\mathbb E(a_s\mid \mathcal H_{s-1})=\pi_s$,
we have
\[
\mathbb E\left[\exp(p a_s)\mid \mathcal H_{s-1}\right]
= 1-\pi_s+\pi_s e^p
= 1+\pi_s(e^p-1)
\le \exp\left\{\pi_s(e^p-1)\right\}.
\]
Therefore, $\mathbb E\left[
\exp\left\{ p a_s-(e^p-1)\pi_s \right\}
\mid \mathcal H_{s-1} \right] \le 1$.
Denote $\sum_{r=1}^s Z_r = \exp\left\{ p a_s-(e^p-1)\pi_s \right\}$.
Then $\mathbb E(\sum_{r=1}^s Z_r\mid \mathcal H_{s-1})\le 1$.
Moreover,
\[
\prod_{s=1}^t \sum_{r=1}^s Z_r =
\exp\left\{
p\sum_{s=1}^t a_s - (e^p-1)\sum_{s=1}^t\pi_s
\right\}
= \exp\left\{ pS_t-(e^p-1)\mu_t \right\},
\]
where $S_t=\sum_{s=1}^t a_s$ and $\mu_t=\sum_{s=1}^t \pi_s$.
By iterating conditional expectations,
\begin{align*}
\mathbb E\left[\prod_{s=1}^t \sum_{r=1}^s Z_r\right]
&=
\mathbb E\left[
\mathbb E\left(
\prod_{s=1}^t \sum_{r=1}^s Z_r
\mid \mathcal H_{t-1}
\right)
\right] \\
&=
\mathbb E\left[
\left(\prod_{s=1}^{t-1}\sum_{r=1}^s Z_r\right)
\mathbb E(Y_t\mid \mathcal H_{t-1})
\right] \\
&\le
\mathbb E\left[
\prod_{s=1}^{t-1}\sum_{r=1}^s Z_r
\right] \\
&\le \cdots \le 1.
\end{align*}
Hence, $\mathbb E\left[\exp\left\{pS_t-(e^p-1)\mu_t\right\}\right] \le 1$.
Now consider the event $\mathcal E=\left\{S_t\le (1-c)\mu_t\right\}$.
Since $p<0$, on $\mathcal E$ we have $pS_t \ge p(1-c)\mu_t$.
Therefore, on $\mathcal E$,
\[
pS_t-(e^p-1)\mu_t
\ge
\left[
p(1-c)-(e^p-1)
\right]\mu_t.
\]
Because $p=\log(1-c)$, we have $e^p=1-c$, and hence
\[
p(1-c)-(e^p-1)
=
(1-c)\log(1-c)+c.
\]
Let $c^\prime=(1-c)\log(1-c)+c$. Then $c^\prime>0$ for $c\in(0,1)$. Therefore, on $\mathcal E$, $pS_t-(e^p-1)\mu_t\ge c^\prime \mu_t$.
If $\mu_t\ge \mu$ almost surely for some $\mu>0$, then on $\mathcal E$, $pS_t-(e^p-1)\mu_t\ge c^\prime \mu$.
Thus,
\[
\mathcal E
\subseteq
\left\{
\exp\left(
pS_t-(e^p-1)\mu_t
\right)
\ge
\exp(c^\prime\mu)
\right\}.
\]
By Markov's inequality,
\begin{align*}
\mathbb P(\mathcal E)
&\le
\mathbb P\left(
\exp\left(
pS_t-(e^p-1)\mu_t
\right)
\ge
\exp(c^\prime\mu)
\right) \\
&\le
\frac{
\mathbb E\left[
\exp\left(
pS_t-(e^p-1)\mu_t
\right)
\right]
}{
\exp(c^\prime\mu)
} \\
&\le
\exp(-c^\prime\mu).
\end{align*}
Therefore,
\[
\mathbb P\left(
S_t\le (1-c)\mu_t
\right)
\le
\exp\left\{
-\mu\left[
c+(1-c)\log(1-c)
\right]
\right\}.
\]
Finally, since
\[
c+(1-c)\log(1-c)
\ge
\frac{c^2}{2},
\]
we obtain
\[
\mathbb P\left(
S_t\le (1-c)\mu_t
\right)
\le
\exp\left\{
-\frac{c^2\mu}{2}
\right\}.
\]
\end{proof}

\begin{lemma}[Adapted matrix Chernoff bound]
\label{lem:adapted-matrix-chernoff}
Let $\{\mathcal H_s\}_{s\ge0}$ be a filtration and let $\{Z_s\}_{s=1}^t$ be an adapted sequence of random positive semidefinite matrices satisfying $0 \preceq Z_s \preceq R I_d$ almost surely for some positive $R$. Define $W_t = \sum_{s=1}^t \mathbb E\left( Z_s \mid \mathcal H_{s-1} \right)$. Suppose that $W_t \succeq \mu_t I_d$ almost surely for some deterministic $\mu_t>0$. Then, for any
\[
\mathbb P\left(
\lambda_{\min}
\left(
\sum_{s=1}^t Z_s
\right)
\le
(1-\delta)\mu_t
\right)
\le
d \exp\left\{- \frac{\delta^2\mu_t}{2R} \right\}.
\]
\end{lemma}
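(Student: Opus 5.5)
The plan is the matrix Laplace-transform (Freedman/Tropp) method, run in the lower-tail direction with a supermartingale replacing independence. The statement reads ``for any'' without completing it. I will prove it for every $\delta\in(0,1)$, which is how it is used later with $\delta=1/2$.

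First, fix $\theta>0$. Since $0\preceq Z_s\preceq RI_d$, convexity of $x\mapsto e^{-\theta x}$ on $[0,R]$ gives the semidefinite bound
\[
e^{-\theta Z_s}\preceq I_d-\frac{1-e^{-\theta R}}{R}Z_s .
\]
Taking conditional expectations and using $I-A\preceq e^{-A}$ together with operator monotonicity of $\log$, I get
\[
\log\mathbb E\left(e^{-\theta Z_s}\mid\mathcal H_{s-1}\right)\preceq -g(\theta)\,\mathbb E(Z_s\mid\mathcal H_{s-1}),
\qquad g(\theta)=\frac{1-e^{-\theta R}}{R}.
\]

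Second, I define
\[
Y_k=\operatorname{tr}\exp\Bigl(-\theta\sum_{s\le k}Z_s+g(\theta)\sum_{s\le k}\mathbb E(Z_s\mid\mathcal H_{s-1})\Bigr).
\]
I claim $\{Y_k\}$ is a supermartingale with $Y_0=d$. The argument is Tropp's: Lieb's concavity theorem makes $X\mapsto\operatorname{tr}\exp(H+\log X)$ concave, so conditional Jensen applies to the last increment. Combined with the bound from the first step and monotonicity of the trace exponential, this gives $\mathbb E(Y_k\mid\mathcal H_{k-1})\le Y_{k-1}$. Hence $\mathbb E Y_t\le d$.

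Third, I use $W_t\succeq\mu_tI_d$ with $\mu_t$ deterministic. On the event $\{\lambda_{\min}(\sum_s Z_s)\le(1-\delta)\mu_t\}$, the largest eigenvalue of the exponent is at least $-\theta(1-\delta)\mu_t+g(\theta)\mu_t$. This uses Weyl's inequality $\lambda_{\max}(A+B)\ge\lambda_{\max}(A)+\lambda_{\min}(B)$ with $B=g(\theta)W_t\succeq g(\theta)\mu_tI_d$. Since $\operatorname{tr}e^{H}\ge e^{\lambda_{\max}(H)}$, Markov's inequality yields
\[
\mathbb P\le d\exp\{\theta(1-\delta)\mu_t-g(\theta)\mu_t\}.
\]
I then choose $\theta=-\log(1-\delta)/R$, so that $g(\theta)=\delta/R$. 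This gives
\[
d\Bigl[\frac{e^{-\delta}}{(1-\delta)^{1-\delta}}\Bigr]^{\mu_t/R}.
\]
The scalar inequality $-\delta-(1-\delta)\log(1-\delta)\le-\delta^2/2$ has already been verified in Lemma~\ref{lem:adaptive_chernoff}, and it finishes the proof.

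The main obstacle is the second step, the supermartingale property under adaptivity. There the conditional mean $\mathbb E(Z_s\mid\mathcal H_{s-1})$ is random and enters the exponent, and the Lieb-concavity argument must be applied with the previous partial sum treated as $\mathcal H_{k-1}$-measurable. If this gets delicate, I would instead cite Tropp's matrix Freedman/Chernoff theorem for adapted sequences (User-friendly tail bounds for matrix martingales), since the setting matches it exactly.
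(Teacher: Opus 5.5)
Your proposal is correct and follows the paper's proof essentially step for step. Both arguments use the same chord bound with $g(\theta)=(1-e^{-\theta R})/R$, the same trace-exponential supermartingale built via Lieb's theorem and conditional Jensen, Markov's inequality, and the same choice $\theta=-\log(1-\delta)/R$; the only cosmetic difference is that you lower-bound $\lambda_{\max}$ of the exponent with Weyl's inequality, while the paper evaluates the quadratic form at a unit eigenvector for $\lambda_{\min}\bigl(\sum_s Z_s\bigr)$, which amounts to the same thing.
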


\begin{proof}
Fix $\theta>0$ and define
\[
g(\theta) = \frac{1-e^{-\theta R}}{R}.
\]
Since $e^{-\theta x}$ is convex for $x$ on $[0,R]$, we have $e^{-\theta x} \le 1-g(\theta)x$ for every $x\in[0,R]$. By functional calculus, the same inequality holds in
the semidefinite order:
\[
e^{-\theta Z_s}
\preceq
I_d-g(\theta)Z_s.
\]
Taking conditional expectations gives
\[
\mathbb E\left(
e^{-\theta Z_s}
\mid
\mathcal H_{s-1}
\right)
\preceq
I_d-g(\theta)\mathbb E\left( Z_s \mid \mathcal H_{s-1} \right).
\]
Since $I_d-H \preceq e^{-H}$ for every positive semidefinite matrix $H$, it follows that
\[
\mathbb E\left(
e^{-\theta Z_s}
\mid
\mathcal H_{s-1}
\right)
\preceq
\exp\left\{
-g(\theta)\mathbb E\left( Z_s \mid \mathcal H_{s-1} \right)
\right\}.
\]
Equivalently,
\[
\log
\mathbb E\left(
e^{-\theta Z_s}
\mid
\mathcal H_{s-1}
\right)
\preceq
-g(\theta)\mathbb E\left( Z_s \mid \mathcal H_{s-1} \right).
\]
Consider the trace-exponential process
\[
L_s(\theta) =
\operatorname{Tr} \left(
\exp\left\{
-\theta \sum_{r=1}^s Z_r
+ g(\theta)W_s
\right\} \right).
\]
We now show that $\{L_s(\theta)\}_{s\ge0}$ is a nonnegative supermartingale. By Lieb's theorem and Jensen's inequality,
\begin{align*}
& \mathbb E\left[ L_s(\theta) \mid \mathcal H_{s-1} \right] \\
=&
\mathbb E\left[
\left.
\operatorname{Tr}
\left( \exp\left\{
-\theta \sum_{r=1}^{s-1} Z_r
+ g(\theta)W_s
- \theta Z_s
\right\}
\right|
\mathcal H_{s-1}
\right) \right] \\
\le &
\operatorname{Tr} \left(
\exp\left\{
-\theta \sum_{r=1}^{s-1} Z_r
+ g(\theta)W_s
+ \log \mathbb E\left( e^{-\theta Z_s} \mid \mathcal H_{s-1} \right)
\right\} \right) \\
\le &
\operatorname{Tr} \left(
\exp\left\{
-\theta \sum_{r=1}^{s-1} Z_r
+
g(\theta)W_s
- g(\theta)\mathbb E\left( Z_s \mid \mathcal H_{s-1} \right)
\right\} \right) \\
= &
\operatorname{Tr} \left(
\exp\left\{
-\theta \sum_{r=1}^{s-1} Z_r
+ g(\theta)W_{s-1}
\right\} \right) \\
= & L_{s-1}(\theta).
\end{align*}
Therefore, $\mathbb E\left[ L_t(\theta) \right] \le L_0(\theta) = \operatorname{Tr}(I_d) = d$.
Now consider the lower-tail event 
$$\mathcal E_t=\left\{ \lambda_{\min}\left(\sum_{s=1}^t Z_s\right) \le (1-\delta)\mu_t \right\}.$$
On this event $\mathcal E_t$, let $\bm v \in \mathbb R^d$ be a unit eigenvector corresponding to $\lambda_{\min}(\sum_{s=1}^t Z_s)$. Since $W_t \succeq \mu_t I_d$,
we have
\begin{align*}
\lambda_{\max}
\left(
-\theta \sum_{s=1}^t Z_s
+ g(\theta)W_t
\right)
\ge
\bm v^\top
\left(
-\theta \sum_{s=1}^t Z_s
+ g(\theta)W_t
\right)
\bm v
\ge
-\theta(1-\delta)\mu_t
+ g(\theta)\mu_t.
\end{align*}
Hence, $L_t(\theta) \ge \exp\left\{ \left[ g(\theta) - \theta(1-\delta) \right] \mu_t \right\}$ on $\mathcal E_t$. By Markov's inequality,
\[
\mathbb P(\mathcal E_t)
\le
\mathbb P(L_t(\theta) \ge \exp\left\{ \left[ g(\theta) - \theta(1-\delta) \right] \mu_t \right\})
\le d \exp\left\{
- \left[
g(\theta)
- \theta(1-\delta)
\right]
\mu_t
\right\}.
\]
It remains to optimize over $\theta>0$. Taking $\theta = -\frac{1}{R} \log(1-\delta)$
gives $e^{-\theta R} = 1-\delta$ and $g(\theta) = \frac{\delta}{R}$.
Therefore,
\[
g(\theta) - \theta(1-\delta)
=
\frac{\delta + (1-\delta)\log(1-\delta)}{R}.
\]
Substituting this expression yields
\[
\mathbb P(\mathcal E_t)
\le
d \exp\left\{ -
\frac{\mu_t}{R}
\left[\delta + (1-\delta)\log(1-\delta)\right]
\right\}.
\]
Finally, using
\[
\delta
+
(1-\delta)\log(1-\delta)
\ge
\frac{\delta^2}{2}
\]
we have
\[
\mathbb P\left(
\lambda_{\min}
\left(
\sum_{s=1}^t Z_s
\right)
\le
(1-\delta)\mu_t
\right)
\le
d \exp\left\{
- \frac{\delta^2\mu_t}{2R}
\right\}.
\]
\end{proof}

\begin{lemma}[Self-normalized concentration for adaptive samples]
\label{lem:adaptive-subgaussian}
Fix an action pair $(i,j)$. Define $a_s^{ij} = \mathbb I(i_s=i,j_s=j)$ and $\mathcal N_t^{ij}=\sum_{s=1}^t a_s^{ij}$.
Suppose that, conditional on the event $a_s^{ij}=1$, the noise term $e_s$ is mean-zero and $\sigma_{ij}$-sub-Gaussian, i.e.,
\[
\mathbb E
\left[
\exp(\lambda e_s)|
a_s^{ij}=1
\right]
\le
\exp\left\{
\frac{\lambda^2\sigma_{ij}^2}{2}
\right\}
\]
for every $\lambda\in\mathbb R$. Then, for every $h>0$ and $n>0$,
\[
\mathbb P\left(
\left|
\sum_{s=1}^t a_s^{ij} e_s
\right|
>
h\mathcal N_t^{ij},
\quad
\mathcal N_t^{ij}\ge n
\right)
\le
2
\exp\left\{
-\frac{nh^2}{2\sigma_{ij}^2}
\right\}.
\]
\end{lemma}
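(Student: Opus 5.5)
The plan is to run an exponential supermartingale (Chernoff-type) argument on the adaptively selected noise, peel over the random count $\mathcal N_t^{ij}$, and close with a union bound over the two tails.

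For $\lambda\in\mathbb R$, define
\[
M_s(\lambda)=\exp\Big\{\lambda\sum_{r=1}^s a_r^{ij}e_r-\frac{\lambda^2\sigma_{ij}^2}{2}\sum_{r=1}^s a_r^{ij}\Big\}.
\]
The selection indicator $a_s^{ij}$ is determined before $e_s$ is revealed. The noise $e_s$ is conditionally independent of $\mathcal H_{s-1}$ given the chosen action pair, and it is $\sigma_{ij}$-sub-Gaussian on $\{a_s^{ij}=1\}$. Hence
\[
\mathbb E\big[\exp\{\lambda a_s^{ij}e_s-\lambda^2\sigma_{ij}^2a_s^{ij}/2\}\mid\mathcal H_{s-1}\big]\le 1,
\]
because the factor equals $1$ when $a_s^{ij}=0$ and is bounded by the sub-Gaussian mgf when $a_s^{ij}=1$. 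Iterating conditional expectations, exactly as in the proof of Lemma~\ref{lem:adaptive_chernoff}, shows that $M_t(\lambda)$ is a nonnegative supermartingale with $\mathbb E M_t(\lambda)\le1$.

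Next, bound the upper tail on the event
\[
\Big\{\sum_s a_s^{ij}e_s>h\mathcal N_t^{ij},\ \mathcal N_t^{ij}\ge n\Big\}.
\]
Take $\lambda=h/\sigma_{ij}^2>0$. On this event,
\[
\lambda\sum_s a_s^{ij}e_s-\frac{\lambda^2\sigma_{ij}^2}{2}\mathcal N_t^{ij}>\Big(\lambda h-\frac{\lambda^2\sigma_{ij}^2}{2}\Big)\mathcal N_t^{ij}=\frac{h^2}{2\sigma_{ij}^2}\mathcal N_t^{ij}\ge\frac{nh^2}{2\sigma_{ij}^2}.
\]
The exponent is monotone increasing in $\mathcal N_t^{ij}$, so the random count can be replaced by its lower bound $n$ directly, with no peeling over values of $\mathcal N_t^{ij}$. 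Markov's inequality applied to $M_t(\lambda)$ then gives probability at most $\exp\{-nh^2/(2\sigma_{ij}^2)\}$. The lower tail follows by the symmetric choice $\lambda=-h/\sigma_{ij}^2$, and a union bound over the two tails yields the factor $2$.

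The main obstacle is justifying the conditional mgf step cleanly. The sub-Gaussian assumption is stated as conditioning on $a_s^{ij}=1$, not on $\mathcal H_{s-1}$ together with the action pair. I will invoke the model assumption that $e_s$ given $(i_s,j_s)$ is independent of $\mathcal H_{s-1}$, together with the fact that randomization in $(i_s,j_s)$ uses only $\mathcal H_{s-1}$ and exogenous randomness. This gives the stronger statement that the conditional mgf given $\mathcal H_{s-1}$ and $a_s^{ij}=1$ is still bounded by $\exp\{\lambda^2\sigma_{ij}^2/2\}$.
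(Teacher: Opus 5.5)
Your proposal is correct and follows the paper's proof essentially step for step: the same exponential supermartingale $\exp\{\lambda\sum_s a_s^{ij}e_s-\tfrac{\lambda^2\sigma_{ij}^2}{2}\mathcal N_t^{ij}\}$, checked by splitting on $a_t^{ij}\in\{0,1\}$; the choice $\lambda=h/\sigma_{ij}^2$; Markov's inequality after lower-bounding $\mathcal N_t^{ij}$ by $n$; and the symmetric lower tail with a union bound giving the factor $2$. Your point that the conditional sub-Gaussian bound is needed given both $\mathcal H_{s-1}$ and $a_s^{ij}=1$, which the model's conditional-independence assumption on $e_s$ supplies, is the same step the paper uses, where it appears as an equality of conditional expectations.
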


\begin{proof}
For any fixed $\lambda\in\mathbb R$, define
\[
L_t^{ij}(\lambda)
=
\exp\left\{
\lambda \sum_{s=1}^t a_s^{ij} e_s
-
\frac{\lambda^2\sigma_{ij}^2}{2}
\mathcal N_t^{ij}
\right\}.
\]
We first show that $\{L_t^{ij}(\lambda),\mathcal H_t\}_{t\ge0}$ is a nonnegative supermartingale. Since
\begin{align*}
L_t^{ij}(\lambda)
&=
L_{t-1}^{ij}(\lambda)
\exp\left\{ \lambda a_t^{ij} e_t
- \frac{\lambda^2\sigma_{ij}^2}{2} a_t^{ij}
\right\},
\end{align*}
then,
\begin{align*}
\mathbb E\left[
L_t^{ij}(\lambda)
\mid
\mathcal H_{t-1}
\right] =
L_{t-1}^{ij}(\lambda)
\,
\mathbb E\left[
\left.
\exp\left\{
\lambda a_t^{ij} e_t 
- \frac{\lambda^2\sigma_{ij}^2}{2} a_t^{ij}
\right\}
\right|
\mathcal H_{t-1}
\right].
\end{align*}
Let $\pi_t(i,j) = \mathbb P\left( a_t^{ij}=1 \mid \mathcal H_{t-1} \right)$.
Conditioning on whether $a_t^{ij}=0$ or $a_t^{ij}=1$, we obtain
\begin{align*}
&
\mathbb E\left[
\left.
\exp\left\{
\lambda e_t a_t^{ij}
-
\frac{\lambda^2\sigma_{ij}^2}{2}
a_t^{ij}
\right\}
\right|
\mathcal H_{t-1}
\right]
\\
=&
1-\pi_t(i,j)+
\pi_t(i,j)
\mathbb E\left[
\left.
\exp\left\{
\lambda e_t
-
\frac{\lambda^2\sigma_{ij}^2}{2}
\right\}
\right|
\mathcal H_{t-1},
a_t^{ij}=1
\right].
\end{align*}
By the conditional sub-Gaussian definition,
\[
\mathbb E\left[
\left.
\exp\left\{
\lambda e_t -
\frac{\lambda^2\sigma_{ij}^2}{2}
\right\}
\right|
\mathcal H_{t-1},
a_t^{ij}=1
\right]
=
\mathbb E\left[
\left.
\exp\left\{
\lambda e_t -
\frac{\lambda^2\sigma_{ij}^2}{2}
\right\}
\right|
a_t^{ij}=1
\right] \le 1.
\]
Hence, $\mathbb E\left[ L_t^{ij}(\lambda) \mid \mathcal H_{t-1} \right] \le L_{t-1}^{ij}(\lambda)$.
Thus, $\left\{ L_t^{ij}(\lambda), \mathcal H_t \right\}_{t\ge 0}$ is a nonnegative supermartingale. Since $L_0^{ij}(\lambda)=1$, it follows that $\mathbb E\left[ L_t^{ij}(\lambda) \right] \le 1$. We next bound the upper tail.
For any $\lambda>0$, on the event $\left\{ \sum_{s=1}^t a_s^{ij} e_s  > h\mathcal N_t^{ij}, \, \mathcal N_t^{ij}\ge n \right\}$, we have
\[
\lambda \sum_{s=1}^t a_s^{ij} e_s
- \frac{\lambda^2\sigma_{ij}^2}{2}
\mathcal N_t^{ij} >
\left(\lambda h - \frac{\lambda^2\sigma_{ij}^2}{2} \right)
\mathcal N_t^{ij}.
\]
Choose $\lambda = {h}/{\sigma_{ij}^2}$, then we get
\[
\lambda h - \frac{\lambda^2\sigma_{ij}^2}{2}
= \frac{h^2}{2\sigma_{ij}^2}.
\]
Since $\mathcal N_t^{ij}\ge n$ on this event, we obtain
\[
\lambda \sum_{s=1}^t e_s a_s^{ij}
-
\frac{\lambda^2\sigma_{ij}^2}{2}
\mathcal N_t^{ij}
>
\frac{nh^2}{2\sigma_{ij}^2}.
\]
Therefore, by Markov's inequality,
\begin{align*}
\mathbb P \left( \sum_{s=1}^t a_s^{ij} e_s  > h\mathcal N_t^{ij}, \, \mathcal N_t^{ij}\ge n \right)
&\le
\mathbb P\left(
L_t^{ij}(\lambda)
>
\exp\left\{
\frac{nh^2}{2\sigma_{ij}^2}
\right\}
\right)
\\
&\le
\exp\left\{
-\frac{nh^2}{2\sigma_{ij}^2}
\right\}
\mathbb E\left[
L_t^{ij}(\lambda)
\right]
\\
&\le
\exp\left\{
-\frac{nh^2}{2\sigma_{ij}^2}
\right\},
\end{align*}
where the last inequality comes from $\mathbb E\left[ L_t^{ij}(\lambda) \right] \le 1$.
Applying the same argument to $-\sum_{s=1}^t e_s a_s^{ij}$ yields
\[
\mathbb P\left(
\sum_{s=1}^t e_s a_s^{ij}
<
-h\mathcal N_t^{ij},
\quad
\mathcal N_t^{ij}\ge n
\right)
\le
\exp\left\{
-\frac{nh^2}{2\sigma_{ij}^2}
\right\}.
\]
Combining the upper and lower tails by the union bound gives
\[
\mathbb P\left(
\left|
\sum_{s=1}^t e_s a_s^{ij}
\right|
>
h\mathcal N_t^{ij},
\quad
\mathcal N_t^{ij}\ge n
\right)
\le
2
\exp\left\{
-\frac{nh^2}{2\sigma_{ij}^2}
\right\}.
\]
\end{proof}

\begin{lemma}[Self-normalized contextual martingales]
\label{lem:context-self-normalized}
Fix an action pair $(i,j)$ and define $a_t^{ij} = \mathbb I\{i_t=i,j_t=j\}$, $V_t^{ij} = \sum_{s=1}^t a_s^{ij}\bm z_s\bm z_s^\top$, and $S_t^{ij}= \sum_{s=1}^t a_s^{ij}\bm z_s e_s$.
Suppose that $\|\bm z\|_2\le L_z$ and that, conditional on the selected action pair, the noise term $e_t$ is mean-zero and $\sigma_{ij}$-sub-Gaussian whenever $(i_t,j_t)=(i,j)$.
Then, for any $u>0$,
\[
\mathbb P\left(
\left\|
\left(
I_d+V_t^{ij}
\right)^{-1/2}
S_t^{ij}
\right\|_2
>
u
\right)
\le
\left(
1+tL_z^2
\right)^{d/2}
\exp\left\{
-\frac{u^2}{2\sigma_{ij}^2}
\right\}.
\]
\end{lemma}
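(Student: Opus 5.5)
This is the standard self-normalized bound of Abbasi-Yadkori et al., obtained by the method of mixtures applied to a vector-valued exponential supermartingale; I would redo that argument in the present adaptive setting. For fixed $\bm\theta\in\mathbb R^d$, define
\[
D_t^{ij}(\bm\theta)=\exp\left\{\frac{\bm\theta^\top S_t^{ij}}{\sigma_{ij}}-\frac12\bm\theta^\top V_t^{ij}\bm\theta\right\}.
\]
I would show it is a nonnegative supermartingale with respect to $\mathcal H_t^{\mathrm{con}}$, exactly as in the proof of Lemma~\ref{lem:adaptive-subgaussian}. The key observation is that $a_t^{ij}$ and $\bm z_t$ are determined before $e_t$ is revealed. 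Conditioning on $\mathcal H_{t-1}^{\mathrm{con}}$, $\bm z_t$ and $a_t^{ij}$, and splitting on the value of $a_t^{ij}$, the case $a_t^{ij}=1$ uses the conditional sub-Gaussianity of $e_t$ with parameter $\lambda=\bm\theta^\top\bm z_t/\sigma_{ij}$. This gives a conditional expectation of the one-step factor that is at most one, so $\mathbb E[D_t^{ij}(\bm\theta)]\le1$.

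Next I integrate $\bm\theta$ against the standard Gaussian density $\mathcal N(0,I_d)$ to form the mixture $\bar D_t=\int D_t^{ij}(\bm\theta)\,d\mathcal N(\bm\theta)$. By Tonelli's theorem, $\mathbb E[\bar D_t]\le1$ still holds. Completing the square gives the closed form
\[
\bar D_t=\det(I_d+V_t^{ij})^{-1/2}\exp\left\{\frac{1}{2\sigma_{ij}^2}\left\|(I_d+V_t^{ij})^{-1/2}S_t^{ij}\right\|_2^2\right\}.
\]
On the event $\{\|(I_d+V_t^{ij})^{-1/2}S_t^{ij}\|_2>u\}$, this gives
\[
\bar D_t\ge \det(I_d+V_t^{ij})^{-1/2}e^{u^2/(2\sigma_{ij}^2)}.
\]

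The remaining step, which I expect to be the only mildly delicate one, is to bound the random determinant deterministically so that Markov's inequality can be applied directly. By the AM--GM inequality on the eigenvalues,
\[
\det(I_d+V_t^{ij})\le \left(\frac{\operatorname{Tr}(I_d+V_t^{ij})}{d}\right)^d\le\left(1+\frac{tL_z^2}{d}\right)^d\le(1+tL_z^2)^d,
\]
using $\operatorname{Tr}V_t^{ij}\le\sum_s\|\bm z_s\|_2^2\le tL_z^2$. Hence on the event we have $\bar D_t\ge(1+tL_z^2)^{-d/2}e^{u^2/(2\sigma_{ij}^2)}$.

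Markov's inequality then gives
\[
\mathbb P\left(\left\|(I_d+V_t^{ij})^{-1/2}S_t^{ij}\right\|_2>u\right)\le(1+tL_z^2)^{d/2}\exp\{-u^2/(2\sigma_{ij}^2)\},
\]
which is the claim. The care points are verifying the supermartingale step under adaptivity and making sure the Gaussian integral is computed with the $\sigma_{ij}$ scaling placed consistently.
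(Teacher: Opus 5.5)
Your proposal is correct and follows the paper's proof essentially step by step. The paper mixes over $\bm q\sim\mathcal N(\bm 0,\sigma_{ij}^{-2}I_d)$ instead of your standard Gaussian $\bm\theta$, which is only the substitution $\bm q=\bm\theta/\sigma_{ij}$; after that it uses the same fixed-vector exponential supermartingale, the same completing-the-square closed form, the same AM--GM bound $\det(I_d+V_t^{ij})\le(1+tL_z^2)^d$, and Markov's inequality.
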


\begin{proof}
Fix an arbitrary vector $\bm q\in\mathbb R^d$. By the conditional
sub-Gaussianity of $e_s$, whenever $(i_s,j_s)=(i,j)$,
\[
\mathbb E\left[
\left.
\exp\left\{
(\bm q^\top\bm z_s)e_s
-
\frac{\sigma_{ij}^2}{2}
(\bm q^\top\bm z_s)^2
\right\}
\right|
\mathcal H_{s-1}^{\mathrm{con}},
\bm z_s,
i_s,
j_s
\right]
\le
1.
\]
If $(i_s,j_s)\neq(i,j)$, then $a_s^{ij}=0$, and the corresponding
exponential term is equal to one. Therefore,
\[
\mathbb E\left[
\left.
\exp\left\{
a_s^{ij}(\bm q^\top\bm z_s)e_s
-
\frac{\sigma_{ij}^2}{2}
a_s^{ij}(\bm q^\top\bm z_s)^2
\right\}
\right|
\mathcal H_{s-1}^{\mathrm{con}},
\bm z_s,
i_s,
j_s
\right]
\le
1.
\]
Applying the law of iterated expectations sequentially over
$s=1,\ldots,t$, we obtain
\begin{align}
\label{equ:fixed-q-supermartingale}
\mathbb E\left[
\exp\left\{
\bm q^\top S_t^{ij}
-
\frac{\sigma_{ij}^2}{2}
\bm q^\top V_t^{ij}\bm q
\right\}
\right]
\le
1.
\end{align}
We next average \eqref{equ:fixed-q-supermartingale} over an auxiliary Gaussian vector $\bm q
\sim
\mathcal N_d
\left(
\bm 0,
\sigma_{ij}^{-2}I_d
\right)$,
which is independent of the observed data. Moreover, because the integrand is nonnegative, Tonelli's theorem allows us to exchange the order of integration. Therefore,
\begin{align}
\label{equ:mixture-expectation}
\mathbb E\left[
\mathbb E_{\bm q}
\left[
\exp\left\{
\bm q^\top S_t^{ij}
-
\frac{\sigma_{ij}^2}{2}
\bm q^\top V_t^{ij}\bm q
\right\}
\right]
\right]
\le
1.
\end{align}
To evaluate the inner expectation, recall that the density of $\bm q\sim\mathcal N_d(\bm 0,\sigma_{ij}^{-2}I_d)$ is
\[
\left(
\frac{\sigma_{ij}^2}{2\pi}
\right)^{d/2}
\exp\left\{
-\frac{\sigma_{ij}^2}{2}
\bm q^\top\bm q
\right\}.
\]
Hence,
\begin{align*}
&
\mathbb E_{\bm q}
\left[
\exp\left\{
\bm q^\top S_t^{ij}
-
\frac{\sigma_{ij}^2}{2}
\bm q^\top V_t^{ij}\bm q
\right\}
\right] =
\left(
\frac{\sigma_{ij}^2}{2\pi}
\right)^{d/2}
\int_{\mathbb R^d}
\exp\left\{
\bm q^\top S_t^{ij}
-
\frac{\sigma_{ij}^2}{2}
\bm q^\top
\left(
I_d+V_t^{ij}
\right)
\bm q
\right\}
d\bm q.
\end{align*}
Completing the square gives
\begin{align*}
&
\bm q^\top S_t^{ij}
-
\frac{\sigma_{ij}^2}{2}
\bm q^\top
\left(
I_d+V_t^{ij}
\right)
\bm q
\\
=&
-\frac{\sigma_{ij}^2}{2}
\left[
\bm q
-
\frac{1}{\sigma_{ij}^2}
\left(
I_d+V_t^{ij}
\right)^{-1}
S_t^{ij}
\right]^\top
\left(
I_d+V_t^{ij}
\right)
\left[
\bm q
-
\frac{1}{\sigma_{ij}^2}
\left(
I_d+V_t^{ij}
\right)^{-1}
S_t^{ij}
\right]
\\
&\quad+
\frac{1}{2\sigma_{ij}^2}
(S_t^{ij})^\top
\left(
I_d+V_t^{ij}
\right)^{-1}
S_t^{ij}.
\end{align*}
Recall that, for any positive-definite matrix $A\succ0$ and any vector $\bm\mu\in\mathbb R^d$, 
\[ \int_{\mathbb R^d} \exp\left\{ -\frac12 (\bm q-\bm\mu)^\top A (\bm q-\bm\mu) \right\} d\bm q = (2\pi)^{d/2} \det(A)^{-1/2}. \] 
Applying this formula with $A = \sigma_{ij}^2 \left( I_d+V_t^{ij} \right)$ and $\bm\mu = \frac{1}{\sigma_{ij}^2} \left( I_d+V_t^{ij} \right)^{-1} S_t^{ij}$, we obtain 
\begin{align*} 
& \int_{\mathbb R^d} \exp\Bigg\{ -\frac{\sigma_{ij}^2}{2} \left[ \bm q - \frac{1}{\sigma_{ij}^2} \left( I_d+V_t^{ij} \right)^{-1} S_t^{ij} \right]^\top \left( I_d+V_t^{ij} \right) \left[ \bm q - \frac{1}{\sigma_{ij}^2} \left( I_d+V_t^{ij} \right)^{-1} S_t^{ij} \right] \Bigg\} d\bm q \\ 
=& (2\pi)^{d/2} \det\left\{ \sigma_{ij}^2 \left( I_d+V_t^{ij} \right) \right\}^{-1/2} 
= \left( \frac{2\pi}{\sigma_{ij}^2} \right)^{d/2} \det\left( I_d+V_t^{ij} \right)^{-1/2}. 
\end{align*} 
Therefore, 
\begin{align} 
\label{equ:gaussian-mixture} 
\mathbb E_{\bm q} \left[ \exp\left\{ \bm q^\top S_t^{ij} - \frac{\sigma_{ij}^2}{2} \bm q^\top V_t^{ij}\bm q \right\} \right] 
= \frac{ \exp\left\{ \frac{1}{2\sigma_{ij}^2} (S_t^{ij})^\top \left( I_d+V_t^{ij} \right)^{-1} S_t^{ij} \right\} }{ \det\left( I_d+V_t^{ij} \right)^{1/2} }. 
\end{align}
Combining \eqref{equ:mixture-expectation} and
\eqref{equ:gaussian-mixture} yields
\begin{align}
\label{equ:mixture-random-variable}
\mathbb E\left[
\frac{
\exp\left\{
\frac{1}{2\sigma_{ij}^2}
(S_t^{ij})^\top
\left(
I_d+V_t^{ij}
\right)^{-1}
S_t^{ij}
\right\}
}{
\det\left(
I_d+V_t^{ij}
\right)^{1/2}
}
\right]
\le
1.
\end{align}
It remains to upper bound the determinant. Since $\operatorname{Tr} \left( V_t^{ij} \right) = \sum_{s=1}^t a_s^{ij} \|\bm z_s\|_2^2 \le tL_z^2$,
the arithmetic--geometric mean inequality implies
\begin{align*}
\det\left(
I_d+V_t^{ij}
\right)
&\le
\left[
1+
\frac{\operatorname{Tr}(V_t^{ij})}{d}
\right]^d \le
\left(
1+tL_z^2
\right)^d.
\end{align*}
If $\left\| \left( I_d+V_t^{ij} \right)^{-1/2} S_t^{ij} \right\|_2 > u$, then $(S_t^{ij})^\top \left( I_d+V_t^{ij} \right)^{-1} S_t^{ij} > u^2$.
Consequently,
\begin{equation*}
\frac{
\exp\left\{
\frac{1}{2\sigma_{ij}^2}
(S_t^{ij})^\top
\left(
I_d+V_t^{ij}
\right)^{-1}
S_t^{ij}
\right\}
}{
\det\left(
I_d+V_t^{ij}
\right)^{1/2}
} >
\left(
1+tL_z^2
\right)^{-d/2}
\exp\left\{
\frac{u^2}{2\sigma_{ij}^2}
\right\}.    
\end{equation*}
Finally, applying Markov's inequality to the nonnegative random variable in
\eqref{equ:mixture-random-variable}, we obtain
\[
\mathbb P\left(
\left\|
\left(
I_d+V_t^{ij}
\right)^{-1/2}
S_t^{ij}
\right\|_2
>
u
\right)
\le
\left(
1+tL_z^2
\right)^{d/2}
\exp\left\{
-\frac{u^2}{2\sigma_{ij}^2}
\right\}.
\]
This completes the proof.
\end{proof}

We restate the implication of Theorem 22 of \citet{feinberg2022continuity} that is used in our proof. 
We also state the sufficient conditions from Remarks 22 and 23 that we use to verify Assumptions (A1) and (A2).
\begin{lemma}[Adapted continuity of equilibrium; adapted from Theorem 22 and Remark 22 of \citet{feinberg2022continuity}]
\label{lem:adapted-continuity-equilibria}
Let \(\Theta=\mathbb R^{m\times k}\times \mathbb R\) be the parameter space, and for 
\((A,\eta)\in\Theta\), define
\[
f(\bm x,\bm y,A,\eta)
=
\bm x^\top A\bm y-\eta R_X(\bm x)+\eta R_Y(\bm y),
\qquad
(\bm x,\bm y)\in \Delta_m\times \Delta_k .
\]
For each \((A,\eta)\in\Theta\), define the saddle-point correspondence
\[
\mathcal S(A,\eta)
=
\left\{
(\bm x,\bm y)\in\Delta_m\times\Delta_k:
f(\bm x,\bm y',A,\eta)
\le
f(\bm x,\bm y,A,\eta)
\le
f(\bm x',\bm y,A,\eta),
\ \forall \bm x'\in\Delta_m,\ \bm y'\in\Delta_k
\right\}.
\]
Suppose the following conditions hold: \\
(A1) For any fixed \((A,\eta)\in\Theta\) and \(\bm x\in\Delta_m\), the function $\bm y\mapsto f(\bm x,\bm y,A,\eta)$ is continuous on \(\Delta_k\), and the maximization problem $\max_{\bm y\in\Delta_k} f(\bm x,\bm y,A,\eta)$ attains its maximum. \\
(A2) For any fixed \((A,\eta)\in\Theta\) and \(\bm y\in\Delta_k\), the function $\bm x\mapsto f(\bm x,\bm y,A,\eta)$ is continuous on \(\Delta_m\), and the minimization problem $\min_{\bm x\in\Delta_m} f(\bm x,\bm y,A,\eta)$ attains its minimum. \\
(A3) The row player's feasible-set correspondence $(A,\eta)\mapsto \Delta_m$ is lower semi-continuous. \\
(A4) The column player's feasible-set correspondence $(A,\eta)\mapsto \Delta_k$ is lower semi-continuous. \\
Then the saddle-point correspondence \(\mathcal S(A,\eta)\) is upper semi-continuous in 
\((A,\eta)\). \\
In particular, if \((A_n,\eta_n)\to(A_0,\eta_0)\) and the limiting game has a unique saddle point, $\mathcal S(A_0,\eta_0)=\{(\bm x_0,\bm y_0)\}$, then for any sequence \((\bm x_n,\bm y_n)\in\mathcal S(A_n,\eta_n)\), we have $\bm x_n\to\bm x_0$, and $\bm y_n\to\bm y_0$. \\
Moreover, Assumption (A1) can be verified by the following two sufficient conditions from Remark 22 of \citet{feinberg2022continuity}: \\
(i) The function \(f\) is lower semi-continuous. \\
(ii) Whenever the parameters \((A_n,\eta_n)\) and the column player's strategy 
\(\bm y_n\) converge to a feasible limit, any feasible sequence of row-player strategies 
\(\bm x_n\in\Delta_m\) with bounded payoff values has a limit point in \(\Delta_m\). \\
Similarly, Assumption (A2) can be verified by the following two sufficient conditions from Remark 23 of \citet{feinberg2022continuity}: \\
(i) The function \(-f\) is lower semi-continuous. \\
(ii) Whenever the parameters \((A_n,\eta_n)\) and the row player's strategy 
\(\bm x_n\) converge to a feasible limit, any feasible sequence of column-player strategies 
\(\bm y_n\in\Delta_k\) with bounded payoff values has a limit point in \(\Delta_k\). \\
\end{lemma}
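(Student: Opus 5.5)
Rather than going through the general machinery of \citet{feinberg2022continuity}, I would prove the conclusion directly by a closed-graph argument. This is possible because the feasible sets here are fixed and compact. The first step is to show that $f(\bm x,\bm y,A,\eta)=\bm x^\top A\bm y-\eta R_X(\bm x)+\eta R_Y(\bm y)$ is jointly continuous on $\Delta_m\times\Delta_k\times\Theta$. The bilinear term is a polynomial. Under the convention $0\ln 0=0$, the map $u\mapsto u\ln u$ is continuous on $[0,1]$, so $R_X$ and $R_Y$ are continuous and bounded (by $\ln m$ and $\ln k$) on the simplices. The products $\eta R_X(\bm x)$ and $\eta R_Y(\bm y)$ are therefore jointly continuous. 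This single fact already yields (A1)--(A2): continuity on a compact set gives attainment by Weierstrass. It also makes (A3)--(A4) trivial, since the feasible-set correspondences are constant.

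Next I would take $(A_n,\eta_n)\to(A_0,\eta_0)$ and any $(\bm x_n,\bm y_n)\in\mathcal S(A_n,\eta_n)$. Since $\Delta_m\times\Delta_k$ is compact, a subsequence converges to some $(\bar{\bm x},\bar{\bm y})\in\Delta_m\times\Delta_k$. I then fix arbitrary $\bm x'\in\Delta_m$ and $\bm y'\in\Delta_k$. The saddle inequalities
\[
f(\bm x_n,\bm y',A_n,\eta_n)\le f(\bm x_n,\bm y_n,A_n,\eta_n)\le f(\bm x',\bm y_n,A_n,\eta_n)
\]
pass to the limit along the subsequence by joint continuity. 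Since $\bm x',\bm y'$ were arbitrary, this gives $(\bar{\bm x},\bar{\bm y})\in\mathcal S(A_0,\eta_0)$, so the graph of $\mathcal S$ is closed.

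For a correspondence with compact range, closed graph implies upper semi-continuity. I would verify this by contradiction. Suppose an open set $G\supseteq\mathcal S(A_0,\eta_0)$ and points $(\bm x_n,\bm y_n)\in\mathcal S(A_n,\eta_n)\setminus G$ exist. Extracting a convergent subsequence, its limit lies outside the open set $G$. By the closed-graph step, however, that limit lies in $\mathcal S(A_0,\eta_0)\subseteq G$, a contradiction. The same argument gives $\operatorname{dist}((\bm x_n,\bm y_n),\mathcal S(A_0,\eta_0))\to0$, which is the form actually used in the proofs of Theorems~\ref{thm:nash_con} and~\ref{conthm:nash_con}.

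For the ``in particular'' claim, suppose $\mathcal S(A_0,\eta_0)=\{(\bm x_0,\bm y_0)\}$. The argument above shows that every subsequence of $(\bm x_n,\bm y_n)$ has a further subsequence converging to $(\bm x_0,\bm y_0)$, so the whole sequence converges. I would add a remark that for $\eta\ge0$ the function $f$ is convex--concave, so $\mathcal S(A,\eta)\neq\emptyset$ by the minimax theorem. The main point needing care is continuity of the entropy regularizers at the boundary of the simplex (where $\ln x_i$ blows up) jointly with $\eta_n\to0$. I expect this to be handled by boundedness of $u\ln u$ on $[0,1]$, which gives $|\eta_nR_X(\bm x_n)-\eta_0R_X(\bar{\bm x})|\le|\eta_n-\eta_0|\ln m+|\eta_0|\,|R_X(\bm x_n)-R_X(\bar{\bm x})|\to0$.
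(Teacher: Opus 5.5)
Your argument is correct, and it takes a different route from the paper. The paper does not prove this lemma. It restates Theorem~22 of \citet{feinberg2022continuity}, and in the proof of Theorem~\ref{thm:nash_con} it only checks (A1)--(A4) before invoking that general result: (A1)--(A2) through Remarks~22--23 and an assertion that $\widetilde f$ is continuous, (A3)--(A4) through constancy of the feasible sets. You prove the conclusion directly instead. You establish joint continuity of $f$ on $\Delta_m\times\Delta_k\times\Theta$ and pass to the limit in the two saddle inequalities to get a closed graph. Compactness of $\Delta_m\times\Delta_k$ then gives upper semi-continuity, the distance form $\operatorname{dist}((\bm x_n,\bm y_n),\mathcal S(A_0,\eta_0))\to 0$, and, via the subsequence argument, full convergence under uniqueness.

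Your route is self-contained. It isolates the one fact that carries the weight: $\eta R_X(\bm x)$ and $\eta R_Y(\bm y)$ are jointly continuous at the simplex boundary and as $\eta_n\to\eta_0$, which the paper only asserts. It also shows that (A1)--(A4) hold unconditionally for this $f$. The lemma's hypotheses are therefore automatic, and the ``Moreover'' sufficient conditions become trivially true. You thereby avoid leaning on the paper's loose paraphrase of Remarks~22--23; for instance, lower semi-continuity of $f$ alone does not give the continuity in $\bm y$ that (A1) requires. What the citation buys is generality, such as parameter-dependent or noncompact strategy sets and merely semicontinuous payoffs. None of that is used in this paper.

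One correction to your side remark on nonemptiness. With $f$ as written, $-\eta R_X$ is concave in the minimizing variable $\bm x$ and $\eta R_Y$ is convex in the maximizing variable $\bm y$. So for $\eta>0$ the function is concave--convex, not convex--concave, and $\mathcal S(A,\eta)$ can be empty. In matching pennies, for example, any saddle point would have to be a pure pair, and none exists. Your remark holds for the opposite convention $+\eta R_X-\eta R_Y$, which the paper itself uses in the regret proof, or for $\eta\le 0$ with the literal signs. This does not affect your proof, since upper semi-continuity and the convergence statement do not need nonemptiness. You should either state the remark with the corrected sign or drop it.
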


\begin{lemma}[\emph{Bound of saddle value difference}]
\label{lemma:sd_dif}
Suppose that $g:X\times Y \times I \subseteq \mathbb{R}^m \times \mathbb{R}^k \times \mathbb{R}^p \rightarrow \mathbb{R}$ is a continuous function, with $X,Y$ compact and $I = [a_1,b_1]\times[a_2,b_2]\times \dots \times [a_p,b_p]$, and also suppose ${\nabla_t} g:X\times Y \times I \subseteq \mathbb{R}^m \times \mathbb{R}^k \times \mathbb{R}^p \rightarrow \mathbb{R}^p$ is its continuous gradient. Then the value function $V:I\subseteq\mathbb{R}^p \rightarrow \mathbb{R}$ is given by
\begin{equation*}
    V(\bm{t})=\min_{\bm{x}\in X} \max_{\bm{y}\in Y}g(\bm{x},\bm{y},\bm{t}) = g(X^\ast(\bm{t}),Y^\ast(\bm{t}),\bm{t}),
\end{equation*}
where $X^\ast(\bm{t})=\arg\min_{\bm{x}\in X} \max_{\bm{y}}g(\bm{x},\bm{y},\bm{t})$ and $Y^\ast(\bm{t})=\arg \max_{\bm{y}}\min_{\bm{x}\in X} g(\bm{x},\bm{y},\bm{t})$. Then we have
\begin{equation}
    \nabla_tg(X^\ast(\bm{t}_1),Y^\ast(\bm{t}_2),\bar{\bar{\bm{t}}})(\bm{t}_1-\bm{t}_2) \le V(\bm{t}_1) - V(\bm{t}_2) \le \nabla_tg(X^\ast(\bm{t}_2),Y^\ast(\bm{t}_1),\bar{\bm{t}})(\bm{t}_1-\bm{t}_2),
\end{equation}
for some $\bar{\bm{t}}$ and $\bar{\bar{\bm{t}}}$ satisfying $\bar{\bm{t}} = 
(1-\theta_1)\bm{t}_1+\theta_1\bm{t}_2$ and $\bar{\bar{\bm{t}}} = 
(1-\theta_2)\bm{t}_1+\theta_2\bm{t}_2$, $0<\theta_1,\theta_2<1$.

\begin{proof}
Building on the foundational work of \citet{de2009general}, who established the absolute continuity of the maximum value with respect to multidimensional parameters in their Theorem 2, we extend this result to minimax problems and saddle values through a combination with Theorem 4 in \citet{milgrom2002envelope}.

By the definition of a saddle point, we can write
\begin{align*}
    V(\bm{t}_1) - V(\bm{t}_2) &= g(X^\ast(\bm{t_1}),Y^\ast(\bm{t_1}),\bm{t_1}) - g(X^\ast(\bm{t_2}),Y^\ast(\bm{t_2}),\bm{t_2}) \\
    & \le g(X^\ast(\bm{t_2}),Y^\ast(\bm{t_1}),\bm{t_1}) - g(X^\ast(\bm{t_2}),Y^\ast(\bm{t_1}),\bm{t_2}) \\
    & \le \nabla_t g(X^\ast(\bm{t_2}),Y^\ast(\bm{t_1}),\bar{\bm{t}})(\bm{t}_1-\bm{t}_2),
\end{align*}
for some $\bar{\bm{t}}$ between $\bm{t}_1$ and $\bm{t}_2$,where in the last inequality we use the the Mean Value Theorem. So we can express $\bar{\bm{t}}$ as $(1-\theta_1)\bm{t}_1+\theta_1\bm{t}_2$ with $0<\theta_1<1$. Similarly, we have
\begin{align*}
    V(\bm{t}_1) - V(\bm{t}_2) &= g(X^\ast(\bm{t_1}),Y^\ast(\bm{t_1}),\bm{t_1}) - g(X^\ast(\bm{t_2}),Y^\ast(\bm{t_2}),\bm{t_2}) \\
    & \ge g(X^\ast(\bm{t_1}),Y^\ast(\bm{t_2}),\bm{t_1}) - g(X^\ast(\bm{t_1}),Y^\ast(\bm{t_2}),\bm{t_2}) \\
    & \ge \nabla_t g(X^\ast(\bm{t_1}),Y^\ast(\bm{t_2}),\bar{\bar{\bm{t}}})(\bm{t}_1-\bm{t}_2),
\end{align*}
for some $\bar{\bm{t}} = (1-\theta_2)\bm{t}_1+\theta_2\bm{t}_2$  with $0<\theta_2<1$. Hence, the proof is completed.
\end{proof}
\end{lemma}

The following Lemma~\ref{lemma1_in_chen} is adapted from Lemma 1 of \cite{chen2021statistical}.
\begin{lemma}[Adapted from Lemma~1 of \citet{chen2021statistical}]
\label{lemma1_in_chen}
Suppose $\{\mathcal{H}_t : t = 1,\cdots,T\}$ is an increasing filtration of $\sigma$-fields. Let $\{W_t : t = 1,\cdots,T\}$ be a sequence of random variables such that $W_t$ is $\mathcal{H}_{t-1}$-measurable and $|W_t| \leq L_z$ almost surely for all $t$. Let $\{e_t : t = 1,\cdots,T\}$ be independent $\sigma$-subgaussian, and $e_t \perp \mathcal{H}_{t-1}$ for all $t$. Let $S=\{s_1,\cdots,s_{|S|}\}\subseteq \{1,\cdots,T\}$ be an index set where $|S|$ is the number of elements in $S$. Then for $\kappa>0$,
\[
P\left(\sum_{s\in S} W_s e_s \geq x\right)
\leq
\exp\left\{
-\frac{\kappa^2}{2|S|\sigma^2 L_z^2}
\right\}.
\]
\end{lemma}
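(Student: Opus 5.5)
This is a Hoeffding-type deviation bound for a sum of bounded-predictable-coefficient times sub-Gaussian noise. I would prove it by the standard exponential supermartingale argument combined with the Chernoff method. The statement as written has a typo: the threshold should be $\kappa$, so the claim is $P(\sum_{s\in S}W_se_s\ge \kappa)\le \exp\{-\kappa^2/(2|S|\sigma^2L_z^2)\}$.

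First, fix $\lambda>0$ and order $S$ increasingly as $s_1<\dots<s_{|S|}$. Define
\[
L_n(\lambda)=\exp\Big\{\lambda\sum_{\ell=1}^n W_{s_\ell}e_{s_\ell}-\frac{\lambda^2\sigma^2}{2}\sum_{\ell=1}^n W_{s_\ell}^2\Big\}.
\]
Since $W_{s_n}$ is $\mathcal H_{s_n-1}$-measurable and $e_{s_n}\perp\mathcal H_{s_n-1}$ is $\sigma$-sub-Gaussian, conditioning on $\mathcal H_{s_n-1}$ gives
\[
\mathbb E[\exp(\lambda W_{s_n}e_{s_n})\mid\mathcal H_{s_n-1}]\le \exp(\lambda^2\sigma^2W_{s_n}^2/2).
\]
The earlier factors of $L_{n-1}$ are $\mathcal H_{s_{n-1}}\subseteq\mathcal H_{s_n-1}$-measurable. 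Here I use that $e_{s_{n-1}}$ is $\mathcal H_{s_{n-1}}$-measurable, which holds in the paper's setting because the history contains $r_s$ and hence $e_s$; I would state this adaptedness explicitly. It follows that $\{L_n\}$ is a nonnegative supermartingale with $\mathbb E L_{|S|}\le 1$.

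Second, use $|W_s|\le L_z$ to bound $\sum W_{s}^2\le |S|L_z^2$ deterministically. This gives
\[
\mathbb E\exp\Big\{\lambda\sum_{s\in S}W_se_s\Big\}\le \exp(\lambda^2\sigma^2|S|L_z^2/2).
\]
Markov's inequality then yields
\[
P\Big(\sum W_se_s\ge\kappa\Big)\le \exp(-\lambda\kappa+\lambda^2\sigma^2|S|L_z^2/2).
\]
Optimizing at $\lambda=\kappa/(\sigma^2|S|L_z^2)$ gives the stated bound. The lower tail follows symmetrically by replacing $e_s$ with $-e_s$.

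The main obstacle is not the calculation but the measurability bookkeeping: ensuring the supermartingale step is valid when $S$ is a fixed (deterministic) index set and the filtration is indexed by time rather than by position in $S$. If $S$ were random, for example the set of times a given action pair was played, one would instead absorb the indicator into $W_s$ and take $S=[T]$, as in Lemma~\ref{lem:adaptive-subgaussian}. This is also how the lemma is actually used in the variance-estimator consistency argument, where dividing by $\sum a_s^{ij}\gtrsim t\epsilon_t^2$ gives the $o_p(1)$ conclusion.
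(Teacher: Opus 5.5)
Your proof is correct. The paper gives no proof of this lemma; it imports it from Lemma~1 of \citet{chen2021statistical}. Your exponential-supermartingale plus Chernoff argument, with the typo $x\to\kappa$ fixed and $\lambda=\kappa/(\sigma^2|S|L_z^2)$, is the standard proof of that result.

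The adaptedness hypothesis you add ($e_s$ being $\mathcal H_s$-measurable) is genuinely necessary, not just bookkeeping. Take i.i.d.\ Rademacher $e_1,e_2$ (so $\sigma=1$), $\mathcal H_0$ trivial, $\mathcal H_1=\sigma(e_1e_2)$, $\mathcal H_2=\sigma(e_1,e_2)$, $W_1=1$ and $W_2=e_1e_2$. Every stated hypothesis holds, yet $W_1e_1+W_2e_2=2e_1$. Hence $P(W_1e_1+W_2e_2\ge 2)=1/2>e^{-1}$, which violates the claimed bound at $\kappa=2$.
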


The following Lemma~\ref{lem:matrix-vector} is adapted from Lemma~6 of
\citet{chen2021statistical}.
\begin{lemma}[Adapted from Lemma~6 of \citet{chen2021statistical}]
\label{lem:matrix-vector}
Let $\{X_n\}_{n\ge1}$ be a sequence of $d\times d$ symmetric random matrices, and let $X$ be a $d\times d$ symmetric matrix. Then $X_n \stackrel{p}{\to} X$ if and only if $\bm v^\top X_n \bm v \stackrel{p}{\to} \bm v^\top X \bm v$ for every $\bm v\in\mathbb R^d$.
\end{lemma}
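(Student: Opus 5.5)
The plan is to reduce the statement to a scalar martingale problem by fixing a direction $\bm v$. For any fixed $\bm v\in\mathbb R^d$ I will show that $\bm v^\top X_n\bm v \stackrel{p}{\to}\bm v^\top X\bm v$ holds if and only if $X_n\stackrel{p}{\to}X$. Lemma~\ref{lem:matrix-vector} gives this equivalence directly; if I have to prove it myself, the argument runs as follows.

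The forward direction is immediate. The map $M\mapsto \bm v^\top M\bm v$ is continuous (indeed linear), so the continuous mapping theorem applies.

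The converse uses polarization. For symmetric matrices,
\[
\bm u^\top M \bm w=\tfrac14\left[(\bm u+\bm w)^\top M(\bm u+\bm w)-(\bm u-\bm w)^\top M(\bm u-\bm w)\right].
\]
Applying this with $\bm u=\bm e_a$ and $\bm w=\bm e_b$ writes every entry $[X_n]_{ab}$ as a fixed linear combination of two quadratic forms. Each of those forms converges in probability to the matching form in $X$ by hypothesis. Convergence in probability is preserved under finite linear combinations, so every entry converges. Since there are only $d^2$ entries, a union bound gives convergence in any matrix norm.

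The one point that needs care is symmetry. The polarization identity recovers only the symmetric part of a matrix, so it is essential that both $X_n$ and $X$ are symmetric. In the applications to $\frac1t\sum a_s^{ij}\bm z_s\bm z_s^\top$ and to $\widehat H$ this holds automatically, so I do not expect a real obstacle here.
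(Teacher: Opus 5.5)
Your self-contained argument is correct. The forward direction follows from continuity of $M\mapsto\bm v^\top M\bm v$. For the converse, each entry $[X_n]_{ab}=\tfrac14\{(\bm e_a+\bm e_b)^\top X_n(\bm e_a+\bm e_b)-(\bm e_a-\bm e_b)^\top X_n(\bm e_a-\bm e_b)\}$ is a fixed linear combination of two quadratic forms, and symmetry is exactly what makes this identity exact. Finitely many entries then give convergence in any norm. The paper does not prove this lemma at all; it imports it from Lemma~6 of \citet{chen2021statistical}. Your polarization argument therefore replaces that citation with the standard elementary proof rather than taking a different route.

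You should, however, strike your opening paragraph, which contains three errors.
\begin{itemize}
\item The claim that, for a \emph{fixed} $\bm v$, convergence of $\bm v^\top X_n\bm v$ is equivalent to $X_n\stackrel{p}{\to}X$ is false. Take $d=2$, $\bm v=\bm e_1$, $X=0$ and $X_n=\operatorname{diag}(0,n)$: the quadratic form is identically zero while $X_n$ diverges. The converse needs the hypothesis in all the directions $\bm e_a\pm\bm e_b$, which is exactly what your own proof uses.
\item Saying that Lemma~\ref{lem:matrix-vector} ``gives this equivalence directly'' is circular, since that is the statement being proved.
\item There is no martingale problem here. The lemma is a purely linear-algebraic reduction from matrices to scalar quadratic forms. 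The martingale tools (Theorem~2.19 of \citet{hall1980martingale}) enter only where the lemma is applied, in the proofs of Theorems~\ref{conthm:para} and~\ref{conthm:wls_normality_misspecified}.
\end{itemize}
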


\begin{lemma}[Concentration of the contextual IPW terms]
\label{lem:context_ipw_score_concentration}
Fix an action pair $(i,j)$. Let $a_s^{ij}=\mathbb I\{i_s=i,j_s=j\}$ and $\pi_s(i,j)=\mathbb P\left(i_s=i,j_s=j\mid\mathcal H_{s-1}^{\mathrm{con}},\bm z_s\right)$.
Suppose that $\epsilon_s$ is non-increasing and $\pi_s(i,j)\ge {\epsilon_s^2}/{mk}$ for all $s\le t$.
Suppose that $\|\bm z\|_2 \le L_z$, $|\varphi^{ij}(\bm z) - \bm z^\top\bm\beta_\dagger^{ij}| \le L_\varphi$ for any $\bm{z} \sim P_z$. 
Then, for any $u>0$ and some $\ell \in [d]$,
\begin{equation}
\label{equ:ipw_approximation_score_tail}
\mathbb P
\left(\left|
\frac{1}{t}\sum_{s=1}^t\frac{a_s^{ij}}{\pi_s(i,j)}z_{s,\ell} \left(\varphi^{ij}(\bm z_s) - \bm z_s^\top\bm\beta_\dagger^{ij}\right)\right|> u \right)
\le
2\exp\left\{-\frac{t\epsilon_t^2u^2}{2mk\left(L_z^2L_\varphi^2+L_zL_\varphi u/3\right)}\right\}.    
\end{equation}
Suppose additionally that, whenever $(i_s,j_s)=(i,j)$, the reward noise $e_s$ is conditionally mean-zero and $\sigma_{ij}$-sub-Gaussian. Then, for any $u>0$ and some $\ell \in [d]$,
\begin{equation}
\label{equ:ipw_noise_score_tail}
\mathbb P
\left(\left|
\frac{1}{t}\sum_{s=1}^t\frac{a_s^{ij}}{\pi_s(i,j)}z_{s,\ell}e_s
\right|>u\right)
\le
2\exp\left[-\frac{t\epsilon_t^2}{4mk}\min\left\{\frac{u^2}{L_z^2\sigma_{ij}^2},\frac{u}{L_z\sigma_{ij}}
\right\}
\right].
\end{equation}
\end{lemma}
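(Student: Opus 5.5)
Our plan is to prove the two tail bounds separately and then combine them. The approach is a union bound over the $d$ coordinates of $\bm z_s$. We would take $\ell\in[d]$ fixed throughout.

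\textbf{Approximation-error term.} Set
\[
X_s=\frac{a_s^{ij}}{\pi_s(i,j)}z_{s,\ell}\bigl(\varphi^{ij}(\bm z_s)-\bm z_s^\top\bm\beta_\dagger^{ij}\bigr)-\mathbb E\bigl[z_{s,\ell}\bigl(\varphi^{ij}(\bm z_s)-\bm z_s^\top\bm\beta_\dagger^{ij}\bigr)\bigr].
\]
Conditioning first on $(\mathcal H_{s-1}^{\mathrm{con}},\bm z_s)$, the importance weight has conditional mean one. Since $\bm z_s$ is independent of the past, $\mathbb E(X_s\mid\mathcal H^{\mathrm{con}}_{s-1})=0$. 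The subtracted mean is exactly zero by the first-order condition defining $\bm\beta_\dagger^{ij}$, namely $\mathbb E[\bm z(\varphi^{ij}(\bm z)-\bm z^\top\bm\beta_\dagger^{ij})]=\bm 0$. So the sum in \eqref{equ:ipw_approximation_score_tail} is a martingale with increments $X_s$.

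We then record two bounds, using $|z_{s,\ell}|\le L_z$, the bound $L_\varphi$ on the approximation error, and $\pi_s\ge\epsilon_s^2/mk\ge\epsilon_t^2/mk$ (since $\epsilon_s$ is non-increasing):
\begin{align*}
|X_s|&\le \frac{mk\,L_zL_\varphi}{\epsilon_t^2},\\
\mathbb E(X_s^2\mid\mathcal H^{\mathrm{con}}_{s-1})&\le \mathbb E\Bigl[\frac{1}{\pi_s}z_{s,\ell}^2(\varphi^{ij}-\bm z_s^\top\bm\beta_\dagger^{ij})^2\Bigr]\le \frac{mk\,L_z^2L_\varphi^2}{\epsilon_t^2}.
\end{align*}
Freedman's inequality with predictable variance at most $t\,mkL_z^2L_\varphi^2/\epsilon_t^2$ and increment bound $mkL_zL_\varphi/\epsilon_t^2$, applied at level $tu$, gives
\[
2\exp\Bigl\{-\frac{t\epsilon_t^2u^2}{2mk(L_z^2L_\varphi^2+L_zL_\varphi u/3)}\Bigr\}.
\]
This is exactly \eqref{equ:ipw_approximation_score_tail}. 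The increment bound is only needed up to centering. It is cleanest to apply Freedman to the uncentered increments, since the mean is zero; the one-sided bound $X_s\le mkL_zL_\varphi/\epsilon_t^2$ suffices for each tail.

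\textbf{Noise term.} Set $Y_s=\frac{a_s^{ij}}{\pi_s}z_{s,\ell}e_s$. This is a martingale difference, because $\mathbb E(e_s\mid\mathcal H^{\mathrm{con}}_{s-1},\bm z_s,i_s,j_s)=0$. Here we would use a Bernstein-type argument for sub-Gaussian (hence sub-exponential) increments. Conditionally on $(\mathcal H^{\mathrm{con}}_{s-1},\bm z_s)$, with probability $\pi_s$ the increment is $w_s e_s$ with $|w_s|\le L_z/\pi_s$, and otherwise it is zero. This gives
\[
\mathbb E\bigl[e^{\theta Y_s}\mid\mathcal H^{\mathrm{con}}_{s-1},\bm z_s\bigr]\le 1+\pi_s\bigl(e^{\theta^2L_z^2\sigma_{ij}^2/(2\pi_s^2)}-1\bigr).
\]
For $|\theta|\le \pi_s/(L_z\sigma_{ij})$ the exponent is at most $1/2$. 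Using $e^x-1\le 2x$ on $[0,1/2]$, the right side is at most $\exp\{\theta^2L_z^2\sigma_{ij}^2/\pi_s\}\le\exp\{\theta^2 mkL_z^2\sigma_{ij}^2/\epsilon_t^2\}$. So the product of these conditional moment generating functions forms a supermartingale for $|\theta|\le\epsilon_t^2/(mkL_z\sigma_{ij})$. A Chernoff bound at level $tu$, optimized over $\theta$ in this range, gives
\[
\exp\Bigl[-\frac{t\epsilon_t^2}{4mk}\min\Bigl\{\frac{u^2}{L_z^2\sigma_{ij}^2},\frac{u}{L_z\sigma_{ij}}\Bigr\}\Bigr].
\]
The quadratic regime uses $\theta=u\epsilon_t^2/(2mkL_z^2\sigma_{ij}^2)$, and the linear regime uses the boundary value of $\theta$. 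Symmetrizing produces the factor $2$ and yields \eqref{equ:ipw_noise_score_tail}.

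The main obstacle is this second step: keeping the constants exactly $1/4$ and matching the $\min$ form. If the elementary mgf bound loses a constant, we would instead cite a standard martingale Bernstein inequality for conditionally sub-exponential increments with parameters $(\nu^2,b)=(2mkL_z^2\sigma_{ij}^2/\epsilon_t^2,\,mkL_z\sigma_{ij}/\epsilon_t^2)$.
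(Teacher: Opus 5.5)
Your proposal is correct and follows the paper's proof step for step. The approximation-error term is handled by Freedman's inequality applied to a bounded, mean-zero martingale; it is mean-zero because the importance weight has conditional mean one and the least-false first-order condition holds. The noise term is handled by a restricted-range conditional MGF bound followed by a Chernoff optimization.

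The only deviation is your wider admissible range $|\theta|\le\epsilon_t^2/(mkL_z\sigma_{ij})$, where the paper uses $\epsilon_t^2/(2mkL_z\sigma_{ij})$. This still yields the stated $\min$ form, and is even slightly sharper for $u>L_z\sigma_{ij}$, but only if you use $\theta=u\epsilon_t^2/(2mkL_z^2\sigma_{ij}^2)$ for every $u\le 2L_z\sigma_{ij}$. The boundary value alone is too weak for $u$ just above $L_z\sigma_{ij}$. Separately, the union bound over coordinates you mention belongs to the theorem that applies this lemma, not to the lemma itself.
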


\begin{proof}
We first prove~\eqref{equ:ipw_approximation_score_tail}. 
By iterated conditional expectation,
\begin{align*}
& \mathbb E
\left[
\frac{ a_s^{ij} }{ \pi_s(i,j) } z_{s,\ell} \left(\varphi^{ij}(\bm z_s) - \bm z_s^\top\bm\beta_\dagger^{ij} \right)
\mid
\mathcal H_{s-1}^{\mathrm{con}}
\right] \\
=&
\mathbb E
\left[\left.
\mathbb E\left(
\left.\frac{a_s^{ij}}{\pi_s(i,j)}z_{s,\ell}
\left(\varphi^{ij}(\bm z_s) - \bm z_s^\top\bm\beta_\dagger^{ij} \right)
\right|
\mathcal H_{s-1}^{\mathrm{con}},\bm z_s
\right)\right|\mathcal H_{s-1}^{\mathrm{con}}\right] \\
=&
\mathbb E
\left[
z_{s,\ell} \left.
\left(\varphi^{ij}(\bm z_s) - \bm z_s^\top\bm\beta_\dagger^{ij} \right) \right|\mathcal H_{s-1}^{\mathrm{con}} 
\right] \\
=& 0,
\end{align*}
where the last equality follows from the first-order optimality condition for the least-false parameter. Hence, $\{\frac{ a_s^{ij} }{ \pi_s(i,j) } z_{s,\ell} \left(\varphi^{ij}(\bm z_s) - \bm z_s^\top\bm\beta_\dagger^{ij} \right)\}_{s=1}^t$ is a martingale-difference sequence.
Moreover, since $\pi_s(i,j) \ge {\epsilon_t^2}/{mk}$, we have
\[
\left|
\frac{ a_s^{ij} }{ \pi_s(i,j) } z_{s,\ell} \left(\varphi^{ij}(\bm z_s) - \bm z_s^\top\bm\beta_\dagger^{ij} \right)
\right|
\le
\frac{mk L_zL_\varphi}{\epsilon_t^2}.
\]
Its conditional variance satisfies
\small{
\begin{equation*}
\mathbb E
\left[
\left.
\left(
\frac{ a_s^{ij} }{ \pi_s(i,j) } z_{s,\ell} \left(\varphi^{ij}(\bm z_s) - \bm z_s^\top\bm\beta_\dagger^{ij} \right)
\right)^2
\right|
\mathcal H_{s-1}^{\mathrm{con}}
\right]
\le
\mathbb E
\left[
\left.
\frac{
z_{s,\ell}^2
\left\{
\left(\varphi^{ij}(\bm z_s) - \bm z_s^\top\bm\beta_\dagger^{ij} \right)
\right\}^2}{\pi_s(i,j)}
\right|
\mathcal H_{s-1}^{\mathrm{con}}
\right]
\le 
\frac{mk L_z^2L_\varphi^2}{\epsilon_t^2}.
\end{equation*}
}
Therefore,
\[
\sum_{s=1}^t
\mathbb E
\left[
\left.
\left(
\frac{ a_s^{ij} }{ \pi_s(i,j) } z_{s,\ell} \left(\varphi^{ij}(\bm z_s) - \bm z_s^\top\bm\beta_\dagger^{ij} \right)
\right)^2
\right|
\mathcal H_{s-1}^{\mathrm{con}}
\right]
\le
\frac{tmkL_z^2L_\varphi^2}{\epsilon_t^2}.
\]
Applying Freedman's inequality yields
\begin{align*}
\mathbb P
\left(
\left|
\sum_{s=1}^t
\frac{ a_s^{ij} }{ \pi_s(i,j) } z_{s,\ell} \left(\varphi^{ij}(\bm z_s) - \bm z_s^\top\bm\beta_\dagger^{ij} \right)
\right|>tu
\right)
\le & \,
2\exp
\left\{
-
\frac{t^2u^2}{2\left[tmkL_z^2L_\varphi^2/\epsilon_t^2
+ (mk L_zL_\varphi/\epsilon_t^2)tu/3
\right]
}
\right\}
\\
= & \,
2\exp\left\{
-\frac{t\epsilon_t^2u^2/mk}{2\left(L_z^2L_\varphi^2+L_zL_\varphi u/3\right)}
\right\}.
\end{align*}
This proves~\eqref{equ:ipw_approximation_score_tail}.

We next prove~\eqref{equ:ipw_noise_score_tail}. Conditional on
$\mathcal H_{s-1}^{\mathrm{con}}$ and $\bm z_s$, we have
\begin{align*}
&
\mathbb E
\left[
\left.
\exp
\left(
q\frac{ a_s^{ij} }{ \pi_s(i,j) } z_{s,\ell} e_s
\right)
\right|
\mathcal H_{s-1}^{\mathrm{con}},
\bm z_s
\right] \\
=&
1-\pi_s(i,j)+ \pi_s(i,j)
\mathbb E
\left[
\left.
\exp
\left\{
\frac{
qz_{s,\ell}e_s}{\pi_s(i,j)}
\right\}
\right|
\mathcal H_{s-1}^{\mathrm{con}},
\bm z_s,i_s=i,j_s=j
\right] \\
\le &
1-\pi_s(i,j)+\pi_s(i,j)
\exp
\left\{
\frac{q^2z_{s,\ell}^2\sigma_{ij}^2}{2\pi_s^2(i,j)}
\right\},
\end{align*}
where the last inequality comes from the conditional sub-Gaussianity of $e_s$. 
For $|q| \le {\epsilon_t^2}/{2mkL_z\sigma_{ij}}$,
we have
\[
\frac{q^2z_{s,\ell}^2\sigma_{ij}^2}{2\pi_s^2(i,j)}
\le
\frac{\left(\frac{\epsilon_t^2}{2mkL_z\sigma_{ij}}\right)^2L_z^2\sigma_{ij}^2}{2\left(\frac{\epsilon_t^2}{mk} \right)^2}
\le \frac18.
\]
Using $\exp(v)-1 \le 2v$ when $0\le v\le\frac18$, we obtain
\begin{equation*}
\mathbb E
\left[
\left.
\exp
\left(
q\frac{ a_s^{ij} }{ \pi_s(i,j) } z_{s,\ell} e_s
\right)
\right|
\mathcal H_{s-1}^{\mathrm{con}},
\bm z_s
\right]
\le
1+\frac{q^2z_{s,\ell}^2\sigma_{ij}^2}{\pi_s(i,j)}
\le
\exp
\left\{
\frac{mk q^2L_z^2\sigma_{ij}^2}{\epsilon_s^2}
\right\}.
\end{equation*}
Iterating this conditional moment-generating-function bound gives
\[
\mathbb E
\left[
\exp
\left(
q\sum_{s=1}^t
\frac{ a_s^{ij} }{ \pi_s(i,j) } z_{s,\ell} e_s
\right)
\right]
\le
\exp
\left\{
\frac{tmkq^2L_z^2\sigma_{ij}^2}{\epsilon_t^2}
\right\}.
\]
We now apply the Chernoff method. Fix any $0<q\le {\epsilon_t^2}/{2mkL_z\sigma_{ij}}$.
Since the exponential function is strictly increasing, we have
\begin{equation*}
\mathbb P
\left(
\frac{1}{t}
\sum_{s=1}^t
\frac{ a_s^{ij} }{ \pi_s(i,j) } z_{s,\ell} e_s
>u\right)=
\mathbb P
\left(
\exp
\left\{q
\sum_{s=1}^t
\frac{ a_s^{ij} }{ \pi_s(i,j) } z_{s,\ell} e_s
\right\}
>
\exp\left\{qtu\right\}\right).
\end{equation*}
Applying Markov's inequality to the nonnegative random variable $\exp
\left\{
q\sum_{s=1}^t
\frac{ a_s^{ij} }{ \pi_s(i,j) } z_{s,\ell} e_s
\right\}$,
we obtain
\begin{align*}
\mathbb P
\left(
\exp
\left\{
q\sum_{s=1}^t
\frac{ a_s^{ij} }{ \pi_s(i,j) } z_{s,\ell} e_s
\right\}
>
\exp
\left\{
qtu
\right\}
\right)
&\le
\exp
\left\{-qtu\right\}
\mathbb E
\left[
\exp
\left\{
q\sum_{s=1}^t
\frac{ a_s^{ij} }{ \pi_s(i,j) } z_{s,\ell} e_s
\right\}
\right].
\end{align*}
Using the moment-generating-function bound derived above, we further have
\begin{align*}
\exp
\left\{
-qtu
\right\}
\mathbb E
\left[
\exp
\left\{
q\sum_{s=1}^t
\frac{ a_s^{ij} }{ \pi_s(i,j) } z_{s,\ell} e_s
\right\}
\right]
\le
\exp
\left\{-qtu+\frac{tmkq^2L_z^2\sigma_{ij}^2}{\epsilon_t^2}
\right\}.
\end{align*}
Therefore, for every $0<q\le\frac{\underline\pi_t}{2L_z\sigma_{ij}}$, we obtain
\begin{equation}
\label{equ:ipw_noise_chernoff}
\mathbb P
\left(
\frac{1}{t}
\sum_{s=1}^t
\frac{ a_s^{ij} }{ \pi_s(i,j) } z_{s,\ell} e_s
>u
\right)
\le
\exp
\left\{
-qtu+\frac{tmkq^2L_z^2\sigma_{ij}^2}{\epsilon_t^2}
\right\}.
\end{equation}
The unconstrained minimizer of the exponent is $q^\ast =\frac{u\epsilon_t^2}{2mkL_z^2\sigma_{ij}^2}$.
If $u\le L_z\sigma_{ij}$, then $q^\ast$ lies in the admissible range, and taking $q=q^\ast$ yields
\[
\mathbb P
\left(
\frac1t
\sum_{s=1}^t
\frac{ a_s^{ij} }{ \pi_s(i,j) } z_{s,\ell} e_s
>u\right)
\le
\exp
\left\{
-\frac{
t\epsilon_t^2u^2}{4mkL_z^2\sigma_{ij}^2}
\right\}.
\]
If $u>L_z\sigma_{ij}$, we take $q=\frac{\epsilon_t^2}{2mkL_z\sigma_{ij}}$, which gives
\[
\mathbb P
\left(
\frac1t
\sum_{s=1}^t
\frac{ a_s^{ij} }{ \pi_s(i,j) } z_{s,\ell} e_s
>u\right)
\le
\exp
\left\{
-\frac{t\epsilon_t^2u}{4mkL_z\sigma_{ij}}
\right\}.
\]
Combining the two cases and applying the same argument to
$-\frac{ a_s^{ij} }{ \pi_s(i,j) } z_{s,\ell} e_s$ yields
\[
\mathbb P
\left(
\left|
\frac1t
\sum_{s=1}^t
\frac{ a_s^{ij} }{ \pi_s(i,j) } z_{s,\ell} e_s
\right|
>u
\right)
\le
2\exp
\left[
-
\frac{t\epsilon_t^2}{4mk}
\min
\left\{
\frac{u^2}{L_z^2\sigma_{ij}^2},
\frac{u}{L_z\sigma_{ij}}
\right\}
\right].
\]
\end{proof}

\begin{lemma}[\emph{Properties of product norm}]
\label{lemma:norm}
Consider the product space $\mathbb{R}^{m \times k} \times \mathbb{R}$ equipped with the norm $\Vert (A,\eta) \Vert_2\coloneqq (\Vert A \Vert_F^2+\eta^2)^{1/2}$, where $\Vert A \Vert_F = \sqrt{\sum_{i=1} ^m \sum_{j=1}^k A_{ij}^2}$. Then $\Vert (\cdot\,,\cdot)\Vert_2$ defines a valid norm on $\mathbb{R}^{m \times k} \times \mathbb{R}$.
\end{lemma}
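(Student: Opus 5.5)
The plan is to reduce the claim to the familiar fact that the Euclidean norm on $\mathbb{R}^{mk+1}$ is a norm. I will define the vectorization map $\iota:\mathbb{R}^{m\times k}\times\mathbb{R}\to\mathbb{R}^{mk+1}$ by
\[
\iota(A,\eta)=(A_{11},\dots,A_{1k},\dots,A_{m1},\dots,A_{mk},\eta)^\top .
\]
This map is a linear bijection, since it only stacks coordinates and the vector-space operations on the product space are entrywise. Directly from the definition of $\Vert A\Vert_F$,
\[
\Vert (A,\eta)\Vert_2^2=\sum_{i=1}^m\sum_{j=1}^k A_{ij}^2+\eta^2=\Vert \iota(A,\eta)\Vert_2^2 ,
\]
so $\Vert(\cdot,\cdot)\Vert_2=\Vert\cdot\Vert_2\circ\iota$. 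A norm composed with an injective linear map is a norm, so this identification already gives the result. For completeness I will also check the three axioms directly.

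\emph{Positive definiteness.} The quantity $\Vert(A,\eta)\Vert_2$ is the square root of a sum of squares, so it is nonnegative. It vanishes only if every $A_{ij}=0$ and $\eta=0$, that is, only if $(A,\eta)$ is the zero element $(0,0)$.

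\emph{Absolute homogeneity.} For $c\in\mathbb{R}$ we have $\Vert(cA,c\eta)\Vert_2^2=c^2\Vert A\Vert_F^2+c^2\eta^2$. Taking square roots gives $\Vert(cA,c\eta)\Vert_2=|c|\,\Vert(A,\eta)\Vert_2$.

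\emph{Triangle inequality.} This is the only step with any content. I would first use that $\Vert\cdot\Vert_F$ is itself the Euclidean norm of $\mathrm{vec}(A)$, so $\Vert A+B\Vert_F\le\Vert A\Vert_F+\Vert B\Vert_F$. Then I apply Minkowski's inequality in $\mathbb{R}^2$ to the vectors $(\Vert A\Vert_F,|\eta|)$ and $(\Vert B\Vert_F,|\eta'|)$, which gives
\[
\Vert(A+B,\eta+\eta')\Vert_2\le\sqrt{(\Vert A\Vert_F+\Vert B\Vert_F)^2+(|\eta|+|\eta'|)^2}\le \Vert(A,\eta)\Vert_2+\Vert(B,\eta')\Vert_2 .
\]
Alternatively, the triangle inequality follows in one line from the Cauchy--Schwarz inequality on $\iota(A,\eta)$ and $\iota(B,\eta')$.

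There is no real obstacle here. I also plan to note that all norms on this finite-dimensional space are equivalent. That remark justifies using this norm, rather than any other, for the convergence statements $\Vert(\widehat A_{t-1},\eta_t)-(A,0)\Vert_2\stackrel{p}{\to}0$ that appear in the proofs of Theorem~\ref{thm:nash_con} and Corollary~\ref{thm:NEvalue}.
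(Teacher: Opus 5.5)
Your proof is correct. Your direct check of the three axioms is the same argument the paper gives: the triangle inequality for $\Vert\cdot\Vert_F$ and for $|\cdot|$, followed by Minkowski's inequality in $\mathbb{R}^2$.

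Your intermediate bound $\sqrt{(\Vert A\Vert_F+\Vert B\Vert_F)^2+(|\eta|+|\eta'|)^2}$ is the right one. The paper's displayed chain puts $\Vert A_1\Vert_F^2+\Vert A_2\Vert_F^2$ in that slot, which is false in general; take $A_1=A_2\neq 0$ and $\eta_1=\eta_2=0$. So your version repairs a slip in the written proof.

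Your vectorization observation is a genuinely shorter route. It shows the space is isometrically isomorphic to Euclidean $\mathbb{R}^{mk+1}$, so the norm axioms, completeness, and equivalence with other norms all come at once. The paper's axiom-by-axiom check is self-contained, but it is exactly where the error crept in.
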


\begin{proof}
We will show $\Vert (\cdot\,,\cdot)\Vert_2$ satisfies three norm properties. Each property follows directly from the corresponding properties of the Frobenius norm $\|\cdot\|_F$ and the Euclidean norm $|\cdot|$ on $\mathbb{R}$.

\paragraph{1. Positive definiteness:} $\|(A, \eta)\|_2 \geq 0$ for all $(A, \eta) \in \mathbb{R}^{m \times k} \times \mathbb{R}$, with equality if and only if $A = 0$ and $\eta = 0$. 
$$\Vert (A,\eta) \Vert_2= (\Vert A \Vert_F^2+\eta^2)^{\frac{1}{2}}= \sqrt{{\sum_{i=1} ^m \sum_{j=1}^k A_{ij}^2} + \eta^2}=0 \iff A_{ij}=0,\eta=0.$$

\paragraph{2. Absolute Homogeneity:} For any scalar $\alpha \in \mathbb{R}$ and $(A, \eta) \in \mathbb{R}^{m \times k} \times \mathbb{R}$, $\|\alpha(A, \eta)\|_2 = |\alpha| \cdot \|(A, \eta)\|_2$.
$$\|\alpha(A, \eta)\|_2 = (\|\alpha A\|_F^2 + |\alpha\eta|^2)^{1/2} = (|\alpha|^2\|A\|_F^2 + |\alpha|^2\eta^2)^{1/2} = |\alpha|\|(A, \eta)\|_2.$$

\paragraph{3. Triangle Inequality:} For any $(A_1, \eta_1), (A_2, \eta_2) \in \mathbb{R}^{m \times k} \times \mathbb{R}$, $\|(A_1 + A_2, \eta_1 + \eta_2)\|_2 \leq \|(A_1, \eta_1)\|_2 + \|(A_2, \eta_2)\|_2$.\\
By Minkowski's inequality, we have $\|A_1 + A_2\|_F \leq \|A_1\|_F + \|A_2\|_F$ and $|\eta_1 + \eta_2| \leq |\eta_1| + |\eta_2|$.
\begin{equation*}
    \begin{aligned}
        \|(A_1 + A_2, \eta_1 + \eta_2)\|_2 &=\left(\Vert A_1+A_2 \Vert_F^2+(\eta_1+\eta_2)^2\right)^{\frac{1}{2}} \\
        & \le \left(\Vert A_1\Vert_F^2+\Vert A_2 \Vert_F^2+(\vert \eta_1\vert+\vert\eta_2\vert)^2\right)^{\frac{1}{2}} \\
        & \le \left(\Vert A_1\Vert_F^2+\vert \eta_1 \vert^2 \right)^{\frac{1}{2}} + \left(\Vert A_2\Vert_F^2+\vert \eta_2 \vert^2 \right)^{\frac{1}{2}} \\
        & \le \|(A_1, \eta_1)\|_2 + \|(A_2, \eta_2)\|_2.
    \end{aligned}
\end{equation*}
\end{proof}

\end{document}